\documentclass{article} 
\usepackage{iclr2026_conference,times}

\usepackage{amsmath,amsfonts,bm}

\def\eqref#1{equation~\ref{#1}}

\def\1{\bm{1}}

\DeclareMathAlphabet{\mathsfit}{\encodingdefault}{\sfdefault}{m}{sl}
\SetMathAlphabet{\mathsfit}{bold}{\encodingdefault}{\sfdefault}{bx}{n}

\usepackage{graphicx}
\usepackage{hyperref}
\usepackage{url}
\usepackage{amssymb}
\usepackage{float}

\usepackage{amsmath}
\usepackage{amsmath, amssymb, amsthm}
\usepackage{mathrsfs}

\newtheorem{theorem}{Theorem}[section]
\newtheorem{lemma}{Lemma}[section]
\newtheorem{remark}{Remark}[section]

\newtheorem{assumption}{Assumption}[section]
\newtheorem{proposition}{Proposition}[section]

\newcommand{\data}{\mathcal{G}} %
\newcommand{\syn}{d^{D}_{\text{syn}}} 

\usepackage{enumitem}
\setlist[itemize]{leftmargin=*, nosep}
\usepackage{booktabs}
\usepackage{multirow}
\usepackage{multicol}
\usepackage{wrapfig}
\usepackage{multirow}
\usepackage{makecell}
\usepackage{natbib}

\usepackage{threeparttable}
\usepackage{authblk}

\title{Unveiling the Mechanism of Continuous Representation Full-Waveform Inversion: A Wave Based Neural Tangent Kernel Framework}

\makeatletter
\renewcommand\AB@affillist{\raggedright}
\renewcommand\AB@authlist{\raggedright}
\makeatother

\author[1]{\textbf{Ruihua Chen}}
\author[1,\dag]{\textbf{Yisi Luo}}
\author[1,\dag]{\textbf{Bangyu Wu}}
\author[1,2]{\textbf{Deyu Meng}\vspace{-0.3cm}}

\affil[1]{\footnotesize  School of Mathematics and Statistics, Xi’an Jiaotong University, Xi’an, Shaanxi, China}
\affil[2]{Macao Institute of Systems Engineering, Macau University of Science and
Technology, Taipa, Macao}
\affil[ ]{\footnotesize \texttt{Ruihua.Chen@stu.xjtu.edu.cn, yisiluo1221@foxmail.com,}}
\affil[ ]{\footnotesize \texttt{\{bangyuwu, dymeng\}@mail.xjtu.edu.cn}}

\iclrfinalcopy 

\begin{document}
\vspace{-0.5cm}
\maketitle

\vspace{-0.8cm}
\begin{abstract}
\vspace{-0.3cm}
Full-waveform inversion (FWI) estimates physical parameters in the wave equation from limited measurements and has been widely applied in geophysical exploration, medical imaging, and non-destructive testing. Conventional FWI methods are limited by their notorious sensitivity to the accuracy of the initial models. Recent progress in continuous representation FWI (CR-FWI) demonstrates that representing parameter models with a coordinate-based neural network, such as implicit neural representation (INR), can mitigate the dependence on initial models. However, its underlying mechanism remains unclear, and INR-based FWI shows slower high-frequency convergence. In this work, we investigate the general CR-FWI framework and develop a unified theoretical understanding by extending the neural tangent kernel (NTK) for FWI to establish a wave-based NTK framework. Unlike standard NTK, our analysis reveals that wave-based NTK is not constant, both at initialization and during training, due to the inherent nonlinearity of FWI. We further show that the eigenvalue decay behavior of the wave-based NTK can explain why CR-FWI alleviates the dependency on initial models and shows slower high-frequency convergence. Building on these insights, we propose several CR-FWI methods with tailored eigenvalue decay properties for FWI, including a novel hybrid representation combining INR and multi-resolution grid (termed IG-FWI) that achieves {a more balanced trade-off} between robustness and high-frequency convergence rate. Applications in geophysical exploration on Marmousi, 2D SEG/EAGE Salt and Overthrust, 2004 BP model, and the more realistic 2014 Chevron models show the superior performance of our proposed methods compared to conventional FWI and existing INR-based FWI methods. 
\end{abstract}
\vspace{-0.5cm}
\begin{figure}[h]
    \centering
\includegraphics[width=0.95\linewidth]{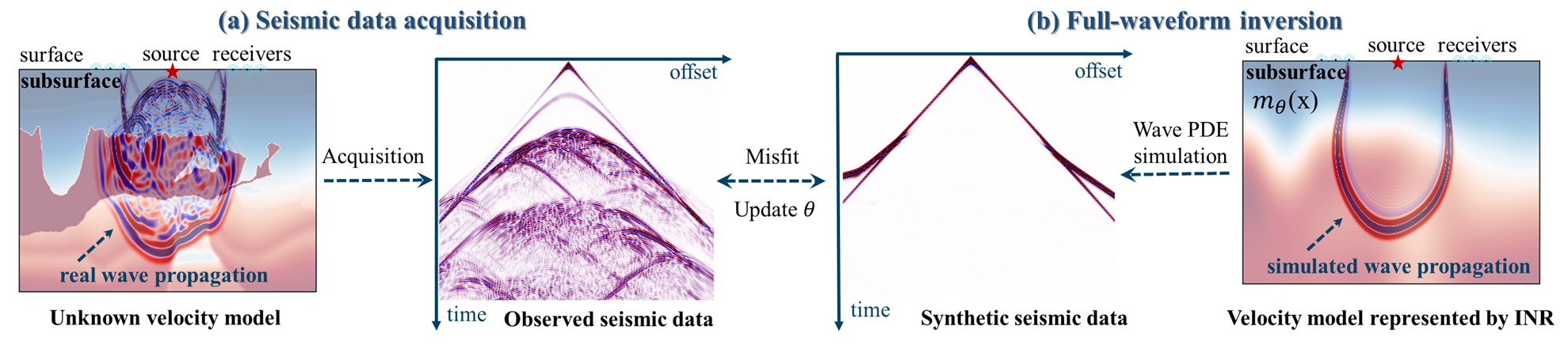}
\vspace{-0.25cm}
    \caption{(a) The acquisition process of observed seismic data obtained from the receivers placed on the surface; (b) FWI is a {nonlinear inverse problem} that inverts the subsurface models from observed seismic data. The CR-FWI framework synthesizes seismic data by solving the wave \eqref{equ: acoustic_pde}, where the model $m({{\bf x}})$ is represented by a coordinate-based neural network $m_\theta({{\bf x}})$, e.g., INR.}
    \label{fig: FWI_workflow}
\end{figure}
\footnotetext{\dag \  Corresponding authors.}
\vspace{-0.4cm}
\section{Introduction}
\vspace{-0.2cm}
Full-waveform inversion (FWI) is an ill-posed and partial differential equation (PDE)-constrained (i.e., wave equation) {nonlinear inversion problem} in seismic imaging, which estimates subsurface properties (e.g., velocity, impedance, or density models) by iteratively minimizing the discrepancy between observed seismic data (i.e., seismograms) and synthetic seismic data (i.e., the solution of the wave equation) \citep{virieux2009overview}. Since having a theoretically highest resolution, FWI has been widely applied across multiple domains, including geophysical subsurface exploration \citep{tromp2020seismic}, medical imaging \citep{guasch2020full}, planetary imaging \citep{hanasoge2014full}, and non-destructive testing \citep{nguyen2018ultrasonic}. However, {the nonlinearity of the inversion problem}, complex geological structures (e.g., salt bodies and faults), and practical limitations in seismic data acquisition (e.g., sparse sampling, missing low-frequency components, and noise interference) may pose significant challenges to achieving stable convergence towards high precision inversion results in practical applications \citep{virieux2017introduction}. An accurate and smooth initial velocity model, including abundant low-frequency information, can be used to mitigate the ill-posedness and cycle-skipping (i.e., half-cycle waveform misfit causing inversion failure) \citep{doi:10.1190/geo2021-0598.1}. {Still, {\bf obtaining such an accurate initial model remains a significant challenge} due to the complex near-surface conditions and heterogeneous subsurface layers} \citep{yang2025high}.

\begin{figure}
\vspace{-0.9cm}
    \centering
\includegraphics[width=0.92\linewidth]{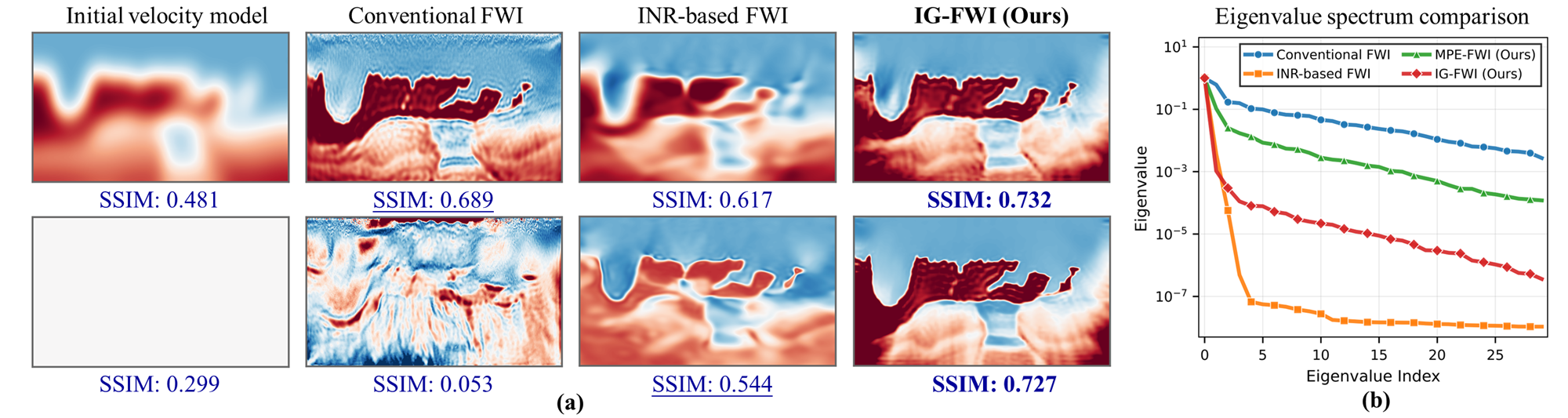}
\vspace{-0.3cm}
    \caption{\textbf{(a)} Inversion results using conventional FWI (i.e., Multiscale ADFWI \citep{liu2025automatic}), existing INR-based FWI (i.e., IFWI \citep{sun2023implicit}), and our proposed IG-FWI methods with smooth (after 500 epochs) and constant initial models (after 1500 epochs) on the 2004 BP model; \textbf{(b)} Eigenvalue spectrum comparison of wave-based NTK using different FWI methods.}
    \label{fig: 2004bp_results}
    \vspace{-0.4cm}
\end{figure}

To alleviate the aforementioned challenges, previous studies have explored various approaches, including initial model generation \citep{chen2016geological, sun2017alternating}, misfit function modification \citep{yang2023wasserstein}, regularization design \citep{yan2018full}, advanced optimization algorithms \citep{sun2020ml}, and multiscale inversion strategies \citep{fichtner2013multiscale}. However, most of these conventional FWI methods are still sensitive to initial model accuracy and seismic data quality. Recently, \citet{sun2023implicit} and \citet{yang2025gabor} utilized a specific coordinate-based neural network, i.e., implicit neural representation (INR) \citep{mildenhall2021nerf, sitzmann2020implicit} to reparameterize the velocity model as a continuous function (see Fig. \ref{fig: FWI_workflow}). This novel continuous representation FWI (\textbf{CR-FWI}) framework can enhance the robustness w.r.t. the initial model accuracy and seismic data quality. Previous studies \citep{sun2023implicit,yang2025gabor} have identified two main distinctions in inversion performance between conventional FWI and CR-FWI methods:
\vspace{-0.05cm}
\begin{itemize}
    \item \textbf{Robustness}: CR-FWI can recover satisfactory inversion results even with a constant initial model and poor seismic data quality, whereas conventional FWI often fails under such conditions.
    \item \textbf{Convergence}: CR-FWI exhibits slower convergence rates, particularly in the high-frequency components, and requires a larger number of iterations to achieve high-precision inversion results. In contrast, conventional FWI converges more rapidly when smooth initial models are available.
\end{itemize}
\vspace{-0.05cm}
We demonstrate this discrepancy in the 2004 BP model, as shown in Fig. \ref{fig: 2004bp_results} (a). In light of the above distinctions, we propose two natural and intriguing questions: \textit{\textbf{(i)} Can a unified theoretical framework be established to explain the differences in robustness and convergence between conventional FWI and CR-FWI? \textbf{(ii)} Is there a continuous representation that achieves a balanced trade-off between robustness and convergence?} {Positive answers} to these questions are provided in this study.
\vspace{-0.4cm}
\subsection{Our work and contributions}
\vspace{-0.2cm}
In this work, we dive into the general continuous representation FWI framework and establish a unified theoretical foundation for both conventional FWI and CR-FWI. This is accomplished by introducing and analyzing a novel neural tangent kernel (NTK) tailored for FWI, i.e., the \textbf{wave kernel} for conventional FWI in Proposition \ref{proposition: kernel_fwi} and the \textbf{wave-based NTK} for CR-FWI in Proposition \ref{proposition: kernel_crfwi}. The eigenvalue of these NTKs provides a theoretical basis for understanding the convergence behavior and robustness of conventional FWI and CR-FWI. Our analysis leads to two key insights:
\vspace{-0.05cm}
\begin{itemize}
    \item \textbf{Insight I}: Distinct from standard NTK, both the wave kernel and wave-based NTK exhibit dynamic behavior during training (see Theorem \ref{the: ntk_theorem}) due to the nonlinearity of the inversion problem.
    \item \textbf{Insight II}: The eigenvalue decay of wave-based NTK is not slower than the wave kernel (see Theorem \ref{thm:eigenvalue-comparison}) due to the smooth kernel induced by the continuous representation (see Fig. \ref{fig: 2004bp_results} (b)).
\end{itemize}
\vspace{-0.05cm}
The first insight establishes a dynamic analytical foundation for the wave-based NTK, providing a way for {future theoretical research} into NTK for {PDE-constrained nonlinear inverse problems} using stochastic analysis \citep{oksendal2013stochastic}. Especially, under the quasi-static assumption, the wave-based NTK can be locally linearized, thereby enabling analysis of local convergence and optimization behavior based on its eigenvalue decay rate. The second insight reveals that the training dynamics of FWI can be decomposed along the directions defined by the eigenvectors of the wave-based NTK, each associated with a distinct convergence rate determined by the corresponding eigenvalues \citep{jacot2018neural}. The rapid eigenvalue decay of wave-based NTK reveals that CR-FWI can inherently achieve effective multiscale inversion from low frequencies to high frequencies adaptively, i.e., frequency principle \citep{xu2025overview} or spectral bias \citep{cao2019towards}. This property explains why CR-FWI reduces reliance on initial models and exhibits slower high-frequency convergence. 

Motivated by the connection between the eigenvalue decay and its optimization behavior, we introduce several novel CR-FWI methods by incorporating low-rank tensor function representation \citep{luo2023low} (termed {LR-FWI}) and multi-grid parametric encoding \citep{muller2022instant} (termed {MPE-FWI}). {By encoding the inherent smoothness and low-rank properties of parameter models, LR-FWI improves inversion robustness and accelerates the convergence of high-frequency components due to an appropriate eigenvalue decay, as empirically verified in our numerical experiments (see Fig. \ref{fig: ntk_experiment} (c)).} Moreover, the multigrid parametric encoding has been proven to yield a better eigenvalue spectrum distribution in its NTK \citep{audia2025learnable}. Hence, compared to INR-based FWI \citep{sun2023implicit, yang2025gabor}, the proposed MPE-FWI method exhibits a slower eigenvalue decay (see Fig. \ref{fig: 2004bp_results} (b) and Theorem \ref{thm: spectral_enhancement_concise}), which facilitates faster convergence of high-frequency components and leads to higher inversion accuracy (see Tab. \ref{tab:results}) using smooth initial models. However, this gain in convergence speed and precision comes at the cost of reduced robustness. {To inherit the robustness of INR-based FWI and the convergence benefits of MPE-FWI}, we further propose a novel hybrid representation integrating INR with multi-resolution grids for FWI (termed \textbf{IG-FWI}, see Fig. \ref{fig: IG-FWI-workflow}). This method induces a suitable eigenvalue decay tailored for FWI (see Theorem \ref{the: ig_ntk} and Fig. \ref{fig: 2004bp_results} (b)), thereby achieving a trade-off between robustness and convergence (see Fig. \ref{fig: 2004bp_results} (a)). 

To our knowledge, this is the first work that analyzes the FWI problem from the NTK perspective and utilizes multigrid parametric encoding for FWI. In summary, our main contributions lie in threefold:
\begin{itemize}
    \item \textbf{Theory}: We develop a unified wave-based NTK framework for conventional FWI and CR-FWI. The eigenvalue analysis explains why CR-FWI reduces reliance on initial models and exhibits slower high-frequency convergence, with numerical tests confirming these insights (see Fig. \ref{fig: ntk_experiment}).
    \item \textbf{Method}: Inspired by the eigenvalue decay analysis, we propose a novel integrated representation combining INR and multi-grid for FWI. This method achieves a tailored eigenvalue decay that is more suitable for FWI, leading to a better robustness-convergence trade-off (see Figs. \ref{fig: CR-FWI-workflow} and \ref{fig: IG-FWI-workflow}).
    \item \textbf{Application}: We extensively evaluate our methods under challenging scenarios, including inaccurate initial models, sparse sampling, seismic data with missing low frequencies and noise interference, various benchmark datasets (i.e., Marmousi, SEG/EAGE Salt and Overthrust, and 2004 BP models), and more realistic 2014 Chevron blind data compared to conventional and existing CR-FWI methods. Inversion results show consistently superior performance (see Fig. \ref{fig: dataset_results}).
\end{itemize}
A comprehensive review and discussion of related work (e.g., including conventional FWI, neural representation-based FWI, and NTK theory) is provided in Appendix \ref{appendix: related_work}.

\vspace{-0.2cm}
\section{Full Waveform Inversion and Wave Kernel}
\vspace{-0.2cm}
Let $U$ and $T$ represent the spatial and temporal domains, respectively. For ${\bf x} \in U$ and $t \in T$, FWI aims to estimate the velocity model $m({\bf x})$ from the observed seismic data $\{u^D_j({\bf x},t)\}_{j=1}^{N_s}$, where $u_j^D \triangleq {\cal O} \circ u_j$. Here, $\cal O$ denotes a linear observation operator, and $u_j({\bf x},t)$ corresponds to the full wavefield data with source wavelet function $s_j({\bf x},t)$. Mathematically, the velocity model and the full wavefield data are governed by the wave equation \citep{engquist2022optimal} expressed as follows:
\begin{equation}\small
    \begin{cases}
        m({\bf x}) \frac{\partial^2 u_j({\bf x},t)}{\partial t^2} - \nabla^2 u_j({\bf x},t) = s_j({\bf x},t),  & {\bf x} \in U,\ t\in T,\\
        u_j({\bf x}, 0) = 0, \quad \frac{\partial u_j}{\partial t}({\bf x},0) = 0, & {\bf x} \in U, \\
        \nabla u_j({\bf x},t) \cdot \mathbf{n} = 0, & {\bf x} \in U,\ t\in T,
        \label{equ: acoustic_pde}
    \end{cases}
\end{equation}
where  $\mathbf{n}$ is the unit outward normal to the boundary. This acoustic wave equation can be computed numerically via finite difference \citep{sun2020theory} (see Appendix \ref{appendix: wave_equation}). Define the forward operator $ {\cal G}[\cdot]:m \mapsto u^D \triangleq [u_1^D,u_2^D,...,u_{N_s}^D]$ and denote $D$ as the sampling space domain. FWI aims to recover the true subsurface model ${m^{\dagger}}$ by minimizing the misfit function between measurements $u^D_{obs}\triangleq{\cal G}[{m^{\dagger}}]$ and PDE-simulated seismic data $u_{\text{syn}}^D\triangleq {\cal G}[m]$ \citep{virieux2009overview}, i.e.,  
{\small\begin{align}
    \min_{m \in {\cal A}} \left\{ \mathcal{J}(m) \triangleq \frac{1}{2} \left\| {u}_{\text{syn}}^D(m) - u^D_{\text{obs}} \right\|_{L^2(D \times T)}^2 \right\}, \ \text{s.t.} \ u_{\text{syn}}^D(m) = {\cal G}[m], \label{equ: loss_fwi}
\end{align}}where ${\cal A}$ is admissible set of velocity models and ${\cal J}$ is a misfit functional. In the continuous-time limit (i.e., as the learning rate tends to zero), this gradient flow of velocity model $m$ is described by
{\small\begin{align}
\frac{\partial m}{\partial \tau} = -\frac{\delta \mathcal{J}}{\delta m}[m] = -\frac{\delta \mathcal{G}}{\delta m}[m] \cdot \big(u^D_{\text{syn}} - u^D_{\text{obs}}\big),\label{eq:gradient_flow}
\end{align}}where $\tau$ is a pseudo-time,
$\delta \mathcal{J}/\delta m$ denotes the Fréchet derivative of the misfit functional $\mathcal{J}$ w.r.t. the parameter model $m$, and $ \delta \mathcal{G}/\delta m$ is the sensitivity kernel operator. Applying the chain rule, we prove that the evolution of the wavefield during training is characterized by the following proposition.
    \begin{proposition}[Evolution of wavefield in FWI]\label{proposition: kernel_fwi}
    Consider the loss function in \eqref{equ: loss_fwi} and the gradient flow in \eqref{eq:gradient_flow}, the synthetic data $u^D_{\text{syn}}$ evolves according to the following equation:
    {\small\begin{align}
    \frac{\partial u^D_{{\text{syn}}}({\bf x},t)}{\partial \tau} = -\int_{D} \int_{T} \Theta_{\text{wave}}\big(({\bf x},t),({\bf x}^{\prime},t^{\prime});m\big) \cdot \big(u^D_{\text{syn}} - u^D_{\text{obs}}\big) dt' d{\bf x}', \label{equ: data_evolution_corrected}
    \end{align}}where $\Theta_{\text{wave}}\big( ({\bf x},t),({\bf x}^{\prime},t^{\prime});m\big ) = \int_{U} \frac{\delta {\cal G}({\bf x},t)}{\delta m({\bf y})}[m]\cdot \frac{\delta {\cal G}({\bf x}^{\prime},t^{\prime})}{\delta m({\bf y})}[m] dy$ is defined as the wave kernel. 
    \end{proposition}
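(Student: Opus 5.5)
The argument is a chain-rule computation along the gradient flow \eqref{eq:gradient_flow}, with the Fréchet derivative of $\mathcal{G}$ written as an integral kernel. Its integral form makes explicit how the adjoint acts.

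\textbf{Step 1 (setup).} We assume that $\mathcal{G}$ is Fréchet differentiable at $m$ as a map from $\mathcal{A}\subset L^2(U)$ into $L^2(D\times T)$. We also assume that its derivative admits a sensitivity kernel $\frac{\delta \mathcal{G}({\bf x},t)}{\delta m({\bf y})}[m]$, so that for any perturbation $h$
\[
\Big(\tfrac{\delta \mathcal{G}}{\delta m}[m]\,h\Big)({\bf x},t)=\int_U \frac{\delta \mathcal{G}({\bf x},t)}{\delta m({\bf y})}[m]\,h({\bf y})\,d{\bf y}.
\]
For the wave equation \eqref{equ: acoustic_pde} this is standard. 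Differentiating in $m$ gives the Born equation $m\,\partial_t^2 \delta u_j-\nabla^2\delta u_j=-h\,\partial_t^2 u_j$, with zero initial and Neumann data. Its solution is written with the Green's function and then composed with $\mathcal{O}$, which produces the kernel explicitly.

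\textbf{Step 2 (gradient of $\mathcal{J}$).} We compute the first variation of $\mathcal{J}$ in direction $h$:
\[
\delta\mathcal{J}[m;h]=\int_D\int_T (u^D_{\text{syn}}-u^D_{\text{obs}})({\bf x}',t')\int_U \frac{\delta \mathcal{G}({\bf x}',t')}{\delta m({\bf y})}h({\bf y})\,d{\bf y}\,dt'\,d{\bf x}'.
\]
By Fubini, valid under square integrability of the kernel, the $L^2(U)$-gradient is
\[
\frac{\delta\mathcal{J}}{\delta m}({\bf y})=\int_D\int_T \frac{\delta \mathcal{G}({\bf x}',t')}{\delta m({\bf y})}(u^D_{\text{syn}}-u^D_{\text{obs}})({\bf x}',t')\,dt'\,d{\bf x}'.
\]
This is the adjoint action appearing in \eqref{eq:gradient_flow}.

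\textbf{Step 3 (chain rule in $\tau$).} Since $u^D_{\text{syn}}(\tau)=\mathcal{G}[m(\tau)]$, the chain rule gives
\[
\partial_\tau u^D_{\text{syn}}({\bf x},t)=\int_U \frac{\delta \mathcal{G}({\bf x},t)}{\delta m({\bf y})}\,\partial_\tau m({\bf y})\,d{\bf y}.
\]
We substitute $\partial_\tau m=-\delta\mathcal{J}/\delta m$ from Step 2. Exchanging the $U$-integral with the $D\times T$ integral then groups the product of the two sensitivity kernels into $\Theta_{\text{wave}}$, which yields \eqref{equ: data_evolution_corrected}.

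\textbf{Main obstacle.} The main difficulty is not the algebra but the regularity that justifies it: $\mathcal{G}$ must be differentiable along the flow, the sensitivity kernel must exist as a function, and Fubini must apply. For the wave equation the kernel is really a distribution supported near the wavefronts, so the cleanest route is to interpret everything in operator form. Writing $K=\delta\mathcal{G}/\delta m[m]$, the statement becomes $\partial_\tau u^D_{\text{syn}}=-KK^*(u^D_{\text{syn}}-u^D_{\text{obs}})$. We define $\Theta_{\text{wave}}$ as the Schwartz kernel of $KK^*$, and the displayed integral formula follows formally by Fubini. We plan to state the assumption that $m(\tau)$ stays in $\mathcal{A}$, with $m$ bounded below by a positive constant and sufficiently smooth, so that $\mathcal{G}$ is $C^1$ by standard energy estimates.
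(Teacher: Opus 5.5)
Your proposal is correct and follows the paper's proof essentially step for step: the paper also applies the chain rule $\partial_\tau u^D_{\text{syn}}({\bf x},t)=\int_U \frac{\delta\mathcal{G}({\bf x},t)}{\delta m({\bf y})}[m]\,\partial_\tau m({\bf y})\,d{\bf y}$, substitutes the gradient flow with $\delta\mathcal{J}/\delta m$ written as the sensitivity kernel integrated against the residual over $D\times T$, and swaps the integrals by Fubini so that the two sensitivity kernels combine into $\Theta_{\text{wave}}$. Your explicit derivation of the $L^2(U)$-gradient and your remarks on regularity and the operator form $KK^*$ go beyond the paper, which simply assumes absolute integrability over $U\times D\times T$.
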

    \begin{remark}
    The proof is provided in Appendix \ref{proof: wave_kernel}. The fitting of the synthetic data is driven by the data misfit $(u_{\text{syn}}^D - u_{\text{obs}}^D)$ across the entire domain (\textbf{data-driven}), weighted by the wave kernel $\Theta_{\text{wave}}$ (\textbf{physical guide}). The wave kernel quantifies the coupling and correlation relationship between points $({\bf x},t)$ and $({\bf x}', t')$ through point-wise perturbations in the model space. This often leads to uncoordinated updates and severe crosstalk, i.e., fitting one data point adversely affects others.
    \end{remark}

\vspace{-0.2cm}
\section{Wave-Based Neural Tangent Kernel}
\vspace{-0.2cm}
Let the space of neural network parameters be denoted by $\Theta \subseteq \mathbb{R}^p$. CR-FWI utilizes a coordinate-based neural network $F_{\boldsymbol{\theta}}(\cdot): U \to \mathbb{R}$ with learnable parameters ${\boldsymbol{\theta}} \in \Theta$, which maps a spatial coordinate ${\bf x} \in U$ to the corresponding velocity perturbation $F_{\boldsymbol{\theta}}({\bf x})$. Hence, the velocity model can be reparameterized as $m_{\boldsymbol{\theta}}({\bf x}) = F_{\boldsymbol{\theta}}({\bf x})+m_{0}({\bf x})$ (where $m_0$ denotes the initial model) \citep{chen2025initial}. Consider the following shallow fully-connected neural network with one hidden layer:
\begin{equation}\small
F_{\boldsymbol{\theta}}({\bf {\bf x}})= \frac{1}{\sqrt{n}}W^{1}\cdot\sigma(W^{0} {\bf {\bf x}}+b^{0})+b^{1},\quad \boldsymbol{\theta} = \{{W^{1}, W^{0}, \beta^{0}, \beta^{1}}\}, \label{equ: neural_networks}
\end{equation}
where ${\bf x} \in U$ represents the coordinates input, $\sigma: \mathbb{R} \rightarrow \mathbb{R}$ denotes element-wise nonlinear activation function, $\boldsymbol{\theta} = \{{W^{1}, W^{0}, \beta^{0}, \beta^{1}}\}$ is the complete set of trainable parameters. In the usual initialization of NTK \citep{jacot2018neural,bonfanti2024challenges}, all the weights and biases are initialized to be independent and identically distributed as the standard normal distribution $\mathcal{N}(0,1)$. Here, we employ the NTK rescaling factor of $ 1/\sqrt{n}$ as introduced in the original work \citep{jacot2018neural}. This specific scaling is crucial for ensuring the convergence of the initialized neural tangent kernel \citep{zhou2024neural}. Hence, the optimization variables of CR-FWI change from the discrete velocity model $m$ in  \eqref{equ: loss_fwi} to neural parameters $\boldsymbol{\theta}$. The objective function can be expressed as:
{\small\begin{align}
    \min_{\boldsymbol{\theta} \in {\Theta}} \left\{ \mathcal{J}(\boldsymbol{\theta}) \triangleq \frac{1}{2} \left\| {u}_{\text{syn}}^D(\boldsymbol{\theta}) - u^D_{\text{obs}} \right\|_{L^2(D \times T)}^2 \right\}, \ \text{s.t.} \ u_{\text{syn}}^D(\boldsymbol{\theta}) = {\cal G}[m_{\boldsymbol{\theta}}].\label{equ: loss_crfwi}
\end{align}}When minimizing this PDE-constrained residual via gradient descent using the CR-FWI method, the evolution of the wavefield during this flow is characterized by the following proposition:
\begin{proposition}[Evolution of wavefield in CR-FWI] \label{proposition: kernel_crfwi}
Consider the loss function in \eqref{equ: loss_crfwi} and the gradient flow of the neural network in \eqref{equ: neural_networks} with learning rate tends to zero, the synthetic data $u^D_{\text{syn}}$ evolves according to the following equation:
{\small\begin{align}
\frac{\partial u^D_{\text{syn}}({\bf x},t)}{\partial \tau} = -\int_{D} \int_{T} \Theta_{\text{wave}}^{\text{ntk}}\big(({\bf x},t),({\bf x}^{\prime},t^{\prime});\boldsymbol{\theta}\big) \cdot \big(u^D_{\text{syn}} - u^D_{\text{obs}}\big) dt' d{\bf x}', \label{equ: data_evolution_NTK}
\end{align}}where $\Theta_{\text{wave}}^{\text{ntk}}$ is the wave-based neural tangent kernel, defined as:
{\small\begin{align}
\Theta_{\text{wave}}^{\text{ntk}}\big(({\bf x},t),({\bf x}^{\prime},t^{\prime});\boldsymbol{\theta}\big) = \int_{U} \int_{U} \frac{\delta {\cal G}({\bf x},t)}{\delta m({\bf y})}[m]\cdot \frac{\delta {\cal G}({\bf x}',t')}{\delta m({\bf z})}[m]\cdot K_{\tau}({\bf y},{\bf z};\boldsymbol{\theta}) dy d{\bf z}, \label{equ: wave_based_ntk}
\end{align}}where $K_{\tau}({\bf y},{\bf z};\boldsymbol{\theta}) =\sum_{i=1}^{p}\frac{d m_{\boldsymbol{\theta}}({\bf y})}{d \boldsymbol{\theta}_i} \cdot\frac{d m_{\boldsymbol{\theta}}({\bf z})}{d \boldsymbol{\theta}_i}$ is the standard NTK of networks with $p$ parameters.
\end{proposition}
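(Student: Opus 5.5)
The argument is the chain rule, applied twice, along the gradient flow of the parameters. In the continuous-time limit the parameters satisfy $\frac{d\boldsymbol{\theta}_i}{d\tau} = -\frac{\partial \mathcal{J}}{\partial \boldsymbol{\theta}_i}$ for $i=1,\dots,p$. Since $u^D_{\text{syn}} = \mathcal{G}[m_{\boldsymbol{\theta}}]$ with $m_{\boldsymbol{\theta}} = F_{\boldsymbol{\theta}} + m_0$, we have $\frac{d m_{\boldsymbol{\theta}}({\bf y})}{d\boldsymbol{\theta}_i} = \frac{\partial F_{\boldsymbol{\theta}}({\bf y})}{\partial \boldsymbol{\theta}_i}$. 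The first step is to compute the parameter gradient of the misfit in \eqref{equ: loss_crfwi} by composing Fréchet derivatives:
\begin{align*}
\frac{\partial \mathcal{J}}{\partial \boldsymbol{\theta}_i} = \int_D\int_T \big(u^D_{\text{syn}} - u^D_{\text{obs}}\big)({\bf x}',t') \int_U \frac{\delta \mathcal{G}({\bf x}',t')}{\delta m({\bf z})}[m_{\boldsymbol{\theta}}]\, \frac{d m_{\boldsymbol{\theta}}({\bf z})}{d\boldsymbol{\theta}_i}\, d{\bf z}\, dt'\, d{\bf x}'.
\end{align*}
This is the same computation that produced \eqref{eq:gradient_flow}, now followed by one further chain-rule step through the finitely many network parameters.

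The second step differentiates the synthetic data in pseudo-time:
\begin{align*}
\frac{\partial u^D_{\text{syn}}({\bf x},t)}{\partial\tau} = \sum_{i=1}^p \int_U \frac{\delta \mathcal{G}({\bf x},t)}{\delta m({\bf y})}[m_{\boldsymbol{\theta}}]\, \frac{d m_{\boldsymbol{\theta}}({\bf y})}{d\boldsymbol{\theta}_i}\, d{\bf y}\; \frac{d\boldsymbol{\theta}_i}{d\tau}.
\end{align*}
Substituting $\frac{d\boldsymbol{\theta}_i}{d\tau} = -\frac{\partial\mathcal{J}}{\partial\boldsymbol{\theta}_i}$ yields a finite sum over $i$ of nested integrals over ${\bf y}\in U$, ${\bf z}\in U$ and $({\bf x}',t')\in D\times T$. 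The third step reorders these: move the finite sum inside, then move the integrals over ${\bf y}$ and ${\bf z}$ inside the integral over $({\bf x}',t')$. The factor $\sum_i \frac{dm_{\boldsymbol{\theta}}({\bf y})}{d\boldsymbol{\theta}_i}\frac{dm_{\boldsymbol{\theta}}({\bf z})}{d\boldsymbol{\theta}_i}$ is exactly $K_\tau({\bf y},{\bf z};\boldsymbol{\theta})$. What remains multiplying the residual is $\Theta^{\text{ntk}}_{\text{wave}}$ as defined in \eqref{equ: wave_based_ntk}, which gives \eqref{equ: data_evolution_NTK}. As a consistency check, formally setting $K_\tau = \delta({\bf y}-{\bf z})$ recovers the wave kernel of Proposition \ref{proposition: kernel_fwi}.

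The main obstacle is justifying the interchanges of integration and the chain rule, not the algebra. I would impose the following regularity assumptions:
\begin{itemize}
\item $\sigma$ is smooth, so $F_{\boldsymbol{\theta}}$ is smooth in $\boldsymbol{\theta}$ with derivatives bounded on the bounded domain $U$;
\item $m_{\boldsymbol{\theta}}$ stays in the admissible set $\mathcal{A}$, bounded away from zero, so the wave equation \eqref{equ: acoustic_pde} is well posed;
\item $\mathcal{G}$ is Fréchet differentiable from $L^\infty(U)$ into $L^2(D\times T)$.
\end{itemize}
Under these assumptions the sensitivity kernel $\frac{\delta\mathcal{G}}{\delta m}$ is given by the Born approximation, namely the forward wavefield correlated with the Green's function. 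This kernel lies in $L^2$, so Fubini applies by Cauchy--Schwarz and the integrals may be exchanged.
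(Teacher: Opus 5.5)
Your proposal is correct and follows essentially the same chain-rule argument as the paper. The only differences are minor: the paper first writes the kernel in parameter space as $\sum_{i=1}^{p} \frac{\partial \mathcal{G}[m_{\boldsymbol{\theta}}]({\bf x},t)}{\partial \boldsymbol{\theta}_i}\cdot\frac{\partial \mathcal{G}[m_{\boldsymbol{\theta}}]({\bf x}',t')}{\partial \boldsymbol{\theta}_i}$ and then expands each factor through the Fréchet derivative to expose $K_{\tau}$, whereas you expand through $m$ before collecting terms, and your regularity assumptions for the Fubini step go beyond the paper, which simply assumes absolute integrability (you also correctly note that the finite sum over the $p$ parameters needs no interchange justification).
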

\begin{remark}
The proof can be found in Appendix \ref{proof_crfwi_ntk}. When we set $m_{\boldsymbol{\theta}}({\bf x}) = m({\bf x})$, the proposition \ref{proposition: kernel_crfwi} degrades into \ref{proposition: kernel_fwi} (see Proposition \ref{append: proposition_kernel_cr_kernel}). The wave kernel can be expressed as follows: 
{\small\begin{align}
   \Theta_{\text{wave}}\big(({\bf x},t),({\bf x}^{\prime},t^{\prime});m\big) = \int_{U} \int_{U} \frac{{\delta \cal G}({\bf x},t)}{\delta m({\bf y})}[m] \frac{{\delta \cal G}({\bf x}',t')}{\delta m({\bf z})}[m] \cdot K_{\delta}({\bf y},{\bf z}) dy d{\bf z},\label{equ: wave_kernel}
\end{align}}where $K_{\delta}({\bf y},{\bf z})\triangleq \delta({\bf y}-{\bf z})$ is a Dirac kernel. Consequently, the wave-based NTK framework provides a unified framework for the simultaneous analysis of conventional FWI and CR-FWI.
\end{remark}
\begin{remark}
Unlike the Dirac kernel, the wave-based NTK encodes a smooth and architecture-dependent kernel that incorporates a joint guidance mechanism derived from the product of sensitivity kernels \textbf{across different velocity model points}, rather than relying on point-wise computations in the wave kernel, which enables coordinated global updates and may help mitigate cycle-skipping.
\end{remark}
\vspace{-0.2cm}
\section{Main Theoretical Results}
\vspace{-0.2cm}
Distinct from standard NTK, the wave-based NTK defined in Proposition \ref{proposition: kernel_crfwi} cannot converge to a deterministic kernel at initialization and during the training process (see the following Theorem \ref{the: ntk_theorem}) when the width of networks tends to infinity, due to the nonlinearity of FWI.

\begin{theorem} \label{the: ntk_theorem}
Consider the network defined in \eqref{equ: neural_networks}, which satisfies Assumption \ref{ass: network_regularity}. Let $ m^* $ be the convergent velocity model, and suppose that for every $ k \in \mathbb{N} $, there exists a time $ \tau_k $ such that $
\| m_{\boldsymbol{\theta}(\tau_k)} - m_{*} \|_{U} \leq \varepsilon_k$ with $\varepsilon_k \to 0 \text{ as } k \to \infty$. Let $ \boldsymbol{\theta} $ be the neural parameters under Gaussian random initialization, i.e., $ \boldsymbol{\theta}(0) \sim \mathcal{N}(0, 1) $, and denote by $ \Theta_{\mathrm{wave}}^{\mathrm{ntk}}\big|_{\tau}^n $ the corresponding wave-based NTK with width $n$ at time $ \tau $. Then, as the network width $ n \to \infty $, the following hold:

(I) The wave-based NTK at initialization converges in distribution to a non-deterministic kernel:
$$\Theta_{\mathrm{wave}}^{\mathrm{ntk}}\big|_0^n \xrightarrow{\mathcal{D}} \widehat{\Theta_{\mathrm{wave}}^{\mathrm{ntk}}}, \quad \text{as } n \to \infty.$$
(II) The wave-based NTK during training satisfies:
$$\lim_{n \to \infty} \sup_{\tau \in [0, \infty)} \left\| \Theta_{\mathrm{wave}}^{\mathrm{ntk}}\big|_{\tau}^n - \Theta_{\mathrm{wave}}^{\mathrm{ntk}}\big|_0^\infty \right\| > 0 \quad \text{almost surely}.$$
\end{theorem}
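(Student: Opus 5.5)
The plan is to treat the wave-based NTK in \eqref{equ: wave_based_ntk} as a bilinear functional of two objects: the sensitivity kernel $\frac{\delta\mathcal{G}}{\delta m}[m_{\boldsymbol{\theta}}]$, evaluated at the \emph{random} model $m_{\boldsymbol{\theta}}$, and the standard NTK $K_\tau$. The argument has three steps.

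\textbf{Step 1: the output function at initialization.} By the classical infinite-width results of \citet{jacot2018neural}, applicable under Assumption~\ref{ass: network_regularity}, the following hold as $n\to\infty$.
\begin{itemize}
\item The standard NTK at initialization $K_0^n$ converges in probability to a deterministic kernel $K_\infty$.
\item The network output $F_{\boldsymbol{\theta}(0)}$ converges in distribution to a centered Gaussian process $F_\infty$ with a nondegenerate covariance $\Sigma$.
\end{itemize}
The second point is the key difference from the standard setting. Here $m_{\boldsymbol{\theta}(0)} = m_0 + F_{\boldsymbol{\theta}(0)}$ converges in distribution to the \emph{random} model $m_0 + F_\infty$.

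\textbf{Step 2: part (I).} Write the kernel as
$$\Theta^{\mathrm{ntk}}_{\mathrm{wave}}\big|_0^n = \Phi\big(m_{\boldsymbol{\theta}(0)}, K_0^n\big),$$
where
$$\Phi(m,K)\big((\mathbf{x},t),(\mathbf{x}',t')\big)=\iint \frac{\delta\mathcal{G}(\mathbf{x},t)}{\delta m(\mathbf{y})}[m]\,\frac{\delta\mathcal{G}(\mathbf{x}',t')}{\delta m(\mathbf{z})}[m]\,K(\mathbf{y},\mathbf{z})\,d\mathbf{y}\,d\mathbf{z}.$$
I would prove that $\Phi$ is continuous on the relevant function spaces. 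This uses standard well-posedness of the acoustic equation~\eqref{equ: acoustic_pde}: for $m$ bounded below by a positive constant, the solution map $m\mapsto u$ is Fréchet differentiable, and its derivative (given via the Born/adjoint representation) depends continuously on $m$. Since $K_0^n$ converges in probability to a constant, Slutsky's lemma gives joint convergence in distribution of the pair $(m_{\boldsymbol{\theta}(0)}, K_0^n)$. The continuous mapping theorem then yields
$$\Theta^{\mathrm{ntk}}_{\mathrm{wave}}\big|_0^n \xrightarrow{\mathcal D} \widehat{\Theta}=\Phi(m_0+F_\infty,K_\infty).$$
To show $\widehat{\Theta}$ is non-deterministic, it suffices that $m\mapsto\Phi(m,K_\infty)$ is not constant on the support of the Gaussian law. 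Linearizing $\Phi$ in $m$ gives a term involving the second Fréchet derivative $\delta^2\mathcal{G}/\delta m^2$, which is nonzero because $\mathcal{G}$ is genuinely nonlinear (the wavefield depends on $m$ through $m\,\partial_t^2 u$). Nondegeneracy of $\Sigma$ then makes $\operatorname{Var}\widehat\Theta>0$.

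\textbf{Step 3: part (II).} By the lazy-training result, $\sup_\tau\|K_\tau^n-K_\infty\|\to0$ in the infinite-width limit. The convergence hypothesis on the sequence $\tau_k$ gives $m_{\boldsymbol{\theta}(\tau_k)}\to m^*$. By continuity of $\Phi$, the late-time kernel therefore approaches $\Phi(m^*,K_\infty)$. Since
$$\sup_\tau\Big\|\Theta^n_\tau-\Theta^\infty_0\Big\|\ \ge\ \limsup_k \Big\|\Theta^n_{\tau_k}-\Theta^\infty_0\Big\|,$$
it suffices to show that $\Phi(m^*,K_\infty)\neq\Phi(m_0+F_\infty,K_\infty)$ almost surely. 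The model $m^*$ is determined by the data fit, whereas $m_0+F_\infty$ is a Gaussian random field, so the event $m_0+F_\infty=m^*$ (or any coincidence of the kernels) has probability zero. Nonvanishing of the supremum would again follow from injectivity of $\Phi$ in $m$ on a set of full measure.

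The main obstacle is the non-degeneracy and injectivity of $m\mapsto\Phi(m,K_\infty)$, i.e., showing that the sensitivity kernels truly vary with $m$ so that the limits are random and differ almost surely. The first approach I would try is a perturbative one: compute the directional derivative of $\Phi$ along a smooth perturbation $h$ via the second-order Born term, then show it is nonzero for some $h$ in the Cameron–Martin space of $F_\infty$. This already suffices for positive variance, and it gives almost-sure distinctness by an analyticity or zero-set argument.
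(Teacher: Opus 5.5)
Your plan follows the paper's proof step for step. For (I): the Gaussian-process limit of $m_{\boldsymbol\theta(0)}$ together with convergence of the standard NTK to a deterministic kernel, continuity of the sensitivity kernel in $m$ (the paper's Lemma~\ref{lemma: continuous_sensitivity}), then Slutsky and the continuous mapping theorem. For (II): lazy training, $m_{\boldsymbol\theta(\tau_k)}\to m^*$, continuity in $m$, and a null-set argument. Where the two differ in detail, yours is tighter.
\begin{itemize}
\item You apply the continuous mapping theorem to $(m_{\boldsymbol\theta(0)},K_0^n)$ as random elements of function spaces. This needs tightness of $F_{\boldsymbol\theta(0)}$ in $C(\bar U)$, which the bounded derivatives of $\sigma$ supply. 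The paper instead applies Slutsky pointwise in $(\mathbf y,\mathbf z)$ and then integrates, which pointwise convergence in distribution does not justify.
\item You keep the time index $k$ separate from the width $n$; the paper conflates them.
\item You propose an actual mechanism (a nonzero Cameron--Martin derivative plus analyticity) for non-determinism and for the level set $\{m:\Phi(m,K_\infty)=\Phi(m^*,K_\infty)\}$ being null. The paper only asserts that a non-constant continuous $f$ has null level sets, which is false in general.
\end{itemize}

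There is, however, a genuine gap, which the paper shares. The continuity of $\Phi$ you invoke holds only on the admissible set $\mathcal A$ of Assumption~\ref{ass: acoustic_wave_equation}. You say so yourself (``for $m$ bounded below by a positive constant''), and the paper's Lipschitz constants depend on $m_1,m_2$. But $m_0+F_\infty$ is a nondegenerate Gaussian field and leaves $\mathcal A$ with positive probability.

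Already at finite width, the hidden-layer sum $\tfrac{1}{\sqrt n}W^1\sigma(W^0\mathbf x_0+b^0)$ is symmetric and independent of the output bias $b^1\sim\mathcal N(0,1)$. Hence $\mathbb P\big(m_{\boldsymbol\theta(0)}(\mathbf x_0)<0\big)\ge\tfrac12\,\mathbb P\big(b^1<-m_0(\mathbf x_0)\big)>0$, uniformly in $n$. On that event the kernel is not merely discontinuous but undefined. Where $m<0$, the operator $m\,\partial_t^2-\nabla^2$ is elliptic in $(t,\mathbf x)$, the initial-value problem is ill-posed, and $\mathcal G[m]$ does not exist.

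The continuous mapping theorem requires the limit to lie almost surely in the set where $\Phi$ is defined and continuous, so Step~2 fails as written. Step~3 needs uniform bounds on $\delta\mathcal G/\delta m$ along the trajectory and fails for the same reason. The statement must be repaired before either proof goes through. Two options:
\begin{itemize}
\item Work conditionally on $\{m_{\boldsymbol\theta(0)}\in\mathcal O\}$ with $\mathcal O=\{m_1<m<m_2\text{ on }\bar U\}$, and assume the trajectory stays in $\mathcal O$. For $m_0$ strictly inside the bounds this event has positive limiting probability and null boundary, so the conditional laws still converge.
\item Compose the network with a smooth map $c$ into $[m_1,m_2]$. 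This replaces $K_\tau(\mathbf y,\mathbf z)$ by $c'(\tilde m(\mathbf y))\,c'(\tilde m(\mathbf z))\,K_\tau(\mathbf y,\mathbf z)$, with $\tilde m=m_0+F_{\boldsymbol\theta}$.
\end{itemize}

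Two smaller points in Step~3.
\begin{itemize}
\item The bound $\sup_{\tau\ge0}\|K_\tau^n-K_\infty\|\to0$ is established by Jacot et al.\ only on finite horizons $[0,T]$. On $[0,\infty)$ for the FWI loss it is an extra assumption, which the paper also makes.
\item ``$m^*$ is determined by the data fit'' gives the independence you use only if $m^*$ is a fixed model, such as the true one. If $m^*$ is the limit reached from the random initialization, then in the lazy limit $m^*=\psi(m_0+F_\infty)$. The zero-set argument must then be applied to $m\mapsto\Phi(\psi(m),K_\infty)-\Phi(m,K_\infty)$.
\end{itemize}
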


\begin{remark}
The proof is provided in Appendix \ref{appendix: ntk_theorem}. The proof core lies in the fact that the wave kernel changes in response to variations in the velocity model. Here, the global minimum convergence assumption can be guaranteed for the output of nonlinear functions \citep{liu2020linearity, liu2022loss}, while having the potential to be extended to nonlinear operators in future work. By Theorem \ref{the: ntk_theorem}, we conclude that the wave kernel is not a deterministic kernel during the training process. The conclusion that NTK is not a deterministic kernel has also been presented in other literature and research areas, like nonlinear PINN \citep{ntk_bonfanti2024challenges} and classification problems \citep{yu2025divergence}.
\end{remark}
\begin{remark}
Although the training dynamics of the wave-based NTK cannot be uniformly characterized by a fixed NTK matrix due to the stochasticity throughout the training trajectory, the time-varying NTK can still capture the local convergence behavior within sufficiently small intervals around time $\tau$ using tools such as stochastic differential equations and probability bounds \citep{evans2012introduction}, which is a promising direction for future research. Specifically, within a sufficiently small training window, the velocity model changes small, keeping the wave-based NTK nearly constant. This local stability allows the kernel's eigenvalue spectrum to quantitatively estimate local convergence rates. The wave-based NTK is a self-adjoint, compact, and non-negative definite operator (see Proposition \ref{proposition: semi_position}), which has a spectral decomposition ${\cal K}(\Theta_{\mathrm{wave}}^{\mathrm{ntk}}) = \sum_{k=1}^{\infty}\lambda_k \phi_k\otimes\phi_k$ (where ${\cal K}$ is wave-based NTK operator, $\{\lambda_k\}_{k=1}^\infty$ denotes eigenvalue, $\{\phi_k\}_{k=1}^\infty$ is the corresponding eigenfunction). The subsequent substitution of this decomposition into \eqref{equ: wave_based_ntk} leads to the following data misfit evolution equation along different spectral directions (see Proposition \ref{proposition: eigenvalue_decomposition}): 
{\small\begin{align}
    |{\bf Q}(u_{\text{syn}}^D(\tau)-u_{\text{obs}}^D)| = e^{-{\bf \Lambda}\tau}|{\bf Q}\cdot (u_{\text{syn}}^D(0)-u_{\text{obs}}^D)|,
\end{align}}
where $\Lambda = \text{diag}(\lambda_1, \lambda_2, \ldots)$ is eigenvalue sequence and ${\bf Q}f \triangleq {\langle f, \phi_k \rangle}_{k=1}^\infty$ denotes the spectral projection operator. Hence, larger NTK eigenvalues yield faster error reduction along their corresponding directions, i.e., the eigenvalue decay rate determines the convergence behavior.
\end{remark}

To explain the performance difference between conventional and INR-based FWI, the following theorem demonstrates that the latter exhibits a faster eigenvalue decay rate, i.e., INR-based FWI can capture large-scale low-frequency components to enhance the robustness and reduce the dependency on initial model accuracy, while its resolution of high-frequency details is consequently limited.

\begin{theorem}
\label{thm:eigenvalue-comparison}
Consider the wave kernel $\Theta_{\mathrm{wave}}$ in \eqref{equ: wave_kernel} and the wave-based NTK $\Theta_{\mathrm{wave}}^{\mathrm{ntk}}$ in \eqref{equ: wave_based_ntk} with eigenvalues $\{\lambda_j\}_{j=1}^\infty$ and $\{\mu_j\}_{j=1}^\infty$ (arranged in non-increasing order), respectively. Assume that the operator norm of $K_{\tau}$ satisfies $\|K_{\tau}\| \leq 1$. Then, for all $j \in \mathbb{N}$, we have $\mu_j \leq \lambda_j$.
\end{theorem}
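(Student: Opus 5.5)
We will rewrite both kernels in operator form and then compare their eigenvalues with the Courant--Fischer min--max principle. Write $A \triangleq \frac{\delta \mathcal{G}}{\delta m}[m] : L^2(U) \to L^2(D\times T)$ for the linearized forward (sensitivity) operator, $(Ah)({\bf x},t) = \int_U \frac{\delta \mathcal{G}({\bf x},t)}{\delta m({\bf y})}[m]\, h({\bf y})\,d{\bf y}$. Let $\mathcal{K}_\tau$ denote the integral operator on $L^2(U)$ with kernel $K_\tau({\bf y},{\bf z};\boldsymbol{\theta})$. From \eqref{equ: wave_kernel}, the Dirac kernel $K_\delta$ acts as the identity, so the wave kernel operator is $AA^*$. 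From \eqref{equ: wave_based_ntk}, the wave-based NTK operator is $A\mathcal{K}_\tau A^*$.

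We take as given the standard facts recalled in the remark after Theorem \ref{the: ntk_theorem} and in Proposition \ref{proposition: semi_position}. These say that both operators are self-adjoint, compact and nonnegative, so their eigenvalues are the non-increasing sequences $\{\lambda_j\}$ and $\{\mu_j\}$. Compactness follows because $A$ is Hilbert--Schmidt, given square-integrable sensitivity kernels. For $\mathcal{K}_\tau$:
\begin{itemize}
\item it is symmetric;
\item it is positive semidefinite, since $\langle h,\mathcal{K}_\tau h\rangle = \sum_i \langle \partial_{\theta_i} m_{\boldsymbol{\theta}}, h\rangle^2 \ge 0$.
\end{itemize}

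The key step is the operator inequality $A\mathcal{K}_\tau A^* \preceq AA^*$. Since $\mathcal{K}_\tau$ is positive semidefinite with $\|\mathcal{K}_\tau\|\le 1$, we have $\mathcal{K}_\tau \preceq I$. Hence, for any $g \in L^2(D\times T)$,
\begin{align*}
\langle g, A\mathcal{K}_\tau A^* g\rangle = \langle A^*g, \mathcal{K}_\tau A^* g\rangle \le \|A^*g\|^2 = \langle g, AA^* g\rangle .
\end{align*}

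We then apply the min--max characterization for compact nonnegative self-adjoint operators,
\begin{align*}
\mu_j = \max_{\dim V = j}\ \min_{g\in V,\ \|g\|=1} \langle g, A\mathcal{K}_\tau A^* g\rangle ,
\end{align*}
and the same formula for $\lambda_j$ with $AA^*$. For every $j$-dimensional subspace $V$, the inner minimum for $A\mathcal{K}_\tau A^*$ is bounded by the inner minimum for $AA^*$, by the quadratic form inequality. Taking the maximum over $V$ gives $\mu_j \le \lambda_j$ for all $j\in\mathbb{N}$.

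An alternative route uses singular values. Write $A\mathcal{K}_\tau A^* = (A\mathcal{K}_\tau^{1/2})(A\mathcal{K}_\tau^{1/2})^*$. Then $\mu_j = s_j(A\mathcal{K}_\tau^{1/2})^2 \le s_j(A)^2\|\mathcal{K}_\tau^{1/2}\|^2 \le \lambda_j$.

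The main obstacle is not the comparison itself but the functional-analytic setup:
\begin{itemize}
\item Identifying the Dirac-kernel expression with $AA^*$ requires care, since $K_\delta$ is a distribution rather than a bounded kernel. We will handle this by working directly at the operator level, where $K_\delta$ is the identity.
\item We must ensure the min--max principle applies, i.e. that both operators are compact so that the eigenvalue sequences are well defined (including zeros if the rank is finite).
\item We must interpret $\|K_\tau\|\le 1$ as the $L^2(U)$ operator norm, so that $\mathcal{K}_\tau \preceq I$ holds.
\end{itemize}
Once these are in place, the inequality is immediate.
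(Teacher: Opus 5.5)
Your proof is correct and follows the paper's argument essentially verbatim. Both write the two kernels as $\mathcal{F}_m K \mathcal{F}_m^*$ with $K = I$ (the Dirac kernel) versus $K_\tau$, deduce $K_\tau \preceq I$ from $\|K_\tau\|\le 1$, transfer the quadratic-form inequality through $\mathcal{F}_m^*$, and conclude via Courant--Fischer. Your max--min form $\mu_j = \max_{\dim V = j}\min_{g\in V,\|g\|=1}\langle g, A\mathcal{K}_\tau A^* g\rangle$ is in fact the correct characterization of the $j$-th largest eigenvalue, whereas the paper's displayed $\min_{\dim V = j}\max$ has the order swapped, though the monotonicity argument goes through either way; your singular-value variant is also a valid shortcut.
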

\begin{remark}
The proof is provided in Appendix \ref{appendix: theorm_spectral}. It is reasonable to assume $||K_{\tau}|| \leq 1$ because the NTK is often normalized or scaled such that its largest eigenvalue is bounded by 1 \citep{jacot2018neural}, which is a common practice in theoretical analysis. 
\end{remark}
Theorem \ref{thm:eigenvalue-comparison} indicates that the smooth kernel incorporated in the continuous representation truncates the rate along its high-frequency convergence direction. As a result, the rapid decay of eigenvalues in the wave-based NTK implies that low-frequency components (corresponding to larger eigenvalues) are optimized rapidly, thereby alleviating the risk of cycle-skipping and decreasing reliance on an accurate initial model. In contrast, high-frequency components, associated with the sharply decaying tail of the eigenvalue spectrum, converge more slowly due to their small eigenvalues, which lead to a slower reduction in error. By connecting the eigenvalue decay behavior of the wave-based NTK to optimization performance, such as robustness and convergence, it becomes essential to develop novel CR-FWI methods with deliberately tailored eigenvalue decay properties.

\begin{wrapfigure}{r}{0.5\textwidth}
    \vspace{-0.5cm} 
    \centering
    \includegraphics[width=1\linewidth]{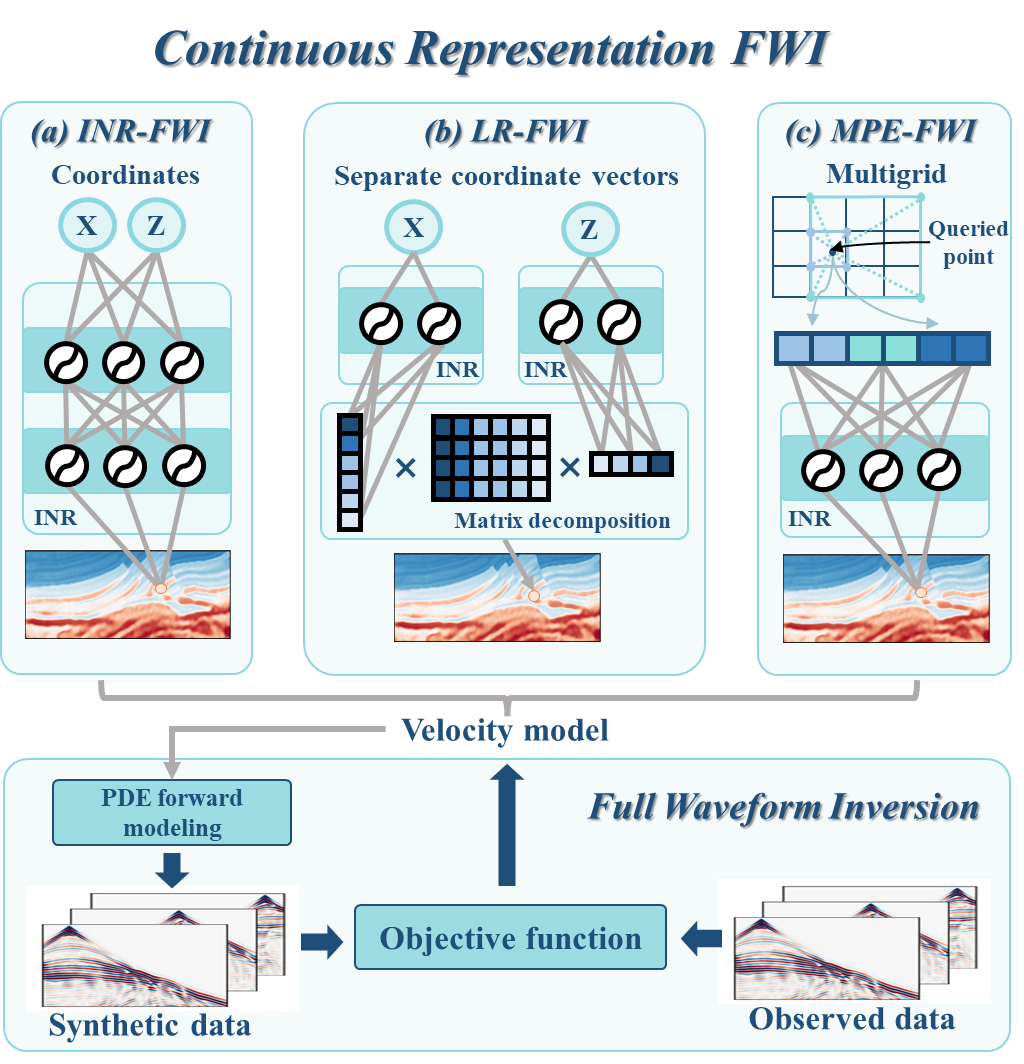}
    \vspace{-0.6cm} 
    \caption{Pipeline of CR-FWI. CR-FWI employs (a) implicit neural representation, (b) low rank tensor function, or (c) multi-grid parametric encoding to represent the velocity parameter model and integrate the wave equation in a loop.}
    \label{fig: CR-FWI-workflow}
\end{wrapfigure}

\vspace{-0.2cm}
\section{Novel Continuous Representation FWI Methods}
\vspace{-0.2cm}

We propose novel low-rank tensor function (LR-FWI) and multi-grid parametric encoding (MPE-FWI)-based FWI methods using continuous representations, shown in Fig. \ref{fig: CR-FWI-workflow}. LR-FWI and MPE-FWI improve the high-frequency convergence rate by reducing the attenuation rate of eigenvalues (Theorem \ref{thm: spectral_enhancement_concise}). To further achieve a trade-off between robustness and convergence, we propose an integrated INR and multigrid representation (IG-FWI), whose eigenvalue decay rate is between INR-based and MPE-based FWI methods (Theorem \ref{the: ig_ntk}). 

\textbf{INR-based FWI.}\quad Traditional INR methods \citep{sitzmann2020implicit,sun2023implicit,yang2025gabor} use a coordinate-based neural network (e.g., MLP) with special activation functions to parameterize the velocity model (Fig. \ref{fig: CR-FWI-workflow} (a)). The INR can be expressed as
\begin{equation}\small
F_{\boldsymbol{\theta}}({\bf x})={\bf H}_{D}(\sigma({\bf H}_{D-1}(\cdots\sigma({\bf H}_{1}({{\bf x}})))), \label{equ: inr}
\end{equation}
where $\boldsymbol{\theta} = \{{\bf H}_{d}\}_{d=1}^{D}$ are weights of the MLP and $\sigma(\cdot)$ is a scalar activation function. The choice of activation function significantly impacts the high-resolution representation capability. In seismic FWI, \citet{sun2023implicit} developed IFWI using SINRE's activation function \citep{sitzmann2020implicit}, while \citet{yang2025gabor} proposed WinFWI incorporating Gabor wavelet activation functions \citep{saragadam2023wire}.

\textbf{LR-FWI.}\quad The reparameterized subsurface geophysical parameters typically exhibit inherent structural constraints, such as low-rank and non-local similarity \citep{43li2024inexact}. To embed these properties, LR methods decompose the velocity model using tensor factorization methods (e.g., Tucker and CP decomposition) and represent the low-dimensional tensor separately using INRs \citep{luo2023low}, as shown in Fig. \ref{fig: CR-FWI-workflow} (b). The general formulation of 2D LR-FWI can be expressed as:
\begin{equation}\small\label{eq:LRINR_2D}
F_{\boldsymbol{\theta}}({\bf x}) = [{\bf C};F_{\boldsymbol{\theta_1}},F_{\boldsymbol{\theta}_2}]({{\bf x}}) = F_{\boldsymbol{\theta}_1}(x_1) \times \mathbf{C} \times F_{\boldsymbol{\theta}_2}(x_2)^\top,
\end{equation}
where ${\bf x} = (x_1, x_2)$ denotes spatial coordinates, $ F_{\boldsymbol{\theta}_1}: \mathbb{R} \rightarrow \mathbb{R}^{1 \times r_1}$, $ F_{\boldsymbol{\theta}_2}: \mathbb{R} \rightarrow \mathbb{R}^{1 \times r_2}$ are one dimensional coordinate networks (e.g., INRs), and $\mathbf{C} \in \mathbb{R}^{r_1 \times r_2}$ is the core matrix. Hence, the neural parameters are $\boldsymbol{\theta}=\{\boldsymbol{\theta}_1,\boldsymbol{\theta}_2,{\bf C}\}$. {Moreover, the LR-FWI method enhances the convergence rate for high-frequency components due to an appropriate eigenvalue decay rate, which we have empirically verified through numerical experiments (see Fig. \ref{fig: ntk_experiment} (c)). However, providing a rigorous mathematical proof of the eigenvalue decay rate remains challenging due to the complex structure of the tensor product, which will be a key objective of our future research based on tensor decomposition theory \citep{kolda2009tensor} and matrix analysis \citep{horn2012matrix}.}

    \textbf{MPE-based FWI.}\quad Multi-grid parametric encoding employs trainable auxiliary data structures (e.g., grid-based representations) to construct higher-dimensional embedding spaces. As illustrated in Fig. \ref{fig: CR-FWI-workflow} (c), MPE-based FWI leverages a multigrid hash encoding \citep{muller2022instant} to represent the velocity model. The hash function $h(\cdot): U \to\mathbb{R}^{n_g \times n_f}$ maps a coordinate point ${{\bf x}} \in U$ to a feature vector via $h({\bf x}) \in \mathbb{R}^{n_g \times n_f}$, where $n_g$ denotes the number of multigrid levels and $n_f$ is the number of features per grid. Then, these interpolated features are passed through a lightweight INR to produce the physical velocity value. The overall representation can be expressed as:
\vspace{-0.1cm}
\begin{equation}\small
F_{\boldsymbol{\theta}}({\bf {\bf x}}) = \text{MLP}\left( h({\bf {\bf x}}); \boldsymbol{\theta} \right),\quad \text{where}\ 
h({\bf {\bf x}}) = \Big( \bigoplus_{i=1}^d x_i \pi_i \Big)\ \text{mod} \ T, \label{equ: hash}
\end{equation}
where $\pi_i$ are large prime numbers, $T$ is the hash table size, and $\oplus$ denotes the bit-wise exclusive-or operation. For each query point ${\bf {\bf x}}$, the feature is obtained by interpolating the embeddings from adjacent grid vertices across multiple resolution levels \citep{audia2025learnable,luo2025continuous}. 

The MPE approach integrates the representational flexibility of INR with the efficient spatial lookup offered by multigrid hash encoding, leveraging a grid structure to accelerate convergence and alleviate spectral bias \citep{audia2025learnable}. The effectiveness of MPE-FWI is explained in Theorem \ref{thm: spectral_enhancement_concise}, which provides a lower bound on the eigenvalues of its wave-based NTK.
\begin{theorem} 
\label{thm: spectral_enhancement_concise}
Let $\Theta_{\text{INR}}^{\text{ntk}}$ and $\Theta_{\text{MPE}}^{\text{ntk}}$ be the wave-based NTK for INR-based FWI and MPE-based FWI with eigenvalues $\{\lambda_i(\Theta_{\mathrm{INR}}^{\text{ntk}})\}_{i=1}^\infty$ and $\{\lambda_i(\Theta_{\mathrm{MPE}}^{\text{ntk}})\}_{i=1}^\infty$ in non-increasing order, respectively. Then, for a dataset of size $n$, the $i$-th eigenvalues satisfy $\lambda_i(\Theta_{\text{MPE}}^{\text{ntk}}) \geq \lambda_i(\Theta_{\text{INR}}^{\text{ntk}})$.
\end{theorem}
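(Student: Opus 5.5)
We work in the discrete setting implied by ``a dataset of size $n$''. Fix $n$ sampled space–time data points and a quadrature grid $\{{\bf y}_k\}$ on $U$. Let $J\in\mathbb{R}^{n\times N}$ be the discretized sensitivity matrix $\delta\mathcal{G}({\bf x}_i,t_i)/\delta m({\bf y}_k)[m]$, evaluated at the same velocity model $m$ for both methods. Then by \eqref{equ: wave_based_ntk} the two wave-based NTKs are Gram matrices of the form
\begin{align*}
\Theta_{\mathrm{INR}}^{\mathrm{ntk}} = J K_{\mathrm{INR}} J^\top,\qquad \Theta_{\mathrm{MPE}}^{\mathrm{ntk}} = J K_{\mathrm{MPE}} J^\top ,
\end{align*}
where $K_{\bullet}=\Phi_\bullet\Phi_\bullet^\top$ and $\Phi_\bullet$ is the Jacobian of $m_{\boldsymbol\theta}({\bf y}_k)$ with respect to the parameters of each representation. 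Eigenvalue comparisons of $\Theta$ then reduce to comparisons of the model-space kernels $K$. The Courant–Fischer min–max principle gives $\lambda_i(JAJ^\top)\ge\lambda_i(JBJ^\top)$ whenever $A\succeq B$ in the Loewner order, since $JAJ^\top-JBJ^\top=J(A-B)J^\top\succeq0$. This is the same monotonicity mechanism presumably used for Theorem \ref{thm:eigenvalue-comparison}. So the plan is to show $K_{\mathrm{MPE}}\succeq K_{\mathrm{INR}}$.

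To obtain this, we decompose the MPE parameters as $\boldsymbol\theta=(\boldsymbol\theta_{\mathrm{grid}},\boldsymbol\theta_{\mathrm{MLP}})$. Because the NTK is a sum over parameters, $K_{\mathrm{MPE}}=K_{\mathrm{grid}}+K_{\mathrm{MLP}}$, where both terms are PSD. The term $K_{\mathrm{MLP}}$ is the NTK of the lightweight MLP acting on the encoded features $h({\bf x})$. We identify the INR of \eqref{equ: inr} with this MLP part, meaning the INR is the special case in which the encoding is the identity and the grid tables are frozen. This identification is the modelling choice that makes the comparison meaningful, and we state it explicitly as an assumption: same network and same $m$, with the grid features adding parameters on top. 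Under it, $K_{\mathrm{MPE}}=K_{\mathrm{INR}}+K_{\mathrm{grid}}\succeq K_{\mathrm{INR}}$, and the theorem follows from the Weyl/Courant–Fischer step above.

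The term $K_{\mathrm{grid}}$ deserves a description. Each grid-vertex feature enters $m_{\boldsymbol\theta}({\bf y})$ only through multilinear interpolation weights $w_v({\bf y})$, so
\begin{align*}
K_{\mathrm{grid}}({\bf y},{\bf z})=\sum_{\ell}\sum_{v}\big\langle \nabla_h\mathrm{MLP}({\bf y}),\nabla_h\mathrm{MLP}({\bf z})\big\rangle_{v}\, w_v^{\ell}({\bf y})w_v^{\ell}({\bf z}).
\end{align*}
This is a sum of locally supported PSD kernels across resolution levels. It is not needed for the inequality, but it explains why the additive gain is concentrated on the high-frequency tail, as in \citet{audia2025learnable}. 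We would remark on this afterwards.

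The main obstacle is justifying $K_{\mathrm{MLP}}\succeq K_{\mathrm{INR}}$, i.e.\ the identification step. If the MPE's MLP genuinely differs from the INR (different input, width, activation), then the two MLP kernels are not ordered in general. The fallback is to assume, as in \citet{audia2025learnable}, that the two share the same MLP backbone and are evaluated at the same parameters. An alternative is a weaker comparison, $\lambda_i(\Theta_{\mathrm{MPE}})\ge\lambda_i(JK_{\mathrm{grid}}J^\top)$, combined with an explicit lower bound on the grid kernel relative to $K_{\mathrm{INR}}$ under $\|K_{\mathrm{INR}}\|\le1$-type normalization, in the spirit of the remark after Theorem \ref{thm:eigenvalue-comparison}. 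We would try the shared-backbone assumption first.
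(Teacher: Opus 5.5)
Your proposal is correct and follows essentially the same route as the paper: the paper writes both kernels as $\mathcal{F}_m K \mathcal{F}_m^*$, takes the decomposition $K_{\mathrm{MPE}} = K_{\mathrm{INR}} + K^+$ with $K^+ \succeq 0$ directly from \citet{audia2025learnable}, and concludes by Courant--Fischer monotonicity (its Theorem~\ref{thm: spectral_dominance_abstract}). Your parameter-splitting derivation of $K^+ = K_{\mathrm{grid}}$ and your explicit shared-backbone, same-$m$ assumption make visible what the paper assumes implicitly when it invokes that decomposition.
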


\begin{remark}
The proof can be found in Appendix \ref{appendix: theorm_inr_mpe}. The proof of this theorem is based on the fact that the eigenvalues of NTK of MPE are not smaller than those in INR, i.e., the inequality \citep{audia2025learnable} $\lambda_i(\mathbf{K}_{\text{MPE}})\geq \lambda_i(\mathbf{K}_{\text{INR}}) + \lambda_n(\mathbf{K}^+)\geq \lambda_i(\mathbf{K}_{\text{INR}})$, where $\mathbf{K}_{\text{INR}}$ and $\mathbf{K}_{\text{MPE}}$ are the NTKs for INR and the same INR composed with MPE, respectively, and $\mathbf{K}^+$ is a positive semi-definite matrix arising from the learnable grid parameters in MPE. This theorem guarantees a strict elevation of the entire eigenvalue spectrum, accelerating the learning of high-frequency components, especially starting from a smooth initial velocity model. However, the eigenvalue decay rate is inadequate to address the ill-posedness and nonlinearity inherent in FWI (see Tab. \ref{tab:results2}). Therefore, \textbf{a novel continuous representation tailored for FWI} needs to be developed, one that incorporates a suitable eigenvalue decay rate to achieve {a more balanced trade-off} between convergence and robustness.
\end{remark}
\vspace{-0.3cm}
\subsection{Integrated INR-multigrid hybrid representation}
\vspace{-0.2cm}
\begin{wrapfigure}{r}{0.5\textwidth}
\vspace{-0.5cm}
    \centering
    \includegraphics[width=1\linewidth]{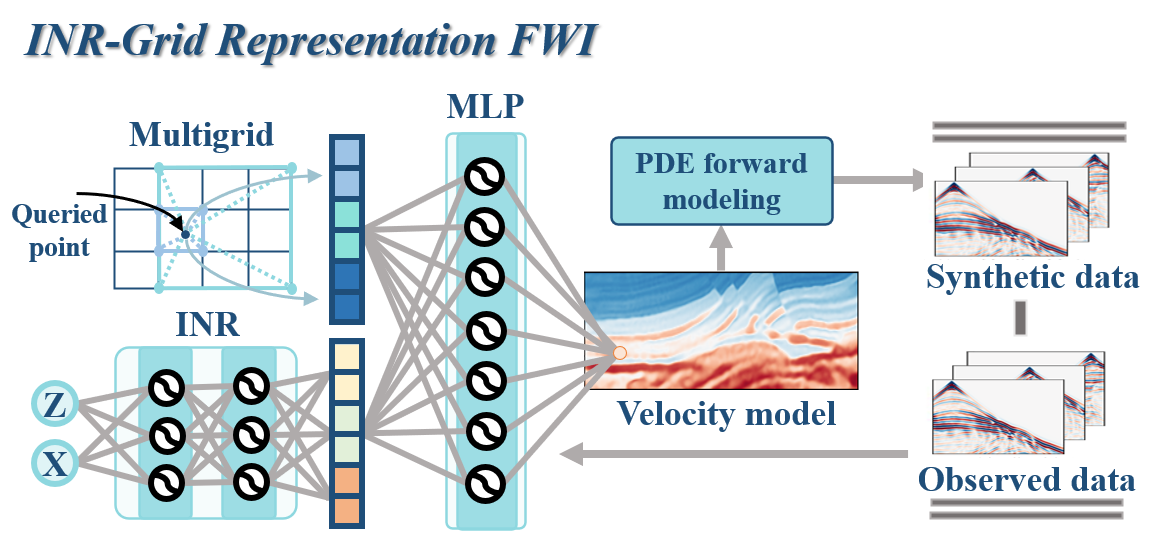}
    \vspace{-0.6cm}
    \caption{Pipeline of the proposed INR-Grid hybrid representation for FWI. This method combines encoding features from INR and MPE, which are then fused using an MLP network.}
    \label{fig: IG-FWI-workflow}
    \vspace{-0.2cm}
\end{wrapfigure}

Based on eigenvalue analysis for wave kernel and wave-based NTK, the high-frequency convergence of INR-based FWI methods is inherently limited due to faster eigenvalue decay (Theorem \ref{thm:eigenvalue-comparison}), whereas the robustness of classical FWI and MPE-based FWI methods remains constrained due to slower eigenvalue decay (Theorem \ref{thm: spectral_enhancement_concise}). To achieve a balanced robustness-convergence trade-off, we propose a novel continuous representation by integrating the INR and MPE for FWI, termed IG-FWI, as shown in Fig. \ref{fig: IG-FWI-workflow}. IG-FWI employs a tiny INR to implicitly encode smooth features into the latent space, which are then combined with multigrid encoding features. Next, these features are combined using a tiny MLP. The reparameterized velocity model can be expressed as follows:
\begin{equation}\small
    F_{\boldsymbol{\theta}}({{\bf x}}) = \text{MLP}\big({\bf v}({\bf x})\big),\quad \text{where}\ {\bf v}({\bf x}) = \sqrt{\alpha}\cdot h({{\bf x}}) \oplus \sqrt{(1-\alpha)}\cdot I({{\bf x}}),
\end{equation}
where $h(\cdot)$ is a hash encoding defined in \eqref{equ: hash}, $I(\cdot)$ is a tiny INR defined in \eqref{equ: inr}, and hybrid features ${\bf v} \in \mathbb{R}^{(n_g n_f + n_r)}$ is formed by splicing $h(\cdot)$ and $I(\cdot)$ with scaling $\sqrt{\alpha}$ and $\sqrt{1-\alpha}$, respectively, where $n_r$ denotes the output dimension of $I(\cdot)$. The following theorem indicates that the eigenvalue decay of IG-FWI is, as desired, between that of INR-based FWI and MPE-FWI.
\begin{theorem}\label{the: ig_ntk} Let $\Theta_{\mathrm{IG}}^{\text{ntk}}$ be the NTK of IG-FWI with eigenvalues $\{\lambda_i(\Theta_{\mathrm{IG}}^{\text{ntk}})\}_{i=1}^\infty$. When INR and MPE features are normalized such that their gradient norms are comparable, then for all $i \in \mathbb{N}$,
\[
\lambda_i(\Theta_{\text{INR}}^{\text{ntk}}) \leq \lambda_i(\Theta_{\mathrm{IG}}^{\text{ntk}}) \leq \lambda_i(\Theta_{\text{MPE}}^{\text{ntk}}).
\]
Hence, the eigenvalue decay rate of $\lambda_i(\Theta_{\mathrm{IG}}^{\text{ntk}})$ lies between those of $\lambda_i(\Theta_{\text{INR}}^{\text{ntk}})$ and $\lambda_i(\Theta_{\text{MPE}}^{\text{ntk}})$.
\end{theorem}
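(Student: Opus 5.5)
The plan is to reduce the statement to an eigenvalue comparison of the underlying parameter-space NTKs $K_\tau$, and then push it through the wave-based sandwich $\Theta^{\mathrm{ntk}}_{\mathrm{wave}} = \mathcal{S} K_\tau \mathcal{S}^*$. Here $\mathcal{S} = \frac{\delta\mathcal{G}}{\delta m}[m]$ is the sensitivity operator appearing in \eqref{equ: wave_based_ntk}. We fix the velocity model $m$, and hence $\mathcal{S}$, at a common point, in the quasi-static regime used in the remarks after Theorem \ref{the: ntk_theorem}. As in Theorem \ref{thm: spectral_enhancement_concise}, we work with a dataset of $n$ points, so all kernels are $n\times n$ positive semidefinite matrices.

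\textbf{Step 1: the parameter-space NTK of IG-FWI.} Write $\boldsymbol{\theta}=\{\boldsymbol{\theta}_{\mathrm{MLP}},\boldsymbol{\theta}_h,\boldsymbol{\theta}_I\}$. Since ${\bf v}=\sqrt{\alpha}\,h\oplus\sqrt{1-\alpha}\,I$ and the grid and INR parameters are disjoint, the chain rule splits the NTK as
\[
\mathbf{K}_{\mathrm{IG}} = \mathbf{K}_{\mathrm{MLP}} + \alpha\,\mathbf{K}_h + (1-\alpha)\,\mathbf{K}_I .
\]
Here $\mathbf{K}_h$ and $\mathbf{K}_I$ are the NTKs contributed by the grid features and the INR features, each propagated through the Jacobian of the fusion MLP. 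The normalization hypothesis "gradient norms comparable" will be interpreted as the Loewner-order sandwich $\mathbf{K}_I\preceq\mathbf{K}_h$, with $\mathbf{K}_{\mathrm{MLP}}$ treated as a common term. This is consistent with the decomposition $\mathbf{K}_{\mathrm{MPE}}=\mathbf{K}_{\mathrm{INR}}+\mathbf{K}^+$ quoted from \citet{audia2025learnable} in the remark after Theorem \ref{thm: spectral_enhancement_concise}. We identify $\mathbf{K}_{\mathrm{INR}}\simeq \mathbf{K}_{\mathrm{MLP}}+\mathbf{K}_I$ and $\mathbf{K}_{\mathrm{MPE}}\simeq\mathbf{K}_{\mathrm{MLP}}+\mathbf{K}_h$, where $\mathbf{K}_h-\mathbf{K}_I\succeq 0$ plays the role of $\mathbf{K}^+$. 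The convex combination then gives
\[
\mathbf{K}_{\mathrm{INR}}\preceq \mathbf{K}_{\mathrm{IG}} = \mathbf{K}_{\mathrm{INR}}+\alpha(\mathbf{K}_h-\mathbf{K}_I)\preceq \mathbf{K}_{\mathrm{MPE}} .
\]

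\textbf{Step 2: transfer to the wave-based NTK.} Congruence preserves the Loewner order: $A\preceq B$ implies $\mathcal{S}A\mathcal{S}^*\preceq\mathcal{S}B\mathcal{S}^*$, since $\langle \mathcal{S}(B-A)\mathcal{S}^*f,f\rangle=\langle (B-A)\mathcal{S}^*f,\mathcal{S}^*f\rangle\ge 0$. Therefore
\[
\Theta^{\mathrm{ntk}}_{\mathrm{INR}}\preceq\Theta^{\mathrm{ntk}}_{\mathrm{IG}}\preceq\Theta^{\mathrm{ntk}}_{\mathrm{MPE}} .
\]
By Proposition \ref{proposition: semi_position} these are compact, self-adjoint and nonnegative. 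The Courant--Fischer min-max characterization (Weyl monotonicity) then gives $\lambda_i(\Theta^{\mathrm{ntk}}_{\mathrm{INR}})\le\lambda_i(\Theta^{\mathrm{ntk}}_{\mathrm{IG}})\le\lambda_i(\Theta^{\mathrm{ntk}}_{\mathrm{MPE}})$ for every $i$. This is the same mechanism as in the proofs of Theorems \ref{thm:eigenvalue-comparison} and \ref{thm: spectral_enhancement_concise}.

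\textbf{Main obstacle.} The hard step is Step 1: rigorously justifying the Loewner sandwich from the vague "comparable gradient norms" hypothesis. The fusion MLP's Jacobian differs between architectures, and the tiny INR in IG-FWI is not literally the INR of INR-based FWI. My first attempt would be to make the hypothesis precise as an assumption: the fusion-layer Jacobians coincide across the three models, and the feature Jacobian Gram matrices satisfy $J_I J_I^\top\preceq J_h J_h^\top$. The result would then hold exactly under this assumption. I would note that a pure norm-comparability condition only controls traces or top eigenvalues, and does not suffice for a full eigenvalue-wise ordering.
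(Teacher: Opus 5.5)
Your proposal is correct and follows essentially the same route as the paper. Both arguments split each NTK into a shared $K_{\text{MLP}}$ part plus an encoding part. Both use the ordering of the INR and grid encoding kernels (which the paper imports from \citet{audia2025learnable}, while you take it as your reading of the normalization hypothesis), together with the convex-combination form of the IG encoding kernel, to get $K_{\text{INR}} \preceq K_{\text{IG}} \preceq K_{\text{MPE}}$; the ordering is then pushed through $\mathcal{F}_m(\cdot)\mathcal{F}_m^*$ and Courant--Fischer. The only differences are cosmetic: the paper postulates orthogonality of the INR and grid feature gradients where you get the additive split directly from the disjointness of the parameter sets, and you state explicitly the shared-fusion-Jacobian assumption that the paper uses implicitly when it declares $K_{\text{MLP}}$ identical across the three models.
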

\begin{remark}
The proof is provided in Appendix \ref{appendix: theorm_spectral_inrgrid}. By balancing the eigenvalue decay in this manner, IG-FWI successfully inherits the robustness of INR-based methods and the superior convergence of MPE-based methods, thereby achieving more accurate and reliable inversion results.
\end{remark}

\vspace{-0.4cm}
\section{Experimental Analysis and Applications}
\vspace{-0.2cm}

\textbf{Baselines and Implementation Details.}\ 
We carefully select widely used conventional FWI methods, including automatic differentiation-based FWI (i.e., \textbf{ADFWI}) \citep{liu2025automatic}, ADFWI with TV regularization (i.e., \textbf{ADFWI-TV}) \citep{esser2018total}, ADFWI with weighted envelope correlation-based loss function (i.e., \textbf{WECI-FWI}) \citep{song2023weighted}, and frequency-based multiscale inversion FWI (i.e., \textbf{MS-FWI}) \citep{fichtner2013multiscale}, and existing state-of-the-art CR-FWI methods, including INR-based FWI with SINRE's activation function (i.e., \textbf{IFWI}) \citep{sun2023implicit} and Gabor wavelet activation function (i.e., \textbf{WinFWI}) \citep{yang2025gabor} to compare with our proposed novel CR-FWI methods, including multigrid parametric encoding FWI (i.e., \textbf{MPE-FWI}), low-rank tensor decomposition-based FWI (i.e., \textbf{LR-FWI}), and INR-Grid based FWI (i.e., \textbf{IG-FWI}). We provide more implementation details for each baseline in the Appendix \ref{appendix: experimenal_details_1}. {Moreover, we also provide comparisons with InversionNet \citep{wu2019inversionnet} on OpenFWI datasets.}

\textbf{Datasets and Scenarios.}\ We evaluate the baselines and our proposed methods on Marmousi \citep{brougois1990marmousi}, 2D SEG/EAGE Salt and Overthrust \citep{aminzadeh19963_salt}, and 2004 BP \citep{billette20052004} models using different initial velocity models (i.e., smooth, linear, and constant velocity models). Moreover, we test the robustness w.r.t. the seismic data quality (i.e., noise interference, missing low frequencies, and sparse sampling) for the Marmousi model. The detailed forward modeling and acquisition system settings are provided in Appendix \ref{appendix: experimenal_details_2}. 

\vspace{-0.3cm}
\subsection{Empirical validation of our wave-based NTK theoretical results}
\vspace{-0.3cm}
\begin{figure}[h]
    \centering
    \includegraphics[width=0.95\linewidth]{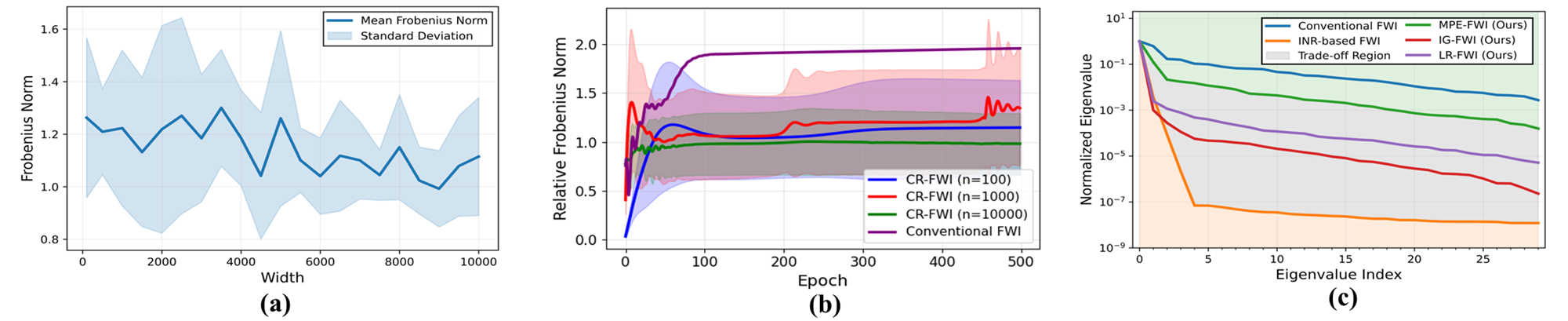}
    \vspace{-0.3cm}
    \caption{(a) Mean and standard deviation of the Frobenius norm of the wave-based NTK at initialization. (b) Relative Frobenius norm during training. (c) Eigenvalues decay using different methods.}
    \label{fig: ntk_experiment}\vspace{-0.3cm}
\end{figure}
To verify that the wave-based NTK does not converge to a fixed kernel either at initialization or during training, we conduct NTK experiments on a one-dimensional FWI problem. The detailed experimental setup is described in Appendix \ref{appendix: wave_based_ntk_1dfwi}, with the results illustrated in Fig. \ref{fig: ntk_experiment} (a) and (b). Furthermore, we compare the eigenvalue decay of the wave-based NTK using ADFWI \citep{liu2025automatic}, INR-based FWI \citep{sun2023implicit}, and our proposed MPE-FWI, LR-FWI, and IG-FWI methods, as shown in Fig. \ref{fig: ntk_experiment} (c). From these experiments, we draw two key observations:
\vspace{-0.1cm}
\begin{itemize}
    \item \textbf{Obs\ 1}: The wave-based NTK is non-stationary, not only during the inversion process, but also at initialization, even when the width $n$ tends to infinity, which supports Theorem \ref{the: ntk_theorem}. 
    \item \textbf{Obs\ 2}: The eigenvalue decay is slowest in conventional FWI, faster in MPE-based FWI, and fastest in INR-based FWI. The proposed IG-FWI exhibits a decay rate between those of MPE-based and INR-based FWI, consistent with Theorems \ref{thm:eigenvalue-comparison}, \ref{thm: spectral_enhancement_concise}, and \ref{the: ig_ntk}. The spectrum of LR-FWI lies in a moderate region as well, and its theoretical analysis warrants discussion in the future.
\end{itemize}

\vspace{-0.3cm}
\subsection{Robustness comparisons with initial models and data qualities}
\vspace{-0.2cm}

\begin{table*}[t]
    \vspace{-0.2cm}
    \centering
    \scriptsize
    \renewcommand{\arraystretch}{1}
    \setlength{\tabcolsep}{1pt}
    \caption{Performance comparison (MSE) of conventional FWI (ADFWI, ADFWI-TV, WECI-FWI, MS-FWI) and CR-FWI (IFWI, WinFWI, LR-FWI, MPE-FWI, IG-FWI) methods. Best results are highlighted in \textbf{bold}, and second-best are \underline{underlined}. Complete results see Appendix \ref{appendix: matrics}.}
    \begin{threeparttable}
    \label{tab:results2}
    \begin{tabular}{@{}l|*{5}{c}|cc|cc|cc|ccc@{}}
        \toprule
        \multirow{3}{*}{\textbf{Method}} & 
        \multicolumn{5}{c|}{\textbf{Marmousi Model}} & 
        \multicolumn{2}{c|}{\textbf{Overthrust Model}} & 
        \multicolumn{2}{c|}{\textbf{Salt Model}} & 
        \multicolumn{2}{c|}{\textbf{2004 BP Model}} &
        \multicolumn{3}{c}{\textbf{Computing Resource}} \\
        \cmidrule(lr){2-6} \cmidrule(lr){7-8} \cmidrule(lr){9-10} \cmidrule(lr){11-12} \cmidrule(lr){13-15}
        & Smooth & Constant & G-Noise & F-Cutoff & S-Shots & Smooth & Constant & Smooth & Constant & Smooth & Constant & Time & Params & Memory \\
        \midrule
        ADFWI & 0.2132 & 1.1522 & 0.5422 & 0.4975 & 0.4730 & 0.0592 & 1.3364 & 0.6462 & 0.6337 & 0.1003 & 0.4281 & 1.89  & 94.00 & \textbf{17.09} \\
        ADFWI-TV & 0.2449 & 0.9386 & 0.3875 & 0.3764 & 0.3547 & 0.1154 & 1.3142 & 0.5041 & 0.5318 & 0.0995 & 0.4259 & 1.92 & 94.00 & \textbf{17.09 }\\
        WECI-FWI & 0.2977 & 1.2537 & 0.6366 & 0.3220 & 0.3337 & 0.1182 & 1.5526 & 0.3032 & 1.0293 & 0.0951 & 0.3847 & 3.12 & 94.00 & 17.56 \\
        MS-FWI & 0.1683 & 1.1313 & 0.3547 & 0.4198 & 0.3325 & 0.0612 & 1.0802 & 0.5358 & 0.9234 & 0.0672 & 0.4667 & 2.25 & 94.00 & \textbf{17.09} \\
        \midrule
        IFWI & 0.1907 & 0.9474 & 0.3374 & 0.3358 & 0.3483 & 0.0683 & 0.6432 & \underline{0.1201} & 0.7412 & 0.0719 & 0.1412 & \textbf{1.88} & 50.05 & 17.48 \\
        WinFWI & 0.2013 & 0.4689 & 0.3514 & 0.3460 & 0.3239 & 0.0682 & 0.5738 & 0.1223 & 0.3549 & 0.0812 & \underline{0.1083} & 1.90 & 60.63 & 18.20 \\
        \textbf{LR-FWI} & 0.1638 & \textbf{0.2893} & \underline{0.2553} & \underline{0.2276} & \underline{0.2641} & \textbf{0.0548} & \textbf{0.1592} & 0.2785 & \underline{0.2368} & \underline{0.0481} & 0.1248 & 1.92 & 77.91 & \underline{17.10} \\
        \textbf{MPE-FWI} & \underline{0.1427} & 2.2266 & 0.2990 & 0.3322 & 0.3338 & 0.0613 & 1.2887 & 0.8534 & 1.1008 & 0.0586 & 1.602 & \underline{1.91} & \textbf{14.26} & 17.25 \\
        \textbf{IG-FWI} & \textbf{0.1423} & \underline{0.2961} & \textbf{0.2151} & \textbf{0.1846} & \textbf{0.1654} & \underline{0.0587} & \underline{0.5724} & \textbf{0.1181} & \textbf{0.1883} & \textbf{0.0521} & \textbf{0.0843} & \underline{1.91} & \underline{39.34} & 17.41 \\
        \bottomrule
    \end{tabular}
\textbf{G-Noise}: $8\sigma_0$ Gaussian noise added to data; \textbf{F-Cutoff}: Data with missing frequencies below $6$ Hz; \textbf{S-Shots}: Only 5 shot gathers used; \textbf{Time}: average running time for one epoch (s); \textbf{Params}: Parameter count (K); \textbf{Memory}: GPU memory overhead (GB).
    \end{threeparttable}
\end{table*}
\begin{figure}[t]
    \centering
    \vspace{-0.2cm}
\includegraphics[width=0.95\linewidth]{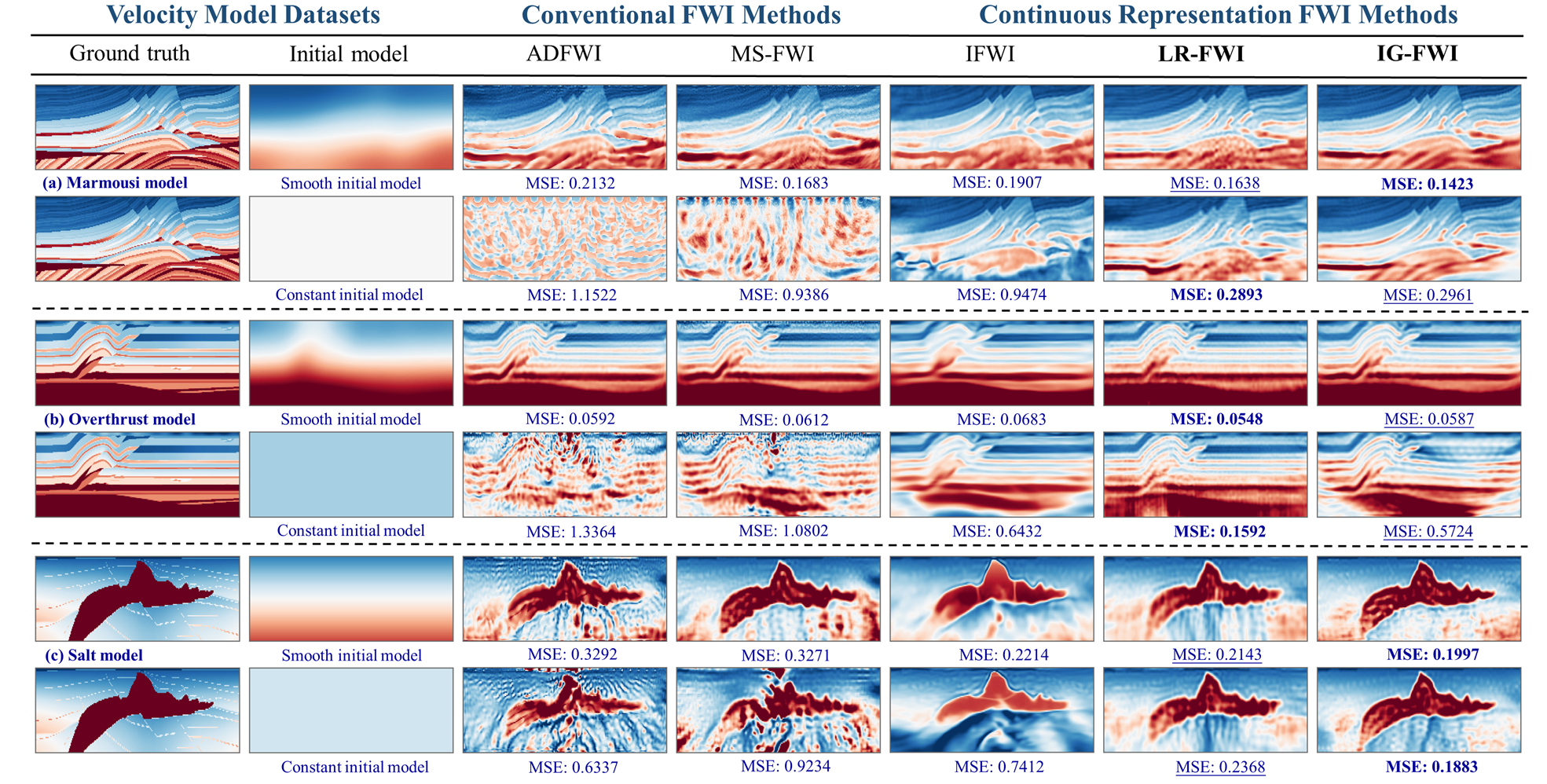}
    \vspace{-0.4cm}
    \caption{Conventional FWI and CR-FWI performance comparison using different initial velocity models on different velocity model datasets. More results are in the Appendices from \ref{appendix: results_beign} to \ref{appendix: results_end}.}
    \label{fig: dataset_results}
    \vspace{-0.4cm}
\end{figure}

To evaluate the performance of the FWI methods, we conduct a series of tests on both the baseline and our proposed CR-FWI methods. The experiments are carried out using four challenging datasets with different initial models, as well as three data-degraded scenarios based on the Marmousi model. Partial results are shown in Fig. \ref{fig: dataset_results} and Tab. \ref{tab:results2} (computing resource metrics are evaluated using a single shot of the BP model per training run). From these results, we draw three main observations:
\vspace{-0.1cm}
\begin{itemize}
    \item \textbf{Obs\ 3}: Conventional FWI (e.g., ADFWI and MS-FWI) and MPE-based FWI methods can recover high-precision velocity models with rapid high-frequency convergence when an accurate and smooth initial velocity model is available. However, the performance of these methods is highly sensitive to the quality of the initial model and the seismic data. This limitation can be attributed to the slow decay rate of eigenvalues, as established in Theorems \ref{thm:eigenvalue-comparison} and \ref{thm: spectral_enhancement_concise}.
    \item \textbf{Obs\ 4}: INR-based FWI methods (e.g., IFWI and WinFWI) can recover satisfactory inversion results even when starting from a constant initial model. However, this enhanced initial model robustness comes at the expense of a slower convergence rate and reduced resolution, which can be attributed to the relatively fast eigenvalue decay rate of the wave-based NTK (Theorem \ref{thm:eigenvalue-comparison}).
    \item \textbf{Obs\ 5}: Our proposed FWI methods (i.e., LR-FWI and IG-FWI) achieve a trade-off between high-precision results and robustness w.r.t. the initial model and data quality, which can be attributed to the suitably controlled eigenvalue decay rate of the wave-based NTK for FWI (Theorem \ref{the: ig_ntk}).
\end{itemize}
\vspace{-0.1cm}
{ Considering the inversion crime and computational challenges, we conduct a comparison on a more realistic 2014 Chevron model (see Appendix \ref{appendix: chevron}) and the 3D Overthrust model (see Appendix \ref{appendix: 3D_overthrust}), which show that our proposed CRFWI can be used in more realistic large-scale inversion problems.}

\vspace{-0.2cm}
\subsection{Convergence behavior analysis for CRFWI}
\vspace{-0.3cm}
\begin{figure}[h]
    \centering
    \includegraphics[width=0.98\linewidth]{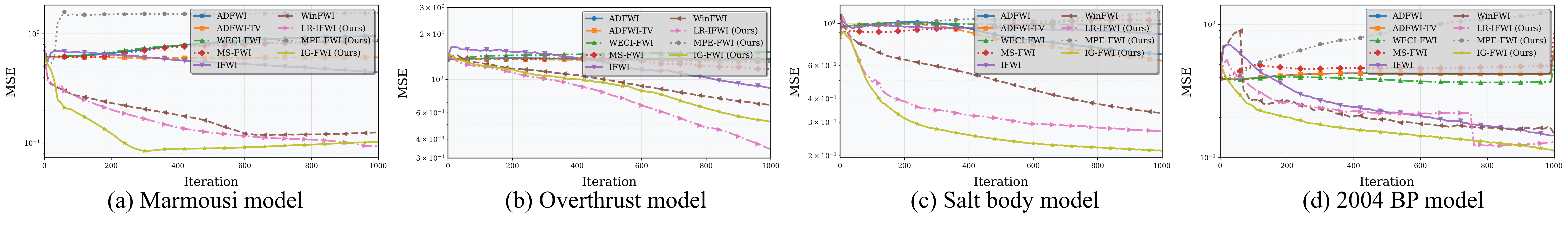}
    \vspace{-0.35cm}
    \caption{{Convergence curves of velocity model MSE using different FWI methods and different datasets with constant initial models. (a) Marmousi. (b) Overthrust. (c) Salt body. (d) 2004 BP}}
    \label{fig: convergence_error}\vspace{-0.15cm}
\end{figure}

\begin{figure}[h]
\vspace{-0.2cm}
    \centering
    \includegraphics[width=0.9\linewidth]{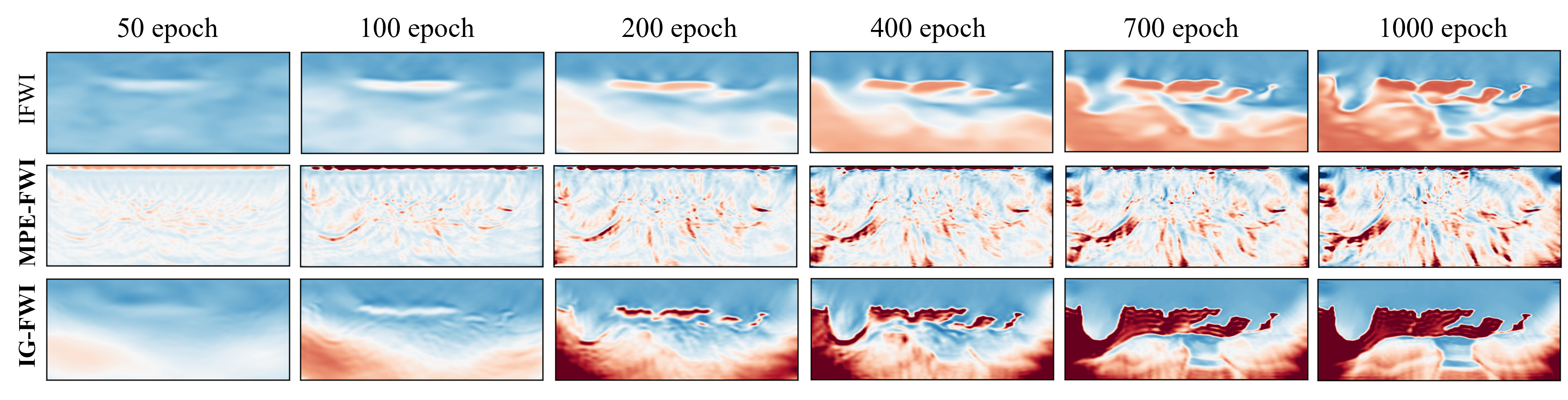}
    \vspace{-0.35cm}
    \caption{{Generated velocity visualizations corresponding to different iterations using IFWI, MPE-FWI, and IG-FWI methods on 2004 BP model. More detailed results are in Appendix \ref{appendix: generated_velocity_model}.}}
    \label{fig: convergence_generated_model}\vspace{-0.35cm}
\end{figure}

To validate the convergence behavior using different FWI methods, we provide convergence curves of velocity model error (see Fig. \ref{fig: convergence_error}) and generated velocity visualizations corresponding to different iterations (see Fig. \ref{fig: convergence_generated_model}), as detailed in Appendix \ref{appendix: generated_velocity_model}. These results indicate that conventional FWI and MPE-FWI methods achieve rapid convergence of high-frequency details at the cost of increased sensitivity, while IFWI and WinFWI enhance robustness at the expense of high-frequency details of inversion results. Our proposed IG-FWI and LR-FWI methods achieve a more balanced trade-off, maintaining robustness without sacrificing the convergence rate for high-frequency information.

\begin{wrapfigure}{r}{0.4\textwidth}
\vspace{-1.1cm}
    \centering
    \includegraphics[width=1\linewidth]{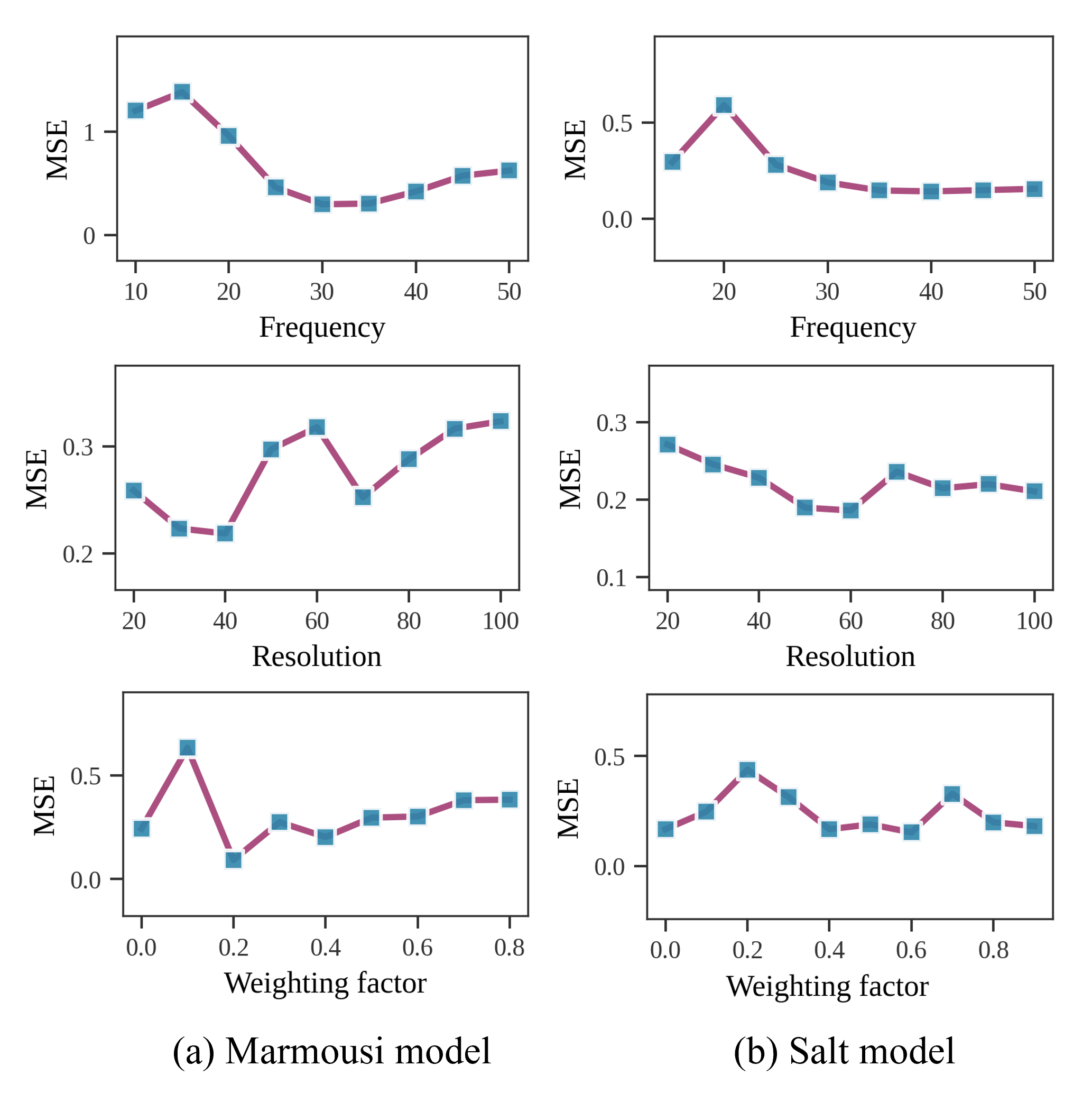}
    \vspace{-0.6cm}
    \caption{Ablation study for frequency, grid resolution, and weighting factor.
    }
    \label{fig: ablation_experiment}
    \vspace{-0.4cm}
\end{wrapfigure}

{
\vspace{-0.2cm}
\subsection{Ablation experiments for IG-FWI}
\vspace{-0.2cm}

We conduct a comprehensive ablation study to investigate the impact of key parameters (e.g., different INR frequency bases, different grid scales, and weighting factor), as shown in Fig. \ref{fig: ablation_experiment}. The results show that both excessively high and low grid resolutions in MPE, as well as frequency settings in INR, may reduce the quality of inversion results, which is consistent with our theoretical results. Moreover, our proposed IG-FWI is robust to the weighting factor, which is typically set to 0.5.
\vspace{-0.3cm}
\subsection{Comparisons with data-driven method}
\vspace{-0.3cm}
Evaluations on the OpenFWI dataset show that our proposed methods outperform InversionNet \citep{wu2019inversionnet} in reconstructing velocity structures from noisy data (see Tab. \ref{tab: results_comparison_invnet}), as detailed in Appendix \ref{appendix: data_driven_inversionnet}.

}

\begin{wraptable}{r}{0.4\textwidth}
    \vspace{-1cm}
    \centering
    \scriptsize
    \renewcommand{\arraystretch}{1}
    \setlength{\tabcolsep}{1pt}
    \caption{{MSE of data-driven FWI and CRFWI. Best results are highlighted in \textbf{bold}, and second-best are \underline{underlined}.}}
    \label{tab: results_comparison_invnet}
    \begin{tabular}{@{}l|ccccc@{}}
        \toprule
        \textbf{Methods} & InvNet & ADFWI & \textbf{IFWI} & LR-FWI & IG-FWI \\
        \midrule
        FaltVel & 36.78 & 21.30 & 11.72 & \textbf{6.65} & \underline{6.94} \\
        CurveVel & 94.16 & 57.07 & 16.87 & \underline{9.89} & \textbf{7.93} \\
        FaltFault & 113.18 & 48.23 & 46.59 & \textbf{19.59} & \underline{28.84} \\
        CurveFault & 100.09 & 99.14 & 65.89 & 60.23 & \textbf{24.61} \\
        Style & 53.76 & 27.66 & 48.25 & \textbf{14.56} & \underline{17.08} \\
        \bottomrule
    \end{tabular}
\end{wraptable}

\vspace{-0.3cm}
\section{Conclusion}
\vspace{-0.2cm}
In this study, we establish a unified theoretical analysis framework based on the wave kernel and wave-based NTK to investigate the convergence and robustness of both conventional and continuous representation FWI. Our analysis reveals that the wave kernel and wave-based NTK are non-stationary and exhibit distinct eigenvalue decay patterns, i.e., the wave kernel shows slow decay, whereas the wave-based NTK decays rapidly. From the theory of spectral analysis, this difference explains why conventional FWI achieves fast convergence but lacks robustness, while INR-based CR-FWI offers improved robustness at the cost of slower convergence. Motivated by these findings, we generalize the INR-based CR-FWI to a broader continuous representation FWI framework, which includes the proposed LR-FWI and MPE-FWI methods. Furthermore, we introduce a hybrid representation that integrates INR with a multigrid strategy for FWI, aiming to balance robustness and convergence by better controlling the eigenvalue decay. Tests on challenging datasets and diverse seismic data scenarios show the superior performance of the proposed CR-FWI methods.

\section*{Acknowledgements} This work was supported by the Key Program of the National Natural Science Foundation of China (grant 42530802), the National Natural Science Foundation of China (No. 124B2029), the Key Program of the National Natural Science Foundation of China (grant 62476214), Fundamental and Interdisciplinary Disciplines Breakthrough Plan of the Ministry of Education of China (JYB2025XDXM101), and Tianyuan Fund for Mathematics of the National Natural Science Foundation of China (Grant No. 12426105).

\bibliography{iclr2026_conference}

\begin{thebibliography}{89}
\providecommand{\natexlab}[1]{#1}
\providecommand{\url}[1]{\texttt{#1}}
\expandafter\ifx\csname urlstyle\endcsname\relax
  \providecommand{\doi}[1]{doi: #1}\else
  \providecommand{\doi}{doi: \begingroup \urlstyle{rm}\Url}\fi

\bibitem[Abdullin \& Bin~Waheed(2023)Abdullin and Bin~Waheed]{abdullin2023generalization}
Ayrat Abdullin and Umair Bin~Waheed.
\newblock Generalization capability of data-driven deep learning models for seismic full-waveform inversion: An example using the openfwi dataset.
\newblock In \emph{Third International Meeting for Applied Geoscience \& Energy}, pp.\  1083--1087. Society of Exploration Geophysicists and American Association of Petroleum~…, 2023.

\bibitem[Alemohammad et~al.(2020)Alemohammad, Wang, Balestriero, and Baraniuk]{alemohammad2020recurrent_RNN}
Sina Alemohammad, Zichao Wang, Randall Balestriero, and Richard Baraniuk.
\newblock The recurrent neural tangent kernel.
\newblock \emph{arXiv preprint arXiv:2006.10246}, 2020.

\bibitem[Aminzadeh(1996)]{aminzadeh19963_salt}
Fred Aminzadeh.
\newblock 3-d salt and overthrust seismic models.
\newblock 1996.

\bibitem[Anderson et~al.(2012)Anderson, Tan, and Wang]{anderson2012time}
John~E Anderson, Lijian Tan, and Don Wang.
\newblock Time-reversal checkpointing methods for rtm and fwi.
\newblock \emph{Geophysics}, 77\penalty0 (4):\penalty0 S93--S103, 2012.

\bibitem[Aravkin et~al.(2011)Aravkin, Van~Leeuwen, and Herrmann]{aravkin2011robust}
Aleksandr Aravkin, Tristan Van~Leeuwen, and Felix Herrmann.
\newblock Robust full-waveform inversion using the student's t-distribution.
\newblock In \emph{SEG Technical Program Expanded Abstracts 2011}, pp.\  2669--2673. Society of Exploration Geophysicists, 2011.

\bibitem[Arora et~al.(2019)Arora, Du, Hu, Li, and Wang]{arora2019fine}
Sanjeev Arora, Simon Du, Wei Hu, Zhiyuan Li, and Ruosong Wang.
\newblock Fine-grained analysis of optimization and generalization for overparameterized two-layer neural networks.
\newblock In \emph{International conference on machine learning}, pp.\  322--332. PMLR, 2019.

\bibitem[Audia et~al.(2025)Audia, Feizi, Zwicker, and Manocha]{audia2025learnable}
Samuel Audia, Soheil Feizi, Matthias Zwicker, and Dinesh Manocha.
\newblock How learnable grids recover fine detail in low dimensions: A neural tangent kernel analysis of multigrid parametric encodings.
\newblock \emph{arXiv preprint arXiv:2504.13412}, 2025.

\bibitem[Bietti \& Mairal(2019)Bietti and Mairal]{bietti2019inductive_CNN}
Alberto Bietti and Julien Mairal.
\newblock On the inductive bias of neural tangent kernels.
\newblock \emph{Advances in Neural Information Processing Systems}, 32, 2019.

\bibitem[Billette \& Brandsberg-Dahl(2005)Billette and Brandsberg-Dahl]{billette20052004}
FJ~Billette and Sverre Brandsberg-Dahl.
\newblock The 2004 bp velocity benchmark.
\newblock In \emph{67th EAGE Conference \& Exhibition}, pp.\  cp--1. European Association of Geoscientists \& Engineers, 2005.

\bibitem[Billingsley(2013)]{billingsley2013convergence}
Patrick Billingsley.
\newblock \emph{Convergence of probability measures}.
\newblock John Wiley \& Sons, 2013.

\bibitem[Bonfanti et~al.(2024{\natexlab{a}})Bonfanti, Bruno, and Cipriani]{bonfanti2024challenges}
Andrea Bonfanti, Giuseppe Bruno, and Cristina Cipriani.
\newblock The challenges of the nonlinear regime for physics-informed neural networks.
\newblock \emph{Advances in Neural Information Processing Systems}, 37:\penalty0 41852--41881, 2024{\natexlab{a}}.

\bibitem[Bonfanti et~al.(2024{\natexlab{b}})Bonfanti, Bruno, and Cipriani]{ntk_bonfanti2024challenges}
Andrea Bonfanti, Giuseppe Bruno, and Cristina Cipriani.
\newblock The challenges of the nonlinear regime for physics-informed neural networks.
\newblock \emph{Advances in Neural Information Processing Systems}, 37:\penalty0 41852--41881, 2024{\natexlab{b}}.

\bibitem[Brougois et~al.(1990)Brougois, Bourget, Lailly, Poulet, Ricarte, and Versteeg]{brougois1990marmousi}
A~Brougois, Marielle Bourget, Patriek Lailly, Michel Poulet, Patrice Ricarte, and Roelof Versteeg.
\newblock Marmousi, model and data.
\newblock In \emph{EAEG workshop-practical aspects of seismic data inversion}, pp.\  cp--108. European Association of Geoscientists \& Engineers, 1990.

\bibitem[Bunks et~al.(1995)Bunks, Saleck, Zaleski, and Chavent]{bunks1995multiscale}
Carey Bunks, Fatimetou~M Saleck, Stephane Zaleski, and Guy Chavent.
\newblock Multiscale seismic waveform inversion.
\newblock \emph{Geophysics}, 60\penalty0 (5):\penalty0 1457--1473, 1995.

\bibitem[Cao et~al.(2019)Cao, Fang, Wu, Zhou, and Gu]{cao2019towards}
Yuan Cao, Zhiying Fang, Yue Wu, Ding-Xuan Zhou, and Quanquan Gu.
\newblock Towards understanding the spectral bias of deep learning.
\newblock \emph{arXiv preprint arXiv:1912.01198}, 2019.

\bibitem[Chen et~al.(2022)Chen, Peter, and Ravasi]{doi:10.1190/geo2021-0598.1}
Fuqiang Chen, Daniel Peter, and Matteo Ravasi.
\newblock Cycle-skipping mitigation using misfit measurements based on differentiable dynamic time warping.
\newblock \emph{GEOPHYSICS}, 87\penalty0 (4):\penalty0 R325--R335, 2022.
\newblock \doi{10.1190/geo2021-0598.1}.
\newblock URL \url{https://doi.org/10.1190/geo2021-0598.1}.

\bibitem[Chen et~al.(2025)Chen, Wu, Li, and Yang]{chen2025initial}
Ruihua Chen, Bangyu Wu, Meng Li, and Kai Yang.
\newblock Initial model incorporation for deep learning fwi: Pretraining or denormalization?
\newblock \emph{arXiv preprint arXiv:2506.05484}, 2025.

\bibitem[Chen et~al.(2016)Chen, Chen, Xiang, and Chen]{chen2016geological}
Yangkang Chen, Hanming Chen, Kui Xiang, and Xiaohong Chen.
\newblock Geological structure guided well log interpolation for high-fidelity full waveform inversion.
\newblock \emph{Geophysical Supplements to the Monthly Notices of the Royal Astronomical Society}, 207\penalty0 (2):\penalty0 1313--1331, 2016.

\bibitem[Chen et~al.(2020)Chen, Cao, Gu, and Zhang]{chen2020generalized_mlp}
Zixiang Chen, Yuan Cao, Quanquan Gu, and Tong Zhang.
\newblock A generalized neural tangent kernel analysis for two-layer neural networks.
\newblock \emph{Advances in Neural Information Processing Systems}, 33:\penalty0 13363--13373, 2020.

\bibitem[Cui(2015)]{cui2015introduction}
S~Cui.
\newblock Introduction to modern theory of partial differential equations, 2015.

\bibitem[Deng et~al.(2022)Deng, Feng, Wang, Zhang, Jin, Feng, Zeng, Chen, and Lin]{deng2022openfwi}
Chengyuan Deng, Shihang Feng, Hanchen Wang, Xitong Zhang, Peng Jin, Yinan Feng, Qili Zeng, Yinpeng Chen, and Youzuo Lin.
\newblock Openfwi: Large-scale multi-structural benchmark datasets for full waveform inversion.
\newblock \emph{Advances in Neural Information Processing Systems}, 35:\penalty0 6007--6020, 2022.

\bibitem[Du et~al.(2024)Du, Sun, Jia, Wang, and Liu]{du2024physics}
Bo~Du, Jian Sun, Anqi Jia, Ning Wang, and Huaishan Liu.
\newblock Physics-informed robust and implicit full waveform inversion without prior and low-frequency information.
\newblock \emph{IEEE Transactions on Geoscience and Remote Sensing}, 2024.

\bibitem[Engquist \& Yang(2022)Engquist and Yang]{engquist2022optimal}
Bj{\"o}rn Engquist and Yunan Yang.
\newblock Optimal transport based seismic inversion: Beyond cycle skipping.
\newblock \emph{Communications on Pure and Applied Mathematics}, 75\penalty0 (10):\penalty0 2201--2244, 2022.

\bibitem[Esser et~al.(2018)Esser, Guasch, van Leeuwen, Aravkin, and Herrmann]{esser2018total}
Ernie Esser, Lluis Guasch, Tristan van Leeuwen, Aleksandr~Y Aravkin, and Felix~J Herrmann.
\newblock Total variation regularization strategies in full-waveform inversion.
\newblock \emph{SIAM Journal on Imaging Sciences}, 11\penalty0 (1):\penalty0 376--406, 2018.

\bibitem[Evans(2010)]{2010Partial}
Lawrence~C Evans.
\newblock Partial differential equations: Second edition.
\newblock \emph{Wadsworth \& Brooks/cole Mathematics}, 19\penalty0 (1):\penalty0 211–223, 2010.

\bibitem[Evans(2012)]{evans2012introduction}
Lawrence~C Evans.
\newblock \emph{An introduction to stochastic differential equations}, volume~82.
\newblock American Mathematical Soc., 2012.

\bibitem[Fabien-Ouellet et~al.(2017)Fabien-Ouellet, Gloaguen, and Giroux]{8fabien2017stochastic}
Gabriel Fabien-Ouellet, Erwan Gloaguen, and Bernard Giroux.
\newblock A stochastic l-bfgs approach for full-waveform inversion.
\newblock In \emph{SEG Technical Program Expanded Abstracts 2017}, pp.\  1622--1627. Society of Exploration Geophysicists, 2017.

\bibitem[Fichtner et~al.(2013)Fichtner, Trampert, Cupillard, Saygin, Taymaz, Capdeville, and Villasenor]{fichtner2013multiscale}
Andreas Fichtner, Jeannot Trampert, Paul Cupillard, Erdinc Saygin, Tuncay Taymaz, Yann Capdeville, and Antonio Villasenor.
\newblock Multiscale full waveform inversion.
\newblock \emph{Geophysical Journal International}, 194\penalty0 (1):\penalty0 534--556, 2013.

\bibitem[Gan et~al.(2025)Gan, Li, Lin, and Shi]{gan2025neural}
Weiye Gan, Yicheng Li, Qian Lin, and Zuoqiang Shi.
\newblock Neural tangent kernel of neural networks with loss informed by differential operators.
\newblock \emph{arXiv preprint arXiv:2503.11029}, 2025.

\bibitem[Guasch et~al.(2020)Guasch, Calder{\'o}n~Agudo, Tang, Nachev, and Warner]{guasch2020full}
Llu{\'\i}s Guasch, Oscar Calder{\'o}n~Agudo, Meng-Xing Tang, Parashkev Nachev, and Michael Warner.
\newblock Full-waveform inversion imaging of the human brain.
\newblock \emph{NPJ digital medicine}, 3\penalty0 (1):\penalty0 28, 2020.

\bibitem[Gupta et~al.(2024)Gupta, Sawhney, Daw, Lin, and Karpatne]{gupta2024unified}
Naveen Gupta, Medha Sawhney, Arka Daw, Youzuo Lin, and Anuj Karpatne.
\newblock A unified framework for forward and inverse problems in subsurface imaging using latent space translations.
\newblock \emph{arXiv preprint arXiv:2410.11247}, 2024.

\bibitem[Hanasoge(2014)]{hanasoge2014full}
Shravan~M Hanasoge.
\newblock Full waveform inversion of solar interior flows.
\newblock \emph{The Astrophysical Journal}, 797\penalty0 (1):\penalty0 23, 2014.

\bibitem[Herrmann et~al.(2023)Herrmann, Bürchner, Dietrich, and Kollmannsberger]{HERRMANN2023116278}
Leon Herrmann, Tim Bürchner, Felix Dietrich, and Stefan Kollmannsberger.
\newblock On the use of neural networks for full waveform inversion.
\newblock \emph{Computer Methods in Applied Mechanics and Engineering}, 415:\penalty0 116278, 2023.
\newblock ISSN 0045-7825.
\newblock \doi{https://doi.org/10.1016/j.cma.2023.116278}.
\newblock URL \url{https://www.sciencedirect.com/science/article/pii/S0045782523004024}.

\bibitem[Horn \& Johnson(2012)Horn and Johnson]{horn2012matrix}
Roger~A Horn and Charles~R Johnson.
\newblock \emph{Matrix analysis}.
\newblock Cambridge university press, 2012.

\bibitem[Jacot et~al.(2018)Jacot, Gabriel, and Hongler]{jacot2018neural}
Arthur Jacot, Franck Gabriel, and Cl{\'e}ment Hongler.
\newblock Neural tangent kernel: Convergence and generalization in neural networks.
\newblock \emph{Advances in neural information processing systems}, 31, 2018.

\bibitem[Jha \& Mallik(2024)Jha and Mallik]{ntk_pinn_jha2024gpinn}
Navnit Jha and Ekansh Mallik.
\newblock Gpinn with neural tangent kernel technique for nonlinear two point boundary value problems.
\newblock \emph{Neural Processing Letters}, 56\penalty0 (3):\penalty0 192, 2024.

\bibitem[Jin et~al.(2022)Jin, Zhang, Chen, Huang, Liu, and Lin]{ICLR_22}
Peng Jin, Xitong Zhang, Yinpeng Chen, Sharon~X Huang, Zicheng Liu, and Youzuo Lin.
\newblock Unsupervised learning of full-waveform inversion: Connecting {CNN} and partial differential equation in a loop.
\newblock In \emph{International Conference on Learning Representations}, 2022.

\bibitem[Kolda \& Bader(2009)Kolda and Bader]{kolda2009tensor}
Tamara~G Kolda and Brett~W Bader.
\newblock Tensor decompositions and applications.
\newblock \emph{SIAM review}, 51\penalty0 (3):\penalty0 455--500, 2009.

\bibitem[Li et~al.(2024{\natexlab{a}})Li, Mikada, and Takekawa]{43li2024inexact}
Jiahang Li, Hitoshi Mikada, and Junichi Takekawa.
\newblock Inexact augmented lagrangian method-based full-waveform inversion with randomized singular value decomposition.
\newblock \emph{Journal of Geophysics and Engineering}, 21\penalty0 (2):\penalty0 572--597, 2024{\natexlab{a}}.

\bibitem[Li et~al.(2024{\natexlab{b}})Li, Li, Mu, Xin, Dai, Leng, Zhang, Song, and Zhu]{li2024globaltomo}
Shiqian Li, Zhi Li, Zhancun Mu, Shiji Xin, Zhixiang Dai, Kuangdai Leng, Ruihua Zhang, Xiaodong Song, and Yixin Zhu.
\newblock Globaltomo: A global dataset for physics-ml seismic wavefield modeling and fwi.
\newblock \emph{arXiv preprint arXiv:2406.18202}, 2024{\natexlab{b}}.

\bibitem[Liu et~al.(2020)Liu, Zhu, and Belkin]{liu2020linearity}
Chaoyue Liu, Libin Zhu, and Misha Belkin.
\newblock On the linearity of large non-linear models: when and why the tangent kernel is constant.
\newblock \emph{Advances in Neural Information Processing Systems}, 33:\penalty0 15954--15964, 2020.

\bibitem[Liu et~al.(2022)Liu, Zhu, and Belkin]{liu2022loss}
Chaoyue Liu, Libin Zhu, and Mikhail Belkin.
\newblock Loss landscapes and optimization in over-parameterized non-linear systems and neural networks.
\newblock \emph{Applied and Computational Harmonic Analysis}, 59:\penalty0 85--116, 2022.

\bibitem[Liu et~al.(2025{\natexlab{a}})Liu, Li, Zou, and Li]{liu2025automatic}
Feng Liu, Haipeng Li, Guangyuan Zou, and Junlun Li.
\newblock Automatic differentiation-based full waveform inversion with flexible workflows.
\newblock \emph{Journal of Geophysical Research: Machine Learning and Computation}, 2\penalty0 (1):\penalty0 e2024JH000542, 2025{\natexlab{a}}.

\bibitem[Liu et~al.(2025{\natexlab{b}})Liu, Zhou, Chen, Wang, and Fu]{liu2025full}
Fengliang Liu, Hui Zhou, Hanming Chen, Lingqian Wang, and Yuxin Fu.
\newblock A full-waveform inversion method based on structural tensor constraints.
\newblock \emph{IEEE Transactions on Geoscience and Remote Sensing}, 2025{\natexlab{b}}.

\bibitem[Liu et~al.(2017)Liu, Teng, Xu, Badal, Liu, and Zhou]{10liu2017effects}
Youshan Liu, Jiwen Teng, Tao Xu, Jos{\'e} Badal, Qinya Liu, and Bing Zhou.
\newblock Effects of conjugate gradient methods and step-length formulas on the multiscale full waveform inversion in time domain: Numerical experiments.
\newblock \emph{Pure and Applied Geophysics}, 174:\penalty0 1983--2006, 2017.

\bibitem[Luo et~al.(2023)Luo, Zhao, Li, Ng, and Meng]{luo2023low}
Yisi Luo, Xile Zhao, Zhemin Li, Michael~K Ng, and Deyu Meng.
\newblock Low-rank tensor function representation for multi-dimensional data recovery.
\newblock \emph{IEEE transactions on pattern analysis and machine intelligence}, 46\penalty0 (5):\penalty0 3351--3369, 2023.

\bibitem[Luo et~al.(2025)Luo, Zhao, and Meng]{luo2025continuous}
Yisi Luo, Xile Zhao, and Deyu Meng.
\newblock Continuous representation methods, theories, and applications: An overview and perspectives.
\newblock \emph{arXiv preprint arXiv:2505.15222}, 2025.

\bibitem[McClenny \& Braga-Neto(2023)McClenny and Braga-Neto]{ntk_pinn_mcclenny2023self}
Levi~D McClenny and Ulisses~M Braga-Neto.
\newblock Self-adaptive physics-informed neural networks.
\newblock \emph{Journal of Computational Physics}, 474:\penalty0 111722, 2023.

\bibitem[M{\'e}tivier et~al.(2017)M{\'e}tivier, Brossier, Operto, and Virieux]{9metivier2017full}
Ludovic M{\'e}tivier, Romain Brossier, St{\'e}phane Operto, and Jean Virieux.
\newblock Full waveform inversion and the truncated newton method.
\newblock \emph{SIAM review}, 59\penalty0 (1):\penalty0 153--195, 2017.

\bibitem[Mildenhall et~al.(2021)Mildenhall, Srinivasan, Tancik, Barron, Ramamoorthi, and Ng]{mildenhall2021nerf}
Ben Mildenhall, Pratul~P Srinivasan, Matthew Tancik, Jonathan~T Barron, Ravi Ramamoorthi, and Ren Ng.
\newblock Nerf: Representing scenes as neural radiance fields for view synthesis.
\newblock \emph{Communications of the ACM}, 65\penalty0 (1):\penalty0 99--106, 2021.

\bibitem[M{\"u}ller et~al.(2022)M{\"u}ller, Evans, Schied, and Keller]{muller2022instant}
Thomas M{\"u}ller, Alex Evans, Christoph Schied, and Alexander Keller.
\newblock Instant neural graphics primitives with a multiresolution hash encoding.
\newblock \emph{ACM transactions on graphics (TOG)}, 41\penalty0 (4):\penalty0 1--15, 2022.

\bibitem[Nguyen \& Modrak(2018)Nguyen and Modrak]{nguyen2018ultrasonic}
Luan~T Nguyen and Ryan~T Modrak.
\newblock Ultrasonic wavefield inversion and migration in complex heterogeneous structures: 2d numerical imaging and nondestructive testing experiments.
\newblock \emph{Ultrasonics}, 82:\penalty0 357--370, 2018.

\bibitem[Nguyen \& M{\"u}cke(2024)Nguyen and M{\"u}cke]{nguyen2024optimal}
Mike Nguyen and Nicole M{\"u}cke.
\newblock Optimal convergence rates for neural operators.
\newblock \emph{arXiv preprint arXiv:2412.17518}, 2024.

\bibitem[Oksendal(2013)]{oksendal2013stochastic}
Bernt Oksendal.
\newblock \emph{Stochastic differential equations: an introduction with applications}.
\newblock Springer Science \& Business Media, 2013.

\bibitem[Pasalic \& McGarry(2010)Pasalic and McGarry]{pasalic2010convolutional}
Damir Pasalic and Ray McGarry.
\newblock Convolutional perfectly matched layer for isotropic and anisotropic acoustic wave equations.
\newblock In \emph{SEG International Exposition and Annual Meeting}, pp.\  SEG--2010. SEG, 2010.

\bibitem[Qin et~al.(2024)Qin, Lyu, Peng, Geng, Wang, Tang, Leroyer, Gao, Liu, and Wang]{ntk_no_qin2024toward}
Shaoxiang Qin, Fuyuan Lyu, Wenhui Peng, Dingyang Geng, Ju~Wang, Xing Tang, Sylvie Leroyer, Naiping Gao, Xue Liu, and Liangzhu~Leon Wang.
\newblock Toward a better understanding of fourier neural operators from a spectral perspective.
\newblock \emph{arXiv preprint arXiv:2404.07200}, 2024.

\bibitem[Rasht-Behesht et~al.(2022)Rasht-Behesht, Huber, Shukla, and Karniadakis]{24rasht2022physics}
Majid Rasht-Behesht, Christian Huber, Khemraj Shukla, and George~Em Karniadakis.
\newblock Physics-informed neural networks (pinns) for wave propagation and full waveform inversions.
\newblock \emph{Journal of Geophysical Research: Solid Earth}, 127\penalty0 (5):\penalty0 e2021JB023120, 2022.

\bibitem[Saragadam et~al.(2023)Saragadam, LeJeune, Tan, Balakrishnan, Veeraraghavan, and Baraniuk]{saragadam2023wire}
Vishwanath Saragadam, Daniel LeJeune, Jasper Tan, Guha Balakrishnan, Ashok Veeraraghavan, and Richard~G Baraniuk.
\newblock Wire: Wavelet implicit neural representations.
\newblock In \emph{Proceedings of the IEEE/CVF Conference on Computer Vision and Pattern Recognition}, pp.\  18507--18516, 2023.

\bibitem[Schuster et~al.(2024)Schuster, Chen, and Feng]{11schuster2024review}
Gerard~T Schuster, Yuqing Chen, and Shihang Feng.
\newblock Review of physics-informed machine learning inversion of geophysical data.
\newblock \emph{Geophysics}, 89\penalty0 (6):\penalty0 1--91, 2024.

\bibitem[Simeoni et~al.(2019)Simeoni, Kashani, Hurley, and Vetterli]{simeoni2019deepwave}
Matthieu Simeoni, Sepand Kashani, Paul Hurley, and Martin Vetterli.
\newblock Deepwave: a recurrent neural-network for real-time acoustic imaging.
\newblock \emph{Advances In Neural Information Processing Systems}, 32, 2019.

\bibitem[Sitzmann et~al.(2020)Sitzmann, Martel, Bergman, Lindell, and Wetzstein]{sitzmann2020implicit}
Vincent Sitzmann, Julien Martel, Alexander Bergman, David Lindell, and Gordon Wetzstein.
\newblock Implicit neural representations with periodic activation functions.
\newblock \emph{Advances in neural information processing systems}, 33:\penalty0 7462--7473, 2020.

\bibitem[Song \& Alkhalifah(2021)Song and Alkhalifah]{bc4song2021wavefield}
Chao Song and Tariq~A Alkhalifah.
\newblock Wavefield reconstruction inversion via physics-informed neural networks.
\newblock \emph{IEEE Transactions on Geoscience and Remote Sensing}, 60:\penalty0 1--12, 2021.

\bibitem[Song et~al.(2023)Song, Wang, Richardson, and Liu]{song2023weighted}
Chao Song, Yanghua Wang, Alan Richardson, and Cai Liu.
\newblock Weighted envelope correlation-based waveform inversion using automatic differentiation.
\newblock \emph{IEEE Transactions on Geoscience and Remote Sensing}, 61:\penalty0 1--11, 2023.

\bibitem[Sun \& Alkhalifah(2020)Sun and Alkhalifah]{sun2020ml}
Bingbing Sun and Tariq Alkhalifah.
\newblock Ml-descent: An optimization algorithm for full-waveform inversion using machine learning.
\newblock \emph{Geophysics}, 85\penalty0 (6):\penalty0 R477--R492, 2020.

\bibitem[Sun et~al.(2020)Sun, Niu, Innanen, Li, and Trad]{sun2020theory}
Jian Sun, Zhan Niu, Kristopher~A Innanen, Junxiao Li, and Daniel~O Trad.
\newblock A theory-guided deep-learning formulation and optimization of seismic waveform inversion.
\newblock \emph{Geophysics}, 85\penalty0 (2):\penalty0 R87--R99, 2020.

\bibitem[Sun et~al.(2023{\natexlab{a}})Sun, Innanen, Zhang, and Trad]{sun2023implicit}
Jian Sun, Kristopher Innanen, Tianze Zhang, and Daniel Trad.
\newblock Implicit seismic full waveform inversion with deep neural representation.
\newblock \emph{Journal of Geophysical Research: Solid Earth}, 128\penalty0 (3):\penalty0 e2022JB025964, 2023{\natexlab{a}}.

\bibitem[Sun et~al.(2017)Sun, Zhang, and Zhang]{sun2017alternating}
Mengyao Sun, Jie Zhang, and Wei Zhang.
\newblock Alternating first-arrival traveltime tomography and waveform inversion for near-surface imaging.
\newblock \emph{Geophysics}, 82\penalty0 (4):\penalty0 R245--R257, 2017.

\bibitem[Sun et~al.(2023{\natexlab{b}})Sun, Yang, Liang, and Ma]{sun2023full}
Pengpeng Sun, Fangshu Yang, Hongxian Liang, and Jianwei Ma.
\newblock Full-waveform inversion using a learned regularization.
\newblock \emph{IEEE Transactions on Geoscience and Remote Sensing}, 61:\penalty0 1--15, 2023{\natexlab{b}}.

\bibitem[Tromp(2020)]{tromp2020seismic}
Jeroen Tromp.
\newblock Seismic wavefield imaging of earth’s interior across scales.
\newblock \emph{Nature Reviews Earth \& Environment}, 1\penalty0 (1):\penalty0 40--53, 2020.

\bibitem[van Herwaarden et~al.(2020)van Herwaarden, Boehm, Afanasiev, Thrastarson, Krischer, Trampert, and Fichtner]{van2020accelerated}
Dirk~Philip van Herwaarden, Christian Boehm, Michael Afanasiev, Solvi Thrastarson, Lion Krischer, Jeannot Trampert, and Andreas Fichtner.
\newblock Accelerated full-waveform inversion using dynamic mini-batches.
\newblock \emph{Geophysical Journal International}, 221\penalty0 (2):\penalty0 1427--1438, 2020.

\bibitem[Versteeg(1994)]{versteeg1994marmousi}
Roelof Versteeg.
\newblock The marmousi experience: Velocity model determination on a synthetic complex data set.
\newblock \emph{The Leading Edge}, 13\penalty0 (9):\penalty0 927--936, 1994.

\bibitem[Virieux \& Operto(2009)Virieux and Operto]{virieux2009overview}
Jean Virieux and St{\'e}phane Operto.
\newblock An overview of full-waveform inversion in exploration geophysics.
\newblock \emph{Geophysics}, 74\penalty0 (6):\penalty0 WCC1--WCC26, 2009.

\bibitem[Virieux et~al.(2017)Virieux, Asnaashari, Brossier, M{\'e}tivier, Ribodetti, and Zhou]{virieux2017introduction}
Jean Virieux, Amir Asnaashari, Romain Brossier, Ludovic M{\'e}tivier, Alessandra Ribodetti, and Wei Zhou.
\newblock An introduction to full waveform inversion.
\newblock In \emph{Encyclopedia of exploration geophysics}, pp.\  R1--1. Society of Exploration Geophysicists, 2017.

\bibitem[Wang et~al.(2022)Wang, Yu, and Perdikaris]{wang2022and_PINN}
Sifan Wang, Xinling Yu, and Paris Perdikaris.
\newblock When and why pinns fail to train: A neural tangent kernel perspective.
\newblock \emph{Journal of Computational Physics}, 449:\penalty0 110768, 2022.

\bibitem[Wo et~al.(2025)Wo, Zong, Zhou, Li, Yue, and Huang]{wo2025near}
Yukai Wo, Jingjing Zong, Hua-Wei Zhou, Kai Li, Yubo Yue, and Xuri Huang.
\newblock Near-surface structural regularization for full-waveform inversion using directional total variation.
\newblock \emph{IEEE Transactions on Geoscience and Remote Sensing}, 2025.

\bibitem[Wu et~al.(2014)Wu, Luo, and Wu]{wu2014seismic}
Ru-Shan Wu, Jingrui Luo, and Bangyu Wu.
\newblock Seismic envelope inversion and modulation signal model.
\newblock \emph{Geophysics}, 79\penalty0 (3):\penalty0 WA13--WA24, 2014.

\bibitem[Wu \& Lin(2019{\natexlab{a}})Wu and Lin]{12wu2019inversionnet}
Yue Wu and Youzuo Lin.
\newblock Inversionnet: An efficient and accurate data-driven full waveform inversion.
\newblock \emph{IEEE Transactions on Computational Imaging}, 6:\penalty0 419--433, 2019{\natexlab{a}}.

\bibitem[Wu \& Lin(2019{\natexlab{b}})Wu and Lin]{wu2019inversionnet}
Yue Wu and Youzuo Lin.
\newblock Inversionnet: An efficient and accurate data-driven full waveform inversion.
\newblock \emph{IEEE Transactions on Computational Imaging}, 6:\penalty0 419--433, 2019{\natexlab{b}}.

\bibitem[Xu et~al.(2024)Xu, Li, and Huang]{ntk_no_xu2024convergence}
Xianliang Xu, Ye~Li, and Zhongyi Huang.
\newblock Convergence analysis of wide shallow neural operators within the framework of neural tangent kernel.
\newblock \emph{arXiv preprint arXiv:2412.05545}, 2024.

\bibitem[Xu et~al.(2025)Xu, Zhang, and Luo]{xu2025overview}
Zhi-Qin~John Xu, Yaoyu Zhang, and Tao Luo.
\newblock Overview frequency principle/spectral bias in deep learning.
\newblock \emph{Communications on Applied Mathematics and Computation}, 7\penalty0 (3):\penalty0 827--864, 2025.

\bibitem[Yan \& Wang(2018)Yan and Wang]{yan2018full}
Zichao Yan and Yanfei Wang.
\newblock Full waveform inversion with sparse structure constrained regularization.
\newblock \emph{Journal of Inverse and Ill-posed Problems}, 26\penalty0 (2):\penalty0 243--257, 2018.

\bibitem[Yang et~al.(2025)Yang, Dong, Huang, Fang, Huang, Liu, Liu, and Meng]{yang2025high}
Dinghui Yang, Xingpeng Dong, Jiandong Huang, Zhilong Fang, Xueyuan Huang, Shaolin Liu, Mengxue Liu, and Weijuan Meng.
\newblock High-resolution full waveform seismic imaging: Progresses, challenges, and prospects.
\newblock \emph{Science China Earth Sciences}, pp.\  1--28, 2025.

\bibitem[Yang \& Ma(2023)Yang and Ma]{yang2023wasserstein}
Fangshu Yang and Jianwei Ma.
\newblock Wasserstein distance-based full-waveform inversion with a regularizer powered by learned gradient.
\newblock \emph{IEEE Transactions on Geoscience and Remote Sensing}, 61:\penalty0 1--13, 2023.

\bibitem[Yang \& Ma(2025)Yang and Ma]{yang2025gabor}
Fangshu Yang and Jianwei Ma.
\newblock Gabor-wavelet-activation implicit neural learning for full waveform inversion.
\newblock \emph{Geophysics}, 90\penalty0 (3):\penalty0 1--78, 2025.

\bibitem[Yu et~al.(2025)Yu, Tian, and Chen]{yu2025divergence}
Zixiong Yu, Songtao Tian, and Guhan Chen.
\newblock Divergence of empirical neural tangent kernel in classification problems.
\newblock \emph{arXiv preprint arXiv:2504.11130}, 2025.

\bibitem[Zhang et~al.(2019)Zhang, Wu, Zhou, and Lin]{13zhang2019velocitygan}
Zhongping Zhang, Yue Wu, Zheng Zhou, and Youzuo Lin.
\newblock Velocitygan: Subsurface velocity image estimation using conditional adversarial networks.
\newblock In \emph{2019 IEEE Winter Conference on Applications of Computer Vision (WACV)}, pp.\  705--714. IEEE, 2019.

\bibitem[Zheng et~al.(2025)Zheng, Chu, Zhang, Wu, Wang, Feng, Zou, Sun, Kovachki, Ross, et~al.]{zheng2025inversebench}
Hongkai Zheng, Wenda Chu, Bingliang Zhang, Zihui Wu, Austin Wang, Berthy~T Feng, Caifeng Zou, Yu~Sun, Nikola Kovachki, Zachary~E Ross, et~al.
\newblock Inversebench: Benchmarking plug-and-play diffusion priors for inverse problems in physical sciences.
\newblock \emph{arXiv preprint arXiv:2503.11043}, 2025.

\bibitem[Zhou \& Yan(2024)Zhou and Yan]{zhou2024neural}
Zijian Zhou and Zhenya Yan.
\newblock Is the neural tangent kernel of pinns deep learning general partial differential equations always convergent?
\newblock \emph{Physica D: Nonlinear Phenomena}, 457:\penalty0 133987, 2024.

\bibitem[Zhu et~al.(2023)Zhu, Feng, Lin, and Lu]{zhu2023fourier}
Min Zhu, Shihang Feng, Youzuo Lin, and Lu~Lu.
\newblock Fourier-deeponet: Fourier-enhanced deep operator networks for full waveform inversion with improved accuracy, generalizability, and robustness.
\newblock \emph{Computer Methods in Applied Mechanics and Engineering}, 416:\penalty0 116300, 2023.

\end{thebibliography}
\bibliographystyle{iclr2026_conference}

\newpage
\appendix
\section*{Supplemental Material}
\vspace{-0.2cm}
 This supplemental material is divided into the following five appendices.
 {
 \begin{itemize}
     \item \textbf{Appendix A}: Related works about existing FWI, NTK, and discussion.
     \item \textbf{Appendix B}: Experimental details about baselines, datasets, forward modeling settings, the solution of the wave equation, wave-based NTK experiments, and main inversion results.
     \item \textbf{Appendix C}: additional experiments including 2014 Chevron blind test, 3D SEG/EAGE Overthrust model, and comparisons with supervised data-driven FWI method. 
     \item \textbf{Appendix D}: Preliminary about standard NTK and hyperbolic PDE theory.
     \item \textbf{Appendix E}: Proof of Propositions \ref{proposition: kernel_fwi}, \ref{proposition: kernel_crfwi}, and Theorems \ref{the: ntk_theorem}, \ref{thm:eigenvalue-comparison}, \ref{thm: spectral_enhancement_concise}, and \ref{the: ig_ntk}.
\end{itemize}}
Comprehensive details on hyperparameters and model architectures are provided to ensure all experiments can be exactly replicated.
For the theoretical components, all proofs and mathematical derivations are included in the appendix with detailed step-by-step explanations.

\vspace{-0.3cm}
\section{Related Works} \label{appendix: related_work}
\vspace{-0.2cm}
In this section, we review relevant work covering conventional FWI methods, neural representation-based FWI methods, and the neural tangent kernel theory.
\vspace{-0.2cm}
\subsection{Conventional FWI methods}
\vspace{-0.2cm}
To alleviate the ill-posedness of FWI, many techniques like misfit functions, prior regularization, multiscale inversion strategies, and optimization algorithms have been developed. For instance, designing more convex misfit functions using correlation information (e.g., envelope \citep{wu2014seismic}, global correlation \citep{song2023weighted}) and probability distribution (e.g., Student-T distribution \citep{aravkin2011robust}, optimal transport \citep{yang2023wasserstein}) to reduce the reliance on initial models. Additionally, incorporating prior knowledge using {explicit regularization} to enhance robustness, including local smoothness (e.g., total variation \citep{esser2018total}, Tikhonov \citep{yan2018full}) and geological constraints (e.g., structural dips \citep{wo2025near}, structure tensor \citep{liu2025full}). Furthermore, {progressively inverting velocity models} from low-frequency to high-frequency \citep{bunks1995multiscale}, or from coarse-grid to fine-grid \citep{fichtner2013multiscale}, can mitigate the local minima problem; Using {high-order optimization algorithms} like the truncated Newton method \citep{9metivier2017full}, the quasi-Newton method \citep{8fabien2017stochastic}, and the conjugate gradient method \citep{10liu2017effects} can also stabilize the inversion process. 

\textbf{Discussion.} Although existing conventional FWI methods can invert velocity models from inaccurate or linear initial models, they still struggle to produce satisfactory results using constant initial models. Moreover, their performance remains highly sensitive to the seismic data quality. A promising future research direction involves understanding more conventional FWI methods using our proposed wave-based NTK framework. Such analysis would help build a deeper understanding of the roles played by the misfit function, regularization, and optimization algorithms in the inversion process, ultimately facilitating the development of more advanced FWI methods.

\vspace{-0.2cm}
\subsection{Neural representation-based FWI methods}
\vspace{-0.2cm}
Deep neural reparameterized FWI generates the velocity model using neural networks, which can be divided into {data-driven supervised} and {physics-informed unsupervised} FWI methods.

\vspace{-0.2cm}
\subsubsection{Data-driven supervised representation}
\vspace{-0.2cm}

{Data-driven supervised FWI methods} directly learn the inverse operator of the wave equation by training a network from pairs of seismic data and corresponding label velocity models, like InversionNet ({CNN}) \citep{12wu2019inversionnet}, VelocityGAN ({GAN}) \citep{13zhang2019velocitygan}, Fourier-DeepOnets (Neural Operator) \citep{zhu2023fourier}, PnPDP-based FWI ({Diffusion model}) \citep{zheng2025inversebench}. Due to the limited availability of geophysical labeled data, numerous synthetic datasets have been developed and widely adopted in existing studies \citep{abdullin2023generalization, gupta2024unified, ICLR_22}, such as acoustic OpenFWI \citep{deng2022openfwi}, elastic $\mathbb{E}^{\text{FWI}}$ \citep{muller2022instant}, and GlobalTomo \citep{li2024globaltomo} datasets.

\textbf{Discussion.} 
However, the generalization performance of such supervised frameworks is constrained by the {distribution shift} between synthetic training data (e.g., OpenFWI) and real-world velocity models \citep{11schuster2024review}, particularly in regions characterized by complex fault systems and salt bodies. Although data-driven methods that generate initial models represent a promising alternative, this study focuses on achieving stable and high-precision inversion for {challenging datasets} such as the Marmousi and 2004 BP models. {Our experiments with data-driven and our proposed CR-FWI methods on the OpenFWI datasets \citep{deng2022openfwi} demonstrated that our proposed unsupervised methods can outperform supervised approaches (see Appendix \ref{appendix: data_driven_inversionnet}).}

\vspace{-0.2cm}
\subsubsection{Physics-informed unsupervised representation}
\vspace{-0.2cm}
Distinct from pure data-driven FWI methods, {physics-informed unsupervised representation} utilizes a neural network, e.g., a coordinate-based neural network, to represent the velocity model and integrate the wave equation in a loop. For instance, \citet{sun2023implicit} proposed implicit FWI, which represents the velocity model using an implicit neural representation with SINRE's activation function \citep{sitzmann2020implicit}. Subsequent advances include using INR with GELU activation \citep{du2024physics} and Gabor wavelet activation functions \citep{yang2025gabor} to represent the velocity model. This INR-based CR-FWI method can reduce the dependence on initial models at the cost of a slow convergence rate. {To our knowledge, only INR is used to represent the velocity models in FWI continuously.} Moreover, \citet{bc4song2021wavefield, HERRMANN2023116278, 24rasht2022physics} introduced a physics-informed neural network (PINN)-based FWI methods, which use two INRs to represent the velocity model and the seismic wavefield, respectively. Physical constraints are incorporated via a penalty term derived from the wave equation within the loss function. In a different vein, \citet{ICLR_22} proposed an unsupervised learning framework for FWI ({termed UPFWI}), which employs a CNN to predict velocity models directly from observed seismic data. 

\textbf{Discussion.} Our research focuses on elucidating the mechanisms underlying the performance differences between conventional FWI methods and existing CR-FWI methods, as discussed in \citep{sun2023implicit, yang2025gabor}. Building on these insights, we propose several novel CR-FWI methods specifically tailored for enhanced inversion performance. Although PINN-based FWI and UPFWI fall outside the immediate scope of this study, our proposed wave-based NTK framework can be naturally extended to these approaches, which can be a promising direction for future research.
\vspace{-0.2cm}
\subsection{Neural Tangent Kernel}
\vspace{-0.2cm}
Standard NTK provides a mathematical framework for analyzing the training dynamics of infinitely wide neural networks and argues that the gradient descent dynamics converge to a linear kernel regression problem governed by {a deterministic kernel} as network width tends to infinity \citep{jacot2018neural, arora2019fine}. This fundamental result holds for various neural network architectures, including multilayer perceptrons \citep{chen2020generalized_mlp}, CNNs \citep{bietti2019inductive_CNN}, recurrent neural networks \citep{alemohammad2020recurrent_RNN}, and linear PINNs \citep{wang2022and_PINN, gan2025neural}. However, some studies indicate that the NTK {is not a deterministic kernel} under some conditions and scenarios, like the last layer of the network is non-linear \citep{liu2020linearity}, nonlinear physics-informed neural networks \citep{ntk_bonfanti2024challenges}, and classification problems using fully connected neural networks and residual neural networks \citep{yu2025divergence}. NTK analysis {has been applied to many fields}, e.g., physics-informed neural networks \citep{ntk_pinn_jha2024gpinn,ntk_pinn_mcclenny2023self} and neural operator \citep{ntk_no_qin2024toward, ntk_no_xu2024convergence,nguyen2024optimal}. 

\textbf{Discussion.} 
The proposed wave-based NTK also does not remain constant at initialization or during training, due to the inherent nonlinearity of FWI. It offers a novel theoretical perspective on continuous representation FWI as a PDE-constrained nonlinear inverse problem. In contrast to the NTK theory developed for nonlinear PINNs in the context of forward PDE problems \citep{ntk_bonfanti2024challenges}, our work focuses specifically on the inverse problem and incorporates the numerical solution of the nonlinear wave equation through neural network guidance.

\vspace{-0.2cm}
\section{Experimental Details and Main Results} \label{appendix: experimenal_details}
\vspace{-0.2cm}
\subsection{Baselines and hyperparameter settings} 
\vspace{-0.2cm}
\label{appendix: experimenal_details_1}
We carefully select widely used conventional FWI methods and existing state-of-the-art CR-FWI methods to compare with our proposed novel CR-FWI methods, shown in Tab. \ref{tab:fwi_methods}. Moreover, we give more explanations and implementation details for each baseline and proposed method.
\begin{table}[ht]
\footnotesize
\centering
\caption{Conventional FWI, existing CR-FWI, and proposed CR-FWI methods.}
\label{tab:fwi_methods}
\begin{tabular}{cccc}
\hline
\textbf{Category} & \textbf{Abbreviation} & \textbf{Method Name} \\
\hline
\multirow{4}{*}{\makecell{Conventional \\ FWI Methods}} 
    & \makecell{ADFWI \citep{liu2025automatic}}     & Automatic differentiation-based FWI \\
    & \makecell{ADFWI-TV \citep{esser2018total}}    & ADFWI with total variant regularization \\
    & \makecell{WECI-FWI \citep{song2023weighted}}  & Weighted envelope correlation-based FWI \\
    & \makecell{MS-FWI \citep{fichtner2013multiscale}}    & Frequency-based multiscale inversion FWI \\
\hline
\multirow{2}{*}{\makecell{Existing \\ CR-FWI Methods}}
    & \makecell{IFWI \citep{sun2023implicit}}      & INR-based FWI with SINRE's activation \\
    & \makecell{WinFWI \citep{yang2025gabor}}    & INR-based FWI with Gabor wavelet activation  \\
\hline
\multirow{3}{*}{\makecell{Proposed \\ CR-FWI Methods}}
    & \textbf{MPE-FWI (Ours)}   & Multigrid parametric encoding-based FWI \\
    & \textbf{LR-FWI (Ours)}    & Low-rank tensor decomposition-based FWI \\
    & \textbf{IG-FWI (Ours)}    & Hybrid INR-Grid based FWI \\
\hline
\end{tabular}
\vspace{-0.5cm}
\end{table}

\begin{itemize}
\item {\bf (Hyperparameter settings of conventional FWI methods)} For ADFWI with TV regularization, the isotropic constraint weights are set to $\alpha_x = 2e^{-9}$ and $\alpha_z = 2e^{-9}$, respectively, with a maximum step size of 20 and a scaling factor gamma of 0.9. Moreover, for WECI-FWI methods, we use weighted envelope misfit and global correlation misfit \citep{song2023weighted}, where the weights change according to the logistic function as the number of iterations increases following \citep{song2023weighted}. Furthermore, for the MS-FWI method, we divide the observed seismic data into 6 frequency bands, ranging from 4 Hz to 14 Hz, with an interval of 2 Hz.

\item {\bf (Hyperparameter settings of existing CR-FWI methods)} For IFWI, we use the sine activation function with frequency hyperparameter $\omega_0 = 30$ (where $\omega_0$ controls the frequency of the sine function) for MLP, and the depth of the MLP is set to 4. The number of neurons of the MLP is set to 128, following the recommended settings in the paper \citep{sun2023implicit}. Moreover, for WinFWI, we use the Gabor wavelet activation function with hyperparameters $\omega_0 = 5$ and $s_0 = 5$ (where $\omega_0$ controls the frequency of the wavelet and $s_0$ controls the frequency of the wavelet). The depth of the MLP is set to 4, and the number of neurons of the MLP is set to 200, following the recommended settings in the paper \citep{yang2025gabor}.

\item {\bf (Hyperparameter settings of our proposed CR-FWI methods)} For the proposed LR-FWI method, we employ the same sine activation function as IFWI. The rank of the low-rank matrix factorization is set to half of the model dimension. The depth of the MLPs is set to 3, and the number of neurons in the MLPs is set to 128. Moreover, the MPE-FWI method utilizes a multi-resolution hash grid \citep{muller2022instant} encoding with 16 levels, a base resolution of 50, and a per-level scale factor of 1.05. This feature grid is combined with a compact MLP with two hidden layers of 64 neurons each to model the inverse problem. Furthermore, the IG-FWI model employs a hybrid architecture, combining a multi-resolution hash grid encoding (base resolution 50 with 16 levels) with a sinusoidal feature network (2 layers of 128 neurons with $\omega_0=30$). These extracted features are subsequently fused and processed by a compact MLP (2 layers of 64 neurons) to generate the final physical property output.
\end{itemize} 
The Adam optimizer is used to optimize the learnable parameters in all methods. The learning rate is set to 5 for conventional FWI methods and 0.0001 for CR-FWI methods. All numerical experiments are conducted using an RTX 3090 GPU, leveraging the PyTorch
framework.
\vspace{-0.2cm}
\subsection{Datasets and forward modeling settings} \label{appendix: experimenal_details_2}
\vspace{-0.2cm}

We evaluate baselines and our proposed methods on Marmousi \citep{brougois1990marmousi, versteeg1994marmousi}, 2D SEG/EAGE Salt and Overthrust \cite{aminzadeh19963_salt}, and 2004 BP models \citep{billette20052004} (see Fig. \ref{fig: datasets}) using different initial velocity models (i.e., smooth, linear, and constant velocity models). The detailed forward modeling parameters are as follows:
\begin{itemize}
    \item \textbf{Marmousi model:} A $94\times288$ modified Marmousi model is used (15 m grid spacing) with 13 shots deployed at 300 m intervals (8 Hz Ricker wavelet). Receivers are spaced 15 m apart along the surface. Wave propagation employs a 1.9 ms time step, recording 1,000 samples, following the recommended setting in the paper \citep{sun2023implicit}.
    
    \item \textbf{2D SEG/EAGE Salt model:} A $75 \times 250$ slice of the SEG/EAGE Overthrust model (10 m grid spacing) is used, with 24 shots at 100 m intervals (10 Hz Ricker wavelet). Receivers are placed at 10 m intervals on the surface. Wave propagation uses 1.5 ms time steps and 2,500 samples.
    \item \textbf{2D SEG/EAGE Overthrust model:} A $94 \times 401$ slice of the SEG/EAGE Salt model (15 m grid spacing) is used, with 20 shots at 270 m intervals (10 Hz Ricker wavelet). Receivers are placed at 15 m intervals on the surface. Wave propagation uses 4.0 ms time steps and 1,200 samples.
    \item \textbf{2004 BP model:} A $188 \times 500$ slice of the 2004 BP model (10 m grid spacing) is used, with 20 shots at 200 m intervals (10 Hz Ricker wavelet). Receivers are placed at 10 m intervals on the surface. Wave propagation uses 4.0 ms time steps and 2,000 samples.
\end{itemize}

\begin{figure}[h]
    \centering
    \includegraphics[width=0.9\linewidth]{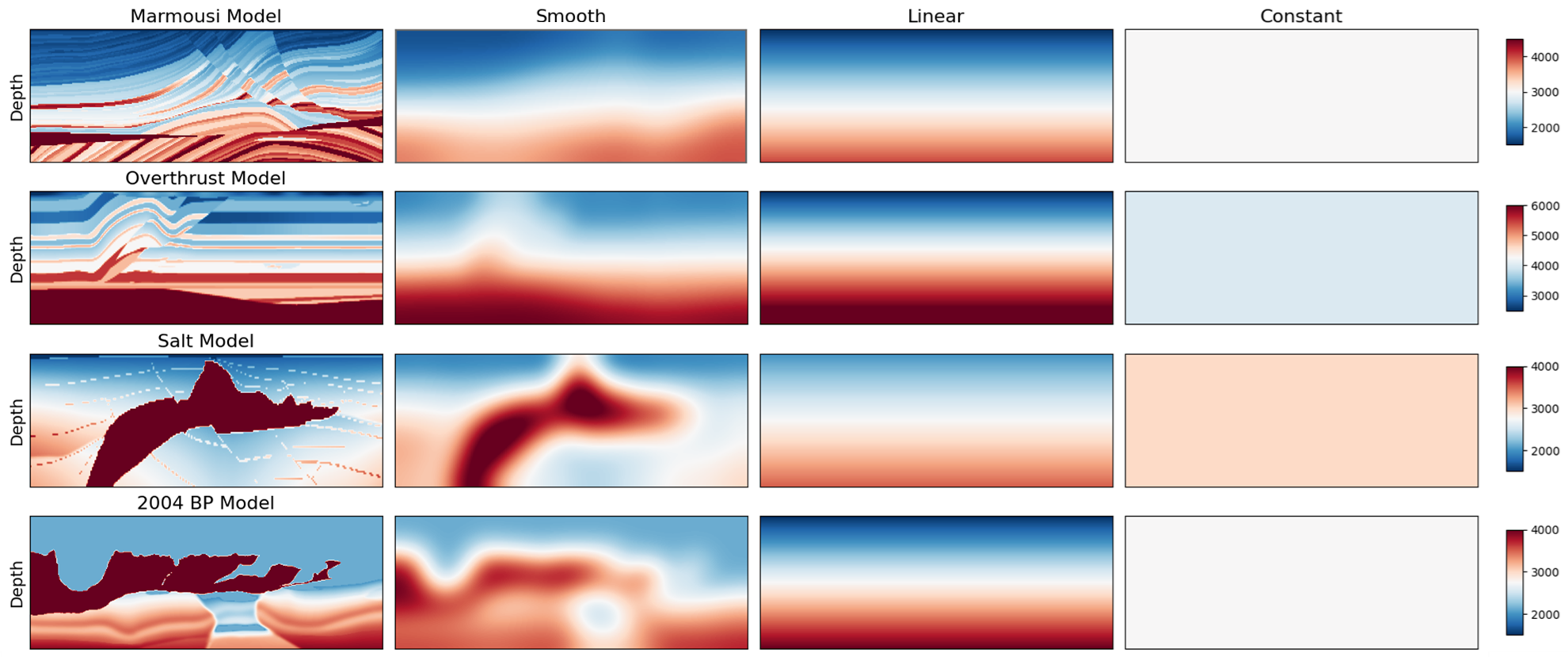}
    \caption{Datasets and Initial velocity model, including smooth, linear, and constant initial models.}
    \label{fig: datasets}
    \vspace{-0.5cm}
\end{figure}

Marmousi, SEG/EAGE Salt and Overthrust models configuration features a free-surface top boundary and perfectly matched layer (PML) absorbing boundaries on the remaining three sides. Moreover, we set PML absorbing boundaries on all sides for the 2004 BP model to reduce the impact of wavefield crosstalk. We employ the eighth-order finite difference method to solve the wave equation using deepwave \citep{simeoni2019deepwave}. Representative observed seismic data are shown in Fig. \ref{fig: seismic_data}. 

\begin{figure}[h]
\vspace{-0.3cm}
    \centering    \includegraphics[width=0.95\linewidth]{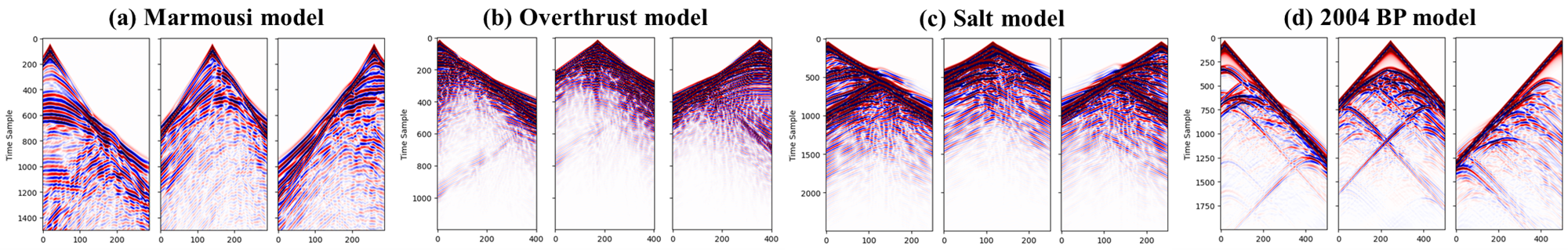}
    \caption{Observed seismic data using different velocity models, including (a) Marmousi model, (b) SEG/EAGE Overthrust model, (c) SEG/EAGE Salt model, and (d) 2004 BP model.}
    \label{fig: seismic_data}
    \vspace{-0.3cm}
\end{figure}

Furthermore, the robustness tests can be divided into three scenarios using the Marmousi model, including noisy seismic data, seismic data with missing low frequencies, and sparse acquisition.
\begin{itemize}
    \item \textbf{(Seismic data with noise)} To evaluate the robustness of our proposed CR-FWI method against noise, we conduct noise perturbation experiments using constant initial models. Gaussian noise with standard deviations of $\sigma = 4\sigma_0$ and $\sigma = 8\sigma_0$ is added to the observed seismic data generated from the Marmousi velocity model, where $\sigma_0$ denotes the standard deviation of the original seismic data. Noise contamination is a common issue in field data acquisition due to environmental factors, which often obscure meaningful signals and degrade the inversion quality.
    \item \textbf{(Seismic data with missing low-frequencies)} We further assess the robustness of CR-FWI under the challenge of absent low-frequency components, a frequent limitation in field surveys caused by bandwidth constraints of sources or receivers, and attenuation effects. The observed seismic data are high-pass filtered at cutoff frequencies of 4 Hz and 6 Hz to simulate realistic scenarios where low-frequency information is unavailable. The lack of low frequencies often leads to cycle-skipping and convergence to local minima in conventional FWI.
    \item \textbf{(Seismic data with sparse acquisition)} To mimic realistic acquisition constraints such as limited survey access, high costs, or logistical barriers, which often result in spatially under-sampled data, we decimate the source-receiver configurations in the seismic dataset derived from the Marmousi model. Sparse acquisition leads to inadequate spatial sampling, posing significant challenges for high-resolution velocity model building. This experiment tests the ability of CR-FWI to recover accurate subsurface structures under such spatially limited conditions.
\end{itemize}

\vspace{-0.3cm}
\subsection{Numerical solution of acoustic wave equation} \label{appendix: wave_equation}
\vspace{-0.2cm}
We define the discrete wavefield solution, velocity model, and source function as tensors ${\cal U} \in \mathbb{R}^{N_x \times N_z \times N_t}$, ${\cal M} \in \mathbb{R}^{N_x \times N_z}$, and ${\cal S} \in \mathbb{R}^{N_t}$, respectively, where $N_x$ and $N_z$ denote the number of discrete points in the ${\bf x}$ and ${\bf z}$ directions, and $N_t$ represents the number of time steps. The derivatives in \eqref{equ: acoustic_pde} are approximated using the second-order central finite difference method:
\begin{equation}
\begin{aligned}
\frac{\partial^2 \mathcal{U}^{\ell}}{\partial t^2} \approx \frac{\mathcal{U}^{\ell+1} - 2\mathcal{U}^{\ell} + \mathcal{U}^{\ell-1}}{\Delta t^2}, \quad \nabla^2 \mathcal{U}^{\ell} \approx \mathcal{U}^{\ell} * \mathbf{K},
\end{aligned}
\end{equation}
where $\mathcal{U}^{\ell}$ denotes the wavefield tensor at time step $\ell$, $\Delta t$ is the time step size, and $\mathbf{K}$ is the discrete convolution kernel for the Laplacian operator. Under the assumption that $\Delta \mathbf{x} = \Delta \mathbf{z} = \Delta h$, the kernel $\mathbf{K}$ is given by:
\begin{equation}
\mathbf{K} = \frac{1}{\Delta h^2}
\begin{bmatrix}
0 & 1 & 0 \\
1 & -4 & 1 \\
0 & 1 & 0
\end{bmatrix}.
\end{equation}
The iterative update scheme for the wavefield without considering boundary conditions is:
\begin{equation}
\mathcal{U}^{\ell+1} = 2 \mathcal{U}^{\ell} - \mathcal{U}^{\ell-1} + \Delta t^2 \left[ \mathcal{M} \odot \mathcal{M} \odot \left( \mathcal{U}^{\ell} * \mathbf{K} - \mathcal{S}^{\ell} \right) \right],
\end{equation}
where $\odot$ denotes the Hadamard (element-wise) product, and $\mathcal{S}^{\ell}$ is the source term at time step $\ell$. To prevent unwanted reflections from the boundaries, a Perfectly Matched Layer (PML) is incorporated. Following the approach in the provided content, we introduce auxiliary variables $p$ and $z$ to handle the PML \citep{pasalic2010convolutional}. The modified update scheme in the PML region becomes:
\begin{equation}
\begin{aligned}
\mathcal{U}^{\ell+1} &= 2 \mathcal{U}^{\ell} - \mathcal{U}^{\ell-1} + \Delta t^2 \left[ \mathcal{M} \odot \mathcal{M} \odot \left( \nabla^2 \mathcal{U}^{\ell} + \nabla p^{\ell} + z^{\ell} - \mathcal{S}^{\ell} \right) \right], \\
p^{\ell} &= a \odot p^{\ell-1} + b \odot \nabla_x \mathcal{U}^{\ell}, \\
z^{\ell} &= a \odot z^{\ell-1} + b \odot \left( \nabla^2 \mathcal{U}^{\ell} + \nabla_x p^{\ell} \right),
\end{aligned}
\end{equation}
where $a$ and $b$ are spatially varying coefficients within the PML, and $\nabla_x$ denotes the spatial derivative in the $x$-direction. The time-stepping scheme can be interpreted as a recurrent process, analogous to a recurrent neural network (RNN), where each time step corresponds to a layer in the network \citep{sun2020theory, simeoni2019deepwave}.
\vspace{-0.2cm}

\subsection{Wave-based NTK for 1D FWI experimental setting}
\label{appendix: wave_based_ntk_1dfwi}
\vspace{-0.2cm}
Our experiments are based on the following one-dimensional acoustic wave equation:
\begin{align}
m(x) \partial_t^2 u(x,t) - \partial_x^2 u(x,t) = s(x,t),
\end{align}
where $x \in \mathbb{R}$ denotes the spatial coordinate, $t\in \mathbb{R}$ represents time, $m$ is the velocity model, $u$ is the wavefield, and $s$ is the source function. The target velocity model is constructed from trace data extracted from the Marmousi model with a grid spacing of 15 m (see Fig. \ref{fig: 1dfwi} (a)). The seismic observations are generated using a finite-difference method (see Fig. \ref{fig: 1dfwi} (b)). An 8 Hz Ricker wavelet is employed as the source function, with a time step of 1.9 ms and 1000 samples (see Fig. \ref{fig: 1dfwi} (c)).
\begin{figure}[h]
    \centering
\includegraphics[width=0.95\linewidth]{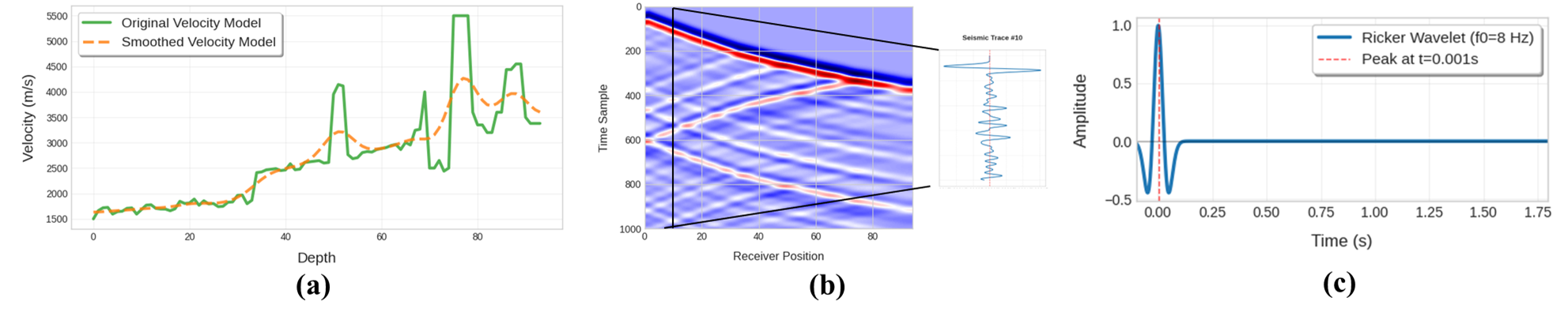}
\vspace{-0.35cm}
    \caption{(a) 1D velocity model. (b) observed seismic data. (c) Ricker source function.}
    \label{fig: 1dfwi}
\end{figure}

To compare the eigenvalue decay across different FWI methods, including ADFWI, INR-based FWI, MPE-FWI, LR-FWI, and IG-FWI. We compute the wave-based NTK for CR-FWI using a network width of 100,000 and a wave kernel for ADFWI. Furthermore, we test the wave-based NTK relative nuclear norm variation at initialization and during the training process, i.e., 
\begin{align}
\Delta \Theta(\tau)\triangleq\frac{||\Theta_{\text{wave}}^{\text{NTK}}(\tau) - \Theta_{\text{wave}}^{\text{NTK}}(0)||}{||\Theta_{\text{wave}}^{\text{NTK}}(0)||},    
\end{align}
over the network’s width, for 10 independent experiments. The results are presented in Fig. \ref{fig: ntk_experiment}.

\vspace{-0.2cm}
\subsection{Usage of LLMs}
\vspace{-0.2cm}
We employed large language models solely for the purpose of polishing and refining the writing of this paper. The models assisted with improving grammatical accuracy, enhancing sentence fluency, and ensuring clarity of expression, while strictly preserving the original technical content and scientific meaning. All ideas, methodologies, results, and conclusions remain entirely our own. 

\newpage
\subsection{Main experimental results}\label{appendix: Experimental_Results}
\vspace{-0.2cm}
\subsubsection{Evaluation metrics}\label{appendix: matrics}
\vspace{-0.4cm}

    \begin{table*}[h]
        \scriptsize
        \renewcommand{\arraystretch}{0.9}
        \setlength{\tabcolsep}{3.5pt} 
        \centering
        \caption{ Comparisons between different conventional FWI (i.e., ADFWI, ADFWI-TV, WECI-FWI, and MS-FWI) and CR-FWI (i.e., IFWI, WinFWI, LR-FWI, MPE-FWI, and IG-FWI) methods across different evaluation metrics for the 2D Marmousi model. Here, {\bf bold} represents the best evaluation result, while \underline{underline} represents the second-best evaluation}
        \vspace{0.1cm}
        \label{tab:results}
        \begin{tabular}{llcccccccccc}
            \toprule
            \multirow{2}{*}{\textbf{Metric}} & \multirow{2}{*}{\textbf{Method}} & \multicolumn{3}{c}{\textbf{Initial velocity model}} & \multicolumn{2}{c}{\textbf{Gaussian noise}} & \multicolumn{2}{c}{\textbf{Frequency cutoff}} & \multicolumn{2}{c}{\textbf{Shots number}} \\
            \cmidrule(lr){3-5} \cmidrule(lr){6-7} \cmidrule(lr){8-9} \cmidrule(lr){10-11}
            & & Smooth & Linear & Constant& $4 \sigma_0$& $8\sigma_0$ &4 Hz & 6 Hz & 9 shots & 5 shots\\
            \midrule
            
            \multirow{9}{*}{SSIM $\uparrow$}
            & ADFWI& 0.6260 & 0.2689 & 0.0549 & 0.2167 & 0.1401 &  0.2594 & 0.2287 & 0.2499 & 0.1969 \\
            
            & ADFWI-TV& 0.5466 & 0.2735 & 0.1503 & 0.2628 & 0.2456 & 0.2607 & 0.2462 & 0.2718 & 0.2289\\
    
            & WECI-FWI& 0.4239 & 0.3638 & 0.0220 & 0.1193 & 0.0948 & 0.3581 & 0.3475 & 0.3240 & 0.2926\\
    
            & MS-FWI& 0.7093 & 0.6559 & 0.0952 & 0.5274 & 0.4218 & 0.5298 &  0.4193 & 0.5362 & 0.4798\\
            
            & IFWI & 0.6510 & 0.6173 & 0.5252 & 0.5410 & 0.4607 & 0.5587 & 0.5124 & 0.5802 & 0.4234\\
            
            & WinFWI& 0.6498 & 0.5982 & 0.5617 & 0.5220 & 0.3960 & 0.5442 & 0.4514 & 0.5938 & 0.5002\\
    
            & \textbf{LR-FWI}& 0.6853 & \underline{0.6662} & \underline{0.5917} & \underline{0.5714} & \underline{0.5079} &  \underline{0.6414} & \underline{0.6166} & \underline{0.6250} & \underline{0.5665}\\
    
            & \textbf{MPE-FWI}& \underline{0.7177} & 0.3681 & 0.1888 & 0.3620 & 0.3008 & 0.2828 & 0.2723 & 0.2837 & 0.2745\\
    
            & \textbf{IG-FWI} & \textbf{0.7183} & \textbf{0.7130} & \textbf{ 0.6773} & \textbf{0.6182} & \textbf{0.6048} & \textbf{0.6774} & \textbf{0.6650} & \textbf{0.7030} & \textbf{0.6808}\\
            
            \midrule
            \multirow{9}{*}{MSE $\downarrow$}
            & ADFWI&  0.2132 & 0.4921 & 1.1522 & 0.4965 & 0.5422 & 0.4965 & 0.4975 & 0.4831 & 0.4730\\
            
            & ADFWI-TV& 0.2449 & 0.3780 & 0.9386 & 0.3831 & 0.3875 & 0.3807 & 0.3764 & 0.3741 & 0.3547\\
    
            & WECI-FWI& 0.2977 & 0.3347 & 1.2537 & 0.5414 & 0.6366 & 0.3320 & 0.3220 & 0.3338 & 0.3337\\
    
            & MS-FWI& 0.1683 &  0.2651 & 1.1313 & 0.2824 & 0.3547 & 0.3556 & 0.4198 & 0.2867 & 0.3325\\
            
            & IFWI & 0.1907 & 0.2375 & 0.9474 & 0.3072 & 0.3374 & 0.3040 & 0.3358 & 0.2857 & 0.3483\\
            
            & WinFWI&   0.2013 & 0.2917 & 0.4689 & 0.3042 & 0.3514 & 0.3029 & 0.3460 & 0.2401 & 0.3239 \\
    
            & \textbf{LR-FWI}& 0.1638 & \underline{0.1733} & \textbf{0.2893} & \underline{0.2218} & \underline{0.2553} & \underline{0.1998} &  \underline{0.2276} & \underline{0.2064} & \underline{0.2641}\\
                
            & \textbf{MPE-FWI}& \underline{0.1427} & 0.3755 & 2.2266 & 0.3271 & 0.2990 & 0.3322 & 0.3320 & 0.3338 & 0.3393\\
    
            &\textbf{ IG-FWI }& \textbf{0.1423} & \textbf{0.1534} & \underline{0.2961} & \textbf{0.2124} & \textbf{0.2151} & \textbf{0.1653} & \textbf{0.1846} & \textbf{0.1602 }&  \textbf{0.1654}\\
            
            \midrule    
            \multirow{9}{*}{MAE $\downarrow$}
            & ADFWI& 0.2682 & 0.4571 & 0.8866 & 0.4678 & 0.5084 & 0.4626 & 0.4757 & 0.4708 & 0.4555\\
            
            & ADFWI-TV& 0.2800 & 0.3979 & 0.8074 & 0.4028 & 0.4084 & 0.4005 & 0.4034 & 0.4077 & 0.3874\\
    
            & WECI-FWI & 0.3389 & 0.3628 & 0.9170 & 0.5719 & 0.6251 & 0.3475 & 0.3605 & 0.3726 & 0.3817\\
    
            & MS-FWI& 0.2201 & 0.2719 & 0.8590 & 0.3314 & 0.3917 & 0.2941 & 0.3996 & 0.3146 & 0.3435\\
            
            & IFWI & 0.2468 &  0.2786 & 0.5568 & 0.3224 & 0.3504 & 0.3165 & 0.3415 & 0.3053 & 0.3737\\
            
            & WinFWI & 0.2480 & 0.3097 & 0.3979 & 0.3227 & 0.3633 & 0.3182 & 0.3536 & 0.2744 & 0.3331\\
    
            & \textbf{LR-FWI}& 0.2278 & \underline{0.2472} & \underline{0.3149} & \underline{0.2841} & \underline{0.3090} &  \underline{0.2495} & \underline{0.2679} & \underline{0.2561} & \underline{0.2921}\\
    
            & \textbf{MPE-FWI}& \underline{0.2114}& 0.3977 & 1.1312 & 0.3649 & 0.3514 & 0.3791 & 0.3833 & 0.3811 & 0.3863\\
    
            & \textbf{IG-FWI} & \textbf{0.2109} & \textbf{0.2196} &  \textbf{0.2989} & \textbf{0.2644} & \textbf{0.2706} &\textbf{ 0.2312} & \textbf{0.2368} & \textbf{0.2243} &  \textbf{0.2307}\\
            \bottomrule
        \end{tabular}
    
    \end{table*}

\vspace{-0.4cm}
\begin{table*}[h]
    \centering
    \scriptsize
    \renewcommand{\arraystretch}{0.9}
    \setlength{\tabcolsep}{3.5pt} 
    \caption{ Comparisons between conventional FWI (i.e., ADFWI, ADFWI-TV, WECI-FWI, and MS-FWI) and CR-FWI (i.e., IFWI, WinFWI, LR-FWI, MPE-FWI, and IG-FWI) methods for the other challenging models. Here, {\bf bold} represents the best evaluation result, while \underline{underline} represents the second-best evaluation.}
    \vspace{0.1cm}
    \label{tab:results}
    \begin{tabular}{llccccccccc}
        \toprule
        \multirow{2}{*}{\textbf{Metric}} & \multirow{2}{*}{\textbf{Method}} &\multicolumn{3}{c}{\textbf{2004 BP model}} & \multicolumn{3}{c}{\textbf{Salt model}} & \multicolumn{3}{c}{\textbf{Overthrust model}} \\
        \cmidrule(lr){3-5} \cmidrule(lr){6-8} \cmidrule(lr){9-11} 
        & & Smooth & Linear & Constant&  Smooth & Linear & Constant&  Smooth & Linear & Constant \\

        \midrule
        \multirow{9}{*}{SSIM $\downarrow$}
        & ADFWI&  0.5104 & 0.2391 & 0.0920 & 0.1717 & 0.3770 & 0.2334 & 0.7334 & 0.4699 & 0.0324\\
        & ADFWI-TV&  0.5042 & 0.2573 & 0.0963 & 0.2778 & 0.5272 & 0.3315 & 0.4497 & 0.3098 & 0.1035\\
        & WECI-FWI& 0.4392 &  0.2111 & 0.0973 & 0.3290 & 0.2525 & 0.0877 & 0.4958 & 0.1413 & 0.0931 \\
        & MS-FWI& 0.6872 & 0.4157 & 0.0493 & 0.2590 & 0.4515 & 0.3020 & 0.7033 & 0.6802 & 0.0456 \\
        & \textbf{IFWI} & 0.6168 & \underline{0.5807} & 0.5533 & \underline{0.5882} & 0.5177 & 0.4189 & 0.7124 & 0.6849 & 0.6398\\
        & \textbf{WinFWI}& 0.5926 & 0.3579 & \underline{0.6362} & 0.5507 & \underline{0.5205} & \underline{0.5123} & 0.6946 & 0.6737 & 0.6148\\
        & \textbf{LR-FWI}&  \underline{0.7003} & 0.4992 & 0.4868 & 0.4222 & 0.4944 & 0.4636 & \textbf{0.7305 }& \textbf{0.7005} & \textbf{0.6985}\\
        & \textbf{MPE-FWI}&  0.6351 & 0.3221 & 0.0968 & 0.1554 & 0.5053 & 0.2100 & 0.6571 & 0.4506 & 0.0557\\
        &\textbf{IG-FWI}&  \textbf{0.7316} & \textbf{0.6459} & \textbf{0.7274} & \textbf{0.5910} & \textbf{0.5265} & \textbf{0.5138} & \underline{0.7272} & \underline{0.6985} & \underline{0.6501}\\
        
        \midrule
        \multirow{9}{*}{MSE $\downarrow$}
        & ADFWI& 0.1003  & 0.2568 & 0.4281 & 0.6462 & 0.3292 & 0.6337 & 0.0592 & 0.1902 & 1.3364\\
        & ADFWI-TV& 0.0995 & 0.2593 & 0.4259 & 0.5041 & 0.2143 & 0.5318 & 0.1154 & 0.1985 & 1.3142\\
        & WECI-FWI& 0.0951 &  0.2748 & 0.3847 & 0.3032 & 0.9294 & 1.0293 & 0.1182 & 0.2987 & 1.5526 \\
        & MS-FWI& 0.0672 & 0.1631 & 0.4667 & 0.5358 & 0.3271 & 0.9234 & 0.0752 & 0.0797 & 1.0802 \\
        & IFWI &  0.0719 & \underline{0.1143} & 0.1412 & \underline{0.1201} & 0.2214 & 0.7412 & 0.0683 & 0.0804 & 0.6432\\
        & WinFWI&  0.0812 & 0.1827 & \underline{0.1083} & 0.1223 & 0.2241 & 0.3549 & 0.0682 & 0.0932 & 0.5738\\
        & \textbf{LR-FWI}&  \underline{0.0481} & 0.1193 & 0.1248 & 0.2785 & \underline{0.2143} & \underline{0.2368} & \textbf{0.0548} & \textbf{0.0724} & \textbf{0.1592}\\
        & \textbf{MPE-FWI}& 0.0586 & 0.3387 & 1.6024 & 0.8534 & 0.2942 & 1.1008 & 0.0613 & 0.2113 & 1.2887\\
        &\textbf{IG-FWI}&  \textbf{0.0521} & \textbf{0.1057} & \textbf{ 0.0843} & \textbf{0.1181} & \textbf{0.1997} & \textbf{0.1883} & \underline{0.0587} & \underline{0.0762} & \underline{0.5724}\\
        
        \midrule
        \multirow{9}{*}{MAE $\downarrow$}
        & ADFWI&  0.1987 & 0.3697 & 0.5690 & 0.6168 & 0.4024 & 0.5412 & 0.1710 & 0.3182 & 0.9107\\
        & ADFWI-TV&  0.2006 & 0.3672 & 0.5685 & 0.5051 & 0.3071 & 0.4852 & 0.2653 & 0.3479 & 0.9184\\
        & WECI-FWI& 0.2024 &  0.4064 & 0.5045 & 0.3791 & 0.6999 & 0.7845 & 0.2536 & 0.4314 & 1.0064 \\
        & MS-FWI& 0.1595 &  0.2717 & 0.5612 & 0.5007 & 0.3794 & 0.5705 & 0.1802 & 0.2118 & 0.8087 \\
        & IFWI &  0.1637 & 0.2230 &  \underline{0.2030}  & \underline{0.2129} & 0.3200 & 0.5210 & 0.2239 & 0.2028 & 0.4672\\
        & WinFWI &  0.1883 & 0.3018 & 0.2061 & 0.2152 & 0.3284 & 0.3325 & 0.2392 & 0.1962 & 0.4564\\
        & \textbf{LR-FWI}&\textbf{ 0.1322} & \underline{0.2128} & 0.2295 & 0.3272 & \textbf{0.3011} & \underline{0.2962} & \textbf{0.1671} & \textbf{0.1907} & \textbf{0.2793}\\
        & \textbf{MPE-FWI}&  0.2099 & 0.3918 & 0.6462 & 0.6667 & 0.3734 & 0.6513 & 0.1974 & 0.3392 & 0.8742\\
        &\textbf{IG-FWI}& \underline{0.1386} & \textbf{0.2085} & \textbf{0.1839} & \textbf{0.2118} & \underline{0.3096} & \textbf{0.2700} & \underline{0.1702} & \underline{0.1870} & \underline{0.4088}\\
        \bottomrule
    \end{tabular}
\end{table*}
\newpage
\subsubsection{Inversion results of Marmousi model with different initial models} \label{appendix: results_beign}
\begin{figure}[h]
    \centering
    \includegraphics[width=1\linewidth]{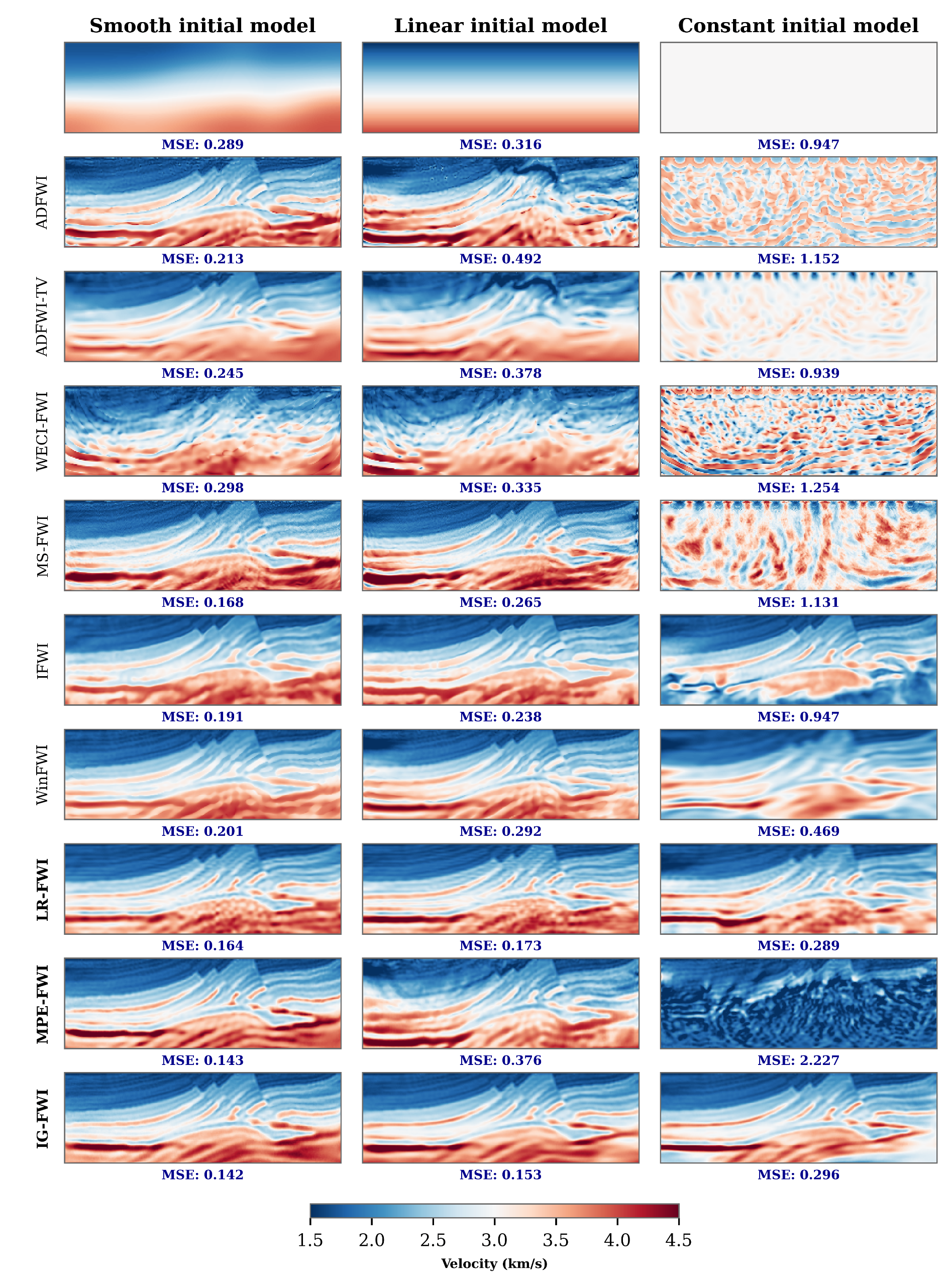}
    \caption{{Conventional FWI, INR-based FWI, and our proposed FWI methods (\textbf{bold}) performance comparison using different initial velocity models (i.e., smooth, linear, and constant) on the 2D Marmousi model.}}
    \label{fig:placeholder}
\end{figure}
\newpage
\subsubsection{Inversion results of Marmousi model with different seismic data quality}
\begin{figure}[h]
    \centering
    \includegraphics[width=1\linewidth]{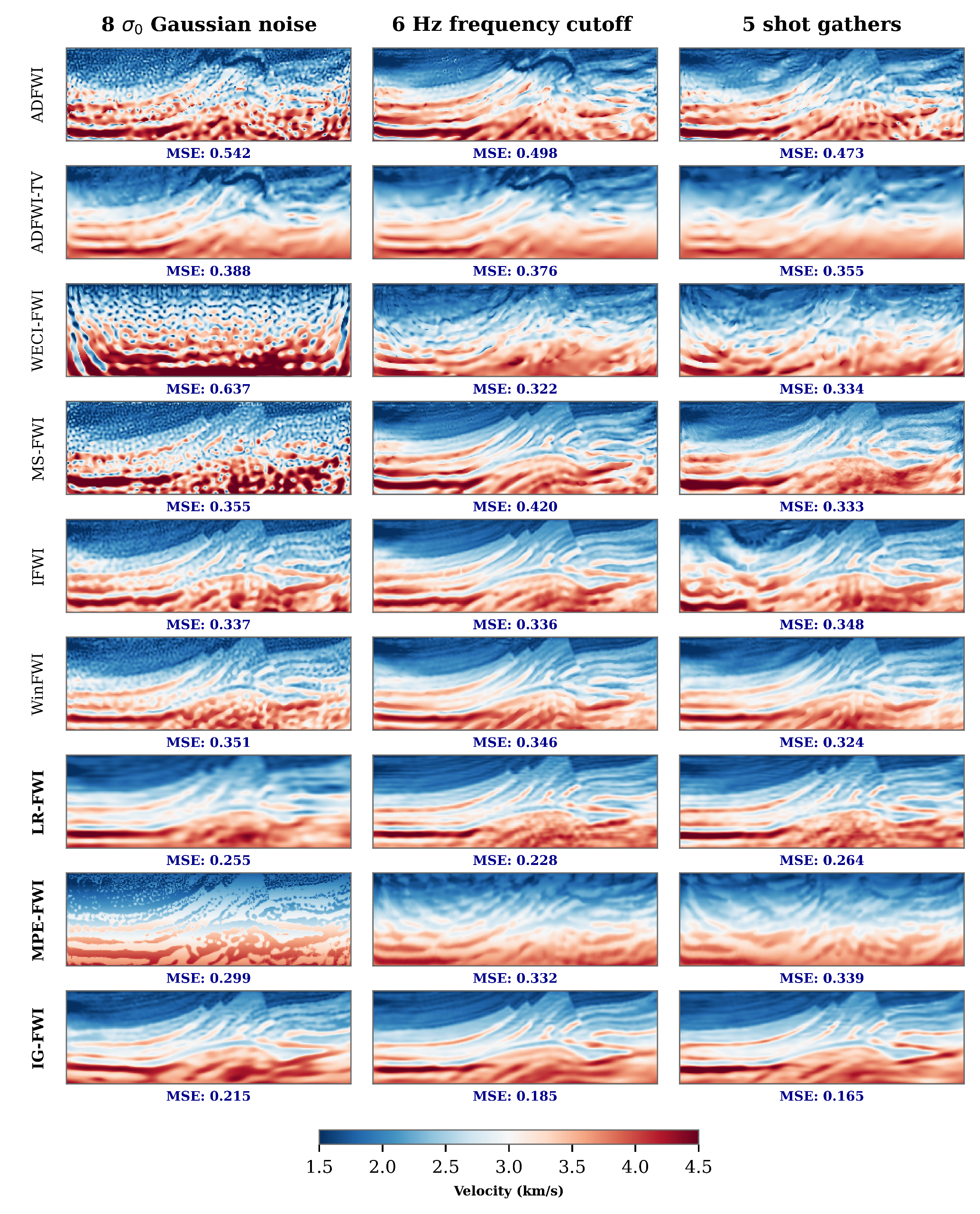}
    \caption{{Conventional FWI, INR-based FWI, and our proposed FWI methods (\textbf{bold}) performance comparison using a linear initial model with different seismic data quality on the 2D Marmousi model.}}
    \label{fig:placeholder}
\end{figure}
\newpage
\subsubsection{Inversion results of Overthrust model with different initial models}
\begin{figure}[h]
    \centering
    \includegraphics[width=1\linewidth]{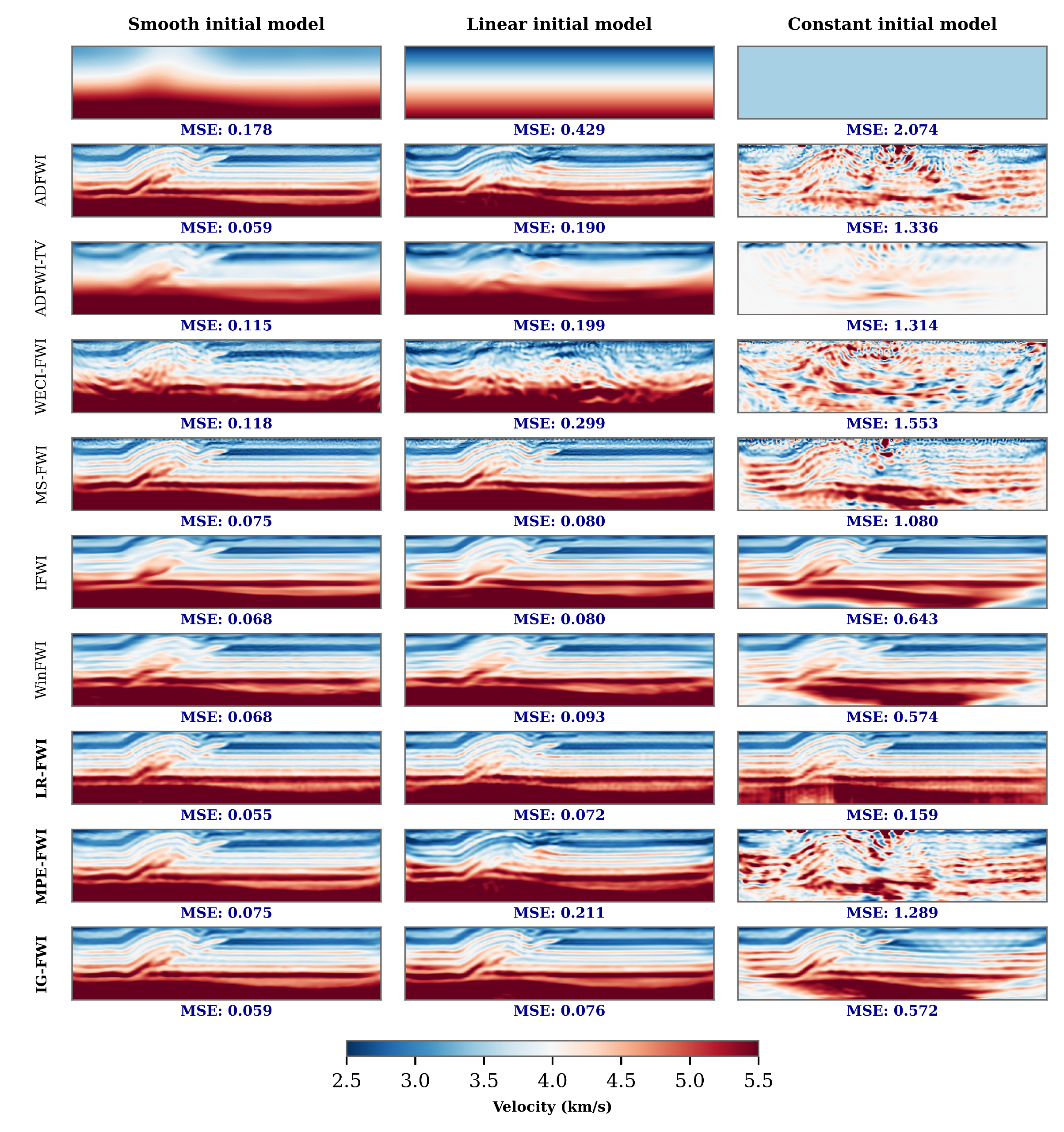}
    \caption{{Conventional FWI, INR-based FWI, and our proposed FWI methods (\textbf{bold}) performance comparison using different initial velocity models (i.e., smooth, linear, and constant) on the 2D SEG/EAGE Overthrust model.}}
    \label{fig:placeholder}
\end{figure}
\newpage
\subsubsection{Inversion results of Salt model with different initial models}
\begin{figure}[h]
    \centering
    \includegraphics[width=1\linewidth]{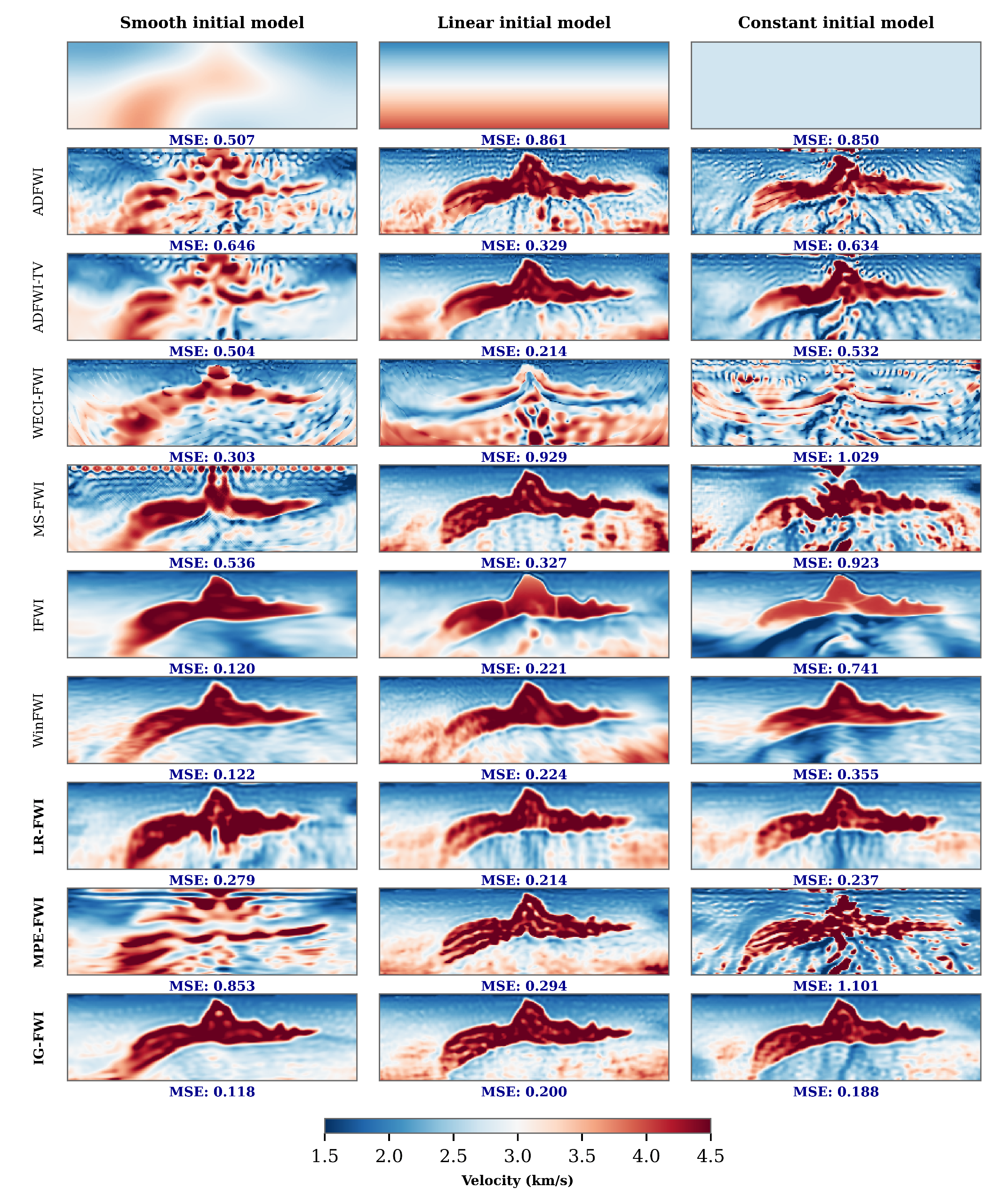}
    \caption{{Conventional FWI, INR-based FWI, and our proposed FWI methods (\textbf{bold}) performance comparison using different initial velocity models (i.e., smooth, linear, and constant) on the 2D SEG/EAGE Salt model.}}
    \label{fig:placeholder}
\end{figure}

\newpage
\subsubsection{Inversion results of 2004 BP model with different initial models}\label{appendix: results_end}
\begin{figure}[h]
    \centering
    \includegraphics[width=0.9\linewidth]{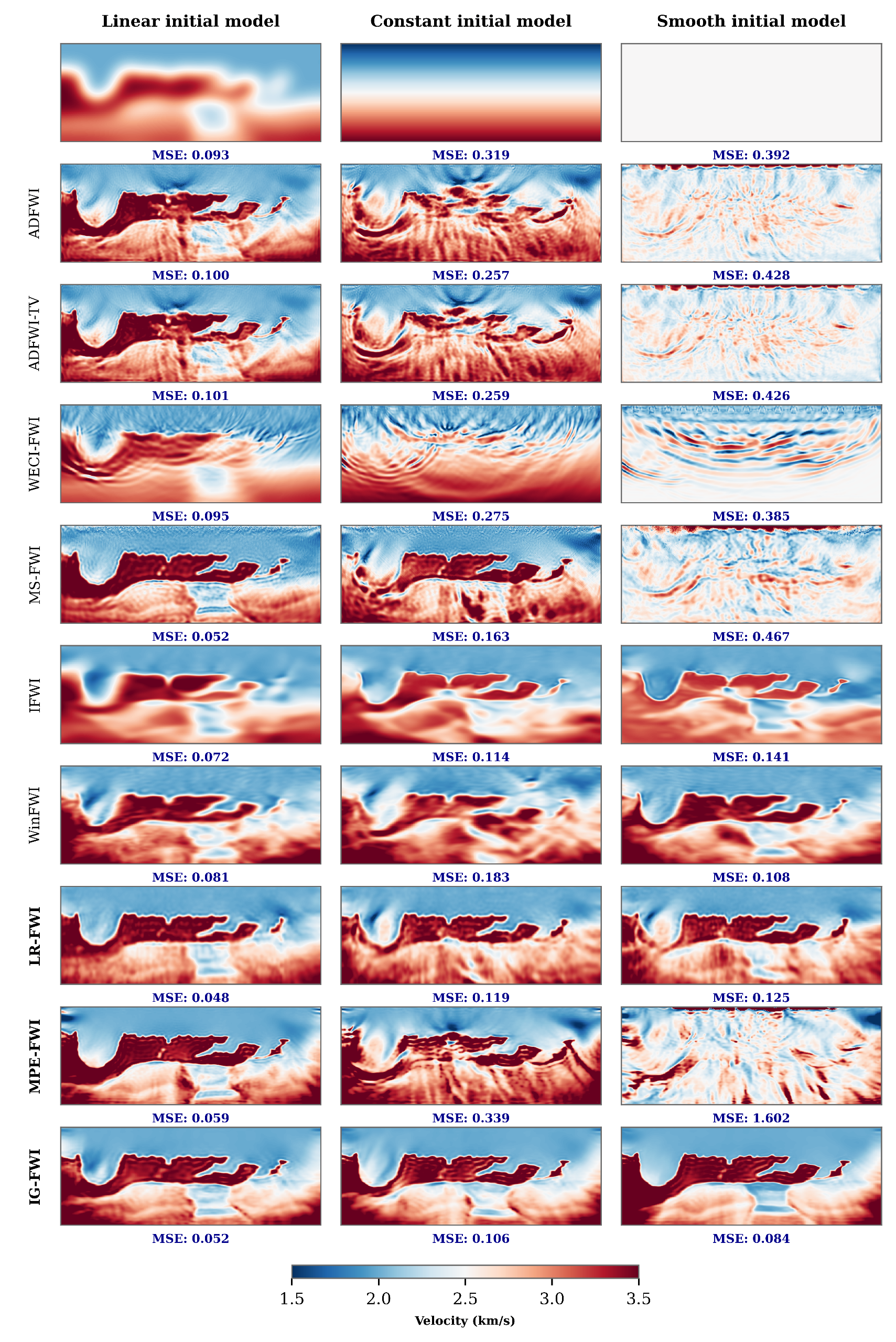}
    \caption{{Conventional FWI, INR-based FWI, and our proposed FWI methods (\textbf{bold}) performance comparison using different initial velocity models (i.e., smooth, linear, and constant) on the 2004 BP model.}}
    \label{fig:placeholder}
\end{figure}
\newpage
\subsubsection{Generated velocity corresponding to different iterations} \label{appendix: generated_velocity_model}
\begin{figure}[h]
    \centering
    \includegraphics[width=0.9\linewidth]{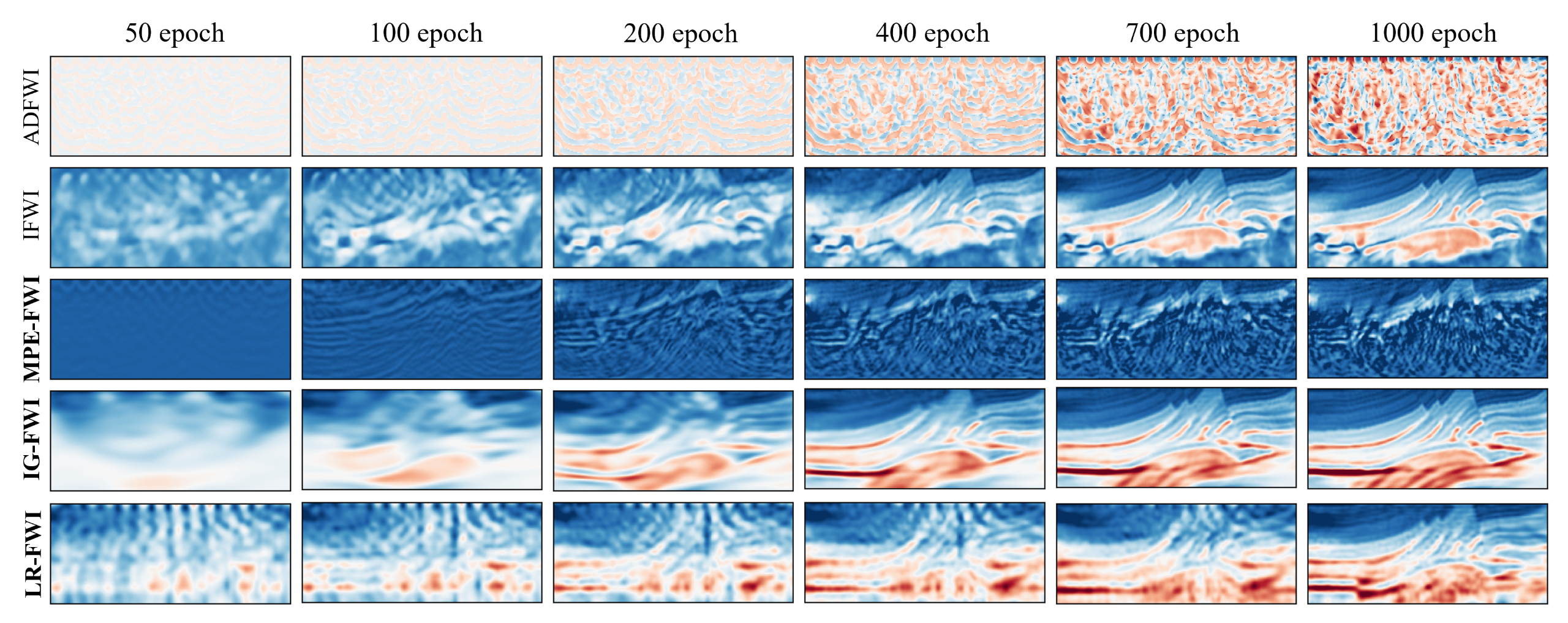}
    \caption{{ Generated velocity model corresponding to different iterations with conventional FWI, IFWI, MPE-FWI, IG-FWI, and LR-FWI using Marmousi model}}
    \label{fig:placeholder}
\end{figure}

\begin{figure}[h]
    \centering
    \includegraphics[width=0.9\linewidth]{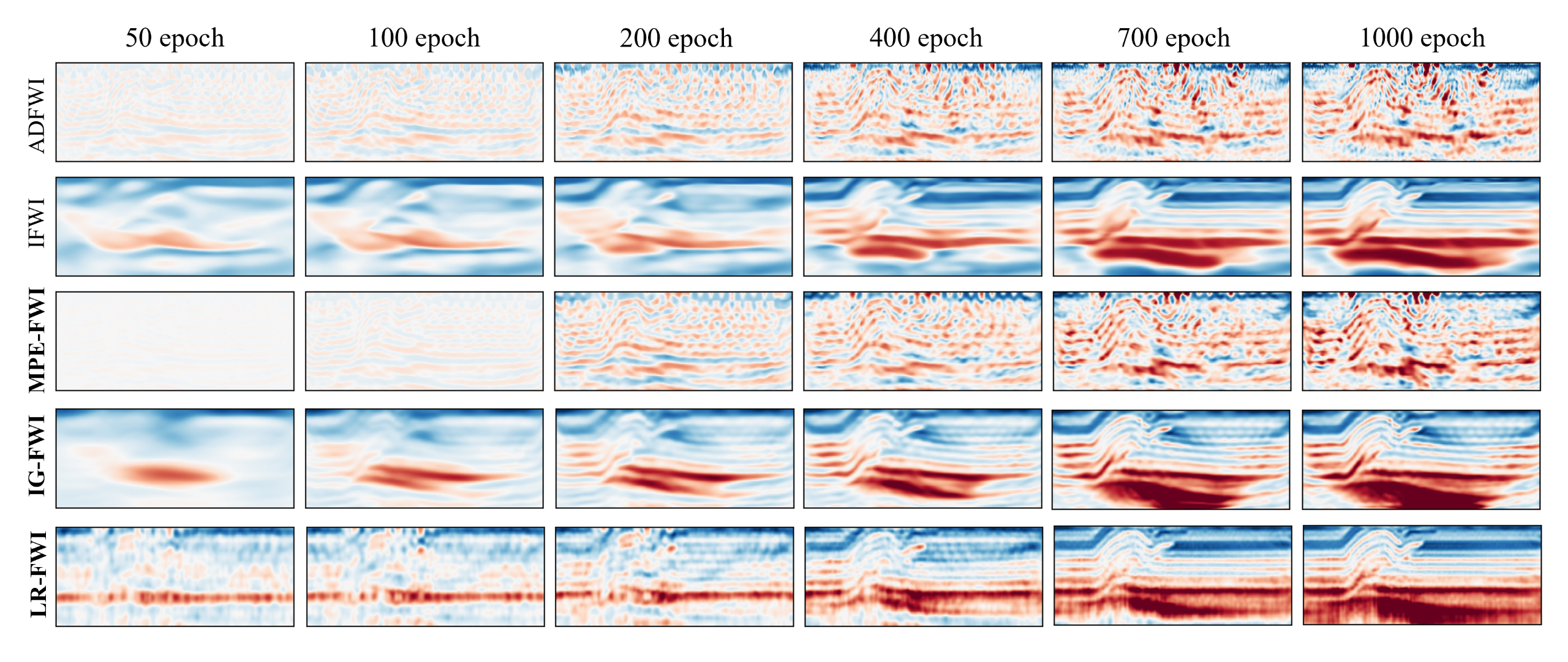}
    \caption{{ Generated velocity model corresponding to different iterations with conventional FWI, IFWI, MPE-FWI, IG-FWI, and LR-FWI using Overthrust model}}
    \label{fig:placeholder}
\end{figure}

\begin{figure}[h]
    \centering
    \includegraphics[width=0.9\linewidth]{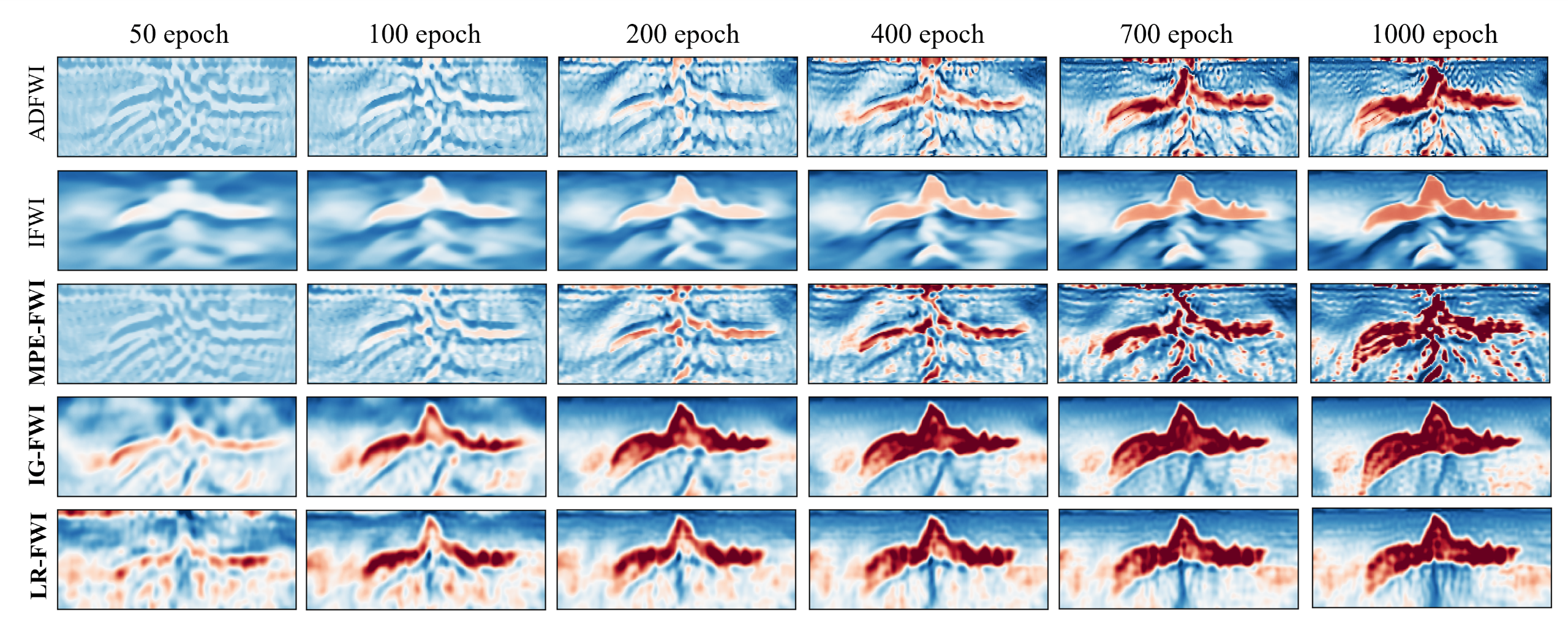}
    \caption{{ Generated velocity model corresponding to different iterations with conventional FWI, IFWI, MPE-FWI, IG-FWI, and LR-FWI using Salt model}}
    \label{fig:placeholder}
\end{figure}

\begin{figure}[h]
    \centering
    \includegraphics[width=0.9\linewidth]{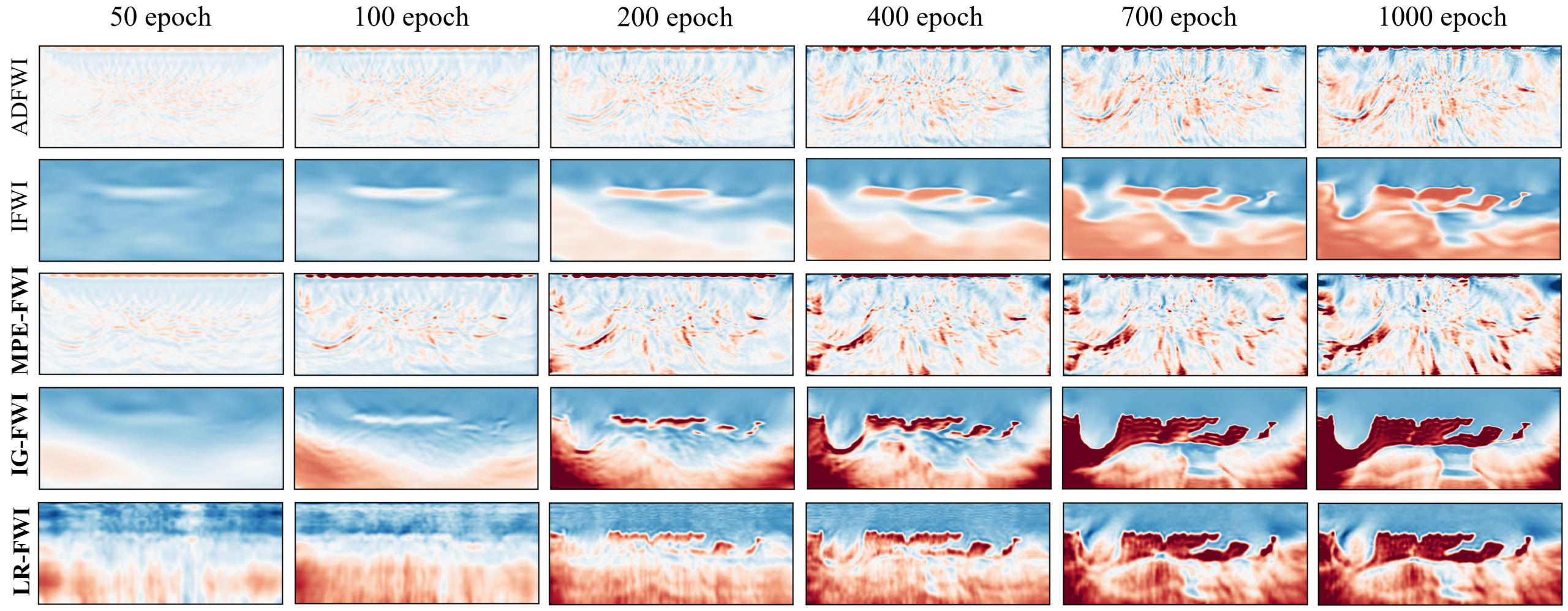}
    \caption{{ Generated velocity model corresponding to different iterations with conventional FWI, IFWI, MPE-FWI, IG-FWI, and LR-FWI using 2004 BP model}}
    \label{fig:placeholder}
\end{figure}

\newpage
{
\vspace{-0.2cm}
\section{Additional Experiments}
\vspace{-0.2cm}
In this Appendix, we provide additional numerical results, including a more realistic 2014 Chevron blind test, a 3D SEG/EAGE Overthrust model test with computational challenges, and comparisons with data-driven method on the OpenFWI dataset. Finally, we present two failure cases using our proposed methods. }
\vspace{-0.2cm}
\subsection{2014 Chevron blind test and results} \label{appendix: chevron}
\vspace{-0.2cm}
To further validate our proposed IG-FWI method on more realistic field data and consider the inversion crime problem, we applied multiscale ADFWI \citep{liu2025automatic, fichtner2013multiscale}, INR-based FWI \citep{sun2023implicit}, and IG-FWI to the realistic Chevron 2014 benchmark dataset.

\begin{wrapfigure}{r}{0.5\textwidth}
\vspace{-0.3cm}
    \centering
\includegraphics[width=0.95\linewidth]{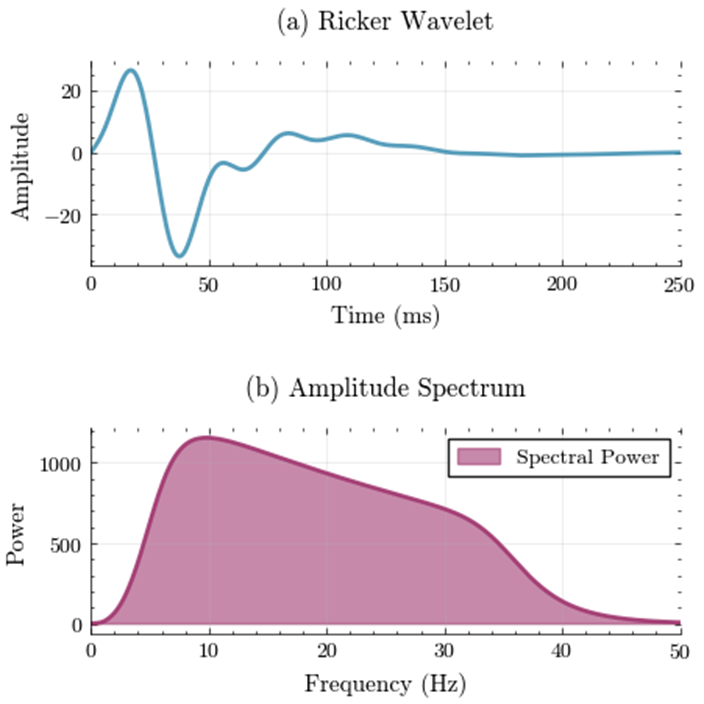}
    \vspace{-0.3cm}
    \caption{(a) Initial source wavelet and (b) its corresponding normalized
spectrum, as provided by Chevron Oil Company for FWI.}
    \label{fig: chevron_source}
    \vspace{-0.2cm}
\end{wrapfigure}

This 2D marine towed-streamer dataset with a maximum offset of 8 km, provided by Chevron Oil Company for FWI, was modeled using a \textbf{2D isotropic, elastic wave equation} with a free surface at the top and absorbing boundaries on the sides and the bottom. It contains noise, with the signal predominating above 3 Hz. We used 146 shots with a spacing of 150 m, where each was recorded by 321 receivers spaced 25 m apart, all at a depth of 15 m. The velocity model was discretized at 25 m in both directions. Data were recorded at a 4-ms time step over 8 seconds. The source wavelet and its spectrum are shown in Fig. \ref{fig: chevron_source}. A 1D initial velocity model and a well log were provided, but the true model was withheld.

For the multiscale FWI method, we adopt five frequency bands ranging from 3 Hz to 11 Hz with a 2 Hz interval and 50 iterations. The relevant hyperparameters for both the INR-based FWI and our proposed IG-FWI methods are set as described in Appendix \ref{appendix: experimenal_details} and run 500 epochs. Here, we produce the synthetic seismic data using the acoustic wave equation rather than the isotropic and elastic wave equation, which avoids the inversion crime problem. Inversion results on the Chevron blind test data (see Fig. \ref{fig: chevron_results}) show that all methods accurately recover the shallow structures. However, challenges emerge at greater depths. For instance, velocity profiles (see Fig. \ref{fig: chevron_results}) indicate that the obtained model aligns well with well-log data from 1 km to 2.5 km, though some discrepancies appear below 2.0 km. Despite these difficulties, IG-FWI delivers superior performance compared to both multiscale FWI and INR-based FWI. Because the true model
was unknown, we extensively assessed the inversion performance by
comparing common-source shot gathers, shown in Fig. \ref{fig: chevron_results} Right (b). This improvement can be attributed to the suitable eigenvalue decay of IG-FWI, thereby enhancing the stability and accuracy of the inversion.
\begin{figure}[h]
\vspace{-0.3cm}
    \centering
    \includegraphics[width=1\linewidth]{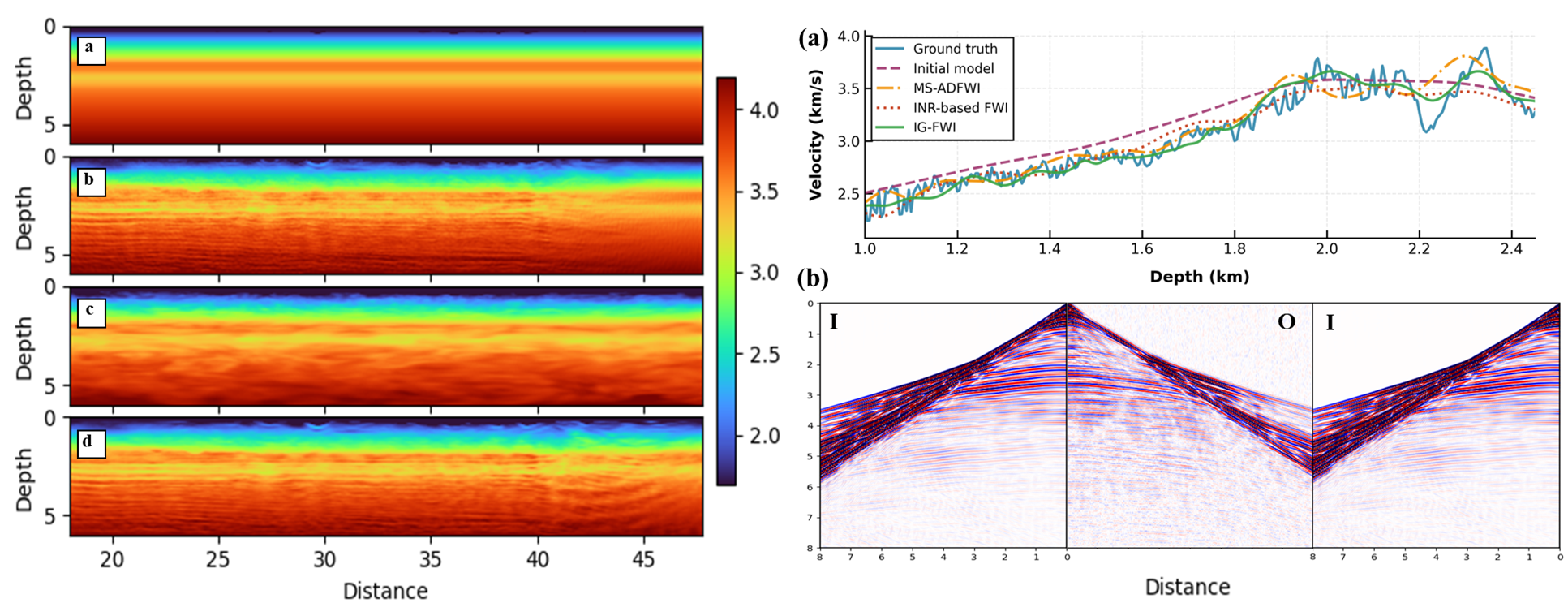}
    \vspace{-0.3cm}
    \caption{Inversion Results of 2014 Chevron blind test model. \textbf{Left:} (a) 2014 Chevron initial velocity model. (b) Inversion results using multiscale ADFWI \citep{fichtner2013multiscale, liu2025automatic}. (c) Inversion results using INR-based FWI \citep{sun2023full}. (d) Inversion results using our proposed IG-FWI; \textbf{Right:} (a) Velocity profiles of the well log, initial model, and inverted models using different FWI methods. (b) Comparison of a common-source shot gather between the true and modeled records. Here $I$ denotes simulated data and $O$ denotes the observed data.}
    \label{fig: chevron_results}
\end{figure}

{
\vspace{-0.2cm}
\subsection{3D SEG/EAGE Overthrust model test} \label{appendix: 3D_overthrust}
\vspace{-0.2cm}
Considering the computational time and memory challenges of 3D domains, we test conventional FWI, INR-based FWI, and our proposed CRFWI methods on the $60\times 160\times 160$ modified 3D SEG/EAGE Overthrust velocity model (50 m grid spacing) with 36 shots deployed at 1.2 km intervals (12 Hz Ricker wavelet). Receivers are placed at 50 m intervals on the surface, and wave propagation uses 2.5 ms time steps and 1000 samples. Here, we maintain the same hyperparameters and neural structure as 2D FWI settings. Moreover, we utilize the finite difference method to solve the 3D acoustic wave equation, and the representative observed seismic data are shown in Fig. \ref{fig: 3D_seismic_data}.

\begin{figure}[h]
    \centering
\includegraphics[width=0.9\linewidth]{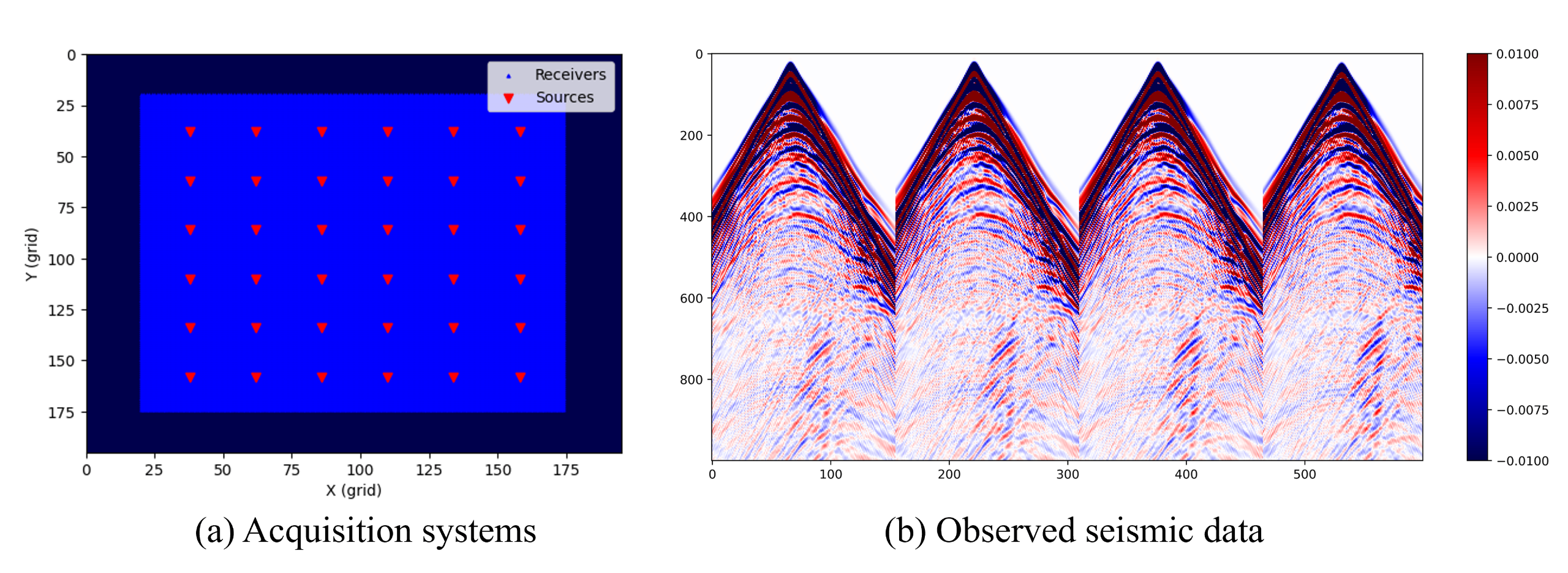}
    \caption{{ (a) 3D acquisition systems. (b) Representative seismic data.}}
    \label{fig: 3D_seismic_data}
\end{figure}

To alleviate the computational burden of the 3D domain, we introduce some advanced techniques like checkpointing \citep{anderson2012time}, gradient accumulation \citep{simeoni2019deepwave}, and random mini-batch strategy \citep{van2020accelerated}. Hence, our proposed CRFWI methods can be implemented in one RTX 3090 GPU to conduct the 3D FWI experiments. The inversion results are shown in Fig. \ref{fig: 3D_inversion_results} using a smooth initial velocity model with 500 iterations. We can see that our proposed CR-FWI methods achieve more robust and high-precision inversion results. Moreover, we provide a detailed comparison of computational resources, as shown in Tab. \ref{tab: 3DFWI_overthrust}. We can see that our proposed LR-FWI and IG-FWI can reduce the memory usage of the continuous representation compared to the existing INR-based CRFWI method, which is attributed to the decoupling low-rank representation in LR-FWI and the compact hash representation in IG-FWI. These results show that our proposed CR-FWI methods do not impose additional computational burden compared with conventional FWI methods.

\begin{table*}[h]
    \centering
{
    \renewcommand{\arraystretch}{0.9}
    \setlength{\tabcolsep}{3.5pt}
    \caption{Performance comparison of different FWI methods on 3D Overthrust model. Here, {\bf bold} represents the best evaluation result, while \underline{underline} represents the second-best evaluation.}
    \vspace{0.1cm}
    \label{tab: 3DFWI_overthrust}
    \begin{tabular}{lcccc}
        \toprule
        \textbf{Method} & \textbf{Normalized MSE $\downarrow$} & \textbf{SSIM $\uparrow$} & \textbf{Memory (GB)} & \textbf{Time (s/it)} \\
        \midrule
        ADFWI   & 0.1469 & 0.5576 & \textbf{15.89} & \textbf{13.82} \\
        IFWI    & 0.1302 & 0.4634 & 30.61 & 15.08 \\
        IG-FWI  & \underline{0.1151} & \underline{0.5729} & 22.84 & 14.61 \\
        LR-FWI  & \textbf{0.0968} & \textbf{0.5914} & \underline{16.01} & \underline{14.09} \\
        \bottomrule
    \end{tabular}}
\end{table*}

\begin{figure}[h]
    \centering
\includegraphics[width=0.85\linewidth]{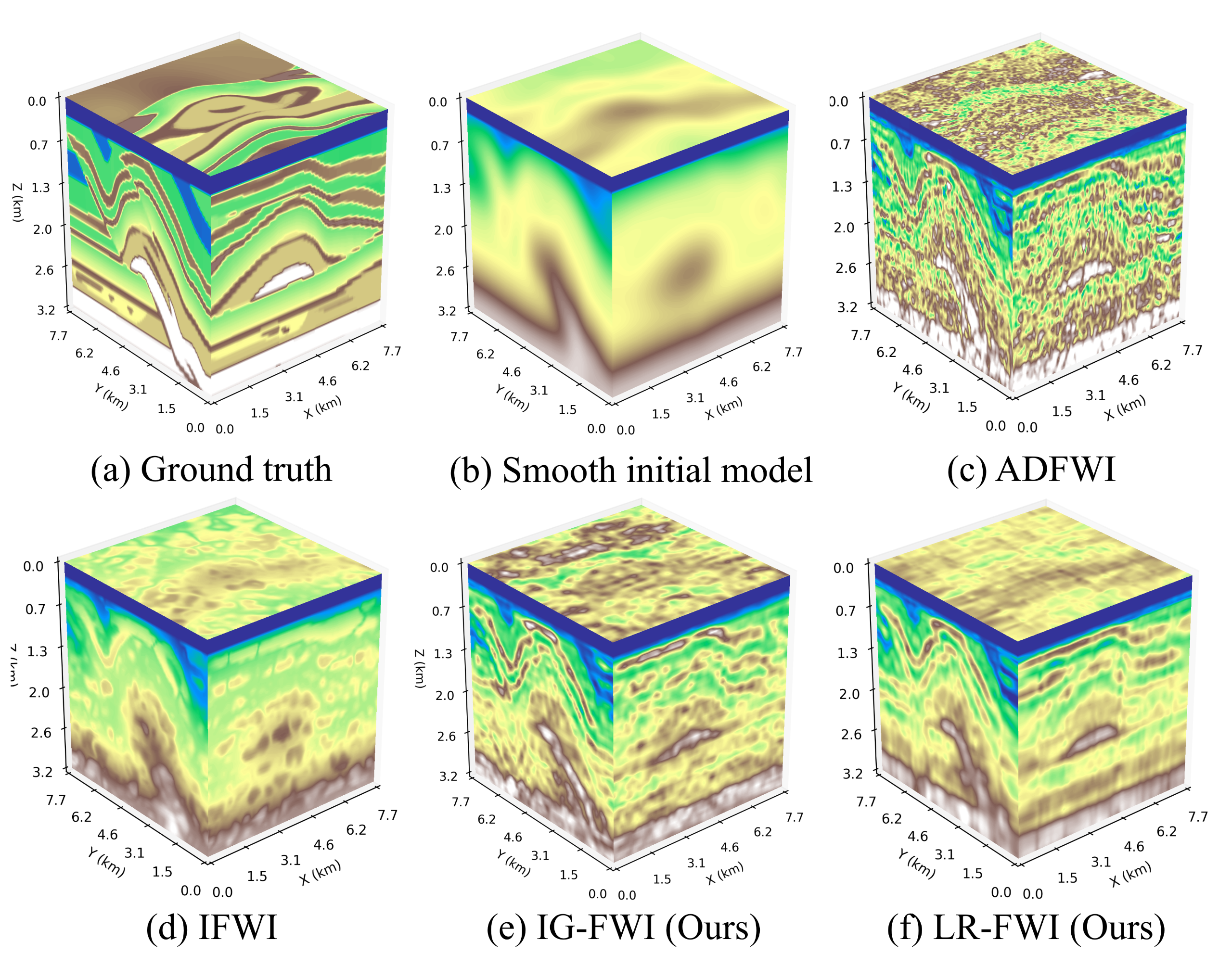}
    \caption{{ 3D SEG/EAGE Overthrust model test using a smooth initial model. (a) Ground truth. (b) Smooth initial model. (c) ADFWI. (d) INR-based FWI. (e) IG-FWI. (f) LR-FWI.}}
    \label{fig: 3D_inversion_results}
\end{figure}

\vspace{-0.2cm}
\subsection{Comparisons with data-driven FWI} \label{appendix: data_driven_inversionnet}
\vspace{-0.2cm}
We evaluate our proposed CRFWI approaches with the data-driven FWI method InversionNet \citep{wu2019inversionnet} on the synthetic OpenFWI dataset \citep{deng2022openfwi}, a large-scale benchmark designed for seismic imaging tasks. Specifically, we predict velocity models, including FlatVel, FlatFault, CurveVel, CurveFault, and Style, from noisy seismic data (see Fig. \ref{fig: openfwi_datasets}) using pre-trained models provided by the original InversionNet. We compare the performance of conventional FWI, INR-based FWI, and our proposed LR-FWI and IG-FWI methods in inverting velocity models from the same noisy seismic data. The inversion results indicate that both LR-FWI and IG-FWI can achieve superior performance compared to some supervised CNN-based approaches (see Figs. \ref{fig: openfwi_flatvel} to \ref{fig: openfwi_style}).

While supervised frameworks suffer from generalization issues caused by the distribution shift between synthetic training data (e.g., from OpenFWI) and complex field models (e.g., Marmousi and Overthrust models), their computational speed remains highly appealing. A promising pathway is to embed such methods within our proposed CR-FWI framework to rapidly generate initial models, which we plan to explore in future work.}

\begin{figure}[h]
\vspace{-0.3cm}
    \centering
\includegraphics[width=0.73\linewidth]{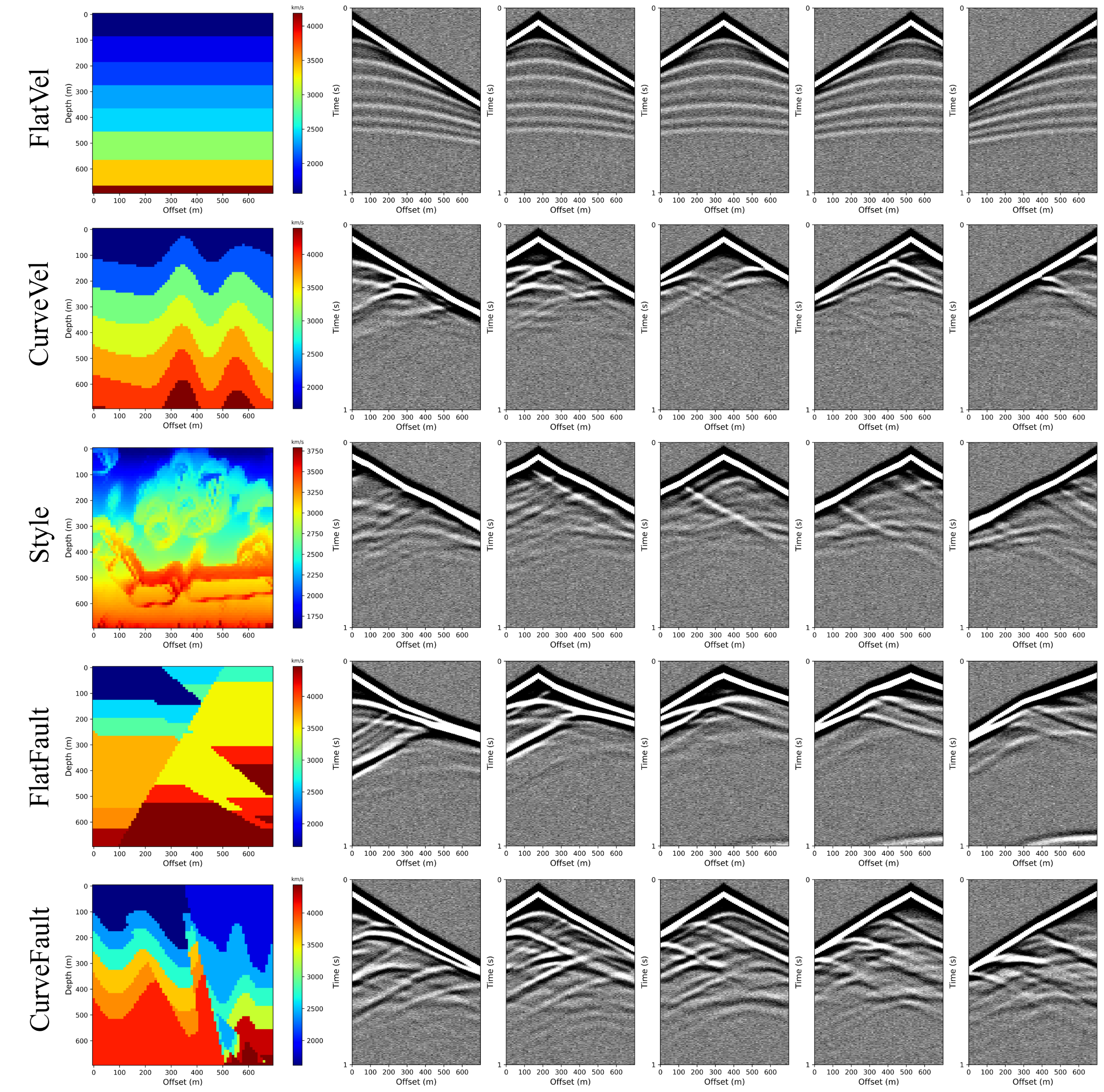}
\vspace{-0.3cm}
    \caption{{ OpenFWI datasets and noisy seismic data input.}}
    \label{fig: openfwi_datasets}
\end{figure}

\begin{figure}[h]
\vspace{-0.15cm}
    \centering
\includegraphics[width=0.73\linewidth]{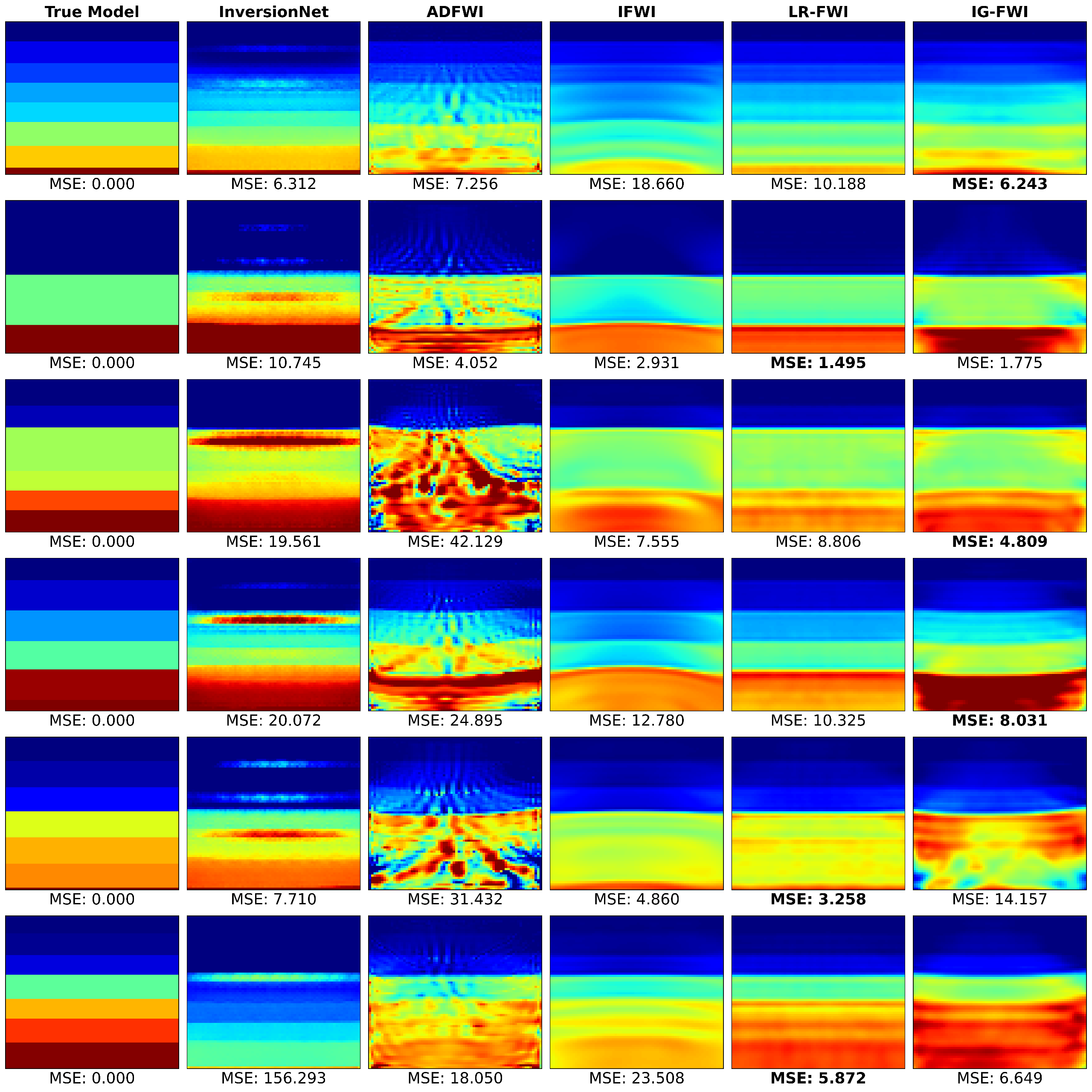}
\vspace{-0.3cm}
    \caption{{ Inversion results on FlatVel dataset using different FWI methods. }}
    \label{fig: openfwi_flatvel}
    \vspace{-0.3cm}
\end{figure}
\quad

\quad

\newpage
\begin{figure}[h]
\vspace{-0.3cm}
    \centering
\includegraphics[width=0.72\linewidth]{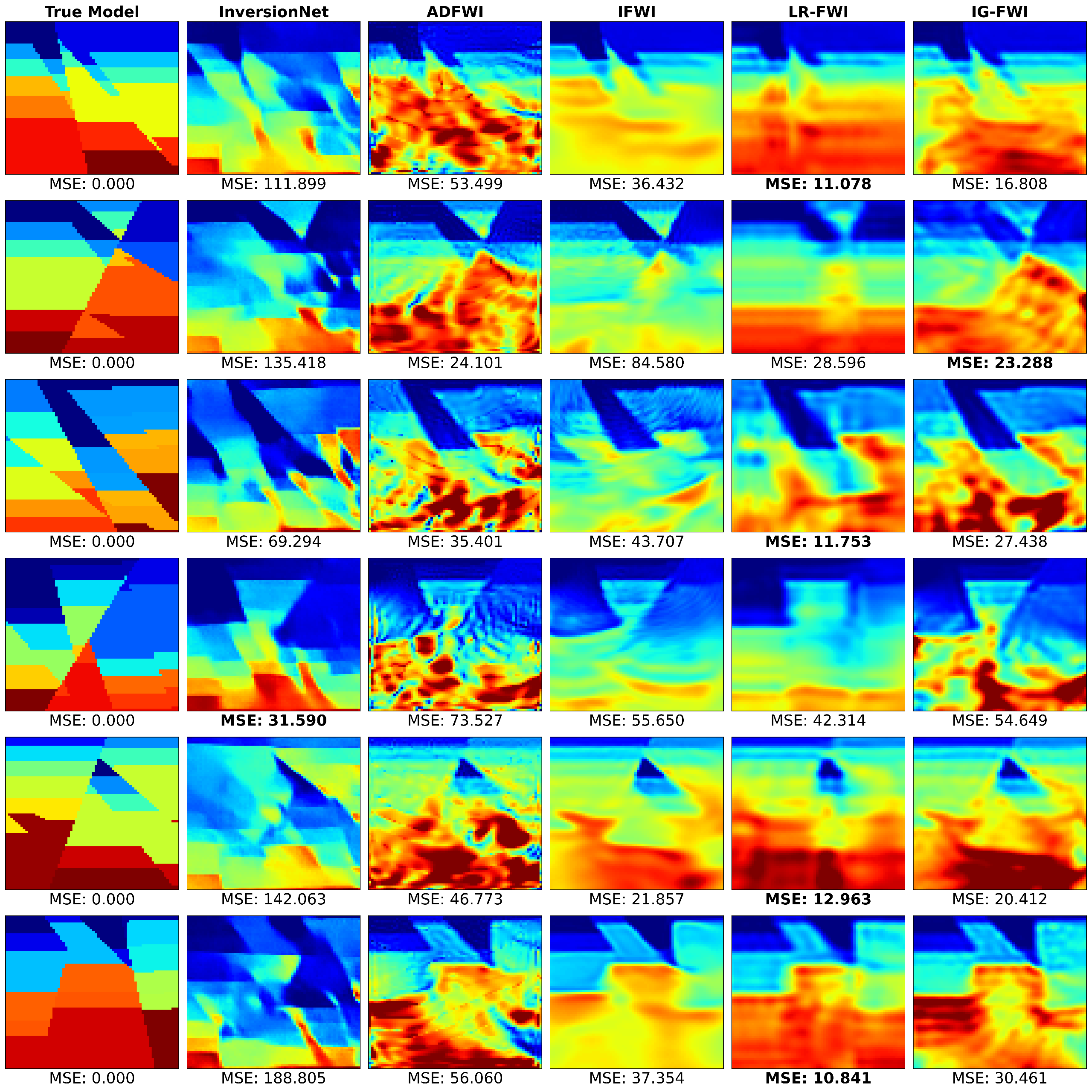}
\vspace{-0.3cm}
    \caption{{ Inversion results on FlatFault dataset using different FWI methods. }}
    \label{fig: openfwi_flatfault}
    \vspace{-0.1cm}
\end{figure}

\begin{figure}[h]
    \centering
\includegraphics[width=0.72\linewidth]{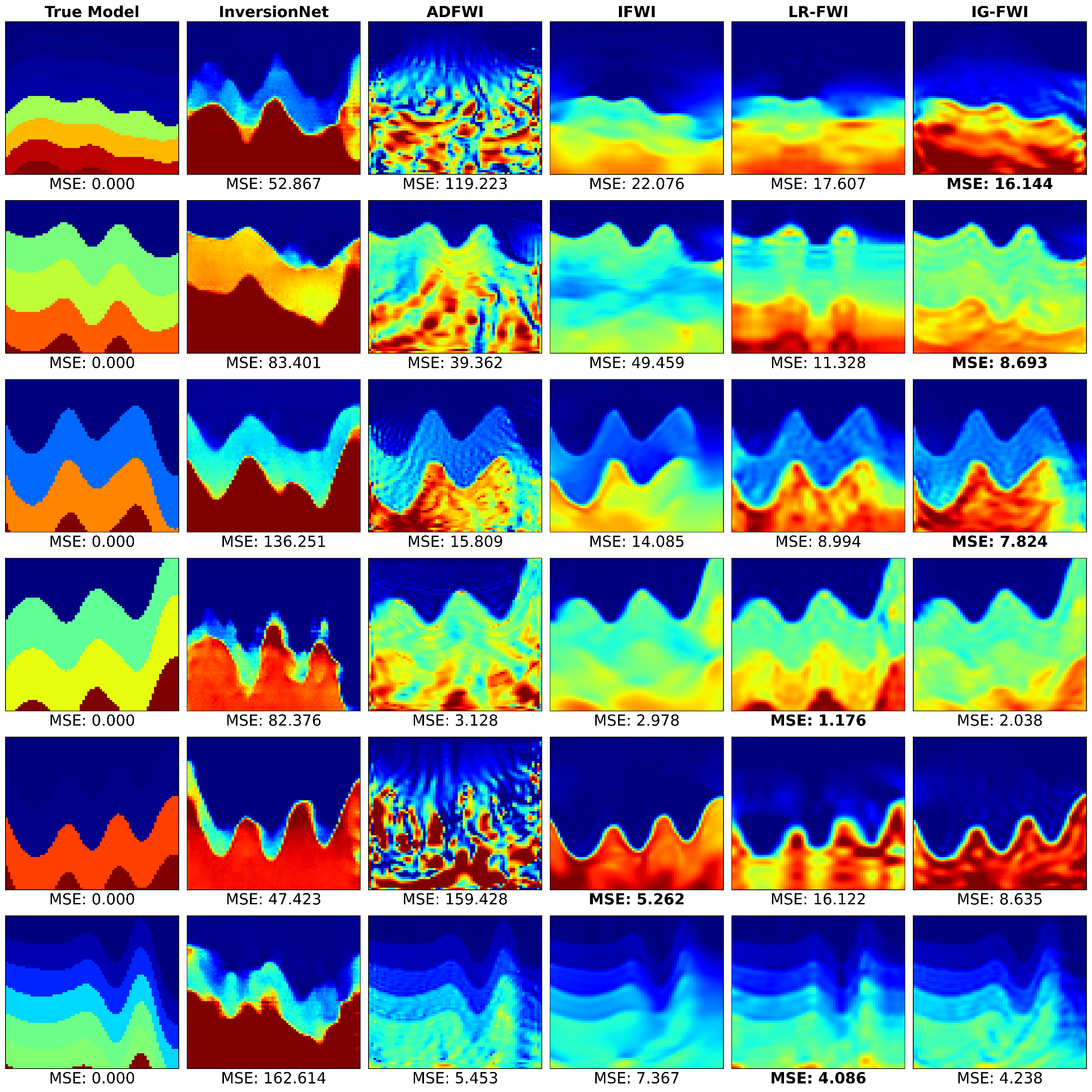}
\vspace{-0.3cm}
    \caption{{ Inversion results on CurveVel dataset using different FWI methods. }}
    \label{fig: openfwi_curvevel}
    \vspace{-0.3cm}
\end{figure}

\newpage
\begin{figure}[h]
\vspace{-0.3cm}
    \centering
\includegraphics[width=0.72\linewidth]{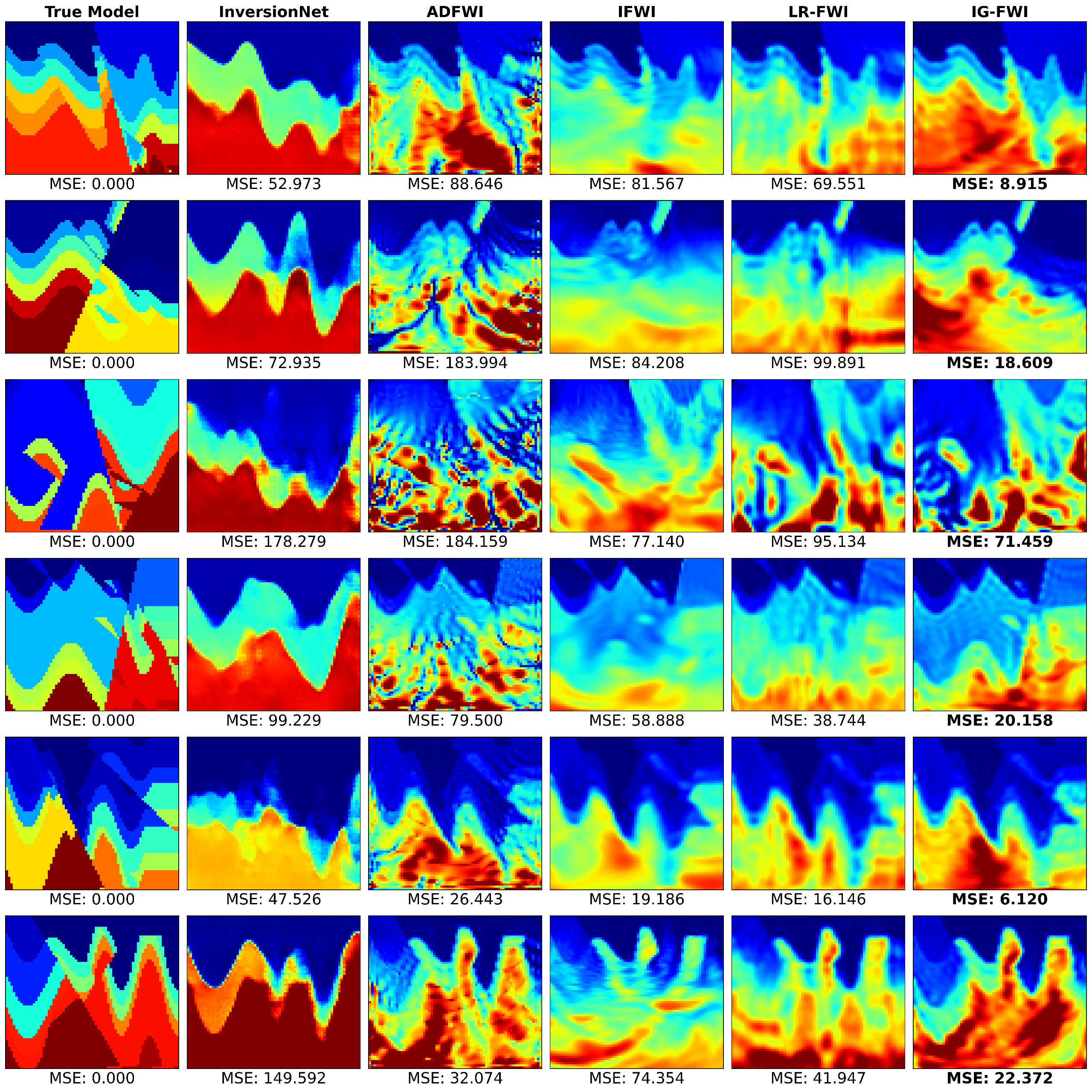}
\vspace{-0.3cm}
    \caption{{ Inversion results on CurveFault dataset using different FWI methods. }}
    \label{fig: openfwi_curvefault}
    \vspace{-0.1cm}
\end{figure}

\begin{figure}[h]
    \centering
\includegraphics[width=0.72\linewidth]{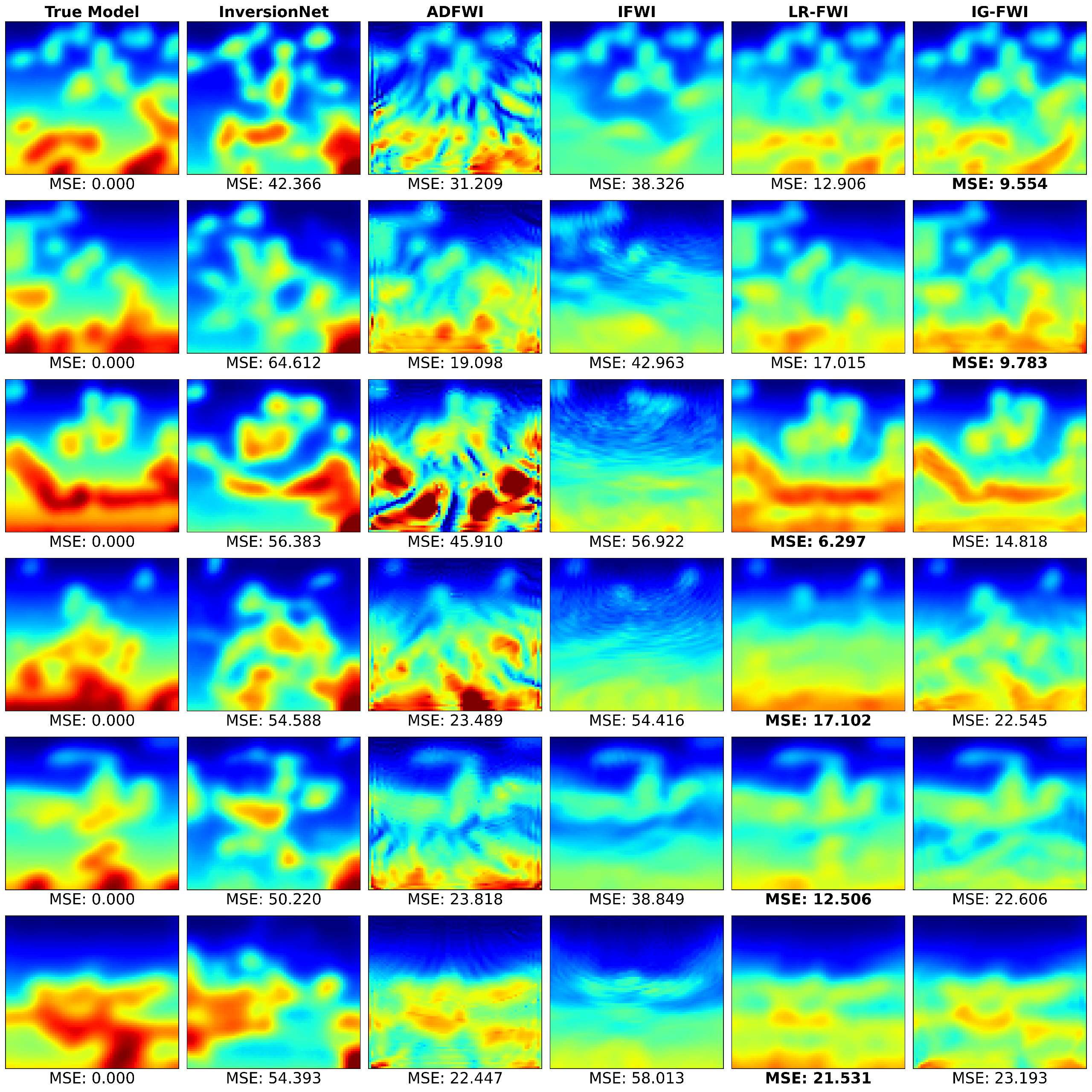}
\vspace{-0.3cm}
    \caption{{ Inversion results on Style dataset using different FWI methods. }}
    \label{fig: openfwi_style}
    \vspace{-0.3cm}
\end{figure}


\newpage
{
\subsection{Two failure FWI cases for CR-FWI}
\vspace{-0.2cm}
Here, we present two failure cases using our proposed methods using the Canadian Foothills model with irregular topography (See Fig. \ref{fig: results_foothills}) and the Overthrust model with a near-surface low-speed layer (See Fig. \ref{fig: results_lowlayer}). The performance degradation in these challenging scenarios can be attributed to complex wavefield propagation patterns \citep{yang2025high} that our current velocity parameterization struggles to accurately capture, which guide our subsequent algorithm design and integration with existing seismic techniques (e.g., illumination compensation techniques) in future research.}

\begin{figure}[h]
    \centering
\includegraphics[width=0.85\linewidth]{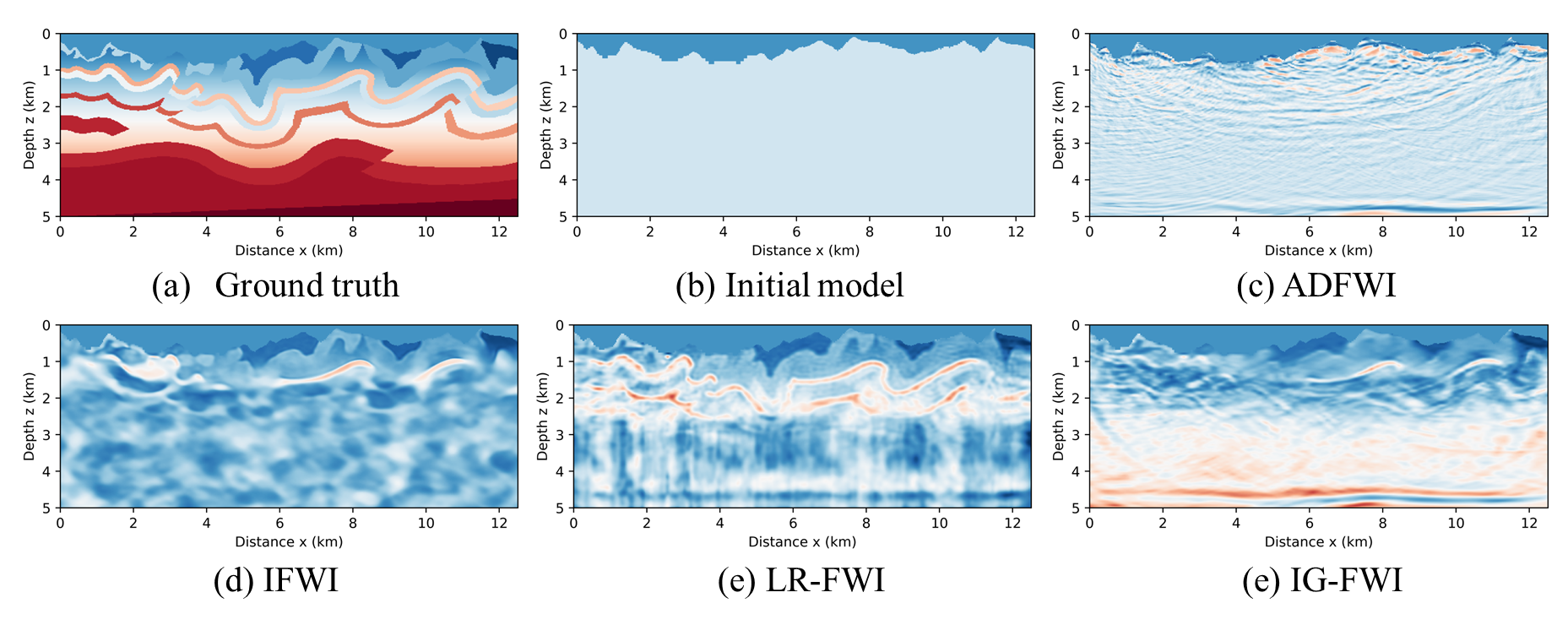}
 \vspace{-0.4cm}
    \caption{{ Inversion results on the Canadian Foothills model with irregular topography}}
    \label{fig: results_foothills}
\end{figure}

\begin{figure}[h]
\vspace{-0.4cm}
    \centering
\includegraphics[width=0.85\linewidth]{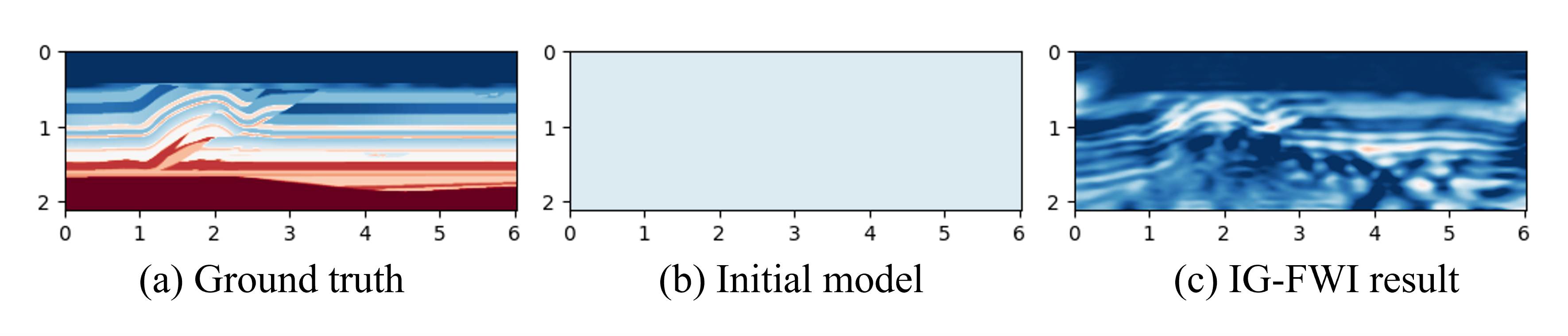}
 \vspace{-0.4cm}
    \caption{{ Inversion results on the Overthrust model with a near-surface low-speed layer}}
    \label{fig: results_lowlayer}
    \vspace{-0.2cm}
\end{figure}

\vspace{-0.2cm}
\subsection{Contribution to seismic community}
\vspace{-0.2cm}

{Our theoretical analysis not only explain the underlying mechanism of CRFWI, but also provide a methodology of continuous representation structure design. For instance, we can develop more novel hybrid CR-FWI method by replacing the tiny INR in IG-FWI with other representation, e.g., a low-rank representation (termed LRG-FWI). Here, we provide experiments using the Marmousi, Salt and Overthrust models with a constant initial model using LRG-FWI (see Fig. \ref{fig: results_lrgfwi} and Tab. \ref{tab:CRFWI_comparison_LRGFWI}). We see that this novel CR-FWI achieves high-precision inversion results and achieves performance comparable to that of IG-FWI . }
\begin{figure}[h]
    \centering
\includegraphics[width=0.85\linewidth]{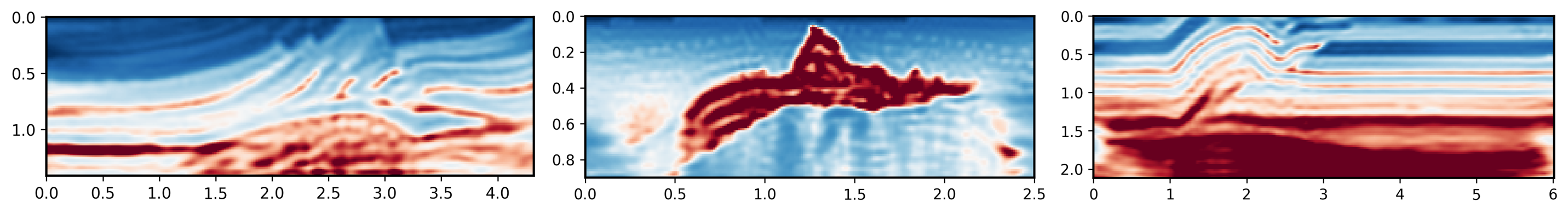}
\vspace{-0.2cm}
    \caption{{ Inversion results using LRG-FWI method. }}
    \label{fig: results_lrgfwi}
\end{figure}

\begin{table*}[!htbp]
    \centering
    \renewcommand{\arraystretch}{0.9}
    \setlength{\tabcolsep}{3.5pt}
    \caption{{Comparisons with LRG-FWI and other CR-FWI methods.}}
    \vspace{0.1cm}
    \label{tab:CRFWI_comparison_LRGFWI}
    \begin{tabular}{lcccc}
        \toprule
        \textbf{Method} & \textbf{MPE-FWI} & \textbf{LR-FWI} & \textbf{LRG-FWI} & \textbf{IG-FWI} \\
        \midrule
        Marmousi (MSE $\downarrow$) & 2.2266 & 0.2893 & 0.1995 & 0.2961 \\
        Overthrust (MSE $\downarrow$) & 1.2887 & 0.1592 & 0.1354 & 0.5724  \\
        Salt body (MSE $\downarrow$) & 1.1008 & 0.2368 & 0.2179 & 0.1883 \\
        \bottomrule
    \end{tabular}
     \vspace{-0.2cm}
\end{table*}

\newpage
\section*{Theoretical Proof}
Next, we present the proofs of the main theorems introduced in this paper. The overall structure of the proofs is illustrated in Fig. \ref{fig: proof_framework}.
\begin{figure}[h]
    \centering
    \includegraphics[width=1\linewidth]{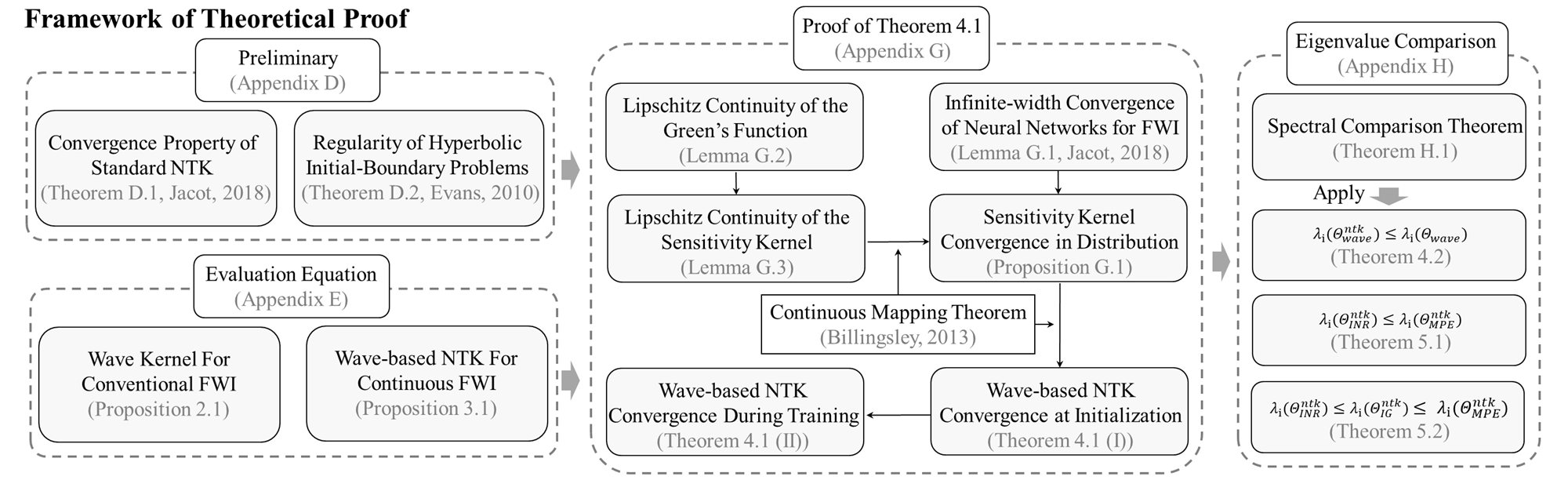}
    \caption{Framework of our theoretical proof. }
    \label{fig: proof_framework}
\end{figure}
We begin with preliminary foundations, including the convergence properties of the standard Neural Tangent Kernel, regularity conditions for hyperbolic initial-boundary problems, and the formulation of the evaluation equation. Subsequently, the proof of Theorem \ref{the: ntk_theorem} is structured around key lemmas such as the Lipschitz continuity of the Green’s function and the sensitivity kernel, supported by the continuous mapping theorem. This section also establishes the convergence of the wave-based NTK during training and at initialization, alongside distributional convergence of the sensitivity kernel. The final part focuses on eigenvalue comparisons, proposing and applying a spectral comparison theorem to demonstrate several inequalities regarding eigenvalues under different parameter settings, as stated in Theorems \ref{thm:eigenvalue-comparison}, \ref{thm: spectral_enhancement_concise}, and \ref{the: ig_ntk}.

\section{Preliminary} \label{appendix: preliminary}
\vspace{-0.2cm}
\subsection{Notations} \label{appendix: preliminary_notations}
\vspace{-0.2cm}
We denote scalars, vectors, and matrices by $x$, $\mathbf{{\bf x}}$, and $\mathbf{X}$, respectively. The spatial domain is represented as an open set $ U \subset \mathbb{R}^d $, with boundary $ \partial U$, and the space-time domain as $U_T = U \times [0, T)$. Scalar-valued and vector-valued functions are denoted by $ u: U \to \mathbb{R} $ and $ \mathbf{u}: U \to \mathbb{R}^m $, respectively. The first and second derivatives of a function $ \sigma(\cdot) $ are denoted by $ \sigma'(\cdot) $ and $ \sigma''(\cdot) $. The gradient of a function $ f(\mathbf{w}, \mathbf{{\bf x}}) $ with respect to $\mathbf{w}$ is written as $ \nabla_{\mathbf{w}} f(\mathbf{w}, \mathbf{{\bf x}}) \in \mathbb{R}^d $. For function spaces, $ C^k(U) $ refers to the space of $k$-times continuously differentiable functions, and $ L^p(U) $ denotes the Lebesgue space of $p$-integrable functions, where $ 1 \leq p \leq +\infty $. Regarding norms, for vectors, $ \|\cdot\| $ and $ \|\cdot\|_\infty $ represent the Euclidean norm and the infinity norm, respectively; for matrices, $ \|\cdot\| $ and $ \|\cdot\|_F $ denotes the spectral norm and the Frobenius norm. For functions, the $L^p$-norm and the essential supremum norm are denoted by $ \|\cdot\|_{L^p(U)} $ and $ \|\cdot\|_{L^\infty(U)} $, respectively.

\vspace{-0.2cm}
\subsection{Standard NTK theory}
\vspace{-0.2cm}
\begin{assumption} \label{ass: network_regularity}
The neural network $F_{\boldsymbol{\theta}}$ defined in \eqref{equ: neural_networks} satisfies the following properties:

\textbf{1. Parameter boundedness}: There exists a constant $C > 0$, independent of the hidden layer width $n$, such that the network parameters remain uniformly bounded:
\[
\sup_{\tau} \|\boldsymbol{\theta}(\tau)\|_\infty \leq C.
\]
\textbf{2. Smoothness of activation function}: The activation function $\sigma$ belongs to $C^{k+1}(\mathbb{R})$ for some $k \geq 2$, and there exists a constant $C > 0$ such that for any scalar input ${ x} \in \mathbb{R}$,
\[
\max_{0 \leq i \leq k+1} \sup_{{x} \in \mathbb{R}} \left| \sigma^{(i)}({x}) \right| \leq C.
\]
\end{assumption}
\begin{remark}
This is a common technical condition in related literature \citep{wang2022and_PINN}.
\end{remark}

Next, we summarize prior theoretical results concerning the standard NTK. Consider a neural network or, generally, a machine learning model $f_{\tau}({\bf {\bf x}},\boldsymbol{\theta})$ with learnable parameters $\boldsymbol{\theta}\in\mathbb{R}^p$, like \eqref{equ: neural_networks}. The empirical NTK is defined as the inner product of the gradients of the network’s output w.r.t. all of its parameters. Denoted by $K_{\tau}({\bf {\bf x}},{\bf {\bf x}}^{\prime};\boldsymbol{\theta}): {\mathbb{R}^d} \times {\mathbb{R}^d} \to \mathbb{R}$, it is given by:
\begin{align}
K_{\tau}({\bf {\bf x}},{\bf {\bf x}}^{\prime};\boldsymbol{\theta}) = \left\langle \nabla_{\boldsymbol{\theta}} f_{\tau}({\bf {\bf x}}, \boldsymbol{\theta}), \nabla_{\boldsymbol{\theta}} f_{\tau}({\bf {\bf x}}^{\prime}, \boldsymbol{\theta}) \right\rangle = \sum_{i=1}^{p} \frac{\partial f_{\tau}({\bf {\bf x}}, \boldsymbol{\theta})}{\partial \boldsymbol{\theta}_i} \cdot \frac{\partial f_{\tau}({\bf {\bf x}}^{\prime}, \boldsymbol{\theta})}{\partial \boldsymbol{\theta}_i}.
\end{align}
Under the infinite-width limit, the empirical NTK converges to a deterministic kernel $\widetilde{K}({\bf {\bf x}}, {\bf {\bf x}}')$ that remains constant throughout training, a phenomenon referred to as the “lazy training” regime.

\begin{theorem}[\textbf{Convergence Property of Standard NTK}]\label{thm: constant_standard_ntk}
    Consider a neural network $f_{\tau}(\mathbf{{\bf x}}, \boldsymbol{\theta})$ of width $n$ like \eqref{equ: neural_networks}, under Assumption \ref{ass: network_regularity}. Suppose the parameters $\boldsymbol{\theta}(0)$ are initialized as i.i.d.\ draws from $\mathcal{N}(0,1)$. Then, as the width $n \to \infty$, the empirical NTK $K_{\tau}(\mathbf{{\bf x}}, \mathbf{{\bf x}}'; \boldsymbol{\theta})$ converges in probability to a deterministic kernel $\widetilde{K}(\mathbf{{\bf x}}, \mathbf{{\bf x}}')$, both at initialization and during training, i.e.,
\begin{align*}
     K_{\tau}(\mathbf{{\bf x}}, \mathbf{{\bf x}}';\boldsymbol{\theta}) \overset{\mathbb{P}}{\longrightarrow}\widetilde{K}(\mathbf{{\bf x}},{\bf x}^{\prime}).
\end{align*}Moreover, this convergence is uniform over all input pairs $(\mathbf{{\bf x}}, \mathbf{{\bf x}}')$ and all time points $\tau \geq 0$. Specifically, denote $\boldsymbol{\theta}_0$ as initial parameters, under certain regularity conditions, with high probability,
\begin{align*}
   \sup_{\mathbf{{\bf x}}, \mathbf{{\bf x}}'} \sup_{\tau \geq 0} \left| K_{\tau}(\mathbf{{\bf x}}, \mathbf{{\bf x}}';\boldsymbol{\theta}) - K(\mathbf{{\bf x}}, \mathbf{{\bf x}}',\boldsymbol{\theta}_0)) \right| \overset{\mathbb{P}}{\longrightarrow} 0,
\end{align*}
where $\overset{\mathbb{P}}{\longrightarrow}$ denotes convergence in probability.
\end{theorem}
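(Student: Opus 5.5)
The statement is the classical Jacot et al. result for the shallow network \eqref{equ: neural_networks}, so the plan is the standard two-part argument: a law of large numbers at initialization, then a lazy-training estimate showing the parameters barely move.

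\textbf{Step 1: convergence at initialization.} Differentiating \eqref{equ: neural_networks} with respect to $W^1, W^0, b^0, b^1$ writes the empirical kernel as
\[
K_0(\mathbf{x},\mathbf{x}') = \frac{1}{n}\sum_{r=1}^{n}\Big[\sigma(w_r\cdot\mathbf{x}+b_r)\,\sigma(w_r\cdot\mathbf{x}'+b_r) + (W^1_r)^2\,\sigma'(w_r\cdot\mathbf{x}+b_r)\,\sigma'(w_r\cdot\mathbf{x}'+b_r)\,(\mathbf{x}\cdot\mathbf{x}'+1)\Big] + 1 .
\]
This is an average of $n$ i.i.d. terms. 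Assumption \ref{ass: network_regularity} (bounded $\sigma,\sigma'$) together with Gaussian moments of $W^1_r$ gives finite variance. The weak law of large numbers then yields pointwise convergence in probability to
\[
\widetilde K(\mathbf{x},\mathbf{x}') = \mathbb{E}[\sigma\sigma'] + \mathbb{E}[(W^1)^2]\,\mathbb{E}[\sigma'\sigma']\,(\mathbf{x}\cdot\mathbf{x}'+1) + 1 .
\]
To upgrade this to uniform convergence over a compact $U\times U$, I would show the summands are Lipschitz in $(\mathbf{x},\mathbf{x}')$ with a constant having bounded moments, using $\sigma''$ bounded. A finite $\epsilon$-net argument then gives uniform convergence.

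\textbf{Step 2: the kernel stays close to its initial value during training.} Under gradient flow on a loss $\mathcal L$ with residual $r$, each parameter moves at rate
\[
\partial_\tau\theta_i = -\int \partial_{\theta_i} f\; r .
\]
Every per-neuron gradient carries a factor $1/\sqrt n$ (from $W^1/\sqrt n$ or the explicit prefactor). Hence $\|\partial_\tau\theta\|_\infty \lesssim n^{-1/2}\|r\|_{L^2}$. The residual is nonincreasing along the flow, so $\|r(\tau)\|_{L^2}\le\|r(0)\|_{L^2}=O_{\mathbb P}(1)$. Next, $K$ is a sum over $n$ neurons each weighted $1/n$, and each summand is Lipschitz in that neuron's parameters by the $C^{k+1}$ bounds on $\sigma$ and parameter boundedness (Assumption \ref{ass: network_regularity}.1). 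This gives
\[
|K_\tau-K_0| \lesssim \max_r|\theta_r(\tau)-\theta_r(0)| \lesssim \frac{\tau}{\sqrt n}.
\]

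\textbf{Step 3 (main obstacle): making the bound uniform in $\tau\ge 0$.} The estimate in Step 2 is linear in $\tau$. For the supremum over all $\tau\ge0$ I would use the standard self-consistency argument. While $\sup_{s\le\tau}\|K_s-\widetilde K\|$ is small, the smallest eigenvalue of the kernel Gram operator stays at least $\lambda_{\min}(\widetilde K)/2>0$, which the regularity conditions in the statement must provide. Then $\|r(\tau)\|\le e^{-\lambda\tau/2}\|r(0)\|$, so
\[
\int_0^\infty \|\partial_\tau\theta\|_\infty\,d\tau \lesssim n^{-1/2}\,\lambda^{-1} .
\]
A continuity (bootstrap) argument closes the loop and gives
\[
\sup_{\tau\ge0}\sup_{\mathbf{x},\mathbf{x}'}|K_\tau-K_0| = O_{\mathbb P}(n^{-1/2}) .
\]
Combining this with Step 1 proves both claims. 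The delicate point is the positive-definiteness of $\widetilde K$; I would impose it explicitly as the unstated \emph{regularity condition}.
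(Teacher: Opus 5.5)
The paper gives no argument of its own here; it defers to Jacot et al. Your Steps 1--2 are the one-hidden-layer specialization of that argument. You use a law of large numbers over i.i.d.\ neurons instead of Jacot's layer-by-layer induction, together with $O(n^{-1/2})$ per-neuron parameter movement. With your $\epsilon$-net, these steps already give the first display with no spectral hypothesis, even uniformly in $(\mathbf{x},\mathbf{x}')\in U\times U$ and $\tau\in[0,T]$.

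The real difference is Step 3. Jacot et al.\ only prove that the kernel stays fixed uniformly on a finite horizon $[0,T]$, under stochastic boundedness of $\int_0^T\|d_t\|\,dt$. So the cited reference does not by itself cover the $\sup_{\tau\ge 0}$ claim in the statement. Your bootstrap in the style of Du et al.\ and Arora et al.\ does cover it, at the price of an explicit positive-definiteness hypothesis. Jacot's route buys arbitrary depth and general losses, but only on finite horizons.

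Two points need tightening.

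First, the spectral gap must refer to a finite training set: loss $\frac12\sum_{i=1}^N(f(\mathbf{x}_i)-y_i)^2$ with $\lambda_{\min}\big((\widetilde K(\mathbf{x}_i,\mathbf{x}_j))_{i,j=1}^N\big)>0$. If your $\|r\|_{L^2}$ is taken over a continuum, as for the paper's misfit over $D\times T$, then no $\lambda_{\min}>0$ exists. The limiting kernel operator is compact, so its spectrum accumulates at $0$. Residual components along small eigenvalues, or in the null space, decay arbitrarily slowly or not at all, so $\|r(\tau)\|$ need not be integrable on $[0,\infty)$. In that setting your argument only yields the finite-horizon result.

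Second, part 1 of Assumption~\ref{ass: network_regularity} is incompatible with i.i.d.\ $\mathcal{N}(0,1)$ initialization, because $\|\boldsymbol{\theta}(0)\|_\infty$ grows like $\sqrt{2\log n}$. Your Step 2 relies on it, but you can drop it:
\begin{itemize}
\item Since $\partial_{W^1_r}F_{\boldsymbol{\theta}}=n^{-1/2}\sigma(\cdot)$ is bounded, $|W^1_r(\tau)-W^1_r(0)|\lesssim n^{-1/2}\int_0^\tau\|r\|$ holds unconditionally.
\item The velocities of $(w_r,b_r)$ are then $\lesssim n^{-1/2}(1+|W^1_r(0)|)\|r\|$.
\item The kernel summands are Lipschitz with constants polynomial in $|W^1_r|$. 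Replace your $\max_r$ by the average $\frac1n\sum_r(1+|W^1_r(0)|)^3=O_{\mathbb{P}}(1)$.
\end{itemize}
This gives the same $n^{-1/2}\int_0^\tau\|r\|$ bound on the kernel drift.
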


\begin{remark}
The proof can be found in \citep{jacot2018neural}. This theorem elucidates the fundamental concept of the \textit{lazy training} regime. The convergence of the empirical NTK to a static, deterministic limit $\widetilde{K}$ implies that the function $f_{\tau}(\mathbf{{\bf x}}, \boldsymbol{\theta})$ evolves in a manner asymptotically equivalent to a linear model in the feature space defined by the gradient at initialization. Consequently, the training dynamics of an infinitely wide neural network under gradient descent are governed by the deterministic kernel $\widetilde{K}$, simplifying the analysis of its convergence and generalization.
\end{remark}
\vspace{-0.2cm}
\subsection{Hyperbolic partial differential equation theory}
\vspace{-0.2cm}
Let $\Omega \subset \mathbb{R}^n$ be a bounded domain with $C^2$ boundary $\partial\Omega$. Consider the second-order linear partial differential operator depending on time $t$:
\begin{align}
    L(t) = -\sum_{i,j=1}^n a_{ij}(t,{\bf x}) \frac{\partial^2}{\partial x_i \partial x_j} + \sum_{i=1}^n b_i(t,{\bf x}) \frac{\partial}{\partial x_i} + c(t,{\bf x}),
\end{align}
And the hyperbolic initial-boundary value problem:
\begin{equation} 
\begin{cases}
\partial_t^2 u(t,{\bf x}) + L(t)u(t,{\bf x}) = f(t,{\bf x}), & {\bf x} \in \Omega, \, t > 0, \\
u(t,{\bf x}) = 0, & {\bf x} \in \partial\Omega, \, t > 0, \\
u(0,{\bf x}) = \varphi({\bf x}), \quad \partial_t u(0,{\bf x}) = \psi({\bf x}), & {\bf x} \in \Omega.\label{equ: hibvp}
\end{cases}
\end{equation}
Next, we consider the regularity for hyperbolic initial-boundary value problems in \eqref{equ: hibvp}.

\begin{theorem}[\textbf{Regularity for Hyperbolic Initial-Boundary Value Problems}] \label{thm: regularity} Assume the coefficients satisfy $a_{ij}, \, \partial_t a_{ij}, \, \partial_t^2 a_{ij} \in C^1(\overline{\Omega}_T)$ ($i,j = 1,2,\dots,n$) and $b_i, \, \partial_t b_i, \, c, \, \partial_t c \in L^\infty(\Omega_T)$ ($i = 1,2,\dots,n$). There exists $\theta > 0$ such that for all $(t, \mathbf{x}) \in [0,T] \times \Omega$ and $\xi \in \mathbb{R}^n$, 
\[
    \sum_{i,j} a_{ij}(t, \mathbf{x}) \xi_i \xi_j \geq \theta |\xi|^2,
    \]and that the source and boundary functions satisfy:
\[
f \in H^1([0,\infty), L^2(\Omega)), \quad \varphi \in H^2(\Omega) \cap H_0^1(\Omega), \quad \psi \in H_0^1(\Omega).
\]
Let $u$ be a weak solution of (4.4.20). Then:
\begin{align}
 \begin{cases}
u \in L^\infty([0,\infty), H^2(\Omega) \cap H_0^1(\Omega)), \\
\partial_t u \in L^\infty([0,\infty), H_0^1(\Omega)), \\
\partial_t^2 u \in L^\infty([0,\infty), L^2(\Omega)),
\end{cases}    \label{equ: regulari_theory}
\end{align}
and the following estimate holds for all $T > 0$:
\begin{align*}
    \|u\|_{L^\infty([0,T], H^2(\Omega))} + \|\partial_t u\|_{L^\infty([0,T], H^1(\Omega))} + \|\partial_t^2 u\|_{L^\infty([0,T], L^2(\Omega))}\\
\leq C(T)\left( \|f\|_{H^1((0,T), L^2(\Omega))} + \|\varphi\|_{H^2(\Omega)} + \|\psi\|_{H^1(\Omega)} \right),
\end{align*}
where $C(T) > 0$ is a constant depending on $T$ and the coefficients of $L(t)$.
\end{theorem}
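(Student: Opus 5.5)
The plan is the standard Galerkin-plus-energy-estimate argument (as in Evans, \S 7.2, or Cui's treatment), upgraded by one time differentiation and then closed with elliptic regularity. First I would fix an orthonormal basis $\{w_k\}$ of $L^2(\Omega)$ that is orthogonal in $H_0^1(\Omega)$ (eigenfunctions of the Dirichlet Laplacian). I would then build Galerkin approximations $u_N(t)=\sum_{k\le N} d_N^k(t) w_k$ solving the projected ODE system, with $u_N(0)$ and $\partial_t u_N(0)$ the projections of $\varphi,\psi$. Because the coefficients are $C^1$ in $t$, linear ODE theory gives $d_N\in C^2$. The weak solution is unique, so it suffices to prove the stated bounds uniformly in $N$ and pass to the limit by weak-$*$ compactness and lower semicontinuity of norms.

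\textbf{Step 1 (basic energy).} Test the Galerkin equation with $\partial_t u_N$. Write the bilinear form $B(t;u,v)$ associated with $L(t)$. Uniform ellipticity and G\r{a}rding's inequality give $B(t;u,u)\ge \tfrac{\theta}{2}\|u\|_{H^1}^2-\gamma\|u\|_{L^2}^2$. The time derivative of the symmetric principal part is controlled by $\|\partial_t a_{ij}\|_\infty\|u\|_{H^1}^2$, and the lower-order terms are handled by Cauchy--Schwarz. Gronwall then bounds $\sup_{t\le T}(\|\partial_t u_N\|_{L^2}+\|u_N\|_{H^1})$ by $\|f\|_{L^2L^2}+\|\varphi\|_{H^1}+\|\psi\|_{L^2}$.

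\textbf{Step 2 (differentiated energy).} Set $\tilde u_N=\partial_t u_N$ and differentiate the Galerkin system in $t$. This gives
\[
\partial_t^2\tilde u_N+L(t)\tilde u_N=\partial_t f-L'(t)u_N,
\]
where $L'$ has coefficients $\partial_t a_{ij},\partial_t b_i,\partial_t c$. Testing with $\partial_t\tilde u_N$ and repeating Step 1 needs three ingredients:
\begin{itemize}
\item control of $\partial_t^2 a_{ij}$, which appears when differentiating $B(t;\tilde u_N,\tilde u_N)$ in time (this is where $\partial_t^2 a_{ij}\in C^1$ is used);
\item control of the initial value $\partial_t^2u_N(0)=f(0)-L(0)u_N(0)$ in $L^2$, which is where $\varphi\in H^2\cap H_0^1$ enters, via $H^2$-stability of the eigenfunction projection;
\item $\|\partial_t f\|_{L^2L^2}$, which is exactly the hypothesis $f\in H^1((0,T),L^2)$.
\end{itemize}
Gronwall yields uniform bounds for $\partial_t u_N$ in $L^\infty H^1$ and $\partial_t^2u_N$ in $L^\infty L^2$.

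\textbf{Step 3 (spatial regularity).} For a.e.\ $t$, view the equation as the elliptic problem $L(t)u=f-\partial_t^2u$ with Dirichlet data. The right-hand side is bounded in $L^2$ by Step 2, and $\partial\Omega\in C^2$ with $a_{ij}(t,\cdot)\in C^1$. Elliptic $H^2$-regularity then gives $\|u(t)\|_{H^2}\le C(\|f(t)\|_{L^2}+\|\partial_t^2 u(t)\|_{L^2}+\|u(t)\|_{L^2})$, uniformly in $t$, and $\|f(t)\|_{L^2}$ is controlled by the $H^1(0,T;L^2)$ norm via the Sobolev embedding in time. Summing the three estimates gives the stated inequality with $C(T)$ depending on $T$ and the coefficients. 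The statements on $[0,\infty)$ are then read as holding on every $[0,T]$.

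The main obstacle is Step 2, specifically the compatibility at $t=0$. One must bound $\|\partial_t^2 u_N(0)\|_{L^2}$ uniformly in $N$, which requires the Galerkin projection of $\varphi$ to converge in $H^2$. I would handle this by choosing the basis as eigenfunctions of $L(0)$ (or the Dirichlet Laplacian), so the projection commutes with the operator and is $H^2$-stable. A secondary care point is the nonsymmetric first-order terms, which I would absorb as lower-order perturbations using Young's inequality.
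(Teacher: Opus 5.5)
Your overall route (Galerkin approximation in the Dirichlet-Laplacian eigenbasis, basic energy estimate, time-differentiated energy estimate, then elliptic $H^2$ regularity at a.e.\ fixed $t$) is the one in the references the paper cites; the paper gives no argument of its own. Steps 1 and 3 are fine. The gap is in Step 2. After you differentiate the Galerkin system, the forcing contains $-L'(t)u_N$, where $L'(t)=-\sum\partial_t a_{ij}\partial_i\partial_j+\sum\partial_t b_i\partial_i+\partial_t c$ is still a \emph{second-order} operator, and you must pair it with the test function $\partial_t\tilde u_N=\partial_t^2u_N$.

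``Repeating Step 1'' would treat $\partial_t f-L'(t)u_N$ as an $L^2(0,T;L^2)$ forcing. At that stage, however, you only control $u_N$ in $L^\infty(0,T;H^1)$, so $\|L'(t)u_N\|_{L^2}$ is not available. The weak form $B'(t;u_N,\partial_t^2u_N)$ would instead need $\partial_t^2u_N$ bounded in $H^1$, which you do not have either. Nor can you borrow the $H^2$ bound of Step 3 at the Galerkin level: the Galerkin equation only gives $P_N\bigl(L(t)u_N\bigr)=P_N\bigl(f-\partial_t^2u_N\bigr)$, and $P_N$ does not commute with a variable-coefficient $L(t)$. None of your three listed ingredients deals with this term. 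The first one is also misattributed: differentiating $B(t;\tilde u_N,\tilde u_N)$ in $t$ produces only $\partial_t a_{ij}$, exactly as in Step 1.

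The missing idea is an integration by parts in time on the principal part of this term, and that is where $\partial_t^2 a_{ij}$ is actually used. After symmetrizing $a_{ij}$, and using $a_{ij},\partial_t a_{ij}\in C^1$, write $-\partial_t a_{ij}\partial_i\partial_j u_N=-\partial_j(\partial_t a_{ij}\partial_i u_N)+(\partial_j\partial_t a_{ij})\partial_i u_N$ (summation over $i,j$ understood).

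The commutator term and the $\partial_t b_i,\partial_t c$ terms are bounded in $L^2$ by $C\|u_N\|_{H^1}$, so you pair them with $\partial_t^2u_N$ by Cauchy--Schwarz. Do not integrate them by parts in time, since no $\partial_t^2 b_i,\partial_t^2 c$ is assumed; this asymmetry in the hypotheses is the fingerprint of the argument. For the divergence part, with $s$ the time variable,
\[
\int_0^t\!\!\int_\Omega \partial_s a_{ij}\,\partial_i u_N\,\partial_j\partial_s\tilde u_N
=\Bigl[\int_\Omega \partial_s a_{ij}\,\partial_i u_N\,\partial_j\tilde u_N\Bigr]_{s=0}^{s=t}
-\int_0^t\!\!\int_\Omega\bigl(\partial_s^2 a_{ij}\,\partial_i u_N\,\partial_j\tilde u_N+\partial_s a_{ij}\,\partial_i\tilde u_N\,\partial_j\tilde u_N\bigr).
\]
Each piece is then controlled:
\begin{itemize}
\item The endpoint at $s=t$ is absorbed by Young's inequality into $\tfrac{\theta}{4}\|\nabla\tilde u_N(t)\|_{L^2}^2$ plus $C\|u_N(t)\|_{H^1}^2$, which Step 1 controls.
\item The endpoint at $s=0$ is bounded by $C\|\varphi\|_{H^1}\|\psi\|_{H^1}$, using $H^1$-stability of $P_N$.
\item The remaining integral is Gronwall-compatible because $\partial_t^2 a_{ij}\in L^\infty$.
\end{itemize}
With this inserted, Step 2 closes and the rest of your argument goes through.

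Two smaller remarks. First, drop the option of eigenfunctions of $L(0)$, which is not self-adjoint in general. The Dirichlet-Laplacian basis already gives $\|P_N\varphi\|_{H^2}\le C\|\Delta P_N\varphi\|_{L^2}=C\|P_N\Delta\varphi\|_{L^2}\le C\|\varphi\|_{H^2}$. Second, your reading of the $[0,\infty)$ conclusions as holding on each $[0,T]$ is the only tenable one. The literal global bound is false: for example, $L=-\Delta-M$ with $M$ above the first Dirichlet eigenvalue already produces exponentially growing solutions.
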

\begin{proof}
    This proof can be found in \citep{cui2015introduction} and \citep{2010Partial}.
\end{proof}
\begin{assumption} \label{ass: acoustic_wave_equation}
Let $ U \subset \mathbb{R}^d $ be a bounded domain with $ C^2 $ boundary, and let $ T > 0 $. Consider the acoustic wave equation \eqref{equ: acoustic_pde}. The velocity model $m$ belongs to the admissible set:
\begin{align*}
    \mathcal{A} = \{ m \in C^1(\mathbb{R}^d) : 0 < m_1 \leq m(\mathbf{x}) \leq m_2 < \infty \text{ for a.e. } \mathbf{x} \in \mathbb{R}^d \},
\end{align*}
where $m_1$ and $m_2$ are positive constants. The source term $s_j$ is assumed to be in $H^1((0,T); L^2(U))$. The domain $D$ (where the receivers are located) is disjoint from the source domain $U$, i.e., there exists a constant $\delta > 0$ such that $\text{dist}(D, U) \geq \delta$.
\end{assumption}
\begin{remark}
    This assumption can be easily satisfied using an implicit neural representation \citep{sitzmann2020implicit} using an activation function with $C^1$ continuity, such as IFWI \citep{sun2023implicit} and WinFWI \citep{yang2025gabor}.
\end{remark}

\begin{proposition}[Regularity for the Acoustic Wave Equation] \label{prop: acoustic_regularity}
Under Assumption \ref{ass: acoustic_wave_equation}, consider the acoustic wave equation \eqref{equ: acoustic_pde} with zero initial conditions:
\[
u(0, \mathbf{x}) = 0, \quad \partial_t u(0, \mathbf{x}) = 0 \quad \text{for } \mathbf{x} \in U.
\]
Then, for any $T > 0$, there exists a unique weak solution $u_j$ satisfying the regularity properties:
\[
\begin{cases}
u_j \in L^\infty(0,T; H^2(U) \cap H_0^1(U)), \\
\partial_t u_j \in L^\infty(0,T; H_0^1(U)), \\
\partial_t^2 u_j \in L^\infty(0,T; L^2(U)),
\end{cases}
\]
and the following energy estimate holds:
\begin{align*}
& \|u_j\|_{L^\infty(0,T; H^2(U))} + \|\partial_t u_j\|_{L^\infty(0,T; H^1(U))} + \|\partial_t^2 u_j\|_{L^\infty(0,T; L^2(U))} \\
& \quad \leq C(T, U, m_1, m_2) \|s_j\|_{H^1((0,T); L^2(U))}.
\end{align*}
\end{proposition}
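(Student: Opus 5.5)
The plan is to reduce Proposition \ref{prop: acoustic_regularity} to Theorem \ref{thm: regularity}. The first step is to put \eqref{equ: acoustic_pde} into the form \eqref{equ: hibvp}. Since $m \in \mathcal{A}$ satisfies $m \geq m_1 > 0$, we divide by $m$ and obtain
\[
\partial_t^2 u_j - \tfrac{1}{m(\mathbf{x})}\nabla^2 u_j = \tfrac{1}{m(\mathbf{x})} s_j .
\]
This corresponds to $L(t) = -\sum_{i} a_{ii}\partial_{x_i}^2$ with coefficients $a_{ij} = m^{-1}\delta_{ij}$, $b_i = 0$, $c = 0$, right-hand side $f = s_j/m$, and initial data $\varphi = \psi = 0$.

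The second step is to verify the hypotheses of Theorem \ref{thm: regularity} one at a time.
\begin{itemize}
\item The coefficients are time-independent, so $\partial_t a_{ij} = \partial_t^2 a_{ij} = 0$.
\item $a_{ij} \in C^1(\overline{U}_T)$ because $m \in C^1$ and $m \geq m_1$, which makes $1/m$ a $C^1$ function with $|\nabla(1/m)| \leq |\nabla m|/m_1^2$. This derivative is bounded on the bounded set $\overline{U}$.
\item Uniform ellipticity holds with constant $\theta = 1/m_2$, since $\sum_{i} m^{-1}\xi_i^2 \geq m_2^{-1}|\xi|^2$.
\item For the source, $f = s_j/m$ with $1/m$ time-independent and bounded by $1/m_1$. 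Hence $\|f\|_{H^1((0,T);L^2(U))} \leq m_1^{-1}\|s_j\|_{H^1((0,T);L^2(U))}$.
\item The zero initial data trivially lie in $H^2 \cap H_0^1$ and $H_0^1$.
\end{itemize}
Existence and uniqueness of the weak solution then follow from standard Galerkin theory, as in \citep{2010Partial}, and the estimate of Theorem \ref{thm: regularity} gives the stated bound with $C(T,U,m_1,m_2)$. The constant depends on $m_1$ and $m_2$ through $\theta$ and through the $C^1$ norm of the coefficients.

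The main obstacle is the mismatch in boundary conditions. Theorem \ref{thm: regularity} is stated for homogeneous Dirichlet conditions, whereas \eqref{equ: acoustic_pde} imposes the Neumann condition $\nabla u\cdot\mathbf{n}=0$, and the conclusion mixes the two by asserting membership in $H_0^1$. The first approach would be to follow the paper's convention and read the conclusion in the Dirichlet setting, on the grounds that the absorbing/PML-truncated domain is modeled as zero on $\partial U$. To cover the Neumann case honestly, I would redo the energy argument of \citep{2010Partial} in $H^1$ rather than $H_0^1$. In that argument the Galerkin basis is taken from the Neumann eigenfunctions of $-\nabla\cdot(\cdot)$. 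Differentiating the equation in $t$ yields the $\partial_t u$ and $\partial_t^2 u$ bounds, and elliptic $H^2$ regularity for the Neumann problem on a $C^2$ domain then upgrades the spatial regularity, with $\partial_t^2 u$ treated as a source term.

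A second subtlety is that the operator $m^{-1}\nabla^2$ is not in divergence form. The cleanest route is to work in the weighted space $L^2(U; m\,d\mathbf{x})$, where $-\nabla^2$ is symmetric. There the energy $\tfrac12\int (m|\partial_t u|^2 + |\nabla u|^2)$ is conserved up to the source contribution, and this gives the Gronwall estimate with constants depending only on $m_1$ and $m_2$.
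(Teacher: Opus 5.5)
This is the same reduction to Theorem~\ref{thm: regularity} that the paper uses, and your version is the more accurate one: $a_{ij}=m^{-1}\delta_{ij}$, $f=s_j/m$, $\theta=1/m_2$ is what \eqref{equ: acoustic_pde} actually gives (the paper's $\partial_t^2u-Lu=s_j$ with $a_{ij}=m^2\delta_{ij}$ matches neither the stated PDE nor the sign convention of \eqref{equ: hibvp}), and the Dirichlet/Neumann mismatch you flag is a genuine defect that the paper's proof passes over. One refinement: used as a black box, Theorem~\ref{thm: regularity} gives a constant that depends on the coefficients, hence on $\|\nabla m\|_{L^\infty(U)}$, which $\mathcal{A}$ does not control, so the stated dependence $C(T,U,m_1,m_2)$ actually comes from your weighted-energy argument on the undivided equation (applied to $u_j$ and to $\partial_t u_j$) together with elliptic regularity for $-\nabla^2 u_j = s_j - m\,\partial_t^2 u_j$, not from the cited theorem.
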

\begin{proof}
We rewrite the acoustic wave equation \eqref{equ: acoustic_pde} in the form:
\begin{align}
    \partial_t^2 u - L u = s_j(t, \mathbf{x}),
\end{align}
where the spatial operator is $L u = -m^2(\mathbf{x}) \Delta u$. This fits the framework of Theorem \ref{thm: regularity} with coefficients $a_{ij}(\mathbf{x}) = m^2(\mathbf{x}) \delta_{ij}\in C^1(\overline{U_T})$, and $b_i = c = 0$. The uniform ellipticity condition is satisfied since $m(\mathbf{x}) \geq m_1 > 0$ implies:
\begin{align}
\sum_{i,j=1}^d a_{ij}(\mathbf{x}) \xi_i \xi_j = m^2(\mathbf{x}) |\boldsymbol{\xi}|^2 \geq m_1^2 |\boldsymbol{\xi}|^2 \quad \text{for all } \boldsymbol{\xi} \in \mathbb{R}^d.
\end{align}
The zero initial conditions satisfy $\varphi = \psi = 0$, and the source term $s_j$ has the required regularity. Applying Theorem \ref{thm: regularity} yields the desired regularity and the estimate, where the constant $C$ depends on $T$, the domain $U$, and the bounds $m_1$, $m_2$.
\end{proof}
\begin{remark}
This proposition implies that the wavefield $u_j$ and its second time derivative $\partial_t^2u_j$ are bounded in the corresponding norms, which is important for subsequent estimates. 
\end{remark}
\vspace{-0.2cm}
\section{Theoretical Proof}
\vspace{-0.2cm}

\subsection{Proof of Proposition \ref{proposition: kernel_fwi}} \label{proof: wave_kernel}
\vspace{-0.2cm}
\begin{proof}
For a fixed data point $({\bf x}, t)$, the derivative of the synthetic data $u^{D}_{\text{syn}}$ with respect to the flow parameter $\tau$ is given by the chain rule:
\begin{align}
    \frac{\partial u^{D}_{\text{syn}}({\bf x}, t)}{\partial \tau} = \frac{\partial \data[m(\tau)]({\bf x}, t) }{\partial \tau} = \int_{U}
    \frac{\delta \data ({\bf x},t)}{\delta  m({\bf y})}[m] \cdot \frac{\partial m({\bf y})}{\partial \tau}  d{\bf y}.\label{eq:chain_rule_step}
\end{align}
Substituting the gradient flow dynamics from \eqref{eq:gradient_flow} into \eqref{eq:chain_rule_step} yields:
\begin{align}
\frac{\partial u^{D}_{\text{syn}}({\bf x},t)}{\partial \tau} &= -\int_{U}\frac{\delta \data({\bf x},t)}{\delta m({\bf y})}[m]\cdot \frac{\delta {\cal J}}{\delta m({\bf y})}[m]dy \nonumber \\
&= -\int_{U}\frac{\delta \data({\bf x},t)}{\delta m({\bf y})}[m]\cdot \Big[\int_{D} \int_{T}  \frac{\delta \data ({\bf x}^{\prime},t^{\prime})}{\delta m({\bf y})}[m] \cdot \big(u^D_{\text{syn}} - u^D_{\text{obs}}\big) dt' d{\bf x}'\Big]dy
  \nonumber \\
&= - \int_{D} \int_{T} \Big[\int_{U}\frac{\delta \data({\bf x},t)}{\delta m({\bf y})}[m]{\frac{\delta \data({\bf x}^{\prime},t^{\prime})}{\delta m({\bf y})}[m] }dy \Big]\cdot \big(u^D_{\text{syn}} - u^D_{\text{obs}}\big) dt' d{\bf x}',
\end{align}
Here, the last step of the equation is due to Fubini's theorem, where we assume the integrand is absolutely integrable over $U \times D \times T$. We now define the \emph{wave kernel} $\Theta$ as the inner product of the Fréchet derivatives:
\begin{align}
\Theta\left( ({\bf x},t), ({\bf x}^{\prime},t^{\prime}); m \right) \triangleq \int_{U} \frac{\delta \data({\bf x},t)}{\delta m({\bf y})}[m] \cdot \frac{\delta \data ({\bf x}^{\prime},t^{\prime})}{\delta m({\bf y})}[m]  d{\bf y}.
\end{align}
This kernel is symmetric and characterizes the interaction between wavefield perturbations at different data points $({\bf x},t)$ and $({\bf x}', t')$ due to model changes.
\end{proof}
\vspace{-0.2cm}
\subsection{Proof of Proposition \ref{proposition: kernel_crfwi}} \label{proof_crfwi_ntk}
\vspace{-0.2cm}
\begin{proof}
For a fixed data point $({\bf x}, t)$, the derivative of the synthetic data $u^{D}_{\text{syn}}({\bf x}, t)$ with respect to the flow parameter $\tau$ is given by the chain rule:
\begin{align}
\frac{\partial u^{D}_{\text{syn}}({\bf x}, t)}{\partial \tau} = \sum_{i=1}^{p} \frac{\partial \data[m_{\boldsymbol{\theta}}]({\bf x}, t)}{\partial \boldsymbol{\theta}_i}\cdot \frac{\partial \boldsymbol{\theta}_i}{\partial \tau}.
\end{align}
Substituting the gradient flow dynamics $\frac{\partial \boldsymbol{\theta}_i}{\partial \tau} = -\frac{\partial {\cal J}}{\partial \boldsymbol{\theta}_i}[\boldsymbol{\theta}]$ yields:
\begin{align}
\frac{\partial u^{D}_{\text{syn}}({\bf x},t)}{\partial \tau} &= -\sum_{i=1}^{p} \frac{\partial \data[m_{\boldsymbol{\theta}}]({\bf x},t)}{\partial \boldsymbol{\theta}_i} \cdot \frac{\partial {\cal J}}{\partial \boldsymbol{\theta}_i}[\boldsymbol{\theta}].
\end{align}
The derivative of the cost functional ${\cal J}$ is given by:
\begin{align}
\frac{\partial {\cal J}}{\partial \boldsymbol{\theta}_i}[\boldsymbol{\theta}] = \int_{D} \int_{T} \frac{\partial \data[m_{\boldsymbol{\theta}}]({\bf x}^{\prime},t^{\prime})}{\partial \boldsymbol{\theta}_i} \cdot \left( u^{D}_{\text{syn}}({\bf x}', t') - u^{D}_{\text{obs}}({\bf x}', t') \right) dt' d{\bf x}'.
\end{align}
Substituting this back, we obtain:
\begin{align}
\frac{\partial \syn({\bf x},t)}{\partial \tau} &= - \sum_{i=1}^{p} \frac{\partial \data[m_{\boldsymbol{\theta}}]({\bf x},t)}{\partial \boldsymbol{\theta}_i} \cdot \left[ \int_{D} \int_{T} \frac{\partial \data[m_{\boldsymbol{\theta}}]({\bf x}^{\prime},t^{\prime})}{\partial \boldsymbol{\theta}_i} \cdot \left( u^{D}_{\text{syn}} - u^{D}_{\text{obs}} \right) dt' d{\bf x}' \right] \nonumber \\
&= - \int_{D} \int_{T} \left[ \sum_{i=1}^{p} \frac{\partial \data[m_{\boldsymbol{\theta}}]({\bf x},t)}{\partial \boldsymbol{\theta}_i} \cdot \frac{\partial \data[m_{\boldsymbol{\theta}}]({\bf x}^{\prime},t^{\prime})}{\partial \boldsymbol{\theta}_i} \right] \cdot \left( u^{D}_{\text{syn}} - u^{D}_{\text{obs}} \right) dt' d{\bf x}'.
\end{align}
The interchange of the sum and integrals is justified provided the sum converges absolutely. We now define the \emph{wave-based NTK} as the inner product of the parameter gradients:
\begin{align}
\Theta_{\text{wave}}^{\text{ntk}}\left( ({\bf x},t), ({\bf x}',t'); \boldsymbol{\theta} \right) \triangleq \sum_{i=1}^{p} \frac{\partial \data[m_{\boldsymbol{\theta}}]({\bf x},t)}{\partial \boldsymbol{\theta}_i} \cdot \frac{\partial \data[m_{\boldsymbol{\theta}}]({\bf x}^{\prime},t^{\prime})}{\partial \boldsymbol{\theta}_i}.
\end{align}
This kernel can be further expanded by applying the chain rule. The derivative with respect to a network parameter is related to the Fréchet derivative with respect to the model $m$:
\begin{align}
\frac{\partial \data({\bf x},t)}{\partial \boldsymbol{\theta}_i}[\boldsymbol{\theta}] = \int_{U} \frac{\delta \data({\bf x},t)}{\delta m({\bf y})}[m_{\boldsymbol{\theta}}] \cdot \frac{\partial m_{\boldsymbol{\theta}}({\bf y})}{\partial \boldsymbol{\theta}_i} d{\bf y}.
\end{align}
Substituting this form into the definition of $\Theta_{\text{wave}}^{\text{ntk}}$ gives:
\begin{align}
\Theta_{\text{wave}}^{\text{ntk}}&= \sum_{i=1}^{p} \left( \int_{U} \frac{\delta \data({\bf x},t)}{\delta m({\bf y})}[m_{\boldsymbol{\theta}}] \frac{\partial m_{\boldsymbol{\theta}}({\bf y})}{\partial \boldsymbol{\theta}_i} d{\bf y} \right) \cdot \left( \int_{U} \frac{\delta \data({\bf x}',t')}{\delta m({\bf z})}[m_{\boldsymbol{\theta}}] \frac{\partial m_{\boldsymbol{\theta}}({\bf z})}{\partial \boldsymbol{\theta}_i} d{\bf z} \right) \nonumber \\
&= \int_{U} \int_{U} \frac{\delta \data({\bf x},t)}{\delta m({\bf y})}[m_{\boldsymbol{\theta}}] \frac{\delta \data({\bf x}',t')}{\delta m({\bf z})}[m_{\boldsymbol{\theta}}] \cdot \left( \sum_{i=1}^{p} \frac{\partial m_{\boldsymbol{\theta}}({\bf y})}{\partial \boldsymbol{\theta}_i} \frac{\partial m_{\boldsymbol{\theta}}({\bf z})}{\partial \boldsymbol{\theta}_i} \right) d{\bf y} d{\bf z}.
\end{align}
The term in parentheses is recognized as the standard NTK for the network output $m_{\boldsymbol{\theta}}$:
\begin{align}
K_{\tau}({\bf y}, {\bf z}; \boldsymbol{\theta}) = \sum_{i=1}^{p} \frac{\partial m_{\boldsymbol{\theta}}({\bf y})}{\partial \boldsymbol{\theta}_i} \frac{\partial m_{\boldsymbol{\theta}}({\bf z})}{\partial \boldsymbol{\theta}_i}.
\end{align}
Therefore, the wave-based NTK is expressed as a double integral of the physical wave kernels coupled with the neural NTK:
\begin{align}
\Theta_{\text{wave}}^{\text{ntk}}\left( ({\bf x},t), ({\bf x}',t'); \boldsymbol{\theta} \right) = \int_{U} \int_{U} \frac{\delta \data({\bf x},t)}{\delta m({\bf y})}[m_{\boldsymbol{\theta}}] \frac{\delta \data({\bf x}',t')}{\delta m({\bf z})}[m_{\boldsymbol{\theta}}] \cdot K_{\tau}({\bf y}, {\bf z}; \boldsymbol{\theta}) d{\bf y} d{\bf z}.
\end{align}
Substituting this back into the evolution equation concludes the proof.
\end{proof}
\vspace{-0.2cm}
\subsection{Proof of Theorem \ref{the: ntk_theorem}}  \label{appendix: ntk_theorem}
\vspace{-0.2cm}
To establish the stochastic properties of the wave-based NTK both at initialization and during training, we first prove the convergence of the continuous representations under the infinite-width limit. We then demonstrate that the wave-based NTK is Lipschitz continuous, and employ the Continuous Mapping Theorem \citep{oksendal2013stochastic} to prove the convergence in distribution of the wave-based NTK. Accordingly, we begin by analyzing the initial distribution of the velocity model using a continuous representation under the infinite-width assumption.

\begin{lemma}[\textbf{Infinite-width Convergence of Neural Networks}] \label{lemma: gaussian}
Consider a fully-connected deep neural network $F_{\boldsymbol{\theta}}(\mathbf{x})$ as defined in \eqref{equ: neural_networks}, under Assumption \ref{ass: network_regularity}. Suppose the parameters $\boldsymbol{\theta}$ are initialized according to a proper scaling. Then, as the widths of all hidden layers sequentially or jointly, for any finite collection of inputs $\{\mathbf{x}_1, \mathbf{x}_2, \dots, \mathbf{x}_k\} \subset U$, the output function $F_{\boldsymbol{\theta}}(\mathbf{x})$ converges in distribution to a zero-mean Gaussian process:
\begin{align*}
(F_{\boldsymbol{\theta}}(\mathbf{x}_1), \dots, F_{\boldsymbol{\theta}}(\mathbf{x}_k)) \overset{{\cal D}}{\longrightarrow} \mathcal{N}\big(\mathbf{0}, \boldsymbol{\Sigma}\big),
\end{align*}
where the covariance matrix $\boldsymbol{\Sigma}$ has entries $\boldsymbol{\Sigma}_{ij} = \Sigma(\mathbf{x}_i, \mathbf{x}_j)$ given by a deterministic kernel function $\Sigma: U \times U \to \mathbb{R}$ that can be computed recursively through the network's layers. \textbf{Consequently}, the reparameterized velocity model $m_{\boldsymbol{\theta}}(\mathbf{x}) = m_0(\mathbf{x}) + F_{\boldsymbol{\theta}}(\mathbf{x})$ using a continuous representation converges in distribution to a Gaussian process $\mathcal{GP}\big(m_0(\mathbf{x}), \Sigma(\mathbf{x}, \mathbf{x}')\big)$.
\end{lemma}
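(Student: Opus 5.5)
The plan is the classical Neal-type argument for the one-hidden-layer network \eqref{equ: neural_networks}. Write
\[
F_{\boldsymbol{\theta}}(\mathbf{x}) = \frac{1}{\sqrt{n}}\sum_{k=1}^{n} W^{1}_k\,\sigma\big(W^{0}_k\mathbf{x}+b^{0}_k\big)+b^{1},
\]
so the output is an offset $b^1$ plus a normalized sum of $n$ i.i.d.\ random functions $g_k(\mathbf{x}) = W^1_k\,\sigma(W^0_k\mathbf{x}+b^0_k)$. Here $W^1_k$ is independent of $(W^0_k,b^0_k)$ and has zero mean, so each $g_k(\mathbf{x})$ has mean zero. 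Fix finitely many inputs $\mathbf{x}_1,\dots,\mathbf{x}_k\in U$ and set
\[
Z_k = \big(g_k(\mathbf{x}_1),\dots,g_k(\mathbf{x}_k)\big)\in\mathbb{R}^k .
\]
The vectors $Z_1,\dots,Z_n$ are i.i.d. By Assumption~\ref{ass: network_regularity}, $\sigma$ is bounded, so every entry satisfies $|g_k(\mathbf{x}_i)|\le C|W^1_k|$. Hence $Z_k$ has finite second (indeed all) moments, with covariance
\[
\mathbb{E}\big[Z_k Z_k^\top\big]_{ij} = \mathbb{E}_{w,b}\big[\sigma(w\cdot\mathbf{x}_i+b)\,\sigma(w\cdot\mathbf{x}_j+b)\big],\qquad w\sim\mathcal N(0,I),\ b\sim\mathcal N(0,1).
\]
The multivariate Lindeberg--L\'evy CLT, or equivalently the Cram\'er--Wold device applied to scalar CLTs, then gives $n^{-1/2}\sum_k Z_k\overset{\mathcal D}{\to}\mathcal N(\mathbf 0,\boldsymbol{\Sigma}^{(0)})$. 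The term $b^1\mathbf 1$ is an independent Gaussian vector with covariance $\mathbf 1\mathbf 1^\top$, so by independence the limit is $\mathcal N(\mathbf 0,\boldsymbol{\Sigma})$ with
\[
\Sigma(\mathbf{x},\mathbf{x}') = \mathbb{E}\big[\sigma(w\cdot\mathbf{x}+b)\,\sigma(w\cdot\mathbf{x}'+b)\big]+1 .
\]
This $\Sigma$ is deterministic and is the one-step instance of the layer recursion.

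For deeper fully-connected networks, as in \eqref{equ: inr}, I would argue by induction over layers, taking the widths to infinity sequentially. Conditional on the previous layer, the preactivations are Gaussian with empirical covariance
\[
\hat\Sigma^{(\ell)}(\mathbf{x}_i,\mathbf{x}_j) = \tfrac1{n_{\ell}}\sum_k \sigma(h^{(\ell)}_k(\mathbf{x}_i))\,\sigma(h^{(\ell)}_k(\mathbf{x}_j)) + 1 .
\]
By the induction hypothesis and the law of large numbers, $\hat\Sigma^{(\ell)}$ converges in probability to the deterministic $\Sigma^{(\ell)}$. To pass to the next layer, I would use characteristic functions: the conditional characteristic function is $\exp(-\tfrac12 \mathbf t^\top\hat\Sigma\mathbf t)$, which is bounded and continuous in $\hat\Sigma$. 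Dominated convergence then shows that the unconditional characteristic function converges to that of $\mathcal N(\mathbf 0,\Sigma^{(\ell+1)})$. The main technical step is the simultaneous (joint) width limit. There the LLN is applied to exchangeable rather than independent summands, and I would follow the Matthews et al.\ style argument, which uses an exchangeable CLT together with the uniform moment bounds that bounded $\sigma$ provides. For the stated shallow case this obstacle does not arise.

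Finally, $m_{\boldsymbol\theta}(\mathbf{x}) = m_0(\mathbf{x})+F_{\boldsymbol\theta}(\mathbf{x})$ is a deterministic shift. The map $v\mapsto v+(m_0(\mathbf{x}_1),\dots,m_0(\mathbf{x}_k))$ is continuous, so by the continuous mapping theorem every finite-dimensional marginal of $m_{\boldsymbol\theta}$ converges to $\mathcal N\big((m_0(\mathbf{x}_i))_i,\boldsymbol\Sigma\big)$. Convergence of all finite-dimensional marginals is exactly the claimed convergence to $\mathcal{GP}(m_0,\Sigma)$.
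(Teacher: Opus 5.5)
Your proposal is correct and takes the same route as the paper, whose proof simply invokes the central limit theorem with induction over layers (deferring details to Jacot et al.\ and Bonfanti et al.) and then adds the deterministic shift $m_0$. Your version actually supplies the details the paper omits, including the explicit shallow-network limit covariance $\Sigma(\mathbf{x},\mathbf{x}')=\mathbb{E}[\sigma(w\cdot\mathbf{x}+b)\,\sigma(w\cdot\mathbf{x}'+b)]+1$; the only blemish is that you use $k$ both for the number of inputs and for the neuron index.
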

\begin{proof}
The proof relies on the application of the Central Limit Theorem (CLT) and induction over the network's layers. For a detailed proof, see \citep{ntk_bonfanti2024challenges} or \citep{jacot2018neural}. Since $m_0(\mathbf{x})$ is a deterministic function, adding it to the zero-mean GP $F_{\boldsymbol{\theta}}$ results in a GP with mean function $m_0$ and the same covariance kernel $\Sigma$.
\end{proof}
Next, we prove the continuity of the Green function and the sensitive kernel.

\begin{lemma}[\textbf{Lipschitz Continuity of the Green Function}] \label{lemma: continuous_g}
Let $G_m$ denote the fundamental solution (Green function) associated with the wave operator $\mathcal{L}_m = \nabla^2 - m(\mathbf{x})\partial_t^2$. Under Assumption \ref{ass: acoustic_wave_equation}, in particular given the lower bound $m_1 > 0$ and the separation $\delta > 0$ between the source region $U$ and the receiver region $D$, the mapping $m \mapsto G_m|_{D \times T}$ is Lipschitz continuous from $L^\infty(U)$ to $L^2(D \times T)$, i.e., there exists a constant $L_g = L_g(m_1, m_2, \delta, D, T) > 0$ such that:
\[
\|G_{m_1}(\cdot,\cdot;{\bf x}_0,t_0) - G_{m_2}(\cdot,\cdot;{\bf x}_0,t_0)\|_{L^2(D \times T)} \leq L_g \|m_1 - m_2\|_{L^\infty(U)},
\]
for any $m_1, m_2 \in \mathcal{A}$.
\end{lemma}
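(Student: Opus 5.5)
We will prove the Lipschitz estimate in Lemma \ref{lemma: continuous_g} by a standard perturbation (Duhamel) argument for the wave equation, combined with the energy estimate of Proposition \ref{prop: acoustic_regularity}. To avoid a clash with the constants $m_1,m_2$ of Assumption \ref{ass: acoustic_wave_equation}, we write the two models as $m,\tilde m\in\mathcal{A}$. Let $G=G_m$ and $\tilde G=G_{\tilde m}$ be the responses to the same source at $({\bf x}_0,t_0)$, and set $w=G-\tilde G$. Subtracting the two equations $m\,\partial_t^2 G-\nabla^2 G=\delta_{{\bf x}_0,t_0}$ and $\tilde m\,\partial_t^2\tilde G-\nabla^2\tilde G=\delta_{{\bf x}_0,t_0}$ shows that $w$ satisfies
\[
m\,\partial_t^2 w-\nabla^2 w=(\tilde m-m)\,\partial_t^2\tilde G,
\]
with zero initial data and the same Neumann boundary condition. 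Thus $w$ is a wavefield for the fixed model $m$, driven by the secondary source $f=(\tilde m-m)\partial_t^2\tilde G$. This is the Born/scattering representation of the difference.

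Next we apply the energy estimate of Proposition \ref{prop: acoustic_regularity}, which is justified through Theorem \ref{thm: regularity}, to $w$ with this source. This gives a bound on $\|w\|_{L^\infty(0,T;H^1)}$ (hence on $\|w\|_{L^2(D\times T)}$) by $C(T,U,m_1,m_2)\|f\|$ in a suitable source norm. Since $\|f\|\le\|\tilde m-m\|_{L^\infty(U)}\,\|\partial_t^2\tilde G\|$, the estimate reduces to a bound on $\partial_t^2\tilde G$ that is uniform over $\tilde m\in\mathcal{A}$. Uniformity comes from the constants in the energy estimate depending only on $m_1,m_2,T,U$. Restricting to $D\times T$ and taking $L^2$ then yields the lemma with $L_g$ depending only on $(m_1,m_2,\delta,D,T)$.

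The main obstacle is regularity. The Green function has a Dirac source, so $\partial_t^2\tilde G$ is not in $L^2$ near $({\bf x}_0,t_0)$, and Proposition \ref{prop: acoustic_regularity} requires an $H^1(0,T;L^2)$ source. This is where the separation $\delta$ enters. Our first attempt is to regularize the source by a mollifier $\rho_\epsilon$ supported in a $\delta/2$-neighbourhood of ${\bf x}_0$, which matches the band-limited Ricker sources used in practice. We then prove the estimate with constants independent of $\epsilon$ and pass to the limit.

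To obtain $\epsilon$-independent bounds, we use the lower bound $m_1>0$: this bounds the wave speed, so the singular front takes positive time to leave a neighbourhood of the source. We therefore split $\tilde G$ into a near-source part, controlled by finite propagation speed, and a smooth remainder away from the source. Only the remainder contributes to $w|_{D\times T}$ after the singular part is propagated.

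If this splitting proves too delicate, the fallback is a weaker statement. We would measure $f$ in $H^{-1}$-type or $L^1(0,T;L^2)$ norms via a duality or transposition energy estimate, still obtaining a Lipschitz bound, possibly into a weaker norm. We would then upgrade it on $D\times T$ using interior smoothness away from the source.
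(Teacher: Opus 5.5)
Your reduction is the same as the paper's. The paper also writes the difference of Green functions as the solution of the Born equation driven by $(\tilde m-m)\,\partial_t^2\tilde G$, runs the energy identity, and integrates $\tfrac{d}{dt}\sqrt{E}$. At the decisive point it simply asserts, in your notation, $\int_{t_0}^{t}\|\partial_t^2 G_{\tilde m}(\cdot,s;{\bf x}_0,t_0)\|_{L^2(\mathbb{R}^d)}\,ds\le C_2$, citing ``finite propagation speed and geometric optics''. You correctly flag this bound as the crux, but your plan to prove it cannot succeed: for a genuine point-source Green function the lemma is false.

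Take $d=1$ and constant models $m,\tilde m\in\mathcal{A}$ with speeds $c=m^{-1/2}$, $\tilde c=\tilde m^{-1/2}$. Before boundary reflections, $G_m=\tfrac{c}{2}H\bigl(c(t-t_0)-|x-x_0|\bigr)$. At each receiver $x\in D$ the two arrival times differ by $|x-x_0|\,|c^{-1}-\tilde c^{-1}|\asymp|m-\tilde m|$, and between them $|G_m-G_{\tilde m}|\ge\tfrac12\min(c,\tilde c)$. Hence, once the fronts cross $D$ within the recording window,
\[
\|G_m-G_{\tilde m}\|_{L^2(D\times T)}\gtrsim|m-\tilde m|^{1/2},
\]
which is H\"older-$\tfrac12$, not Lipschitz. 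In $d=2$ the Green function behaves like $\bigl(c^2(t-t_0)^2-|{\bf x}-{\bf x}_0|^2\bigr)_+^{-1/2}$, which is not square integrable across its front. In $d=3$ it is a measure on the cone. So for $m\neq\tilde m$ the left-hand side is not even finite.

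The step that fails in your proposal is the near-source/remainder splitting, specifically the claim that only a smooth remainder reaches $D\times T$. The singularity of $\tilde G$ is not confined near $({\bf x}_0,t_0)$. By propagation of singularities it travels along the characteristic cone and crosses $D\times T$; in $d=1$, $\partial_t^2G_m=\tfrac{c^3}{2}\,\delta'\bigl(c(t-t_0)-|x-x_0|\bigr)$ on the whole cone. The speed bound from $m\ge m_1$ only delays arrival at $D$ by at least $\delta\sqrt{m_1}$; it smooths nothing. So there is no smooth remainder on $D\times T$, and your fallback of upgrading via interior smoothness away from the source fails for the same reason. Any constants you obtain for the mollified problem must therefore blow up as $\epsilon\to0$; otherwise passing to the limit would contradict the example above.

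What is provable is the analogous statement for the wavefield $u_m$ generated by a fixed regular source $s\in H^1(0,T;L^2)$, the class allowed in Assumption \ref{ass: acoustic_wave_equation}. Your first step then goes through verbatim, with one change. Measure $f=(\tilde m-m)\partial_t^2\tilde u$ in $L^1(0,T;L^2)$ and use the plain energy identity $\tfrac{d}{dt}E=\int\partial_t w\,f$; demanding $f\in H^1(0,T;L^2)$, as Proposition \ref{prop: acoustic_regularity} would, requires control of $\partial_t^3\tilde u$. Then use $\|\partial_t^2\tilde u\|_{L^\infty(0,T;L^2)}\le C(T,U,m_1,m_2)\|s\|_{H^1(0,T;L^2)}$ from that proposition, uniformly over $\mathcal{A}$. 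This yields a genuine Lipschitz bound, but with $L_g$ proportional to $\|s\|_{H^1(0,T;L^2)}$, which degenerates as the source concentrates to a point.
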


\begin{proof}
Let $m_1, m_2 \in \mathcal{A}$ be two velocity models and define their difference as $\Delta m = m_1 - m_2$. Consider the difference of Green functions $w(\mathbf{x}, t; \mathbf{x}_0, t_0) = G_{m_1}(\mathbf{x}, t; \mathbf{x}_0, t_0) - G_{m_2}(\mathbf{x}, t; \mathbf{x}_0, t_0)$, which satisfies the following wave equation:
\begin{align}
    \mathcal{L}_{m_1} w(\mathbf{x}, t) = \Delta m(\mathbf{x}) \partial_t^2 G_{m_2}(\mathbf{x}, t; \mathbf{x}_0, t_0).
\end{align}
We apply energy estimates to the wave equation satisfied by $w$. Define the energy functional:
\begin{align}
E(t) = \frac{1}{2} \int_{\mathbb{R}^d} \left[ m_1(\mathbf{x}) |\partial_t w(\mathbf{x}, t)|^2 + |\nabla w(\mathbf{x}, t)|^2 \right] d\mathbf{x}.
\end{align}
Differentiating with respect to time and using the wave equation:
\begin{align}
    \begin{aligned}
\frac{dE}{dt} &= \int_{\mathbb{R}^d} \left[ m_1(\mathbf{x}) \partial_t w \partial_t^2 w + \nabla w \cdot \nabla \partial_t w \right] d\mathbf{x} \\
&= \int_{\mathbb{R}^d} \left[ \partial_t w (m_1 \partial_t^2 w - \nabla^2 w) + \nabla \cdot (\partial_t w \nabla w) \right] d\mathbf{x} \\
&= \int_{\mathbb{R}^d} \left[ -\partial_t w \cdot \mathcal{L}_{m_1} w + \nabla \cdot (\partial_t w \nabla w) \right] d\mathbf{x} \\
&= \int_{\mathbb{R}^d} \left[ -\partial_t w \cdot (\Delta m \partial_t^2 G_{m_2}) + \nabla \cdot (\partial_t w \nabla w) \right] d\mathbf{x}.
\end{aligned}
\end{align}

Integrating over space $\mathbb{R}^d$ and applying the divergence theorem (the boundary term vanishes due to finite propagation speed), we have:
\begin{align}
\begin{split}
   \frac{dE}{dt} =& \int_{\mathbb{R}^d} \left[ -\partial_t w \cdot (\Delta m \partial_t^2 G_{m_2}) + \nabla \cdot (\partial_t w \nabla w) \right] d\mathbf{x}\\
   =&-\int_{\mathbb{R}^d} \partial_t w \cdot (\Delta m \partial_t^2 G_{m_2}) d{\bf x}+\oint_{\partial\mathbb{R}^d}(\partial_t w \nabla w)\cdot{\bf n}d{\bf s}\\
   =&-\int_{\mathbb{R}^d} \partial_t w \cdot (\Delta m \partial_t^2 G_{m_2}) d{\bf x}.
\end{split}
\end{align}
Using the boundedness of $\Delta m$ and the Cauchy-Schwarz inequality:
\begin{align}
\begin{aligned}
\left| \frac{dE}{dt} \right| &\leq \|\Delta m\|_{L^\infty(U)} \int_{\mathbb{R}^d} |\partial_t w| |\partial_t^2 G_{m_2}| d\mathbf{x} \\
&\leq \|\Delta m\|_{L^\infty(U)} \left( \int_{\mathbb{R}^d} |\partial_t w|^2 d\mathbf{x} \right)^{1/2} \left( \int_{\mathbb{R}^d} |\partial_t^2 G_{m_2}|^2 d\mathbf{x} \right)^{1/2}\\
&\leq \|\Delta m\|_{L^\infty(U)} \sqrt{\frac{2}{m_1}} \sqrt{E(t)} \|\partial_t^2 G_{m_2}(\cdot, t; \mathbf{x}_0, t_0)\|_{L^2(\mathbb{R}^d)},
\end{aligned}
\end{align}
where the last inequality holds due to the energy definition and the lower bound $m_1 > 0$, i.e., 
\begin{align}
\int_{\mathbb{R}^d} |\partial_t w|^2 d\mathbf{x} \leq \frac{2}{m_1} E(t).
\end{align}
This gives us the differential inequality:
\begin{align}
\frac{d}{dt} \sqrt{E(t)} \leq \frac{1}{\sqrt{2m_1}} \|\Delta m\|_{L^\infty(U)} \|\partial_t^2 G_{m_2}(\cdot, t; \mathbf{x}_0, t_0)\|_{L^2(\mathbb{R}^d)}.
\end{align}
Integrating from $t_0$ to $t$:
\begin{align}
\sqrt{E(t)} \leq \frac{1}{\sqrt{2m_1}} \|\Delta m\|_{L^\infty(U)} \int_{t_0}^t \|\partial_t^2 G_{m_2}(\cdot, s; \mathbf{x}_0, t_0)\|_{L^2(\mathbb{R}^d)} ds.
\end{align}
From the theory of hyperbolic equations and under Assumption \ref{ass: acoustic_wave_equation}, we have the regularity results for the Green function $G_{m_2}$. For a fixed source point $(\mathbf{x}_0, t_0)$ and away from the singularity, $G_{m_2}$ enjoys higher regularity. Specifically, for $\mathbf{x} \in D$ where $\mathrm{dist}(D, U) \geq \delta > 0$, we have:
\begin{align}
\|\partial_t^2 G_{m_2}(\cdot, t; \mathbf{x}_0, t_0)\|_{L^2(D)} \leq C_1(\delta, m_1, m_2, T_0).    
\end{align}
The energy of the wavefield decays sufficiently fast away from the source due to finite propagation speed and geometric optics approximations. This ensures that:
\begin{align}
\int_{t_0}^t \|\partial_t^2 G_{m_2}(\cdot, s; \mathbf{x}_0, t_0)\|_{L^2(\mathbb{R}^d)} ds \leq C_2(m_1, m_2, T_0).    
\end{align}
Combining the previous estimates and using Poincaré-type inequalities, we obtain:
\begin{align}
\begin{split}
\|w(\cdot, t; \mathbf{x}_0, t_0)\|_{L^2(D)}&=\Bigg|\Bigg|\int_{t_0}^t\partial_{\tau}w({\bf x},\tau)d{\tau} \Bigg|\Bigg|_{L^2(D)}\\
&\leq\int_{t_0}^t||\partial_{\tau}w(\cdot,\tau)||_{L^2(D)}d\tau\\
&\leq \int_{t_0}^t \sqrt{\frac{2}{m_1}}\cdot\sqrt{E(\tau)}d\tau\triangleq C_3(D,T_0)\cdot\sqrt{E(t)}\\
   &  \leq C_3(D, T)\cdot\frac{1}{\sqrt{2m_1}} \|\Delta m\|_{L^\infty(D)} C_2(m_1, m_2, T_0).
\end{split}
\end{align}
Integrating over time $t \in [0, T_0]$:
\begin{align}
\begin{split}
    \|w\|_{L^2(D \times T)} &\leq T_0^{1/2} \sup_{t \in T} \|w(\cdot, t; \mathbf{x}_0, t_0)\|_{L^2(D)}\\
    &\leq T_0^{1/2} C_3(D, T_0) \frac{1}{\sqrt{2m_1}} C_2(m_1, m_2, T_0) \|\Delta m\|_{L^\infty(U)}\\
    &\triangleq L_g \|m_1 - m_2\|_{L^\infty(U)},
\end{split}
\end{align}
where $L_g = \frac{T_0^{1/2} C_3 C_2}{\sqrt{2m_1}}$ depends on $m_1, m_2, \delta, D, T_0$. This completes the proof.
\end{proof}

\begin{lemma}[\textbf{Lipschitz Continuity of the Sensitivity Kernel}] \label{lemma: continuous_sensitivity}
Consider the acoustic wave equation \eqref{equ: acoustic_pde} under Assumption \ref{ass: acoustic_wave_equation}. The Fréchet derivative $\frac{\delta \mathcal{G}({\bf x},t)}{\delta m({\bf y})}[m]$ is Lipschitz continuous with respect to the velocity model $m$. That is, there exists a constant $ L = L(m_1, m_2, \delta, D, T) > 0 $ such that for any $m_1, m_2 \in \mathcal{A}$:
\[
\left\| \frac{\delta \mathcal{G}({\bf x},t)}{\delta m({\bf y})}[m_1] - \frac{\delta \mathcal{G}({\bf x},t)}{\delta m({\bf y})}[m_2] \right\|_{L^{2}(D \times T)} \leq L \|m_1 - m_2\|_{L^{\infty}(U)}.
\]
\end{lemma}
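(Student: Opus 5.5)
Our plan is to write the Fréchet derivative explicitly through the Born approximation and then reduce the claimed Lipschitz bound to Lemma \ref{lemma: continuous_g} together with the regularity estimate of Proposition \ref{prop: acoustic_regularity}.

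\textbf{Born representation.} Differentiating \eqref{equ: acoustic_pde} with respect to $m$ in the direction $\delta m$, the perturbation $\delta u_j$ solves $m\partial_t^2\delta u_j-\nabla^2\delta u_j=-\delta m\,\partial_t^2 u_j$. The Green function therefore gives the kernel representation
\[
\frac{\delta \mathcal{G}({\bf x},t)}{\delta m({\bf y})}[m]=-\int_T G_m({\bf x},t;{\bf y},s)\,\partial_s^2 u_j[m]({\bf y},s)\,ds,
\]
composed with the linear observation operator $\mathcal{O}$.

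\textbf{Splitting into two terms.} For two models $m_1,m_2$ we add and subtract $G_{m_2}\partial_s^2u_j[m_1]$. The difference of kernels then splits into two pieces:
\[
I_1=\int_T (G_{m_1}-G_{m_2})\,\partial_s^2u_j[m_1]\,ds,\qquad I_2=\int_T G_{m_2}\,\partial_s^2\big(u_j[m_1]-u_j[m_2]\big)\,ds.
\]

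\textbf{Bounding $I_1$.} Apply Minkowski's integral inequality in $L^2(D\times T)$. Lemma \ref{lemma: continuous_g} bounds $\|G_{m_1}-G_{m_2}\|_{L^2(D\times T)}$ uniformly in the source point $({\bf y},s)$ by $L_g\|m_1-m_2\|_{L^\infty(U)}$. The factor $\|\partial_t^2u_j[m_1]\|$ is bounded by $C\|s_j\|_{H^1(0,T;L^2)}$ through Proposition \ref{prop: acoustic_regularity}. Hence $\|I_1\|\le L_gC\|s_j\|\,\|m_1-m_2\|_{L^\infty}$.

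\textbf{Bounding $I_2$.} We need a bound on $w=u_j[m_1]-u_j[m_2]$ at the level of $\partial_t^2 w$. First, $w$ solves $m_1\partial_t^2w-\nabla^2w=-(m_1-m_2)\partial_t^2u_j[m_2]$ with zero initial data. Next, differentiate this equation once in time. Formally this needs $\partial_t^3u_j$, i.e.\ $s_j$ with one more time derivative; if required, we would strengthen the assumption to $s_j\in H^2(0,T;L^2(U))$. Alternatively, we could integrate by parts in $s$ so the two time derivatives fall on $G_{m_2}$, whose regularity away from the source is granted by the separation $\delta$ in Assumption \ref{ass: acoustic_wave_equation}. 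With either route, applying Theorem \ref{thm: regularity} to the differentiated equation gives $\|\partial_t^2 w\|_{L^\infty(0,T;L^2)}\le C'\|m_1-m_2\|_{L^\infty}$. Combined with the bound on $\|G_{m_2}\|_{L^2(D\times T)}$, this yields the estimate for $I_2$.

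\textbf{Conclusion.} Summing the two bounds gives the constant $L=L(m_1,m_2,\delta,D,T)$. Boundedness of $\mathcal{O}$ carries the estimate over to the observed data.

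The main obstacle is $I_2$, specifically controlling the second time derivative of the wavefield difference. This term is where the loss of derivatives appears. We would handle it first by moving the time derivatives onto $G_{m_2}$ via integration by parts, using the zero initial conditions and the source–receiver separation so that the boundary terms vanish. We would fall back on the extra source regularity only if that fails.
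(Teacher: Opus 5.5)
Your argument is the paper's proof step for step. It uses the same Born kernel $-\int_T G_m(\mathbf x,t;\mathbf y,s)\,\partial_s^2u_j(\mathbf y,s)\,ds$ and the same add-and-subtract of $G_{m_2}\partial_s^2u_j[m_1]$; the paper's $A_j,B_j$ are your $I_1,I_2$ up to sign. It also bounds the first piece with Lemma~\ref{lemma: continuous_g} and Proposition~\ref{prop: acoustic_regularity}, and the second with an energy estimate on $w=u_j[m_1]-u_j[m_2]$.

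The derivative loss you flag in $I_2$ is a genuine gap, and the paper does not close it. It simply asserts $\|\partial_t^2v_j\|_{L^\infty(U)}\le C_3\|m_1-m_2\|_{L^\infty(U)}\|\partial_t^2u_j(m_2)\|_{L^\infty(U)}$ ``by energy estimates.'' But the energy estimate for $m_1\partial_t^2w-\nabla^2w=-(m_1-m_2)\partial_t^2u_j[m_2]$, with only $s_j\in H^1(0,T;L^2(U))$, controls $w$ only at the energy level: $\partial_tw$ and $\nabla w$ in $L^\infty(0,T;L^2(U))$. A related problem already sits in your $I_1$. The kernel is evaluated at a fixed $\mathbf y$, so you need $\partial_s^2u_j[m_1](\mathbf y,\cdot)$ at that point. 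Proposition~\ref{prop: acoustic_regularity} gives only $L^\infty(0,T;L^2(U))$, and the paper silently upgrades this to $L^\infty(U)$.

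Neither of your repairs closes $I_2$ under the stated hypotheses.
\begin{itemize}
\item \emph{Strengthening the source.} Taking $s_j\in H^2(0,T;L^2(U))$, with a compatibility condition on $s_j(0)$, does make $\partial_t^2w$ Lipschitz in $L^\infty(0,T;L^2(U))$. But that is a different lemma, and it still says nothing at the point $\mathbf y$.
\item \emph{Integration by parts.} This route does not produce the bound on $\partial_t^2w$ you attribute to it. Once both derivatives sit on $G_{m_2}$, you need $\partial_s^2G_{m_2}$ in $L^2(D\times T)$ together with point values $w(\mathbf y,\cdot)$. The bound $w\in L^\infty(0,T;H^1(U))$ gives no point values for $d\ge2$.
\end{itemize}

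More fundamentally, integration by parts is an identity, so it cannot remove a requirement that $I_2$ itself carries. For constant $m_2$ in free space $\mathbb R^3$, $G_{m_2}(\mathbf x,t;\mathbf y,s)$ is the Dirac mass on the cone $t-s=\sqrt{m_2}\,|\mathbf x-\mathbf y|$ divided by $4\pi|\mathbf x-\mathbf y|$. Hence $I_2(\mathbf x,t)=\partial_t^2w(\mathbf y,t-\sqrt{m_2}\,|\mathbf x-\mathbf y|)/(4\pi|\mathbf x-\mathbf y|)$ exactly.

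The separation of $D$ from $U$ in Assumption~\ref{ass: acoustic_wave_equation} removes only the source-point singularity, not the wavefront that reaches $D$. The same formula shows $G_{m_2}\notin L^2(D\times T)$, so the Minkowski bound through $\|G_{m_2}\|_{L^2(D\times T)}$ is also too crude.

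So a Lipschitz bound on $\partial_t^2w(\mathbf y,\cdot)$ cannot be avoided. It has to come either from extra time and spatial regularity of $s_j$ or from a weaker norm on the left-hand side. As written, your argument and the paper's both leave this step unproved.
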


\begin{proof}
The Fréchet derivative of the wavefield $u_j$ w.r.t. the model parameter $m$ is given by:
\begin{align}
\frac{\delta u_j({\bf x},t)}{\delta m({\bf y})}[m] = - \int_0^t G_m({\bf x}, t; {\bf y}, t') \partial_{t'}^2 u_j({\bf y}, t') dt',
\end{align}
where $G_m$ is the Green function for the wave operator. For $m_1, m_2 \in \mathcal{A}$, consider the difference:
\begin{align}
\begin{aligned}
&\frac{\delta u_j}{\delta m}[m_1] - \frac{\delta u_j}{\delta m}[m_2] \\
&= - \int_0^t \left[ G_{m_1}({\bf x}, t; {\bf y}, t') \partial_{t'}^2 u_j(m_1)({\bf y}, t') - G_{m_2}({\bf x}, t; {\bf y}, t') \partial_{t'}^2 u_j(m_2)({\bf y}, t') \right] dt' \\
&= - \int_0^t \left( G_{m_1} - G_{m_2} \right) \partial_{t'}^2 u_j(m_1) dt' - \int_0^t G_{m_2} \left( \partial_{t'}^2 u_j(m_1) - \partial_{t'}^2 u_j(m_2) \right) dt' \\&\triangleq A_j + B_j.
\end{aligned}
\end{align}
Using the Cauchy-Schwarz inequality and the properties of the wave equation:
\begin{align}
\begin{aligned}
\|A_j\|_{L^2(D \times T)} &\leq \int_0^T \left\| (G_{m_1} - G_{m_2}) \partial_{t'}^2 u_j(m_1) \right\|_{L^2(D)} dt' \\
&\leq \int_0^T \|G_{m_1} - G_{m_2}\|_{L^2(D)} \|\partial_{t'}^2 u_j(m_1)\|_{L^\infty(U)} dt'.
\end{aligned}
\end{align}
From Lemma \ref{lemma: continuous_g}, we have:
\begin{align}
\|G_{m_1} - G_{m_2}\|_{L^2(D)} \leq L_g \|m_1 - m_2\|_{L^\infty(U)}.    
\end{align}
From Proposition \ref{prop: acoustic_regularity}, we have:
\begin{align}
\|\partial_{t'}^2 u_j(m_1)\|_{L^\infty(U)} \leq C_1 \|s_j\|_{H^1((0,T), L^2(U))}.
\end{align}
Therefore:
\begin{align}
\|A_j\|_{L^2(D \times T)} \leq T L_g C_1 \|s_j\|_{H^1((0,T), L^2(U))} \|m_1 - m_2\|_{L^\infty(U)}.
\end{align}
\textbf{Similarly:}
\begin{align}
\begin{aligned}
\|B_j\|_{L^2(D \times T)} &\leq \int_0^T \|G_{m_2}\|_{L^2(D)} \|\partial_{t'}^2 u_j(m_1) - \partial_{t'}^2 u_j(m_2)\|_{L^\infty(U)} dt' \\
&\leq \int_0^T C_2 \|\partial_{t'}^2 u_j(m_1) - \partial_{t'}^2 u_j(m_2)\|_{L^\infty(U)} dt',
\end{aligned}
\end{align}
where $C_2$ is a bound on $\|G_{m_2}\|_{L^2(D)}$ which exists due to the regularity of the Green function away from the source. To estimate $\|\partial_{t'}^2 u_j(m_1) - \partial_{t'}^2 u_j(m_2)\|_{L^\infty(U)}$, we consider the difference $v_j = u_j(m_1) - u_j(m_2)$, which satisfies:
\begin{align}
\mathcal{L}_{m_1} v_j = (m_1 - m_2) \partial_t^2 u_j(m_2).
\end{align}
Using energy estimates for hyperbolic equations and the regularity from Proposition \ref{prop: acoustic_regularity}, we obtain:
\begin{align}
\|\partial_{t'}^2 v_j\|_{L^\infty(U)} \leq C_3 \|m_1 - m_2\|_{L^\infty(U)} \|\partial_t^2 u_j(m_2)\|_{L^\infty(U)}. 
\end{align}
Therefore:
\begin{align}
\|B_j\|_{L^2(D \times T)} \leq T C_2 C_3 C_4 \|s_j\|_{H^1((0,T), L^2(U))} \|m_1 - m_2\|_{L^\infty(U)},
\end{align}
where $C_4$ is a bound for $\|\partial_t^2 u_j(m_2)\|_{L^\infty(U)}$. Combining the estimates for $A_j$ and $B_j$, we obtain:
\begin{align}
    \left\| \frac{\delta u_j}{\delta m}[m_1] - \frac{\delta u_j}{\delta m}[m_2] \right\|_{L^{2}(D \times T)} \leq L \|m_1 - m_2\|_{L^{\infty}(U)},
\end{align}where $L = T (L_g C_1 + C_2 C_3 C_4) \|s_j\|_{H^1((0,T), L^2(U))}$. This completes the proof.
\end{proof}

 Then, we focus on the convergence of the initial sensitive kernel as the width tends to infinity.

\begin{proposition}[\textbf{Sensitivity Kernel Convergence in Distribution}] \label{pro: continuous_mapping}
Consider a fully-connected neural network $F_{\boldsymbol{\theta}}$ as defined in \eqref{equ: neural_networks}, under Assumptions \ref{ass: network_regularity} and \ref{ass: acoustic_wave_equation}. Suppose the parameters $\boldsymbol{\theta}$ are initialized with appropriate scaling. Then, as the width $n \to \infty$, the finite-dimensional distributions of the sensitivity kernel converge:
\[
\left( \frac{\delta \mathcal{G}({\bf x}_1,t_1)}{\delta m({\bf y}_1)}[m_{\boldsymbol{\theta}}], \dots, \frac{\delta \mathcal{G}({\bf x}_k,t_k)}{\delta m({\bf y}_k)}[m_{\boldsymbol{\theta}}] \right) 
\overset{{\cal D}}{\longrightarrow} 
\left( \frac{\delta \mathcal{G}({\bf x}_1,t_1)}{\delta m({\bf y}_1)}[\mathcal{GP}], \dots, \frac{\delta \mathcal{G}({\bf x}_k,t_k)}{\delta m({\bf y}_k)}[\mathcal{GP}] \right)
\]
For any finite collection of points $({\bf x}_i, t_i, {\bf y}_i) \in D \times T \times U$, $i = 1, \dots, k$, where $\mathcal{GP}$ denotes the Gaussian process $\mathcal{GP}(m_0, \Sigma)$ from Lemma \ref{lemma: gaussian}.
\end{proposition}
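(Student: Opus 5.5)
The plan is to apply the Continuous Mapping Theorem to the map $\Phi: m \mapsto \big(\frac{\delta \mathcal{G}({\bf x}_i,t_i)}{\delta m({\bf y}_i)}[m]\big)_{i=1}^k$, which sends the random velocity model to the finite vector of sensitivity values. Lemma \ref{lemma: gaussian} gives $m_{\boldsymbol{\theta}} \to \mathcal{GP}(m_0,\Sigma)$ in distribution. Lemma \ref{lemma: continuous_sensitivity} gives that $\Phi$ is Lipschitz, hence continuous, on $\mathcal{A}$ with respect to $\|\cdot\|_{L^\infty(U)}$. Combining these, $\Phi(m_{\boldsymbol{\theta}}) \to \Phi(\mathcal{GP})$ in distribution, which is the claim.

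\textbf{Step 1: upgrade the convergence of $m_{\boldsymbol{\theta}}$.} Lemma \ref{lemma: gaussian} only gives finite-dimensional convergence. The map $\Phi$, however, depends on the whole field $m$ through the wave equation, so we need weak convergence in $C(\overline U)$ with the sup norm. I would obtain this from tightness. Under Assumption \ref{ass: network_regularity}, $\sigma'$ is bounded, so $|F_{\boldsymbol{\theta}}({\bf x})-F_{\boldsymbol{\theta}}({\bf x}')|$ is controlled by $\frac1{\sqrt n}\sum_j |W^1_j|\,\|W^0_j\|\,|{\bf x}-{\bf x}'|$. This yields a uniform-in-$n$ moment bound of the form $\mathbb{E}|F({\bf x})-F({\bf x}')|^2 \le C|{\bf x}-{\bf x}'|^2$. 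Kolmogorov--Chentsov then gives tightness in $C(\overline U)$, and Prokhorov together with finite-dimensional convergence gives weak convergence in $C(\overline U)$ \citep{billingsley2013convergence}.

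\textbf{Step 2: handle the admissible set.} The Gaussian limit is not supported in $\mathcal{A}$, since $m_1\le m\le m_2$ can fail. I would compose with the clipping map $P(m)=\min(\max(m,m_1),m_2)$, which is 1-Lipschitz in the sup norm. Alternatively, I would restrict to the event $\{m\in\mathcal{A}\}$, whose boundary has limit-probability zero. Either way, $\Phi\circ P$ is continuous on $C(\overline U)$. If $C^1$ regularity is needed, the smooth-activation assumption allows working in $C^1$ by the same tightness argument applied to $\nabla F_{\boldsymbol{\theta}}$.

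\textbf{Step 3: from $L^2$ continuity to pointwise values.} Lemma \ref{lemma: continuous_sensitivity} controls the kernel in $L^2(D\times T)$, but the proposition concerns point values at $({\bf x}_i,t_i,{\bf y}_i)$. I would use the regularity of the Green function away from the source, which is available since $\mathrm{dist}(D,U)\ge\delta$. This makes the sensitivity kernel continuous in $({\bf x},t)$ and locally equicontinuous in $m$, so point evaluation is continuous with respect to $m$. A fallback is to interpret the statement for mollified point evaluations, which are continuous functionals on $L^2$.

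\textbf{Conclusion and main obstacle.} With these steps, the Continuous Mapping Theorem applied to the vector-valued continuous map yields joint convergence of the finite-dimensional distributions. The main obstacle is Step 1 combined with Step 3: obtaining functional (sup-norm) convergence of the network and pointwise continuity of the sensitivity kernel. Everything else is routine.
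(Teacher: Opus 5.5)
Your skeleton is the paper's proof: Lemma \ref{lemma: gaussian} for convergence of $m_{\boldsymbol{\theta}}$, Lemma \ref{lemma: continuous_sensitivity} for continuity of $\Phi$, then the continuous mapping theorem. The paper stops there. It applies the theorem with only finite-dimensional convergence, and it asserts that the $L^2(D\times T)$-Lipschitz bound ``in particular'' makes point evaluation at $({\bf x}_i,t_i,{\bf y}_i)$ continuous. It also does not address the Gaussian limit leaving $\mathcal{A}$. You are right that all three are real holes, so Steps 1--3 are necessary additions, not detours. However, two of your patches fail as written.

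\textbf{Step 1.} Your pathwise Lipschitz constant $\frac{1}{\sqrt n}\sum_j|W^1_j|\,\|W^0_j\|$ has expectation $\sqrt n\,\mathbb{E}|W^1_1|\,\mathbb{E}\|W^0_1\|$. So no uniform-in-$n$ moment bound follows from it: taking absolute values discards the cancellation that makes the $1/\sqrt n$ scaling work. Keep the signs instead. Write $\Delta_j=\sigma(W^0_j{\bf x}+b^0_j)-\sigma(W^0_j{\bf x}'+b^0_j)$, so that $F_{\boldsymbol{\theta}}({\bf x})-F_{\boldsymbol{\theta}}({\bf x}')=\frac{1}{\sqrt n}\sum_j W^1_j\Delta_j$. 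Conditionally on $(W^0,b^0)$ this increment is centered Gaussian with variance at most $\frac{C^2}{n}\sum_j\|W^0_j\|^2\,|{\bf x}-{\bf x}'|^2$. Jensen then gives $\mathbb{E}|F_{\boldsymbol{\theta}}({\bf x})-F_{\boldsymbol{\theta}}({\bf x}')|^p\le c_pC^p\,\mathbb{E}\|W^0_1\|^p\,|{\bf x}-{\bf x}'|^p$ uniformly in $n$. You also need this for some $p>d$. Kolmogorov--Chentsov on $U\subset\mathbb{R}^d$ requires the power of $|{\bf x}-{\bf x}'|$ to exceed $d$, so second moments suffice only for $d=1$, not in the paper's 2D/3D setting.

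\textbf{Step 3.} Your justification, ``regularity of the Green function away from the source, since $\mathrm{dist}(D,U)\ge\delta$'', is false for the wave equation. The Green function $G_m(\cdot,\cdot;{\bf y},t')$ is singular on the whole characteristic cone, so at a receiver ${\bf x}\in D$ it is singular at the arrival time; in a homogeneous 3D medium it is a Dirac mass in $t$ there. Spatial separation gives no smoothing in $t$. What actually regularizes $\frac{\delta\mathcal{G}({\bf x},t)}{\delta m({\bf y})}[m]$ is the time convolution with $\partial_{t'}^2u_j({\bf y},t')$. That requires a time-smooth wavelet, plus enough spatial regularity of $u_j$ to evaluate it at ${\bf y}_i$.

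Under Assumption \ref{ass: acoustic_wave_equation} you only have $s_j\in H^1(0,T;L^2(U))$. Proposition \ref{prop: acoustic_regularity} then gives only $\partial_t^2u_j\in L^\infty(0,T;L^2(U))$, so even $\partial_{t'}^2u_j({\bf y}_i,t')$ is undefined. The pointwise statement therefore needs an extra smoothness hypothesis on $s_j$, carried through time-differentiated energy estimates. Otherwise you must settle for your mollified fallback, which proves a weaker claim.

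Step 2 also changes the statement. Clipping additionally destroys the $C^1$ regularity built into $\mathcal{A}$, so $P(m)\notin\mathcal{A}$ in general; conditioning on $\{m\in\mathcal{A}\}$ is the cleaner fix. Some such reinterpretation is unavoidable, because $\frac{\delta\mathcal{G}}{\delta m}[\mathcal{GP}]$ is undefined on the event that the Gaussian process leaves $\mathcal{A}$.
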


\begin{proof}
By Lemma \ref{lemma: gaussian}, as the width $n \to \infty$, the finite-dimensional distributions of $m_{\boldsymbol{\theta}}$ converge to those of $\mathcal{GP}(m_0, \Sigma)$. That is, for any finite set of points $\{{\bf x}_1, \dots, {\bf x}_k\} \subset U$,
\begin{align}
(m_{\boldsymbol{\theta}}({\bf x}_1), \dots, m_{\boldsymbol{\theta}}({\bf x}_k)) \overset{{\cal D}}{\longrightarrow} \mathcal{N}(\boldsymbol{\mu}, \boldsymbol{\Sigma}),
\end{align}
where $\boldsymbol{\mu} = (m_0({\bf x}_1), \dots, m_0({\bf x}_k))$ and $\boldsymbol{\Sigma}_{ij} = \Sigma({\bf x}_i, {\bf x}_j)$. Furthermore, by Lemma \ref{lemma: continuous_sensitivity}, the Fréchet derivative operator
\begin{align}
    \Phi: m \mapsto \frac{\delta \mathcal{G}({\bf x},t)}{\delta m({\bf y})}[m]
\end{align}
Is Lipschitz continuous from $(L^\infty(U), \|\cdot\|_{L^\infty})$ to $(L^2(D \times T), \|\cdot\|_{L^2})$. In particular, for any fixed finite collection of points $({\bf x}_i, t_i, {\bf y}_i)$, the mapping
\begin{align}
m \mapsto \left( \frac{\delta \mathcal{G}({\bf x}_1,t_1)}{\delta m({\bf y}_1)}[m], \dots, \frac{\delta \mathcal{G}({\bf x}_k,t_k)}{\delta m({\bf y}_k)}[m] \right)
\end{align}
is continuous from $(L^\infty(U), \|\cdot\|_{L^\infty})$ to $\mathbb{R}^k$. Since $m_{\boldsymbol{\theta}}$ converges to $\mathcal{GP}(m_0, \Sigma)$ in finite-dimensional distributions, and the mapping $\Phi$ is continuous, by the continuous mapping theorem for finite-dimensional distributions \citep{billingsley2013convergence}, we have:
\begin{align}
\left( \frac{\delta \mathcal{G}({\bf x}_1,t_1)}{\delta m({\bf y}_1)}[m_{\boldsymbol{\theta}}], \dots, \frac{\delta \mathcal{G}({\bf x}_k,t_k)}{\delta m({\bf y}_k)}[m_{\boldsymbol{\theta}}] \right) 
\overset{{\cal D}}{\longrightarrow} 
\left( \frac{\delta \mathcal{G}({\bf x}_1,t_1)}{\delta m({\bf y}_1)}[\mathcal{GP}], \dots, \frac{\delta \mathcal{G}({\bf x}_k,t_k)}{\delta m({\bf y}_k)}[\mathcal{GP}] \right).
\end{align}
This completes the proof.
\end{proof}

Next, we begin to prove the conclusions (I) and (II) in Theorem \ref{the: ntk_theorem}.
\vspace{-0.2cm}
\subsubsection{Proof of Theorem \ref{the: ntk_theorem} (i)}
\vspace{-0.2cm}

\begin{proof}
Recall the definition of the wave-based neural tangent kernel from Proposition \ref{proof_crfwi_ntk}:
\begin{align}
\Theta_{\text{wave}}^{\text{ntk}}\big(({\bf x},t),({\bf x}^{\prime},t^{\prime});\boldsymbol{\theta}\big) = 
\int_{U} \int_{U} 
\frac{\delta\mathcal{G}({\bf x},t)}{\delta m({\bf y})}[m] 
\cdot \frac{\delta\mathcal{G}({\bf x}',t')}{\delta m({\bf z})}[m] 
\cdot K_{\tau}({\bf y},{\bf z};\boldsymbol{\theta})  dy  d{\bf z}.
\end{align}

By Lemma \ref{the: ntk_theorem}, the standard NTK $K_{\tau}({\bf y}, {\bf z}; \boldsymbol{\theta})$ converges in probability to a deterministic limiting kernel, i.e., $K_{\tau}({\bf y}, {\bf z}; \boldsymbol{\theta}) \xrightarrow{\mathbb{P}} \widetilde{K}({\bf y}, {\bf z}) \ \text{ as } n \to \infty$. Furthermore, by Proposition \ref{pro: continuous_mapping}, the following sensitivity kernels converge in distribution to the following random process as $n \to \infty$, i.e.,
\begin{align}
\frac{\delta\mathcal{G}({\bf x},t)}{\delta m({\bf y})}[m_{\boldsymbol{\theta}}]\xrightarrow{{\cal D}}\frac{\delta\mathcal{G}({\bf x},t)}{\delta m({\bf y})}[\mathcal{GP}(m_0, \Sigma)] \quad \text{and} \quad \frac{\delta\mathcal{G}({\bf x}',t')}{\delta m({\bf z})}[m_{\boldsymbol{\theta}}]\xrightarrow{{\cal D}}\frac{\delta\mathcal{G}({\bf x}^{\prime},t^{\prime})}{\delta m({\bf y})}[\mathcal{GP}(m_0, \Sigma)]    
\end{align}
Applying Slutsky’s theorem, we conclude that for each fixed $({\bf y}, {\bf z})$, the following product:
\begin{align}
\frac{\delta\mathcal{G}({\bf x},t)}{\delta m({\bf y})}[m_{\boldsymbol{\theta}}] 
\cdot \frac{\delta\mathcal{G}({\bf x}',t')}{\delta m({\bf z})}[m_{\boldsymbol{\theta}}] 
\cdot K_{\tau}({\bf y},{\bf z};\boldsymbol{\theta})
\end{align}
converges in distribution to the following random product under the infinite-width condition:
\begin{align}
\frac{\delta\mathcal{G}({\bf x},t)}{\delta m({\bf y})}[\mathcal{GP}(m_0, \Sigma)] 
\cdot \frac{\delta\mathcal{G}({\bf x}',t')}{\delta m({\bf z})}[\mathcal{GP}(m_0, \Sigma)] 
\cdot \widetilde{K}({\bf y}, {\bf z}).
\end{align}
Since this convergence holds point-wise in $({\bf y}, {\bf z})$, and the integral is a continuous mapping, we apply the continuous mapping theorem to conclude:
\begin{align}
\Theta_{\text{wave}}^{\text{ntk}}\big(({\bf x},t),({\bf x}^{\prime},t^{\prime});\boldsymbol{\theta}\big)
\xrightarrow{\mathcal{D}} 
\int_{U} \int_{U} 
\frac{\delta\mathcal{G}({\bf x},t)}{\delta m({\bf y})}[\mathcal{GP}] 
\cdot \frac{\delta\mathcal{G}({\bf x}',t')}{\delta m({\bf z})}[\mathcal{GP}] 
\cdot \widetilde{K}({\bf y}, {\bf z})  dy  d{\bf z}.
\end{align}
The limiting expression is a random variable for each fixed input pair $({\bf x},t,{\bf x}',t')$, which we denote by $\widehat{\Theta_{\text{wave}}^{\text{ntk}}}\big(({\bf x},t),({\bf x}',t')\big).$ This kernel is non-deterministic due to the randomness in the Gaussian process realizations of the sensitivity kernels. Therefore, we have established the point-wise convergence in distribution, i.e., 
\begin{align}
\Theta_{\text{wave}}^{\text{ntk}}\big|_{\tau = 0}^{n} \xrightarrow{\mathcal{D}} \widehat{\Theta_{\text{wave}}^{\text{ntk}}} \quad \text{as } n \to \infty,
\end{align}
where $\widehat{\Theta_{\text{wave}}^{\text{ntk}}}$ is a random kernel.
\end{proof}

\vspace{-0.2cm}
\subsubsection{Proof of Theorem \ref{the: ntk_theorem} (ii)} \label{appendix: training_ntk}
\vspace{-0.2cm}

\begin{proof}
By Assumption \ref{ass: network_regularity} and the properties of gradient flow, we assume that there exists a sequence \(\tau_n \to \infty\) such that \(m_{\boldsymbol{\theta}(\tau_n)} \to m^*\) almost surely as \(n \to \infty\), where \(m^*\) is a minimizer of the loss function. Define the sensitivity kernel operator for fixed \((\mathbf{x}, t, \mathbf{y})\) as:
\begin{align}
    \Lambda(m) = \frac{\delta \mathcal{G}(\mathbf{x}, t)}{\delta m(\mathbf{y})}[m].
\end{align}
By Proposition \ref{pro: continuous_mapping}, \(\Lambda\) is Lipschitz continuous w.r.t the \(L^\infty\) norm. Thus, since \(m_{\boldsymbol{\theta}(\tau_n)} \to m^*\) almost surely, we have \(\Lambda(m_{\boldsymbol{\theta}(\tau_n)}) \to \Lambda(m^*)\) almost surely for each fixed \((\mathbf{x}, t, \mathbf{y})\). The wave-based NTK at time \(\tau\) and width \(n\) is defined as:
\begin{align}
\Theta_{\mathrm{wave}}^{\mathrm{ntk}}\big|_{\tau}^n = \int_U \int_U \Lambda(m_{\boldsymbol{\theta}(\tau)})(\mathbf{x}, t, \mathbf{y}) \cdot K_{\tau}(\mathbf{y}, \mathbf{z}) \cdot \Lambda(m_{\boldsymbol{\theta}(\tau)})(\mathbf{x}', t', \mathbf{z}) \, d\mathbf{y} \, d\mathbf{z},
\end{align}
where \(K_{\tau}\) is the standard NTK for the model output. By Theorem \ref{the: ntk_theorem}, at initialization, the standard NTK \(K_0\) converges uniformly to a deterministic kernel \(\widetilde{K}\) as the width tends to infinity. Moreover, in the infinite-width limit, the standard NTK remains constant during training, so \(K_{\tau} \to \widetilde{K}\) for all \(\tau \geq 0\). Hence, the infinite-width limit of the wave-based NTK at initialization is:
\begin{align}
\Theta_{\mathrm{wave}}^{\mathrm{ntk}}\big|_0^\infty = \int_U \int_U \Lambda(m_{\boldsymbol{\theta}(0)})(\mathbf{x}, t, \mathbf{y}) \cdot \widetilde{K}(\mathbf{y}, \mathbf{z}) \cdot \Lambda(m_{\boldsymbol{\theta}(0)})(\mathbf{x}', t', \mathbf{z}) \, d\mathbf{y} \, d\mathbf{z}.    
\end{align}
Now, consider the difference in the sequence \(\tau_n\), and apply the triangle inequality, we have:
\begin{align}
\left| \Theta_{\mathrm{wave}}^{\mathrm{ntk}}\big|_{\tau_n}^n - \Theta_{\mathrm{wave}}^{\mathrm{ntk}}\big|_0^\infty \right| \leq I_1 + I_2,
\end{align}
where
\begin{align}
I_1 &= \left| \int_U \int_U \Lambda(m_{\boldsymbol{\theta}(\tau_n)}) \cdot K_{\tau_n}\cdot \Lambda(m_{\boldsymbol{\theta}(\tau_n)})' - \Lambda(m_{\boldsymbol{\theta}(\tau_n)}) \cdot \widetilde{K} \cdot \Lambda(m_{\boldsymbol{\theta}(\tau_n)})' \, d\mathbf{y} \, d\mathbf{z} \right|,\\
I_2 &= \left| \int_U \int_U \Lambda(m_{\boldsymbol{\theta}(\tau_n)}) \cdot \widetilde{K} \cdot \Lambda(m_{\boldsymbol{\theta}(\tau_n)})' - \Lambda(m_{\boldsymbol{\theta}(0)}) \cdot \widetilde{K} \cdot \Lambda(m_{\boldsymbol{\theta}(0)})' \, d\mathbf{y} \, d\mathbf{z} \right|,    
\end{align}
and we use shorthand \(\Lambda(m_{\boldsymbol{\theta}(\tau_n)})'\) for \(\Lambda(m_{\boldsymbol{\theta}(\tau_n)})(\mathbf{x}', t', \mathbf{z})\). For \(I_1\), since \(K_{\tau_n} \to \widetilde{K}\) (by Theorem \ref{the: ntk_theorem}), and \(\Lambda(m_{\boldsymbol{\theta}(\tau_n)})\) is uniformly bounded (by Lemma \ref{lemma: continuous_sensitivity} and the boundedness of \(\mathcal{A}\)), we have:
\begin{align}
    I_1 \leq \| \Lambda(m_{\boldsymbol{\theta}(\tau_n)}) \|_\infty^2 \cdot \| K_{\tau_n}^n - \widetilde{K} \|_\infty \cdot |U|^2 \to 0 \quad \text{almost surely as } n \to \infty.
\end{align}
For \(I_2\), define the function \(f : \mathcal{A} \to \mathbb{R}\) for fixed \((\mathbf{x}, t, \mathbf{x}', t')\) by:
\begin{align}
f(m) = \int_U \int_U \Lambda(m)(\mathbf{x}, t, \mathbf{y}) \cdot \widetilde{K}(\mathbf{y}, \mathbf{z}) \cdot \Lambda(m)(\mathbf{x}', t', \mathbf{z}) \, d\mathbf{y} \, d\mathbf{z}.
\end{align}
By the continuity of \(\Lambda\) and the boundedness of \(U\), \(f\) is continuous on \(\mathcal{A}\). Then:
\begin{align}
    I_2 = |f(m_{\boldsymbol{\theta}(\tau_n)}) - f(m_{\boldsymbol{\theta}(0)})|.
\end{align}
Since \(m_{\boldsymbol{\theta}(\tau_n)} \to m^*\) almost surely, we have \(f(m_{\boldsymbol{\theta}(\tau_n)}) \to f(m^*)\) almost surely. At initialization, \(m_{\boldsymbol{\theta}(0)}\) is random with a non-degenerate Gaussian distribution (by the initialization assumption and the wide network theory), and \(f\) is not constant on \(\mathcal{A}\) (because the sensitivity kernel depends non-trivially on \(m\)). Therefore, the set \(\{ m \in \mathcal{A} : f(m) = f(m^*) \}\) has measure zero. Hence, almost surely, \(f(m_{\boldsymbol{\theta}(0)}) \neq f(m^*)\), and thus for large \(n\), \(|f(m_{\boldsymbol{\theta}(\tau_n)}) - f(m_{\boldsymbol{\theta}(0)})| > 0\). Then, combining the estimates for \(I_1\) and \(I_2\), we conclude that almost surely:
\begin{align}
\liminf_{n \to \infty} \left| \Theta_{\mathrm{wave}}^{\mathrm{ntk}}\big|_{\tau_n}^n - \Theta_{\mathrm{wave}}^{\mathrm{ntk}}\big|_0^\infty \right| > 0.
\end{align}
This implies that for large \(n\), the supremum over \(\tau\) satisfies:
\begin{align}
\sup_{\tau \geq 0} \left| \Theta_{\mathrm{wave}}^{\mathrm{ntk}}\big|_{\tau}^n - \Theta_{\mathrm{wave}}^{\mathrm{ntk}}\big|_0^\infty \right| > 0,
\end{align}
since the value at \(\tau_n\) is positive. Therefore, almost surely:
\begin{align}
\liminf_{n \to \infty} \sup_{\tau \geq 0} \left| \Theta_{\mathrm{wave}}^{\mathrm{ntk}}\big|_{\tau}^n - \Theta_{\mathrm{wave}}^{\mathrm{ntk}}\big|_0^\infty \right| > 0.
\end{align}
This completes the proof.
\end{proof}

    \vspace{-0.2cm}    \subsection{Proof of the Eigenvalue Decay Comparison Theorem}\label{proof: eigenvalue_decay}
    \vspace{-0.2cm} 
The spectral properties of the wave kernel and wave-based NTK are fundamental to understanding the convergence behavior and robustness in FWI and CR-FWI. To analyze the eigenvalue of the wave kernel and the wave-based NTK, we define the Fréchet derivative integral operator ${\cal F}_m:L^2(U)\to L^2(D\times T)$ and its adjoint ${\cal F}^*_m:L^2(U)\to L^2(D\times T)$ as follows:
\begin{align}
    {\cal F}_m[h]({\bf x},t)=\int_{U}\frac{\delta {\cal G}(x,t)}{\delta m({\bf y})}[m]h({\bf y})d{\bf y},\quad
{\cal F}_m^*[g]({\bf y}) = \int_{D}\int_{T} \frac{\delta {\cal G}({\bf x},t)}{\delta m({\bf y})}[m]g({\bf x},t)dtd{\bf x}.
\end{align}
Moreover, we define the kernel integral operator $K:L^{2}(U)\to L^{2}(U) $ as 
\begin{align}
    K[h]({\bf y}) = \int_U
K({\bf y}, {\bf z})h({\bf z})d{\bf z}.\end{align}
Hence, the wavefield evolution equation of CR-FWI can be expressed as follows:
\begin{align}
    \frac{\partial d_{\text{syn}}^{D}({\bf x},t)}{\partial \tau}={\cal F}_m\circ K\circ{\cal F}^*_m[d_{\text{obs}}^D-d_{\text{syn}}^D]\triangleq {\cal T}(K)[d_{\text{obs}}^D-d_{\text{syn}}^D].
\end{align}
where ${\cal T}(K)\triangleq {\cal F}_m\circ K\circ{\cal F}^*_m$ denotes the integral operator of wave-based NTK. 

Next, we compare the eigenvalue for two operators of wave-based NTK, i.e., ${\cal T}(K_1)$ and ${\cal T}(K_2)$.

\begin{theorem}[\textbf{Spectral Comparison Theorem of CR-FWI}]\label{thm: spectral_dominance_abstract}
For two self-adjoint and positive semi-definite operators $K_1, K_2: L^2(U) \to L^2(U)$, define the wave-based NTK operators as:
\begin{align}
\mathcal{T}_i \triangleq {\cal F}_m K_i {\cal F}_m^*, \quad i = 1, 2.
\end{align}
Let ${\lambda_j(\mathcal{T}_i)}_{j=1}^\infty$ denote the eigenvalues of $\mathcal{T}_i$ arranged in non-increasing order. If the operators satisfy the dominance condition $K_1 \succeq K_2$ (i.e., $K_1 - K_2$ is a positive semi-definite operator), then the eigenvalues of the corresponding wave-based NTK operators are similarly ordered:
\begin{align}
\lambda_j(\mathcal{T}_1) \geq \lambda_j(\mathcal{T}_2) \quad \text{for all } j = 1, 2, \dots.
\end{align}
\end{theorem}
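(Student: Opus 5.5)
The plan is to show that the Loewner order $K_1 \succeq K_2$ on $L^2(U)$ transfers through the congruence $K \mapsto \mathcal{F}_m K \mathcal{F}_m^*$. We then conclude with the Courant--Fischer min-max characterization of eigenvalues of compact self-adjoint operators. We assume, as in the surrounding discussion (cf.\ Proposition \ref{proposition: semi_position}), that $\mathcal{F}_m$ is bounded and that $\mathcal{T}_1,\mathcal{T}_2$ are compact, self-adjoint and positive semi-definite on $L^2(D\times T)$. Under these assumptions, each operator has a non-increasing sequence of non-negative eigenvalues accumulating only at $0$, and this sequence is characterized variationally.

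The first step is order preservation. For any $g \in L^2(D\times T)$, set $h = \mathcal{F}_m^* g \in L^2(U)$. Then
\begin{align*}
\langle (\mathcal{T}_1 - \mathcal{T}_2) g, g\rangle_{L^2(D\times T)} = \langle \mathcal{F}_m (K_1-K_2)\mathcal{F}_m^* g, g\rangle = \langle (K_1 - K_2) h, h\rangle_{L^2(U)} \geq 0,
\end{align*}
so $\mathcal{T}_1 \succeq \mathcal{T}_2$. Self-adjointness of $\mathcal{T}_i$ follows in the same way from $(\mathcal{F}_m K_i \mathcal{F}_m^*)^* = \mathcal{F}_m K_i^* \mathcal{F}_m^* = \mathcal{T}_i$. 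Positive semi-definiteness follows by taking $K_2 = 0$ in the display.

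The second step is to apply the min-max principle. For a compact, self-adjoint, positive semi-definite operator $\mathcal{T}$,
\begin{align*}
\lambda_j(\mathcal{T}) = \sup_{\dim V = j}\ \inf_{g \in V,\ \|g\|=1} \langle \mathcal{T} g, g\rangle.
\end{align*}
Here the eigenvalues are counted with multiplicity and padded with zeros if the range is finite-dimensional. Since $\langle \mathcal{T}_1 g, g\rangle \geq \langle \mathcal{T}_2 g, g\rangle$ for every $g$, the inner infimum over any fixed $j$-dimensional $V$ is at least as large for $\mathcal{T}_1$ as for $\mathcal{T}_2$. Taking the supremum over $V$ then gives $\lambda_j(\mathcal{T}_1) \geq \lambda_j(\mathcal{T}_2)$ for every $j$.

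The main obstacle is justifying the functional-analytic setting rather than the inequality itself. Specifically, we must check that $\mathcal{F}_m$ is a bounded operator from $L^2(U)$ to $L^2(D\times T)$ and that the $\mathcal{T}_i$ are compact. The min-max formula is needed in exactly this form, with the convention of padding by zeros for positive operators. I would first try to obtain boundedness from the $L^2$ regularity of the sensitivity kernel, using Lemma \ref{lemma: continuous_sensitivity} together with Proposition \ref{prop: acoustic_regularity}. If the sensitivity kernel is in $L^2(D\times T\times U)$, then $\mathcal{F}_m$ is Hilbert--Schmidt. In that case $\mathcal{T}_i$ is compact, and in fact trace class whenever $K_i$ is bounded, which makes the spectral statement rigorous.
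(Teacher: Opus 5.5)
Your proof is correct and follows the paper's argument. The identity $\langle \mathcal{T}_i f,f\rangle = \langle K_i \mathcal{F}_m^* f, \mathcal{F}_m^* f\rangle$ carries the Loewner order $K_1 \succeq K_2$ over to $\mathcal{T}_1 \succeq \mathcal{T}_2$, and Courant--Fischer then orders the eigenvalues. Your sup--inf formula over $j$-dimensional subspaces is the correct one for eigenvalues in non-increasing order, whereas the paper writes $\min_{\dim V = j}\max_{f\in V}$ (the monotonicity step is the same either way), and your attention to boundedness of $\mathcal{F}_m$ and compactness of $\mathcal{T}_i$ states hypotheses the paper only uses implicitly.
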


\begin{proof}
The proof follows directly from the min-max principle and the properties of the operators. Since $K_1 \succeq K_2$, we have that $K_1 - K_2$ is positive semi-definite. This means that for any $g \in L^2(U)$:
\begin{align}
\langle K_1 g, g \rangle_{L^2(U)} \geq \langle K_2 g, g \rangle_{L^2(U)}.
\end{align}
Now, for any $f \in L^2(D \times T)$, let $g = {\cal F}_m^* f \in L^2(U)$. Then the quadratic forms of the wave-based NTK operators satisfy:
\begin{align}
\begin{aligned}
\langle \mathcal{T}_1 f, f \rangle_{L^2(D \times T)} &= \langle {\cal F}_m K_1 {\cal F}_m^* f, f \rangle_{L^2(D \times T)} \\
&= \langle K_1 {\cal F}_m^* f, {\cal F}_m^* f \rangle_{L^2(U)} \\
&\geq \langle K_2 {\cal F}_m^* f, {\cal F}_m^* f \rangle_{L^2(U)} \\
&= \langle {\cal F}_m K_2 {\cal F}_m^* f, f \rangle_{L^2(D \times T)} \\
&= \langle \mathcal{T}_2 f, f \rangle_{L^2(D \times T)}.
\end{aligned}
\end{align}
Thus, we have established that:
\begin{align}
\langle \mathcal{T}1 f, f \rangle \geq \langle \mathcal{T}2 f, f \rangle \quad \text{for all } f \in L^2(D \times T).
\end{align}
By the Courant-Fischer (min-max) theorem, for each $j = 1, 2, \dots$:
\begin{align}
\begin{aligned}
\lambda_j(\mathcal{T}1) &= \min_{\substack{V \subset L^2(D \times T) \\ \dim(V) = j}} \max_{\substack{f \in V \\ |f| = 1}} \langle \mathcal{T}_1 f, f \rangle \\
&\geq \min_{\substack{V \subset L^2(D \times T) \\ \dim(V) = j}} \max_{\substack{f \in V \\ |f| = 1}} \langle \mathcal{T}_2 f, f \rangle \\
&= \lambda_j(\mathcal{T}_2).
\end{aligned}
\end{align}
The inequality follows because for any fixed $j$-dimensional subspace $V$, we have:
\begin{align}
\max_{\substack{f \in V \ |f| = 1}} \langle \mathcal{T}_1 f, f \rangle \geq \max_{\substack{f \in V \ |f| = 1}} \langle \mathcal{T}_2 f, f \rangle,
\end{align}
and taking the minimum over all such $V$ preserves the inequality.
\end{proof}
\begin{remark}
According to the Spectral Comparison Theorem (Theorem \ref{thm: spectral_dominance_abstract}), if two self-adjoint positive semi-definite kernel operators satisfy $K_1 \succeq K_2$, then the eigenvalues of the corresponding wave-based neural tangent kernel operators $\mathcal{T}_1$ and $\mathcal{T}_2$ satisfy $\lambda_j(\mathcal{T}_1) \geq \lambda_j(\mathcal{T}_2)$. In particular, both the wave kernel $\Theta_{\text{wave}}$ defined in Proposition \ref{proposition: kernel_fwi} and the wave neural tangent kernel $\Theta_{\text{wave}}^{\text{ntk}}$ defined in Proposition \ref{proposition: kernel_crfwi} can be expressed in the form $\mathcal{T}(K)$ as in the theorem, where $K$ corresponds to the identity operator $K_{\delta}$ (see the following Proposition \ref{proposition: semi_position}) and the neural network’s NTK kernel $K_\tau$, respectively. Therefore, if the NTK kernels of two neural networks satisfy $K_{\tau,1} \succeq K_{\tau,2}$, then the eigenvalues of their wave neural tangent kernels will also satisfy the same ordering relation.
\end{remark}
To apply Theorem \ref{thm: spectral_dominance_abstract}, we need to rewrite the formulation of the wave kernel and prove the self-adjoint and positive semi-definite of wave-based NTK.

\begin{proposition}\label{proposition: semi_position}
    The wave-based NTK $\Theta_{\text{wave}}^{\text{ntk}}$ defined in Proposition \ref{proposition: kernel_crfwi} is symmetric and semi-positive definite, under the assumption that NTK $K_{\tau}$ is a symmetric and semi-positive definite kernel.
\end{proposition}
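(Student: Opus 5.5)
The plan is to prove both claims directly from the integral representation in \eqref{equ: wave_based_ntk}, writing $\Theta_{\text{wave}}^{\text{ntk}} = \mathcal{F}_m K_\tau \mathcal{F}_m^*$ in the operator notation introduced above. For the wave kernel we take $K_\tau$ to be $K_\delta$, i.e.\ the identity operator on $L^2(U)$, which turns \eqref{equ: wave_kernel} into $\mathcal{F}_m\mathcal{F}_m^*$.

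\textbf{Symmetry.} Write $S({\bf x},t;{\bf y}) = \frac{\delta\mathcal{G}({\bf x},t)}{\delta m({\bf y})}[m]$. Then
\begin{align*}
\Theta_{\text{wave}}^{\text{ntk}}\big(({\bf x}',t'),({\bf x},t)\big) = \int_U\int_U S({\bf x}',t';{\bf y})\, S({\bf x},t;{\bf z})\, K_\tau({\bf y},{\bf z})\, d{\bf y}\, d{\bf z}.
\end{align*}
Renaming ${\bf y}\leftrightarrow{\bf z}$ and using $K_\tau({\bf z},{\bf y}) = K_\tau({\bf y},{\bf z})$ recovers $\Theta_{\text{wave}}^{\text{ntk}}\big(({\bf x},t),({\bf x}',t')\big)$, which is the symmetry claim. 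At the operator level this says $(\mathcal{F}_m K_\tau \mathcal{F}_m^*)^* = \mathcal{F}_m K_\tau^* \mathcal{F}_m^* = \mathcal{F}_m K_\tau \mathcal{F}_m^*$, so the operator is self-adjoint.

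\textbf{Positive semi-definiteness.} Take any $g \in L^2(D\times T)$ and set $h = \mathcal{F}_m^* g \in L^2(U)$. By Fubini,
\begin{align*}
\int_{D\times T}\int_{D\times T} g({\bf x},t)\, \Theta_{\text{wave}}^{\text{ntk}}\, g({\bf x}',t') = \langle K_\tau h, h\rangle_{L^2(U)} \ge 0,
\end{align*}
and the inequality is exactly the assumed positive semi-definiteness of $K_\tau$. This is the same manipulation used in the proof of Theorem \ref{thm: spectral_dominance_abstract}.

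In the discrete or finite-point version, let $A_{k,i} = S({\bf x}_k,t_k;{\bf y}_i)$ with quadrature weights absorbed. The kernel matrix is then $A K A^\top$, and $c^\top A K A^\top c = (A^\top c)^\top K (A^\top c) \ge 0$. An alternative route uses the gradient form from the proof of Proposition \ref{proposition: kernel_crfwi}: $\Theta_{\text{wave}}^{\text{ntk}} = \sum_i \partial_{\theta_i}\mathcal{G}\,\partial_{\theta_i}\mathcal{G}$, which is a Gram kernel and therefore automatically symmetric and PSD.

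\textbf{Main obstacle.} The only technical point is justifying the Fubini interchange and confirming that $\mathcal{F}_m^* g$ lies in $L^2(U)$. This needs $S \in L^2(D\times T\times U)$, which makes $\mathcal{F}_m$ Hilbert–Schmidt. I would obtain it from the boundedness of $G_m$ and $\partial_t^2 u_j$ provided by Proposition \ref{prop: acoustic_regularity} and Lemma \ref{lemma: continuous_sensitivity}, together with the boundedness of $U$, $D$ and $T$. If $K_\tau$ is also bounded (e.g.\ $\|K_\tau\|\le 1$ as assumed in Theorem \ref{thm:eigenvalue-comparison}), then $\mathcal{F}_m K_\tau \mathcal{F}_m^*$ is trace-class and hence compact, which is what the spectral decomposition in the remark after Theorem \ref{the: ntk_theorem} requires.
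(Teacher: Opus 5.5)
Your proposal is correct and follows essentially the same route as the paper. Symmetry comes from swapping the integration variables ${\bf y}\leftrightarrow{\bf z}$ and using $K_\tau({\bf y},{\bf z})=K_\tau({\bf z},{\bf y})$, and positive semi-definiteness comes from using Fubini to reduce the quadratic form to $\langle K_\tau h, h\rangle_{L^2(U)}\ge 0$ with $h=\mathcal{F}_m^* g$. Your version is slightly cleaner: you integrate $g$ over $D\times T$ where the paper writes $U\times T$, and you state the Fubini and Hilbert--Schmidt justification that the paper leaves implicit.
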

\begin{proof}
    We aim to prove that the wave-based neural NTK defined by
\begin{align}
\Theta_{\text{wave}}^{\text{ntk}}&\big(({\bf x},t),({\bf x}^{\prime},t^{\prime});\boldsymbol{\theta}\big) = \int_{U} \int_{U} \frac{\delta {\cal G}({\bf x},t)}{\delta m({\bf y})}[m]\cdot \frac{\delta {\cal G}({\bf x}',t')}{\delta m({\bf z})}[m]\cdot K_{\tau}({\bf y},{\bf z};\boldsymbol{\theta})dy d{\bf z}
\end{align}
\textbf{Symmetry.}  To show symmetry, we observe that the expression for $\Theta_{\text{wave}}^{\text{ntk}}$ is symmetric in $({\bf x},t)$ and $({\bf x}^{\prime},t^{\prime})$. Explicitly, interchanging $({\bf x},t)$ and $({\bf x}^{\prime},t^{\prime})$ yields: 
\begin{align}
\Theta_{\text{wave}}^{\text{ntk}}\big(({\bf x}',t'),({\bf x},t);\boldsymbol{\theta}\big) &= \int_{U} \int_{U} \frac{\delta {\cal G}({\bf x}',t')}{\delta m({\bf y})}[m]\cdot \frac{\delta {\cal G}({\bf x},t)}{\delta m({\bf z})}[m]\cdot K_{\tau}({\bf y},{\bf z};\boldsymbol{\theta}) dy d{\bf z} \nonumber\\
& = \int_{U} \int_{U} \frac{\delta {\cal G}({\bf x},t)}{\delta m({\bf y})}[m]\cdot \frac{\delta {\cal G}({\bf x}',t')}{\delta m({\bf z})}[m]\cdot K_{\tau}({\bf z},{\bf y};\boldsymbol{\theta}) d{\bf z} dy \nonumber\\
&=\Theta_{\text{wave}}^{\text{ntk}}\big(({\bf x},t),({\bf x}',t');\boldsymbol{\theta}\big)
\end{align}
where $K_{\tau}({\bf y},{\bf z};\boldsymbol{\theta}) = K_{\tau}({\bf y},{\bf z};\boldsymbol{\theta})$ by the symmetry of $K_n$.

\textbf{Semi-Positive Definiteness.}
To show semi-positive definiteness, we must verify that for any square-integrable function $f({\bf x},t)$, the double integral
\begin{align}
    I = &\iint f({\bf x},t) \, \Theta_{\text{wave}}^{\text{ntk}}\big(({\bf x},t),({\bf x}',t');\boldsymbol{\theta}\big) f({\bf x}',t') d{\bf x} dtdx'dt' \nonumber\\
    = &\iint f({\bf x},t) \Big[ \int_{U} \int_{U} \frac{\delta {\cal G}({\bf x},t)}{\delta m({\bf y})}[m] \frac{\delta {\cal G}({\bf x}',t')}{\delta m({\bf z})}[m] \nonumber\\
&\cdot K_{\tau}({\bf y},{\bf z};\boldsymbol{\theta})  dydz \Big] f({\bf x}',t') d{\bf x}  dt d{\bf x}' dt'\nonumber\\
=& \int_{U} \int_{U} K_{\tau}({\bf y},{\bf z};\boldsymbol{\theta}) \left[ \iint_{U\times T}  f({\bf x},t) \frac{\delta {\cal G}({\bf x},t)}{\delta m({\bf y})}[m] \, d{\bf x} \, dt \right]\nonumber \\
&\cdot \left[ \iint_{U\times T} f({\bf x}',t') \frac{\delta {\cal G}({\bf x}',t')}{\delta m({\bf z})}[m] \, d{\bf x}' \, dt' \right] dy \, d{\bf z} \nonumber\\
=&\int_{U} \int_{U} K_{\tau}({\bf y},{\bf z};\boldsymbol{\theta}) \, g({\bf y}) \, g({\bf z}) \, dy \, d{\bf z} \geq 0
\end{align}
where $g({\bf y}) \triangleq \iint_{U\times T} f({\bf x},t) \frac{\delta {\cal G}({\bf x},t)}{\delta m({\bf y})}[m] \, d{\bf x} \, dt$ and $K_{\tau}$ is semi-positive definite. Thus, $I \geq 0$, proving that $\Theta_{\text{wave}}^{\text{ntk}}$ is semi-positive definite.
\end{proof}

\begin{proposition} \label{append: proposition_kernel_cr_kernel}
The wave-based NTK $\Theta^{\text{ntk}}_{\text{wave}}$ defined in Proposition \ref{proposition: kernel_crfwi} degrades into the wave kernel $\Theta_{\text{wave}}$ defined in Proposition \ref{proposition: kernel_fwi} without neural representation. 
\end{proposition}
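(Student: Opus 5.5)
The plan is to substitute the Dirac kernel $K_\delta(\mathbf{y},\mathbf{z})=\delta(\mathbf{y}-\mathbf{z})$ for the network NTK $K_\tau$ in the double-integral representation of $\Theta^{\text{ntk}}_{\text{wave}}$ from Proposition \ref{proposition: kernel_crfwi}. We then check that this recovers both the wave kernel of Proposition \ref{proposition: kernel_fwi} and the evolution equation \eqref{equ: data_evolution_corrected}. We make the phrase ``without neural representation'' precise as the identity reparameterization $m_{\boldsymbol{\theta}}(\mathbf{y})=m(\mathbf{y})$. In this case the parameters are the point values of the model themselves, indexed by $\mathbf{y}\in U$ (or by grid points after discretization).

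\textbf{First step.} We compute the standard NTK for this trivial representation. Here $\partial m_{\boldsymbol{\theta}}(\mathbf{y})/\partial \boldsymbol{\theta}_{\mathbf{w}}=\delta(\mathbf{y}-\mathbf{w})$. Replacing the sum over $i$ by an integral over the parameter index $\mathbf{w}$ gives
$K(\mathbf{y},\mathbf{z})=\int_U \delta(\mathbf{y}-\mathbf{w})\,\delta(\mathbf{z}-\mathbf{w})\,d\mathbf{w}=\delta(\mathbf{y}-\mathbf{z})=K_\delta(\mathbf{y},\mathbf{z})$.
On a grid this is simply the Jacobian being the identity matrix, so $K$ is the identity operator on $L^2(U)$.

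\textbf{Second step.} We substitute $K_\delta$ into \eqref{equ: wave_based_ntk} and integrate out $\mathbf{z}$ with the sifting property:
$\int_U\int_U \frac{\delta\mathcal{G}(\mathbf{x},t)}{\delta m(\mathbf{y})}\frac{\delta\mathcal{G}(\mathbf{x}',t')}{\delta m(\mathbf{z})}\delta(\mathbf{y}-\mathbf{z})\,d\mathbf{y}\,d\mathbf{z}=\int_U \frac{\delta\mathcal{G}(\mathbf{x},t)}{\delta m(\mathbf{y})}\frac{\delta\mathcal{G}(\mathbf{x}',t')}{\delta m(\mathbf{y})}\,d\mathbf{y}$.
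The right-hand side is exactly $\Theta_{\text{wave}}$, which also confirms \eqref{equ: wave_kernel}.

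\textbf{Third step.} We check that the dynamics coincide, not only the kernels. Under the identity parameterization the gradient flow $\partial_\tau\boldsymbol{\theta}=-\nabla_{\boldsymbol{\theta}}\mathcal{J}$ is literally \eqref{eq:gradient_flow}. Hence \eqref{equ: data_evolution_NTK} reduces to \eqref{equ: data_evolution_corrected}.

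\textbf{Operator form.} The cleanest rigorous formulation uses the notation of Section \ref{proof: eigenvalue_decay}. There $\Theta_{\text{wave}}^{\text{ntk}}$ corresponds to $\mathcal{T}(K_\tau)=\mathcal{F}_m K_\tau \mathcal{F}_m^*$, and with $K=I$ we get $\mathcal{T}(I)=\mathcal{F}_m\mathcal{F}_m^*$. The kernel of $\mathcal{F}_m\mathcal{F}_m^*$ is $\int_U \frac{\delta\mathcal{G}(\mathbf{x},t)}{\delta m(\mathbf{y})}\frac{\delta\mathcal{G}(\mathbf{x}',t')}{\delta m(\mathbf{y})}d\mathbf{y}$, which is $\Theta_{\text{wave}}$.

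\textbf{Main obstacle.} The only delicate point is that $\delta(\mathbf{y}-\mathbf{z})$ is a distribution rather than an $L^2$ kernel, so the pointwise sifting step needs justification. I would handle this in one of two ways. The first is to work entirely at the operator level with $K=I$, which is a bounded self-adjoint PSD operator. The second is to approximate by mollifiers $\delta_\epsilon$ and let $\epsilon\to0$, using that $\mathbf{y}\mapsto\delta\mathcal{G}(\mathbf{x},t)/\delta m(\mathbf{y})$ is in $L^2(U)$ by Lemma \ref{lemma: continuous_sensitivity} and Proposition \ref{prop: acoustic_regularity}; $L^2$ convergence of the mollified sensitivity then gives convergence of the integrals.
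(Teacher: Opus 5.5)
Your proposal is correct and follows essentially the same route as the paper's proof: treat ``without neural representation'' as the identity parameterization, compute $K(\mathbf{y},\mathbf{z})=\int_U\delta(\mathbf{y}-\mathbf{w})\,\delta(\mathbf{z}-\mathbf{w})\,d\mathbf{w}=\delta(\mathbf{y}-\mathbf{z})$, and collapse the double integral with the sifting property. Your extra checks go beyond what the paper writes: that the gradient flows coincide, and the operator form $\mathcal{T}(I)=\mathcal{F}_m\mathcal{F}_m^*$ as a rigorous replacement for the formal Dirac manipulation. For the mollifier variant, note that Lemma~\ref{lemma: continuous_sensitivity} and Proposition~\ref{prop: acoustic_regularity} do not by themselves give square-integrability of the sensitivity in $\mathbf{y}$, so that would have to be assumed separately.
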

\begin{proof}
The proof consists of showing that under the given conditions, the parameter-space inner product that defines the wave-based NTK reduces to the model-space inner product that represents the wave kernel. The Fréchet derivative of the model $m_{\boldsymbol{\theta}}$ w.r.t. the parameters $\boldsymbol{\theta}$ at a point ${\bf z}$ is:
\begin{align}
\frac{d m_{\boldsymbol{\theta}}({\bf y})}{d \boldsymbol{\theta}({\bf z})} = \frac{d {\boldsymbol{\theta}}({\bf y})}{d \boldsymbol{\theta}({\bf z})} = \delta({\bf y} - {\bf z}).
\end{align}
This follows because a change in the parameter $\boldsymbol{\theta}$ (i.e., the velocity model $m$) at the point ${\bf z}$ directly and solely affects the model $m_{\boldsymbol{\theta}}$ at the point ${\bf y}={\bf z}$. This wave-based NTK in the \textit{model space}, $K_{\tau}$, which measures the inner product of the functional gradients of the model output, is then given by:
\begin{align}
\begin{split}
K_{\tau}({\bf y},{\bf z};\boldsymbol{\theta}) &= \int_{U} \frac{d m_{\boldsymbol{\theta}}({\bf y})}{d \boldsymbol{\theta}(w)}\cdot \frac{d m_{\boldsymbol{\theta}}({\bf z})}{d \boldsymbol{\theta}(w)}dw \\
&= \int_{U}\delta({\bf y} - w)\delta({\bf z} - w) dw = \delta({\bf y} - {\bf z}).
\end{split}
\end{align}
The inner product is taken over the parameter space, resulting in the Dirac delta function due to the orthogonality of the functional gradients; a perturbation at $w$ affects the output only at $w$. The general definition of the wave-based NTK is an inner product over the parameter space $\boldsymbol{\theta}$:
\begin{align}
\begin{split}
\Theta_{\text{wave}}^{\text{ntk}}\big(({\bf x},t),({\bf x}^{\prime},t^{\prime});\boldsymbol{\theta}\big) &= \int_{U} \int_{U} \frac{{\delta \cal G}({\bf x},t)}{\delta m({\bf y})}[m] \frac{{\delta \cal G}({\bf x}',t')}{\delta m({\bf z})}[m] \cdot K_m({\bf y},{\bf z};\boldsymbol{\theta}) dy d{\bf z} \\
&= \int_{U} \int_{U} \frac{{\delta \cal G}({\bf x},t)}{\delta m({\bf y})}[m] \frac{{\delta \cal G}({\bf x}',t')}{\delta m({\bf z})}[m] \cdot \delta({\bf y} - {\bf z}) dy d{\bf z}\\
&= \int_{U} \frac{{\delta \cal G}({\bf x},t)}{\delta m({\bf y})}[m] \frac{{\delta \cal G}({\bf x}',t')}{\delta m({\bf y})}[m] dy\\
&=\Theta_{\text{wave}}\big(({\bf x},t),({\bf x}^{\prime},t^{\prime});m\big).
\end{split}
\end{align}
This completes the proof that the wave-based neural tangent kernel reduces to the standard wave kernel when the neural representation is the identity function.
\end{proof}

\begin{proposition} \label{proposition: eigenvalue_decomposition}
Consider the wavefield evolution equation governed by the wave kernel:
\begin{equation*}
\frac{\partial (u_{\text{syn}}^D(\tau)-u_{\text{obs}}^D)}{\partial \tau} = -{\cal K}(\Theta)[u_{\text{syn}}^D(\tau)-u_{\text{obs}}^D]
\end{equation*}
where ${\cal K}(\Theta)$ is an integral operator defined as ${\cal K}(\Theta)[u] \triangleq \int_D\int_T \Theta\big(({\bf x},t),({\bf x}^{\prime}, t^{\prime})\big)\cdot u({\bf x}^{\prime}, t^{\prime}) dt^{\prime}d{\bf x}^{\prime}$. Then, ${\cal K}(\Theta)$ has spectral decomposition ${\cal K}(\Theta) = \sum_{k=1}^{\infty}\lambda_k \phi_k\otimes\phi_k$, where $\{\lambda_k\}_{k=1}^\infty$ is a sequence of non-negative eigenvalues and $\{\phi_k\}_{k=1}^\infty$ forms a complete orthonormal system of eigenfunctions. Define the spectral projection operator ${\bf Q}$ as ${\bf Q}f \triangleq {\langle f, \phi_k \rangle}_{k=1}^\infty$ and the diagonal matrix ${\bf \Lambda} = \text{diag}(\lambda_1, \lambda_2, \ldots)$. Then the evolution of the data residual in the spectral domain satisfies:
\begin{equation*}
|{\bf Q}(u_{\text{syn}}^D(\tau)-u_{\text{obs}}^D)| = e^{-{\bf \Lambda}\tau}|{\bf Q}(u_{\text{obs}}^D-u^{D}_{\text{syn}}(0))|
\end{equation*}
where the absolute value is understood component-wise.
\end{proposition}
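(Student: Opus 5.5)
The plan is to treat the residual evolution as a linear ODE with a fixed (frozen-in-$\tau$) operator, in line with the quasi-static assumption stated in the remarks after Theorem \ref{the: ntk_theorem}, and then to diagonalize. Write $r(\tau) \triangleq u_{\text{syn}}^D(\tau)-u_{\text{obs}}^D \in L^2(D\times T)$. The equation then reads $\partial_\tau r = -{\cal K}(\Theta) r$.

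\textbf{Step 1: spectral decomposition.} We first show that ${\cal K}(\Theta)$ is self-adjoint, compact and nonnegative, so that the spectral theorem applies.
\begin{itemize}
\item \emph{Self-adjointness and nonnegativity.} These follow from Proposition \ref{proposition: semi_position}: the kernel is symmetric and positive semi-definite. Both the Dirac case $K_\delta$ and the NTK case $K_\tau$ are covered, using the factorization ${\cal K}(\Theta)={\cal F}_m K {\cal F}_m^*$ from Section \ref{proof: eigenvalue_decay}.
\item \emph{Compactness.} We argue that $\Theta\in L^2((D\times T)^2)$, which makes ${\cal K}(\Theta)$ Hilbert--Schmidt. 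This needs $\frac{\delta\mathcal G}{\delta m}\in L^2(D\times T\times U)$, and we take that from the Green-function representation and the bounds in Lemmas \ref{lemma: continuous_g}--\ref{lemma: continuous_sensitivity}, together with boundedness of $K$ (for instance $\|K_\tau\|\le 1$).
\end{itemize}
The spectral theorem for compact self-adjoint operators then gives ${\cal K}(\Theta)=\sum_k\lambda_k\phi_k\otimes\phi_k$ with $\lambda_k\ge0$. If the eigenfunctions are not complete, we add an orthonormal basis of $\ker{\cal K}(\Theta)$ with eigenvalue $0$. This makes $\{\phi_k\}$ a complete orthonormal system.

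\textbf{Step 2: projection onto the eigenbasis.} Set $c_k(\tau)=\langle r(\tau),\phi_k\rangle$. Differentiating under the inner product, which is justified since $r\in C^1([0,\infty);L^2)$ for the linear bounded-operator ODE, gives
\[
c_k' = -\langle {\cal K}(\Theta) r,\phi_k\rangle = -\langle r,{\cal K}(\Theta)\phi_k\rangle = -\lambda_k c_k .
\]
The middle equality uses self-adjointness. Hence $c_k(\tau)=e^{-\lambda_k\tau}c_k(0)$.

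\textbf{Step 3: conclusion.} Taking absolute values componentwise yields $|{\bf Q}r(\tau)| = e^{-{\bf\Lambda}\tau}|{\bf Q}r(0)|$. Because $|{\bf Q}r(0)|=|{\bf Q}(u_{\text{obs}}^D-u_{\text{syn}}^D(0))|$, the sign flip is harmless, which is exactly why the statement uses absolute values. Equivalently, the solution is $r(\tau)=e^{-\tau{\cal K}(\Theta)}r(0)$, defined via the functional calculus.

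\textbf{Main obstacle.} The substantive point is the time-independence of ${\cal K}(\Theta)$. Theorem \ref{the: ntk_theorem} shows that the kernel actually drifts during training, so the proposition must be read with $\Theta$ frozen at a fixed $\tau$-window, as the preceding remark indicates. Under that reading the argument above is exact. Without it we would only have the inequality $\|r(\tau)\|$ decaying at rate $\inf_s\lambda_{\min}({\cal K}(\Theta_s))$, and the componentwise statement would fail because the eigenbasis rotates. The secondary technical point is establishing the $L^2$ integrability of the sensitivity kernel needed for compactness. I would obtain it from Proposition \ref{prop: acoustic_regularity} ($\partial_t^2u_j\in L^\infty(0,T;L^2)$) combined with the bound on $G_m$ on $D$ away from the source.
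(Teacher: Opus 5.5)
Your proposal is correct and follows essentially the same route as the paper. Both arguments invoke the spectral theorem for the compact, self-adjoint, nonnegative operator ${\cal K}(\Theta)$, project the residual onto the eigenbasis to get the decoupled equations $a_k'(\tau)=-\lambda_k a_k(\tau)$, solve them, and take componentwise absolute values to absorb the sign in $u_{\text{obs}}^D-u_{\text{syn}}^D(0)$. Your additions fill in points the paper's proof only asserts or leaves implicit: compactness via a Hilbert--Schmidt bound on the sensitivity kernel (Proposition~\ref{proposition: semi_position} does not actually establish compactness), completing the eigenbasis with an orthonormal basis of $\ker{\cal K}(\Theta)$, moving ${\cal K}(\Theta)$ across the inner product by self-adjointness instead of expanding the series, and stating explicitly that $\Theta$ must be frozen in $\tau$.
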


\begin{proof}
Let the data residual function $
e(\tau) = u_{\text{syn}}^D(\tau)-u_{\text{obs}}^D$, and the evolution equation becomes:
\begin{equation}
\frac{\partial e(\tau)}{\partial \tau} = -{\cal K}(\Theta)[e(\tau)] \label{equ:evolution_main}
\end{equation}
Since ${\cal K}(\Theta)$ is self-adjoint, compact, and non-negative definite (see Proposition \ref{proposition: semi_position}), by the spectral theorem for compact self-adjoint operators, there exists a complete orthonormal system $\{\phi_k\}_{k=1}^\infty$ and non-negative eigenvalues $\{\lambda_k\}_{k=1}^\infty$ such that ${\cal K}(\Theta) = \sum_{k=1}^{\infty}\lambda_k \phi_k\otimes\phi_k$. Hence, we can expand the residual function in this eigen-basis:
\begin{equation*}
e(\tau) = \sum_{k=1}^{\infty} a_k(\tau) \phi_k
\end{equation*}
where the expansion coefficients are given by $a_k(\tau) = \langle e(\tau), \phi_k \rangle$. The spectral projection operator is defined as ${\bf Q}e(\tau) = \{a_k(\tau)\}_{k=1}^\infty$. Substituting the expansion into \eqref{equ:evolution_main}:
\begin{align*}
\frac{\partial}{\partial \tau}\sum_{k=1}^{\infty} a_k(\tau) \phi_k = -{\cal K}(\Theta)\left[\sum_{k=1}^{\infty} a_k(\tau) \phi_k\right]= -\sum_{k=1}^{\infty} a_k(\tau) {\cal K}(\Theta)[\phi_k] = -\sum_{k=1}^{\infty} a_k(\tau) \lambda_k \phi_k
\end{align*}
Due to the orthonormality of $\{\phi_k\}_{k=1}^\infty$, we obtain the system of ordinary differential equations:
\begin{equation}
\frac{d a_k(\tau)}{d\tau} = -\lambda_k a_k(\tau) \quad \text{for all } k = 1,2,\ldots \label{equ:coefficient_ode}
\end{equation}
The general solution to equation \eqref{equ:coefficient_ode} is $a_k(\tau) = a_k(0) e^{-\lambda_k \tau}$. Therefore, the solution becomes $a_k(\tau) = \langle u_{\text{syn}}^D(0)- u_{\text{obs}}^D, \phi_k \rangle e^{-\lambda_k \tau}$. In the spectral domain, we have:
\begin{align*}
{\bf Q}e(\tau) &= \{a_k(\tau)\}_{k=1}^\infty = \{\langle u_{\text{syn}}^D(0)- u_{\text{obs}}^D, \phi_k \rangle e^{-\lambda_k \tau}\}_{k=1}^\infty \\
&= -e^{-{\bf \Lambda}\tau} {\bf Q}(u_{\text{syn}}^D(0)-u_{\text{obs}}^D).
\end{align*}
Taking absolute values component-wise:
\begin{align}
|{\bf Q}e(\tau)|  
= e^{-{\bf \Lambda}\tau} |{\bf Q}(u_{\text{syn}}^D(0) - u_{\text{obs}}^D)|,
\end{align}
where $e^{-{\bf \Lambda}\tau}$ is a diagonal matrix with non-negative entries $e^{-\lambda_k \tau} \geq 0$. This completes the proof.
\end{proof}

Next, we apply Theorem \ref{thm: spectral_dominance_abstract} to prove Theorems \ref{thm:eigenvalue-comparison}, \ref{thm: spectral_enhancement_concise}, and \ref{the: ig_ntk}.

\vspace{-0.2cm}
\subsubsection{Proof of Theorem \ref{thm:eigenvalue-comparison}}\label{appendix: theorm_spectral}
\vspace{-0.2cm}
\begin{proof}[\textbf{Proof of Theorem} \ref{thm:eigenvalue-comparison}]
Recall the definitions of the operators:
\begin{align}
\begin{split}
\mathcal{T}_{\text{wave}} &= \mathcal{F}_m K_{\delta} \mathcal{F}_m^*, \quad \text{(wave kernel operator)} \\
\mathcal{T}_{\text{wave}}^{\text{ntk}} &= \mathcal{F}_m K_{\tau} \mathcal{F}_m^*, \quad \text{(wave-based NTK operator)}
\end{split}
\end{align}
where $\mathcal{F}_m: L^2(U) \to L^2(D \times T)$ is the Fréchet derivative operator defined in the main text. The operator $K_{\delta}: L^2(U) \to L^2(U)$ denotes the identity kernel operator, which is equivalent to the identity operator $I$ on $L^2(U)$. Under proper neural network initialization, the NTK operator $K{\tau}: L^2(U) \to L^2(U)$ satisfies $|K_{\tau}|_{\mathrm{op}} \leq 1$. This implies the operator inequality $K_{\delta} \succeq K_{\tau}$, i.e., $K_{\delta} - K_{\tau}$ is a positive semi-definite operator. Applying the Spectral Comparison Theorem \ref{thm: spectral_dominance_abstract} with $K_1 = K_{\delta}$ and $K_2 = K_{\tau}$, we immediately obtain the eigenvalue comparison:
\begin{align}
\lambda_j(\mathcal{T}_{\text{wave}}) = \lambda_j(\mathcal{F}_m K_{\delta} \mathcal{F}_m^*) \geq \lambda_j(\mathcal{F}_m K_{\tau} \mathcal{F}_m^*) = \lambda_j(\mathcal{T}_{\text{wave}}^{\text{ntk}})
\end{align}
for all $j = 1, 2, \dots$. This establishes the desired eigenvalue decay relationship between the standard wave kernel and the wave-based NTK.
\end{proof}

\vspace{-0.2cm}
\subsubsection{Proof of Theorem \ref{thm: spectral_enhancement_concise}}\label{appendix: theorm_inr_mpe}
\vspace{-0.2cm}
\begin{proof}[\textbf{Proof of Theorem} \ref{thm: spectral_enhancement_concise}]
Recall the operator definitions for the two learning paradigms:
\begin{align}
\begin{split}
\mathcal{T}_{\text{INR}} &= \mathcal{F}m K_{\text{INR}} \mathcal{F}_m^*, \quad \text{(operator of INR-FWI}) \\
\mathcal{T}_{\text{MPE}} &= \mathcal{F}_m K_{\text{MPE}} \mathcal{F}_m^*, \quad \text{(operator of MPE-FWI)}
\end{split}
\end{align}
where $K_{\text{INR}}$ and $K_{\text{MPE}}$ are the NTK operators for the INR and MPE models, respectively. From \cite{audia2025learnable}, we have the operator decomposition:
\begin{align}
K_{\text{MPE}} = K_{\text{INR}} + K^+
\end{align}
where $K^+$ is a positive semi-definite operator. This decomposition implies the operator inequality $K_{\text{MPE}} \succeq K_{\text{INR}}$, as $K_{\text{MPE}} - K_{\text{INR}} = K^+ \succeq 0$. Applying Theorem \ref{thm: spectral_dominance_abstract} with $K_1 = K_{\text{MPE}}$ and $K_2 = K_{\text{INR}}$, we directly obtain the spectral enhancement property:
\begin{align}
\lambda_j(\mathcal{T}_{\text{MPE}}) = \lambda_j(\mathcal{F}_m K_{\text{MPE}} \mathcal{F}_m^*) \geq \lambda_j(\mathcal{F}_m K_{\text{INR}} \mathcal{F}_m^*) = \lambda_j(\mathcal{T}_{\text{INR}})
\end{align}
for all $j = 1, 2, \dots$. This completes the proof that the MPE approach exhibits enhanced spectral properties compared to the standard INR approach.
\end{proof}

\subsubsection{Proof of Theorem \ref{the: ig_ntk}}\label{appendix: theorm_spectral_inrgrid}
\vspace{-0.2cm}
\begin{proof}
We consider three CR-FWI methods: INR, MPE, and IG (INR-Grid). Let their respective model representations be:
\begin{align}
\begin{split}
m_{\text{INR}}(\mathbf{x}) = &\text{MLP}_{\boldsymbol{\theta}}(\phi_{\boldsymbol{\theta}1}^{\text{INR}}(\mathbf{x})),\\
m_{\text{MPE}}(\mathbf{x}) = &\text{MLP}_{\boldsymbol{\theta}}(\phi_{\boldsymbol{\theta}2}^{\text{MPE}}(\mathbf{x})),\\
m_{\text{IG}}(\mathbf{x}) = &\text{MLP}_{\boldsymbol{\theta}}(\phi_{\boldsymbol{\theta}3}^{\text{IG}}(\mathbf{x})),
\end{split}
\end{align}
where $\phi_{\boldsymbol{\theta}_2}^{\text{IG}}(\mathbf{x}) = \sqrt{\alpha}\cdot \phi_{\boldsymbol{\theta}_2}^{\text{INR}}(\mathbf{x}) \bigoplus \sqrt{(1-\alpha)}\cdot\phi_{\boldsymbol{\theta}_2}^{\text{MPE}}(\mathbf{x})$ denotes the concatenation of feature vectors with a weighting factor $\alpha \in [0,1]$. A key observation is that the NTK for each method admits a natural decomposition into two components: one from the MLP parameters and one from the feature encoding parameters. For any method, we have:
\begin{align}
\begin{split}
K({\bf y}, {\bf z}) &= \sum_{i=1}^{p} \frac{\partial m_{\boldsymbol{\theta}}({\bf y})}{\partial \boldsymbol{\theta}i} \frac{\partial m{\boldsymbol{\theta}}({\bf z})}{\partial \boldsymbol{\theta}i} \\
&=\underbrace{\sum_{i=1}^{p_{1}-1}\frac{\partial m_{\boldsymbol{\theta}}({\bf y})}{\partial \boldsymbol{\theta}i} \frac{\partial m{\boldsymbol{\theta}}({\bf z})}{\partial \boldsymbol{\theta}i}}_{K_{\text{MLP}}} + \underbrace{\sum_{i=p_{1}}^{p}\frac{\partial m_{\boldsymbol{\theta}}({\bf y})}{\partial \boldsymbol{\theta}i} \frac{\partial m{\boldsymbol{\theta}}({\bf z})}{\partial \boldsymbol{\theta}i}}_{K_{\text{Encode}}}
\end{split}
\end{align}
Crucially, we assume that all three methods share the same MLP architecture, and thus have identical $K_{\text{MLP}}$ components in the infinite-width limit. The differences arise only in the encoding components:
\begin{align}
\begin{split}
K_{\text{INR}} =& K_{\text{MLP}} + K_{\text{Encode}}^{\text{INR}}, \\ K_{\text{MPE}} =& K_{\text{MLP}} + K_{\text{Encode}}^{\text{MPE}}, \\ K_{\text{IG}} =& K_{\text{MLP}} + K_{\text{Encode}}^{\text{IG}}
\end{split}
\end{align}
From \cite{audia2025learnable}, we have the operator inequality $K_{\text{Encode}}^{\text{INR}} \preceq K_{\text{Encode}}^{\text{MPE}}$. For the IG method, we assume the INR and MPE feature gradients are orthogonal, and we obtain:
\begin{align}
K_{\text{Encode}}^{\text{IG}}(\mathbf{x}, \mathbf{x}') &= \langle \sqrt{\alpha}\cdot\phi_{\boldsymbol{\theta}_1}^{\text{INR}}(\mathbf{x}) \oplus \sqrt{1-\alpha}\cdot\phi_{\boldsymbol{\theta}_2}^{\text{MPE}}(\mathbf{x}), \sqrt{\alpha}\cdot\phi_{\boldsymbol{\theta}_1}^{\text{INR}}(\mathbf{x}') \oplus (1-\alpha) \cdot\phi_{\boldsymbol{\theta}_2}^{\text{MPE}}(\mathbf{x}')\rangle \nonumber \\
&= \alpha\cdot\langle\phi_{\boldsymbol{\theta}_1}^{\text{INR}}(\mathbf{x}), \phi_{\boldsymbol{\theta}_1}^{\text{INR}}(\mathbf{x}')\rangle + (1-\alpha)\cdot\langle\phi_{\boldsymbol{\theta}_2}^{\text{MPE}}(\mathbf{x}), \phi_{\boldsymbol{\theta}_2}^{\text{MPE}}(\mathbf{x}')\rangle\nonumber\\
&= \alpha\cdot K_{\text{Encode}}^{\text{INR}}(\mathbf{x}, \mathbf{x}') + (1-\alpha)\cdot K_{\text{Encode}}^{\text{MPE}}(\mathbf{x}, \mathbf{x}')
\end{align}
This linear combination preserves the ordering:
\begin{align}
K_{\text{Encode}}^{\text{INR}} \preceq K_{\text{Encode}}^{\text{IG}} \preceq K_{\text{Encode}}^{\text{MPE}}
\end{align}
Since all methods share the same $K_{\text{MLP}}$ component, this ordering extends to the full NTK:
\begin{align}
K_{\text{INR}} \preceq K_{\text{IG}} \preceq K_{\text{MPE}}
\end{align}
Defining the following wave-based NTK operators:
\begin{align}
\begin{split}
\mathcal{T}_{\text{INR}} =& \mathcal{F}_m K_{\text{INR}} \mathcal{F}_m^*, \\
\mathcal{T}_{\text{IG}} =& \mathcal{F}_m K_{\text{IG}} \mathcal{F}_m^*, \\
\mathcal{T}_{\text{MPE}} =& \mathcal{F}_m K_{\text{MPE}} \mathcal{F}_m^*,
\end{split}
\end{align}
and applying Theorem \ref{thm: spectral_dominance_abstract} twice yields the desired spectral ordering:
\begin{align}
\lambda_i(\mathcal{T}_{\text{INR}}) \leq \lambda_i(\mathcal{T}_{\text{IG}}) \leq \lambda_i(\mathcal{T}_{\text{MPE}}) \quad \text{for all } i = 1, 2, \dots
\end{align}
This completes the proof.
\end{proof}

\end{document}